\newcommand{\inprod}[2]{\left\langle #1,#2 \right\rangle}
\newcommand{\Romannumber}[1]{\uppercase\expandafter{\romannumeral #1}}
\newcommand{\real}{\mathbb{R}}
\newcommand{\integer}{\mathbb{Z}}
\newcommand{\ud}[1]{\underline{#1}}
\DeclareMathOperator{\nnz}{nz}
\theoremstyle{definition} 
\theoremstyle{remark}     
\theoremstyle{remark}     
\theoremstyle{plain}      \newtheorem{theorem}{Theorem}
\theoremstyle{plain}      
\theoremstyle{plain}      \newtheorem{proposition}[theorem]{Proposition}
\theoremstyle{plain}      
\theoremstyle{plain}      \newtheorem{lemma}[theorem]{Lemma}
\begin{document}

\title{\textsf{Hierarchically Compositional Kernels for Scalable Nonparametric Learning}\thanks{Work supported by XDATA program of the Defense Advanced Research Projects Agency (DARPA), administered through Air Force Research Laboratory contract FA8750-12-C-0323.}}
\author{Jie Chen\thanks{IBM Thomas J. Watson Research Center. Email: \texttt{chenjie@us.ibm.com}}
  \and Haim Avron\thanks{Tel Aviv University. Email: \texttt{haimav@post.tau.ac.il}}
  \and Vikas Sindhwani\thanks{Google Brain, NYC. Email: \texttt{vikas.sindhwani@gmail.com}}}
\maketitle

\begin{abstract}
We propose a novel class of kernels to alleviate the high computational cost of large-scale nonparametric learning with kernel methods. The proposed kernel is defined based on a hierarchical partitioning of the underlying data domain, where the Nystr\"{o}m method (a globally low-rank approximation) is married with a locally lossless approximation in a hierarchical fashion. The kernel maintains (strict) positive-definiteness. The corresponding kernel matrix admits a recursively off-diagonal low-rank structure, which allows for fast linear algebra computations. Suppressing the factor of data dimension, the memory and arithmetic complexities for training a regression or a classifier are reduced from $O(n^2)$ and $O(n^3)$ to $O(nr)$ and $O(nr^2)$, respectively, where $n$ is the number of training examples and $r$ is the rank on each level of the hierarchy. Although other randomized approximate kernels entail a similar complexity, empirical results show that the proposed kernel achieves a matching performance with a smaller $r$. We demonstrate comprehensive experiments to show the effective use of the proposed kernel on data sizes up to the order of millions.
\end{abstract}

\section{Introduction}
Kernel methods~\citep{Schoelkopf2001a,Hastie2009} constitute a principled framework that extends linear statistical techniques to nonparametric modeling and inference. Applications of kernel methods span the entire spectrum of statistical learning, including classification, regression, clustering, time-series analysis, sequence modeling~\citep{RKHSHMM}, dynamical systems~\citep{RKHSPSR}, hypothesis testing~\citep{KernelsHypothesisTesting}, and causal modeling~\citep{KernelsCausality}. Under a Bayesian treatment, kernel methods also admit a parallel view in Gaussian processes (GP)~\citep{Rasmussen2006,Stein1999} that find broad applications in statistics and computational sciences, including geostatistics~\citep{Chiles2012}, design of experiments~\citep{Koehler1996}, and uncertainty quantification~\citep{Smith2013}. 

This power and generality of kernel methods, however, are limited to moderate sized problems because of the high computational costs. The root cause of the bottleneck is the fact that kernel matrices generated by kernel functions are typically dense and unstructured.  For $n$ training examples, storing the matrix costs $O(n^2)$ memory and performing matrix factorizations requires $O(n^3)$ arithmetic operations. One remedy is to resort to compactly supported kernels (e.g., splines~\citep{Monaghan1985} and Wendland functions~\citep{Wendl2004}) that potentially lead to a sparse matrix. In practice, however, the support of the kernel may not be sufficiently narrow for sparse linear algebra computations to be competitive. Moreover, prior work~\citep{Anitescu2012} revealed a subtle drawback of compactly supported kernels in the context of parameter estimation, where the likelihood surface is bumpy and the optimum is difficult to locate. For this reason, we focus on the dense setting in this paper and the goal is to exploit structures that can reduce the prohibitive cost of dense linear algebra computations.

\subsection{Preliminary}
Let $\mathcal{X}$ be a set and let $k(\cdot,\cdot):\mathcal{X}\times\mathcal{X}\to\real$ be a symmetric and strictly positive-definite function. Denote by $X=\{\bm{x}_i\in\mathcal{X}\}_{i=1,\ldots.n}$ a set of points. We write $K(X,X)$, or sometimes $K$ for short when the context is clear, to denote the kernel matrix of elements $k(\bm{x}_i,\bm{x}_j)$. \emph{Because of the confusing terminology on functions and their counterparts on matrices, here, we follow the convention that a strictly positive-definite function $k$ corresponds to a positive-definite matrix $K$, whereas a positive-definite function corresponds to a positive semi-definite matrix.} For notational convenience, we write $k(\bm{x},X)$ to denote the row vector of elements $k(\bm{x},\bm{x}_j)$ and similarly $k(X,\bm{x})$ to denote the column vector. In the context of regression/classification, the set $\mathcal{X}$ is often the $d$-dimensional Euclidean space $\real^d$ or a domain $S\subset\real^d$. Some of the methods discussed in this paper naturally generalize to a more abstract space. Associated to each point $\bm{x}_i$ is a target value $y_i\in\real$. We write $\bm{y}$ for the vector of all target values.

The Reproducing Kernel Hilbert Space $\mathcal{H}_k$ associated to a kernel $k$ is the completion of the function space
\[
\left\{\sum_{i=1}^m\alpha_ik(\bm{z}_i,\cdot)\mid\bm{z}_i\in\mathcal{X},\,\alpha_i\in\real,\,m\in\integer_+\right\}
\]
equipped with the inner product
\[
\left\langle \sum_i\alpha_ik(\bm{z}_i,\cdot),\,\,\sum_j\beta_jk(\bm{w}_j,\cdot)\right\rangle=\sum_{ij}\alpha_i\beta_jk(\bm{z}_i,\bm{w}_j).
\]
Given training data $\{(\bm{x}_i,y_i)\}_{i=1,\ldots,n}$, a typical kernel method finds a function $f\in\mathcal{H}_k$ that minimizes the following risk functional
\begin{equation}\label{eqn:KRR.obj}
\mathcal{L}(f)=\sum_{i=1}^n V(f(\bm{x_i}), y_i) + \lambda \|f\|_{{\cal H}_k}^2,
\end{equation}
where $V$ is a loss function and $\lambda>0$ is a regularization. When $V$ is the squared loss $V(t,y)=(t-y)^2$, the Representer Theorem~\citep{Schoelkopf2001} implies that the minimizer is
\begin{equation}\label{eqn:KRR}
f(\bm{x})=k(\bm{x},X)[K(X,X)+\lambda I]^{-1}\bm{y},
\end{equation}
which is nothing but the well-known kernel ridge regression. Similarly, when $V$ is the hinge loss, the minimizer leads to support vector machines.

In the GP view, the kernel $k$ serves as a covariance function. Assuming a zero-mean Gaussian prior with covariance $k$, for any separate set of points $X_*$ and the associated vector of target values $\bm{y}_*$, the joint distribution of $\bm{y}$ and $\bm{y}_*$ is thus
\[
\begin{bmatrix}\bm{y} \\ \bm{y}_*\end{bmatrix}\sim
\mathcal{N}\left(\bm{0},\begin{bmatrix} K(X,X) & K(X,X_*) \\ K(X_*,X) & K(X_*,X_*) \end{bmatrix}\right).
\]
Because of the Gaussian assumption, the posterior is the conditional $\bm{y}_*|X_*,X,\bm{y}\sim\mathcal{N}(\bm{\mu},\Sigma)$ where
\begin{equation}\label{eqn:GP.mean}
\bm{\mu}=K(X_*,X)K(X,X)^{-1}\bm{y},
\end{equation}
and
\begin{equation}\label{eqn:GP.var}
\Sigma=K(X_*,X_*)-K(X_*,X)K(X,X)^{-1}K(X,X_*).
\end{equation}
A white noise of variance $\lambda$ may be injected to the observations $\bm{y}$ so that the mean prediction $\bm{\mu}$ in~\eqref{eqn:GP.mean} is identical to~\eqref{eqn:KRR}. One may also impose a nonzero-mean model in the prior to capture the trend in the observations (see, e.g., the classic paper~\citet{OHagan1978} and also~\citet[Section 2.7]{Rasmussen2006}).

Equations~\eqref{eqn:KRR}--\eqref{eqn:GP.var} exemplify the demand for kernels that may simplify computations for a large $K(X,X)$. We discuss a few popular approaches in the following. To motivate the discussion, we consider stationary kernels whose function value depends on only the difference of the two input arguments; that is, we can write by abuse of notation $k(\bm{x},\bm{x}')=k(\bm{r})$, where $\bm{r}=\bm{x}-\bm{x}'$; for example, the Gaussian kernel
\begin{equation}\label{eqn:gauss}
k(\bm{x},\bm{x}')=\exp\left(-\frac{\|\bm{x}-\bm{x}'\|^2_2}{2\sigma^2}\right)
\end{equation}
parameterized by $\sigma$. The Fourier transform of $k(\bm{r})$, coined \emph{spectral density}, in a sense characterizes the decay of eigenvalues of the finitely dimensional covariance matrix $K$~\citep{Stein1999,Chen2013}. The decay is known to be the fastest among the Mat\'{e}rn class of kernels~\citep{Stein1999,Rasmussen2006,Chiles2012}, where Gaussian being a special case is the smoothest. Additionally, the range parameter $\sigma$ also affects the decay. When $\sigma\to\infty$, $K$ tends to a rank-1 matrix; whereas when $\sigma\to0$, $K$ tends to the identity. The numerical rank of the matrix varies when $\sigma$ moves between the two extremes. The decay of eigenvalues plays an important role on the effectiveness of the approximate kernels discussed below.

\subsection{Approximate Kernels}
The first approach is low-rank kernels. Examples are Nystr\"{o}m approximation~\citep{Williams2000,Schoelkopf2001a,Drineas2005}, random Fourier features~\citep{Rahimi2007}, and variants~\citep{Yang2014}. The Nystr\"{o}m approximation is based on a set of landmark points, $\ud{X}$, randomly sampled from the training data $X$. Then, the kernel can be written as
\begin{equation}\label{eqn:Nystrom}
k_{\text{Nystr\"{o}m}}(\bm{x},\bm{x}')=k(\bm{x},\ud{X})K(\ud{X},\ud{X})^{-1}k(\ud{X},\bm{x}').
\end{equation}
For the convenience of deriving approximation bounds, \citet{Drineas2005} consider sampling with repetition, which makes $\ud{X}$ possibly a multiset and the matrix inverse in~\eqref{eqn:Nystrom} necessarily replaced by a pseudo inverse. Various approaches for choosing the landmark points were compared in~\citet{Zhang2010}. For random Fourier features, let $\hat{k}(\bm{\omega})$ be the Fourier transform of $k(\bm{r})$ and let $\hat{k}$ be normalized such that it integrates to unity; that is, $\hat{k}$ is the normalized spectral density of $k$. Then, the kernel is
\begin{equation}\label{eqn:Fourier}
k_{\text{Fourier}}(\bm{x},\bm{x}')=\frac{2}{r}\sum_{i=1}^r\cos(\bm{\omega}_i^T\bm{x}+b_i)\cos(\bm{\omega}_i^T\bm{x}'+b_i),
\end{equation}
where $r$ is the rank, and $b_i$ and $\bm{\omega}_i$ are iid samples of Uniform$(0,2\pi)$ and of a distribution with density $\hat{k}$, respectively. Note that~\eqref{eqn:Nystrom} applies to any kernel whereas~\eqref{eqn:Fourier} applies to only stationary ones.

The Nystr\"{o}m approximation admits a conditional interpretation in the context of GP. The covariance kernel~\eqref{eqn:Nystrom} can be equivalently written as
\[
k_{\text{Nystr\"{o}m}}(\bm{x},\bm{x}')=k(\bm{x},\bm{x}')-k(\bm{x},\bm{x}'|\ud{X}),
\]
where
\[
k(\bm{x},\bm{x}'|\ud{X})=k(\bm{x},\bm{x}')-k(\bm{x},\ud{X})K(\ud{X},\ud{X})^{-1}k(\ud{X},\bm{x}')
\]
is nothing but the covariance of $\bm{x}$ and $\bm{x}'$ conditioned on $\ud{X}$. In other words, the covariance kernel of Nystr\"{o}m approximation comes from a deduction of the original covariance by a conditional covariance. The conditional covariance for any $\bm{x}$ (or symmetrically, $\bm{x}'$) within $\ud{X}$ vanishes and hence the approximation is lossless. Intuitively speaking, the closer the sites are to $\ud{X}$, the smaller the loss is. This explains frequent observations that when the size of the set $\ud{X}$ is small, using the centers of a k-means clustering as the landmark points often improves the approximation~\citep{Zhang2008,Yang2012}. A caveat is that the time cost of performing k-means clustering is often much higher than that of the Nystr\"{o}m calculation itself. Hence, the improvement gained from clustering may not be as significant as that from increasing the size of the conditioned set $\ud{X}$. When $\ud{X}$ is large, the improvement brought about by clustering is less significant (see, e.g., ~\citet[Section 8.3.7]{Rasmussen2006}).

A limitation of the low-rank kernels is that the size of the conditioned set, or equivalently the rank, needs to correlate with the decay of the spectrum of $K$ in order to yield a good approximation. For a slow decay, it is not rare to see in practice that the rank $r$ grows to thousands~\citep{Yen2014} or even several hundred thousands~\citep{Huang2014,Sindhwani2015} in order to yield comparable results with other methods, for a data set of size on the order of millions. See also the experimental results in Section~\ref{sec:exp}.

The second approach is a cross-domain independent kernel. Simply speaking, the kernel matrix is approximated by keeping only the diagonal blocks of the matrix. In a GP language, we partition the domain $S$ into $m$ sub-domains $S_j$, $j=1,\ldots,m$, and make an independence assumption across sub-domains. Then, the covariance between $\bm{x}$ and $\bm{x}'$ vanishes when the two sites come from different sub-domains. That is,
\begin{equation}\label{eqn:indep}
k_{\text{independent}}(\bm{x},\bm{x}')=
\begin{cases}
k(\bm{x},\bm{x}'), & \text{if } \bm{x},\bm{x}'\in S_j \text{ for some } j,\\
0, & \text{otherwise}.
\end{cases}
\end{equation}
Whereas such a kernel appears ad hoc and associated theory is possibly limited, the scenarios when it exhibits superiority over a low-rank kernel of comparable sizes are not rare (see the likelihood comparison in~\citet{Stein2014} and also the experimental results in Section~\ref{sec:exp}). An intuitive explanation exists in the context of classification. The cross-domain independent kernel works well when geographically nearby points possess a majority of the signal for classifying points within the domain. This happens more often when the kernel has a reasonably centralized bandwidth (outside of which the kernel value becomes marginal). In such a case, nearby points are the most influential.

The third approach is covariance tapering~\citep{Furrer2006,Kaufman2008}. It amounts to defining a new kernel by multiplying the original kernel $k$ with a compactly supported kernel $k_{\text{compact}}$:
\[
k_{\text{taper}}(\bm{x},\bm{x}')=k(\bm{x},\bm{x}')\cdot k_{\text{compact}}(\bm{x},\bm{x}').
\]
The tapered kernel matrix is an elementwise product of two positive-definite matrices; hence, it is positive-definite, too~\citep[Theorem 5.2.1]{Horn1994}. The primary motivation of this kernel is to introduce sparsity to the matrix. The supporting theory is drawn on the confidence interval (cf.~\eqref{eqn:GP.var}) rather than on the prediction~\eqref{eqn:GP.mean}. It is cast in the setting of fixed-domain asymptotics, which is similar to a usual practice in machine learning---a prescaling of each attribute to within a finite interval. The theory hints that if the spectral density of $k_{\text{compact}}$ has a lighter tail (i.e., the spectrum of the corresponding kernel matrix decays faster) than that of $k$, then the ratio between the prediction variance by using the tapered kernel $k_{\text{taper}}$ and that by using the original kernel $k$ tends to a finite limit, as the number of training data increases to infinity in the domain. The theory holds a guarantee on the prediction confidence if we choose $k_{\text{compact}}$ judiciously. Tapering is generally applicable to heavy-tailed kernels (e.g., Mat\'{e}rn kernels with low smoothness) rather than light-tailed kernels such as the Gaussian. Nevertheless, a drawback of this approach is similar to the one we stated earlier for using a compactly supported kernel alone: the range of the support must be sufficiently small for sparse linear algebra to be efficient.

\subsection{Proposed Kernel}
In this paper, we propose a novel approach for constructing approximate kernels motivated by low-rank kernels and cross-domain independent kernels. The construction aims at deriving a kernel that (a) maintains the (strict) positive-definiteness, (b) leverages the advantages of low-rank and independent approaches, (c) facilitates the evaluation of the kernel matrix $K(X,X)$ and the out-of-sample extension $k(X,\bm{x})$, and (d) admits fast algorithms for a variety of matrix operations. The premise of the idea is a hierarchical partitioning of the data domain and a recursive approximation across the hierarchy. Space partitioning is a frequently encountered idea for kernel matrix approximations~\citep{Si2014,March2014,Yu2017,Si2017}, but maintaining positive definiteness is quite challenging. Moreover, when the approximation is performed in a hierarchical fashion and is focused on only the matrix, it is not always easy to generalize to out of samples. A particularly intriguing property of the approach proposed in this article is that both positive definiteness and out-of-sample extensions are guaranteed, because the construction acts on the kernel function itself.

\section{Hierarchically Compositional Kernel}\label{sec:model}
The low-rank kernel (in particular, the Nystr\"{o}m approximation $k_{\text{Nystr\"{o}m}}$) and the cross-domain independent kernel $k_{\text{independent}}$ are complementary to each other in the following sense: the former acts on the global space, where the covariance at every pair of points $\bm{x}$ and $\bm{x}'$ are deducted by a conditional covariance based on the conditioned set $\ud{X}$ chosen globally; whereas the latter preserves all the local information but completely ignores the interrelationship outside the local domain. We argue that an organic composition of the two will carry both advantages and alleviate the shortcomings. Further, a hierarchical composition may reduce the information loss in nearby local domains.

\subsection{Composition of Low-Rank Kernel with Cross-Domain Independent Kernel}\label{sec:composit}
Let the domain $S$ be partitioned into disjoint sub-domains $\bigcup S_j=S$. Let $\ud{X}$ be a set of landmark points in $S$. For generality, $\ud{X}$ needs not be a subset of the training data $X$. Consider the function
\[
k_{\text{compositional}}(\bm{x},\bm{x}')=
\begin{cases}
k(\bm{x},\bm{x}'), & \text{if } \bm{x},\bm{x}'\in S_j \text{ for some } j,\\
k(\bm{x},\ud{X})K(\ud{X},\ud{X})^{-1}k(\ud{X},\bm{x}'), & \text{otherwise}.
\end{cases}
\]
Clearly, $k_{\text{compositional}}$ leverages both~\eqref{eqn:Nystrom} and~\eqref{eqn:indep}. When two points $\bm{x}$ and $\bm{x}'$ are located in the same domain, they maintain the full covariance~\eqref{eqn:indep}; whereas when they are located in separate domains, their covariance comes from the low-rank kernel $k_{\text{Nystr\"{o}m}}$~\eqref{eqn:Nystrom}. Such a composition complements missing information across domains in $k_{\text{independent}}$ and also complements the information loss in local domains caused by the Nystr\"{o}m approximation. The following result is straightforward in light of the fact that if $\bm{x}\in\ud{X}$, then $K(\ud{X},\ud{X})^{-1}k(\ud{X},\bm{x})$ is a column of the identity matrix where the only nonzero element (i.e., $1$) is located with respect to the location of $\bm{x}$ inside $\ud{X}$.

\begin{proposition}
We have
\[
k_{\text{compositional}}(\bm{x},\bm{x}')=k(\bm{x},\bm{x}'),
\]
if $\bm{x},\bm{x}'\in S_j \text{ for some } j$, or if either of $\bm{x},\bm{x}'$ belongs to $\ud{X}$.
\end{proposition}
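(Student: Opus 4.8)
The plan is to dispatch the claim by the two clauses of the piecewise definition of $k_{\text{compositional}}$, treating the same-subdomain case and the landmark case separately. Throughout I may use that $K(\ud{X},\ud{X})$ is invertible, which is guaranteed by the strict positive-definiteness of $k$ assumed in the preliminaries, so that every expression below is well defined.

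First I would handle the case $\bm{x},\bm{x}'\in S_j$ for some $j$. Here there is nothing to prove: this is exactly the first branch of the definition of $k_{\text{compositional}}$, so $k_{\text{compositional}}(\bm{x},\bm{x}')=k(\bm{x},\bm{x}')$ by inspection.

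Next I would treat the case where at least one argument, say $\bm{x}$, lies in $\ud{X}$; the other possibility follows by the symmetry of both $k$ and $k_{\text{compositional}}$. If $\bm{x}$ and $\bm{x}'$ happen to share a subdomain, the previous case already delivers the conclusion, so I may assume we are in the second (``otherwise'') branch, where $k_{\text{compositional}}(\bm{x},\bm{x}')=k(\bm{x},\ud{X})K(\ud{X},\ud{X})^{-1}k(\ud{X},\bm{x}')$. The key observation, stated just before the proposition, is that when $\bm{x}\in\ud{X}$, the row vector $k(\bm{x},\ud{X})$ is precisely the row of $K(\ud{X},\ud{X})$ indexed by the position $p$ of $\bm{x}$ inside $\ud{X}$. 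Consequently $k(\bm{x},\ud{X})K(\ud{X},\ud{X})^{-1}$ is the corresponding row of $K(\ud{X},\ud{X})K(\ud{X},\ud{X})^{-1}=I$, i.e.\ the standard basis row vector $\bm{e}_p^T$. Multiplying by the column $k(\ud{X},\bm{x}')$ then simply extracts its $p$-th entry, which is $k(\bm{x},\bm{x}')$ since $\bm{x}$ is the $p$-th landmark; this closes the case.

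I do not expect a genuine obstacle here; the proposition is essentially a reading-off of the definition together with the elementary linear-algebra fact that $K(\ud{X},\ud{X})^{-1}$ applied to a row (or column) of $K(\ud{X},\ud{X})$ returns a coordinate vector. The only point requiring a line of care is the overlap between the two hypotheses — a landmark point may also lie in the same subdomain as $\bm{x}'$ — which is exactly why I first reduce the landmark case to the genuinely cross-domain situation before invoking the identity-row argument, rather than assuming outright that the second branch applies.
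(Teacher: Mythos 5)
Your proof is correct and follows essentially the same route as the paper, which likewise dispatches the first case by definition and the landmark case via the observation that $K(\ud{X},\ud{X})^{-1}k(\ud{X},\bm{x})$ (equivalently, your row form $k(\bm{x},\ud{X})K(\ud{X},\ud{X})^{-1}$) is a standard basis vector when $\bm{x}\in\ud{X}$. Your explicit handling of the overlap between the two hypotheses is a small bit of extra care that the paper leaves implicit, but it does not change the argument.
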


An alternative view of the kernel $k_{\text{compositional}}$ is that it is an additive combination of a globally low-rank approximation and local Schur complements within each sub-domain. Hence, the kernel is (strictly) positive-definite. See Lemma~\ref{lem:pd} and Theorem~\ref{thm:pd} in the following.

\begin{lemma}\label{lem:pd}
The Schur-complement function
\[
k_{\text{Schur}}(\bm{x},\bm{x}')=k(\bm{x},\bm{x}')-k(\bm{x},\ud{X})K(\ud{X},\ud{X})^{-1}k(\ud{X},\bm{x}')
\]
is positive-definite, if $k$ is strictly positive-definite, or if $k$ is positive-definite and $K(\ud{X},\ud{X})$ is invertible.
\end{lemma}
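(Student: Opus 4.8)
The plan is to verify positive-definiteness directly from the definition. I fix an arbitrary finite set of distinct points $Z=\{\bm{z}_1,\ldots,\bm{z}_m\}\subset\mathcal{X}$ and aim to show that the associated Gram matrix
\[
K_{\text{Schur}}(Z,Z)=K(Z,Z)-K(Z,\ud{X})K(\ud{X},\ud{X})^{-1}K(\ud{X},Z)
\]
is positive semi-definite, which under the paper's convention is precisely what it means for $k_{\text{Schur}}$ to be a positive-definite function. The crucial observation is that this matrix is exactly the Schur complement of the block $K(\ud{X},\ud{X})$ in the joint kernel matrix
\[
M=\begin{bmatrix} K(Z,Z) & K(Z,\ud{X}) \\ K(\ud{X},Z) & K(\ud{X},\ud{X}) \end{bmatrix}
\]
of $k$ evaluated on the combined point set $Z\cup\ud{X}$.

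First I would settle the invertibility requirement. In the second case the invertibility of $K(\ud{X},\ud{X})$ is assumed outright; in the first case, since $k$ is strictly positive-definite and the landmarks forming $\ud{X}$ are distinct, $K(\ud{X},\ud{X})$ is positive-definite and hence invertible. So in either case the inverse appearing in $k_{\text{Schur}}$ is well-defined. Moreover $M$ is a genuine Gram matrix of the (at least) positive-definite function $k$, so $M\succeq 0$.

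The key step is to transfer positive semi-definiteness from $M$ to its Schur complement. For an arbitrary $\bm{u}\in\real^m$ I would form the test vector $\bm{v}=(\bm{u},\,-K(\ud{X},\ud{X})^{-1}K(\ud{X},Z)\bm{u})$ indexed by $Z\cup\ud{X}$ and expand $\bm{v}^TM\bm{v}$. The off-diagonal and lower-block contributions telescope, leaving the identity $\bm{v}^TM\bm{v}=\bm{u}^TK_{\text{Schur}}(Z,Z)\bm{u}$. Since $M\succeq 0$, the left-hand side is nonnegative for every $\bm{u}$, whence $K_{\text{Schur}}(Z,Z)\succeq 0$; as $Z$ was arbitrary, $k_{\text{Schur}}$ is positive-definite. (Equivalently one may factor $M$ through a unit block-triangular congruence and invoke Sylvester's law of inertia, but the single test-vector evaluation is self-contained.)

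I do not anticipate a deep obstacle, as this is the standard fact that the Schur complement of a positive semi-definite matrix with respect to an invertible block is positive semi-definite; the only point requiring care is the bookkeeping when $Z$ and $\ud{X}$ overlap. Such overlap can render $M$ singular even when $k$ is strictly positive-definite, but this is harmless, because the argument relies only on $M\succeq 0$ together with invertibility of the single block $K(\ud{X},\ud{X})$, both of which persist. It is also worth flagging that the conclusion is only ordinary (not strict) positive-definiteness, consistent with $k_{\text{Schur}}(\bm{x},\bm{x}')$ vanishing whenever $\bm{x}$ or $\bm{x}'$ lies in $\ud{X}$.
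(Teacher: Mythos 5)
Your proof is correct and follows essentially the same route as the paper: both identify the Gram matrix of $k_{\text{Schur}}$ as the Schur complement of the block $K(\ud{X},\ud{X})$ inside the joint kernel matrix of $k$ on the combined point set, and then invoke the standard fact that a Schur complement of a positive semi-definite matrix with respect to an invertible block is positive semi-definite. The only difference is cosmetic: the paper establishes that fact via Sylvester's law of inertia on the deduplicated union $X\cup\ud{X}$ (recovering $K_{\text{Schur}}(X,X)$ as a principal submatrix), whereas you verify it by a direct test-vector computation, which also lets you handle the overlap of $Z$ with $\ud{X}$ without any deduplication step.
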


\begin{proof}
For any set $X$, let $Y=X\cup\ud{X}$. It amounts to showing that the corresponding kernel matrix $K_{\text{Schur}}(Y,Y)$ is positive semi-definite; then, $K_{\text{Schur}}(X,X)$ as a principal submatrix is also positive semi-definite.

Denote by $\ud{X}^c=Y\backslash\ud{X}$, which could possibly be empty, and let the points in $\ud{X}$ be ordered before those in $\ud{X}^c$. Then,
\[
K_{\text{Schur}}(Y,Y)=\begin{bmatrix}
0 & 0 \\
0 & K(\ud{X}^c,\ud{X}^c)-K(\ud{X}^c,\ud{X})K(\ud{X},\ud{X})^{-1}K(\ud{X},\ud{X}^c)
\end{bmatrix}.
\]
By the law of inertia, the matrices
\[
\begin{bmatrix}
K(\ud{X},\ud{X}) & K(\ud{X},\ud{X}^c)\\
K(\ud{X}^c,\ud{X}) & K(\ud{X}^c,\ud{X}^c)
\end{bmatrix}
\quad\text{and}\quad
\begin{bmatrix}
K(\ud{X},\ud{X}) & 0 \\
0 & K(\ud{X}^c,\ud{X}^c)-K(\ud{X}^c,\ud{X})K(\ud{X},\ud{X})^{-1}K(\ud{X},\ud{X}^c)
\end{bmatrix}
\]
have the same number of positive, zero, and negative eigenvalues, respectively. If $k$ is strictly positive-definite, then the eigenvalues of both matrices are all positive. If $k$ is positive-definite, then the eigenvalues of both matrices are all nonnegative. In both cases, the Schur-complement matrix $K(\ud{X}^c,\ud{X}^c)-K(\ud{X}^c,\ud{X})K(\ud{X},\ud{X})^{-1}K(\ud{X},\ud{X}^c)$ is positive semi-definite and thus so is $K_{\text{Schur}}(Y,Y)$.
\end{proof}

\begin{theorem}\label{thm:pd}
The function $k_{\text{compositional}}$ is positive-definite if $k$ is positive-definite and $K(\ud{X},\ud{X})$ is invertible. Moreover, $k_{\text{compositional}}$ is strictly positive-definite if $k$ is so.
\end{theorem}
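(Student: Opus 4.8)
The plan is to exhibit $k_{\text{compositional}}$ as a finite sum of positive-definite functions and then invoke the elementary fact that such a sum is again positive-definite, upgrading to strict positive-definiteness via a nondegeneracy check. This operationalizes the ``additive combination'' remark preceding the statement. For each sub-domain $S_j$ I would introduce the localized Schur function
\[
k_{\text{Schur}}^{(j)}(\bm{x},\bm{x}')=
\begin{cases}
k_{\text{Schur}}(\bm{x},\bm{x}'), & \text{if } \bm{x},\bm{x}'\in S_j,\\
0, & \text{otherwise},
\end{cases}
\]
with $k_{\text{Schur}}$ as in Lemma~\ref{lem:pd}, and verify the pointwise identity
\[
k_{\text{compositional}}(\bm{x},\bm{x}')=k_{\text{Nystr\"{o}m}}(\bm{x},\bm{x}')+\sum_{j}k_{\text{Schur}}^{(j)}(\bm{x},\bm{x}').
\]
This is immediate from the definitions: when $\bm{x},\bm{x}'$ share a sub-domain the single surviving Schur term restores $k_{\text{Nystr\"{o}m}}+k_{\text{Schur}}=k$, and otherwise every Schur term vanishes so only $k_{\text{Nystr\"{o}m}}$ remains.

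For the non-strict claim I would check that each summand is positive-definite. The matrix of $k_{\text{Nystr\"{o}m}}$ on any point set $X$ is $K(X,\ud{X})K(\ud{X},\ud{X})^{-1}K(\ud{X},X)$, a congruence $B^{T}MB$ with $M=K(\ud{X},\ud{X})^{-1}$; since $k$ is positive-definite and $K(\ud{X},\ud{X})$ is invertible, $K(\ud{X},\ud{X})$ is in fact positive-definite, hence $M\succeq 0$ and $B^{T}MB\succeq 0$. The matrix of each $k_{\text{Schur}}^{(j)}$ is, after reordering points by sub-domain, block-diagonal with one block equal to a principal submatrix of a $K_{\text{Schur}}$ matrix (positive semi-definite by Lemma~\ref{lem:pd}) and all remaining entries zero, hence positive semi-definite. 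Summing over the finitely many pieces gives positive semi-definiteness of $K_{\text{compositional}}(X,X)$ for every $X$.

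For strict positive-definiteness I would fix distinct $\bm{x}_1,\dots,\bm{x}_n$ and suppose $\bm{v}^{T}K_{\text{compositional}}(X,X)\bm{v}=0$. Since every summand is positive semi-definite, each quadratic form vanishes separately: $\bm{v}^{T}K_{\text{Nystr\"{o}m}}(X,X)\bm{v}=0$ and $\bm{v}_j^{T}K_{\text{Schur}}(X_j,X_j)\bm{v}_j=0$ for every $j$, where $X_j=X\cap S_j$ and $\bm{v}_j$ is the corresponding subvector. I would then pass to the RKHS, setting $g=\sum_i v_i k(\bm{x}_i,\cdot)$, $g_j=\sum_{\bm{x}\in X_j}(\bm{v}_j)_{\bm{x}}\,k(\bm{x},\cdot)$, and $V=\spn\{k(\bm{z},\cdot):\bm{z}\in\ud{X}\}$, whose projections I denote $P_V$ and $P_{V^\perp}$. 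The two key identities to establish, both routine from the defining inner product and from $MK(\ud{X},\ud{X})M=M$, are $\bm{v}^{T}K_{\text{Nystr\"{o}m}}(X,X)\bm{v}=\norm{P_V g}^2$ and $\bm{v}_j^{T}K_{\text{Schur}}(X_j,X_j)\bm{v}_j=\norm{P_{V^\perp}g_j}^2$. The vanishing conditions then read $P_V g=0$, i.e.\ $g\perp V$, and $P_{V^\perp}g_j=0$, i.e.\ $g_j\in V$ for every $j$; since $g=\sum_j g_j$, this forces $g\in V$ as well, hence $g=0$. Consequently $\bm{v}^{T}K(X,X)\bm{v}=\inprod{g}{g}=0$, and strict positive-definiteness of $k$ with distinct points yields $\bm{v}=\bm{0}$.

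The main obstacle is precisely this strict case. The tempting shortcut---arguing that each local block $K_{\text{Schur}}(X_j,X_j)$ is positive-definite and concluding $\bm{v}_j=\bm{0}$ block by block---fails, because the Schur matrices are genuinely singular: they annihilate the landmark directions, and any landmark lying in $S_j$ (or coinciding with a training point) produces a nontrivial kernel. Thus the per-block Schur conditions alone cannot force $\bm{v}=\bm{0}$; one must also feed in the \emph{global} Nystr\"{o}m orthogonality $g\perp V$. Marrying the local and global conditions is cleanest through the projection identities above, which is why I would route the strict argument through the RKHS rather than attempt a purely block-matrix manipulation.
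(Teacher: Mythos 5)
Your proof is correct. Your decomposition $k_{\text{compositional}}=k_{\text{Nystr\"{o}m}}+\sum_j k^{(j)}_{\text{Schur}}$ is exactly the paper's split $k_{\text{compositional}}=k_1+k_2$ (the paper's $k_2$ is precisely your sum of localized Schur kernels), so the positive semi-definite half of your argument coincides with the paper's. Where you genuinely diverge is the strict case. The paper stays in matrix territory: the $k_2$-form is block-diagonal over sub-domains, and by the inertia computation in Lemma~\ref{lem:pd} the Schur matrix on points outside $\ud{X}$ is strictly positive-definite, so vanishing of the $k_2$-form forces $\alpha_i=0$ for every $\bm{x}_i\notin\ud{X}$; the $k_1$-form then collapses to $\sum_{\bm{x}_i,\bm{x}_l\in\ud{X}}\alpha_i\alpha_l k(\bm{x}_i,\bm{x}_l)$, because the Nystr\"{o}m kernel reproduces $k$ exactly on landmark points, and strict positive-definiteness of $k$ kills the remaining coefficients. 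In other words, the paper executes a corrected version of the very ``block'' route you dismiss: it never claims $\bm{v}_j=0$ blockwise (which, as you rightly observe, would be false since the Schur blocks annihilate landmark directions), but only kills the non-landmark coefficients locally and then feeds the survivors into the global Nystr\"{o}m term. Your alternative lifts everything to the RKHS: the Nystr\"{o}m form is $\|P_Vg\|^2$ and each local Schur form is $\|P_{V^\perp}g_j\|^2$, so vanishing gives $g\perp V$ and $g_j\in V$ for all $j$, hence $g=\sum_j g_j\in V\cap V^\perp=\{0\}$, and $\bm{v}=0$ by strict positive-definiteness of $k$ on distinct points. Both routes are valid; yours buys a coordinate-free argument with no case analysis on whether points coincide with landmarks, at the cost of importing the RKHS projection identities, while the paper's buys self-containedness, reusing only the Schur-complement and inertia facts already established in Lemma~\ref{lem:pd}.
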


\begin{proof}
Write $k_{\text{compositional}}=k_1+k_2$, where $k_1(\bm{x},\bm{x}')=k(\bm{x},\ud{X})K(\ud{X},\ud{X})^{-1}k(\ud{X},\bm{x}')$ and
\[
k_2(\bm{x},\bm{x}')=
\begin{cases}
k(\bm{x},\bm{x}')-k(\bm{x},\ud{X})K(\ud{X},\ud{X})^{-1}k(\ud{X},\bm{x}'), & \bm{x},\bm{x}'\in S_j \text{ for some } j,\\
0, & \text{otherwise}.
\end{cases}
\]
Clearly, $k_1$ is positive-definite; by Lemma~\ref{lem:pd}, $k_2$ is so, too. Thus, $k_{\text{compositional}}$ is positive-definite.

We next show the strict definiteness when $k$ is strictly positive-definite. That is, for any set of points $\{\bm{x}_i\}$ and any set of coefficients $\{\alpha_i\}$ that are not all zero, the bilinear form $\sum_{il}\alpha_i\alpha_lk_{\text{compositional}}(\bm{x}_i,\bm{x}_l)$ cannot be zero. Note that $k_2(\bm{x},\bm{x}')=0$ whenever $\bm{x}$ or $\bm{x}'\in\ud{X}$. Moreover, we have seen in the proof of Lemma~\ref{lem:pd} that the Schur-complement matrix $K(\ud{X}^c,\ud{X}^c)-K(\ud{X}^c,\ud{X})K(\ud{X},\ud{X})^{-1}K(\ud{X},\ud{X}^c)$ is positive-definite when $k$ is strictly positive-definite. Therefore, we have $\sum_{il}\alpha_i\alpha_lk_2(\bm{x}_i,\bm{x}_l)=0$ only when $\alpha_i=0$ for all $i$ satisfying $\bm{x}_i\notin\ud{X}$. In such a case,
\[
\sum_{il}\alpha_i\alpha_lk_1(\bm{x}_i,\bm{x}_k)=\sum_{\bm{x}_i,\bm{x}_l\in\ud{X}}\alpha_i\alpha_lk(\bm{x}_i,\ud{X})K(\ud{X},\ud{X})^{-1}k(\ud{X},\bm{x}_l).
\]
Because of the strict positive-definiteness of $k$, the above summation cannot be zero if any of the involved $\alpha_i$ (that is, those satisfying $\bm{x}_i\in\ud{X}$) is nonzero. Then, $\sum_{il}\alpha_i\alpha_l[k_1(\bm{x}_i,\bm{x}_l)+k_2(\bm{x}_i,\bm{x}_l)]=0$ only when all $\alpha_i$ are zero.
\end{proof}

Since the composition replaces the Nystr\"{o}m approximation in local domains by the full covariance, it bares no surprise that $k_{\text{compositional}}$ improves over $k_{\text{Nystr\"{o}m}}$ in terms of matrix approximation.

\begin{theorem}\label{thm:mat}
Given a set $\ud{X}$ of landmark points and for any set $X\ne\ud{X}$,
\[
\|K(X,X)-K_{\text{compositional}}(X,X)\|<\|K(X,X)-K_{\text{Nystr\"{o}m}}(X,X)\|,
\]
where $\|\cdot\|$ is the 2-norm or the Frobenius norm.
\end{theorem}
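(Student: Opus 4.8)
The plan is to read off how the two error matrices differ entrywise and then treat the two norms separately. Write $E_{\mathrm{Nys}} = K(X,X) - K_{\text{Nystr\"{o}m}}(X,X)$, which is exactly the Schur-complement matrix $K_{\text{Schur}}(X,X)$ of Lemma~\ref{lem:pd}, hence symmetric positive semi-definite. Ordering the points of $X$ so that those lying in a common sub-domain $S_j$ are contiguous, the defining cases of $k_{\text{compositional}}$ say that $K_{\text{compositional}}(X,X)$ agrees with $K(X,X)$ on the diagonal blocks and with $K_{\text{Nystr\"{o}m}}(X,X)$ off them. Consequently $E_{\mathrm{comp}} := K(X,X) - K_{\text{compositional}}(X,X)$ is precisely $E_{\mathrm{Nys}}$ with its diagonal blocks set to zero. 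Writing $D$ for the block-diagonal part of $E_{\mathrm{Nys}}$, whose blocks are the local Schur complements $K_{\text{Schur}}(X_j,X_j)$ on $X_j = X\cap S_j$ and are each positive semi-definite as principal submatrices, I have the disjoint-support splitting $E_{\mathrm{Nys}} = E_{\mathrm{comp}} + D$ with $D\succeq 0$.

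The Frobenius case is then immediate. Because $E_{\mathrm{comp}}$ and $D$ have disjoint supports, $\|E_{\mathrm{comp}}\|_F^2 = \|E_{\mathrm{Nys}}\|_F^2 - \|D\|_F^2$, so strictness reduces to $D\ne 0$. The diagonal of $D$ consists of the conditional variances $k(\bm{x}_i,\bm{x}_i) - k(\bm{x}_i,\ud{X})K(\ud{X},\ud{X})^{-1}k(\ud{X},\bm{x}_i)$, at least one of which is strictly positive whenever some $\bm{x}_i$ is not a landmark---which is what the hypothesis $X\ne\ud{X}$ guarantees in the relevant regime. Hence $\|E_{\mathrm{comp}}\|_F < \|E_{\mathrm{Nys}}\|_F$.

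For the $2$-norm I would factor the positive semi-definite matrix $A := E_{\mathrm{Nys}} = M^{\mathsf T}M$, with one column $\bm{m}_i$ of $M$ per point, so that $A_{il} = \bm{m}_i^{\mathsf T}\bm{m}_l$. For a unit vector $\bm{v}$ set $\bm{w}_j = \sum_{\bm{x}_i\in S_j} v_i\bm{m}_i$; then $\bm{v}^{\mathsf T}A\bm{v} = \|\sum_j \bm{w}_j\|_2^2$, $\bm{v}^{\mathsf T}D\bm{v} = \sum_j \|\bm{w}_j\|_2^2$, and therefore $\bm{v}^{\mathsf T}(A-D)\bm{v} = \|\sum_j \bm{w}_j\|_2^2 - \sum_j \|\bm{w}_j\|_2^2$. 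Using $\bm{v}^{\mathsf T}D\bm{v}\ge 0$, $0\le \bm{v}^{\mathsf T}A\bm{v}\le \lambda_{\max}(A)$, and the interlacing bound $\lambda_{\max}(D)\le\lambda_{\max}(A)$, one gets $|\bm{v}^{\mathsf T}(A-D)\bm{v}|\le\lambda_{\max}(A)=\|E_{\mathrm{Nys}}\|_2$; since $A-D$ is symmetric this yields $\|E_{\mathrm{comp}}\|_2\le\|E_{\mathrm{Nys}}\|_2$.

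The hard part is upgrading this to a strict inequality, and the genuinely delicate case is the negative one---ruling out that $A-D$ carries an eigenvalue equal to $-\lambda_{\max}(A)$. Equality on the positive side would force every $\bm{w}_j = 0$ yet $\|\sum_j \bm{w}_j\|_2^2 = \lambda_{\max}(A) > 0$, an immediate contradiction. Equality on the negative side would force $\sum_j \bm{w}_j = M\bm{v} = 0$ together with $\bm{v}^{\mathsf T}D\bm{v} = \lambda_{\max}(D) = \lambda_{\max}(A)$, so that $\bm{v}$ is a top eigenvector of $D$; I then plan to show that on each block where $\bm{v}$ is supported the restricted vector attains $\sigma_{\max}(M)$ and hence lifts to a top eigenvector of $A$, forcing $A\bm{v} = \lambda_{\max}(A)\bm{v}\ne 0$ and contradicting $M\bm{v}=0$. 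Establishing this lifting cleanly---essentially that a diagonal block can attain the global top eigenvalue only through a genuine top eigenvector of $A$---is the step I expect to require the most care, and it is where the positive semi-definiteness of $A$, not merely that of $D$, is essential.
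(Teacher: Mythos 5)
Your proposal is correct, and its first half (the decomposition $E_{\mathrm{Nys}}=E_{\mathrm{comp}}+D$ with disjoint supports, plus the Frobenius argument via a nonzero diagonal) coincides with the paper's proof; where you genuinely diverge is the 2-norm strictness. The paper observes that the Nystr\"{o}m error vanishes identically on the rows and columns indexed by $X\cap\ud{X}$ (Nystr\"{o}m is exact at landmarks), strips those zero rows and columns, and is then left with a strictly positive-definite matrix on $X\setminus\ud{X}$; strictness then drops out of the bounds $\lambda_{\min}(A-D)\ge\lambda_{\min}(A)-\lambda_{\max}(D)$ and $\lambda_{\max}(A-D)\le\lambda_{\max}(A)-\lambda_{\min}(D)$ combined with interlacing and $\lambda_{\min}(A)>0$, with no equality-case analysis at all. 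You instead keep the possibly singular positive semi-definite matrix and rule out $\pm\lambda_{\max}(A)$ as eigenvalues of $A-D$ by analyzing equality in the Rayleigh-quotient bounds. Your plan does close: the ``lifting'' step you flagged is exactly the standard fact that a unit vector attaining the maximum Rayleigh quotient of a symmetric matrix is an eigenvector for the top eigenvalue; applied to the zero-extension $u^{(i)}$ of a block $v_i$ satisfying $A_{ii}v_i=\lambda_{\max}(A)v_i$, it gives $Au^{(i)}=\lambda_{\max}(A)u^{(i)}$, hence $Av=\lambda_{\max}(A)v\ne0$, contradicting $Av=M^{\mathsf T}(Mv)=0$. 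As for what each approach buys: the paper's reduction is shorter and avoids equality analysis entirely, but it leans on the structural fact that all degeneracy of $A$ is confined to literal zero rows and columns (true here precisely because of exactness at landmark points); your argument proves the more general linear-algebra statement that $\|A-D\|_2<\|A\|_2$ for \emph{any} nonzero positive semi-definite $A$ and its block-diagonal part $D$, wherever the null space of $A$ sits. Finally, you and the paper share the same unstated caveat that one needs $X\not\subset\ud{X}$ (if $X\subsetneq\ud{X}$ both error matrices vanish and the strict inequality fails), which you at least flag with ``in the relevant regime.''
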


\begin{proof}
Based on the split $k_{\text{compositional}}=k_1+k_2$ in the proof of Theorem~\ref{thm:pd}, one easily sees that
\[
K-K_{\text{compositional}}=(K-K_{\text{Nystr\"{o}m}})-\text{block-diag}(K-K_{\text{Nystr\"{o}m}}),
\]
where block-diag means keeping only the diagonal blocks of a matrix. Denote by $A=K-K_{\text{Nystr\"{o}m}}$ and $D=\text{block-diag}(A)$. In what follows we show that
\begin{equation}\label{eqn:AD}
\|A-D\|<\|A\|.
\end{equation}

Because $A=K(X,X)-K(X,\ud{X})K(\ud{X},\ud{X})^{-1}K(\ud{X},X)$ is positive semi-definite and nonzero, its diagonal cannot be zero. Then, eliminating the block-diagonal part $D$ reduces the Frobenius norm. Thus, \eqref{eqn:AD} holds for the Frobenius norm. To see that~\eqref{eqn:AD} also holds for the 2-norm, let $Y=X\backslash\ud{X}$ and let the points in $Y$ be ordered before those in $X\backslash Y$. Then,
\[
A=\begin{bmatrix}
K(Y,Y)-K(Y,\ud{X})K(\ud{X},\ud{X})^{-1}K(\ud{X},Y) & 0\\
0 & 0
\end{bmatrix}.
\]
Because the zero rows and columns do not contribute to the 2-norm, and because the top-left block of $A$ is positive-definite, it suffices to prove~\eqref{eqn:AD} for any positive-definite matrix $A$.

Note the following two straightforward inequalities
\begin{gather}
\lambda_{\min}(A-D)\ge\lambda_{\min}(A)-\lambda_{\max}(D),\label{eqn:eig1}\\
\lambda_{\max}(A-D)\le\lambda_{\max}(A)-\lambda_{\min}(D).\label{eqn:eig2}
\end{gather}
Because $D$ consists of the diagonal blocks of $A$, the interlacing theorem of eigenvalues states that for each diagonal block $D_i$, we have
\[
\lambda_{\min}(A)\le\lambda_{\min}(D_i)\le\lambda_{\max}(D_i)\le\lambda_{\max}(A).
\]
Then, taking the max/min eigenvalues of all blocks, we obtain
\begin{equation}\label{eqn:eig3}
\lambda_{\min}(A)\le\lambda_{\min}(D)\le\lambda_{\max}(D)\le\lambda_{\max}(A).
\end{equation}
Substituting~\eqref{eqn:eig3} into~\eqref{eqn:eig1} and~\eqref{eqn:eig2}, together with $\lambda_{\min}(A)>0$, we obtain
\[
\lambda_{\min}(A-D)>-\lambda_{\max}(A)
\quad\text{and}\quad
\lambda_{\max}(A-D)<\lambda_{\max}(A),
\]
which immediately implies that $\|A-D\|_2<\|A\|_2$.
\end{proof}

\subsection{Hierarchical Composition}\label{sec:hierarchical}
While $k_{\text{compositional}}$ maintains the full information inside each domain $S_j$, the information loss across domains caused by the low-rank approximation may still be dramatic. Consider the scenario of a large number of disjoint domains $S_j$; such a scenario is necessarily typical for the purpose of reducing the computational cost. If each domain is adjacent to only a few neighboring domains, it is possible to reduce the information loss in nearby domains.

The idea is to form a hierarchy. Let us first take a two-level hierarchy for example. A few of the neighboring domains $S_j$ form a super-domain $S_J=\bigcup_{j\in J}S_j$. These super-domains are formed such that they are disjoint and they collectively partition the whole domain; i.e., $\bigcup S_J=S$. Under such a hierarchical formation, instead of using landmark points $\ud{X}$ in $S$ to define the covariance across the bottom-level domains $S_j$, we may use landmark points $\ud{X}_J$ chosen from the super-domain $S_J$ to define the covariance. The intuition is that the conditional covariance $k(\bm{x},\bm{x}'|\ud{X}_J)$ for $\bm{x},\bm{x}'\in S_J$ tends to be smaller than $k(\bm{x},\bm{x}'|\ud{X})$, because $\bm{x}$ and $\bm{x}'$ are geographically closer to $\ud{X}_J$ than to $\ud{X}$. Then, the information loss is reduced for points inside the same super-domain $S_J$.

To formalize this idea, we consider an arbitrary hierarchy, which is represented by a rooted tree $T$. See Figure~\ref{fig:tree} for an example. The root node $1$ is associated with the whole domain $S=:S_1$. Each nonleaf node $i$ possesses a set of children $Ch(i)$; correspondingly, the associated domain $S_i$ is partitioned into disjoint sub-domains $S_j$ satisfying $\bigcup_{j\in Ch(i)}=S_i$. The partitioning tree $T$ is almost the most general rooted tree, except that no nodes in $T$ have exactly one child.

\begin{figure}[ht]
\centering
\includegraphics[width=.48\linewidth]{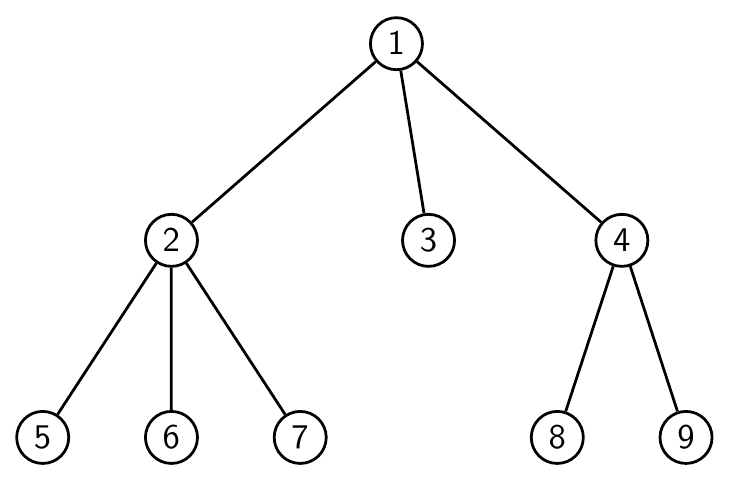}
\includegraphics[width=.48\linewidth]{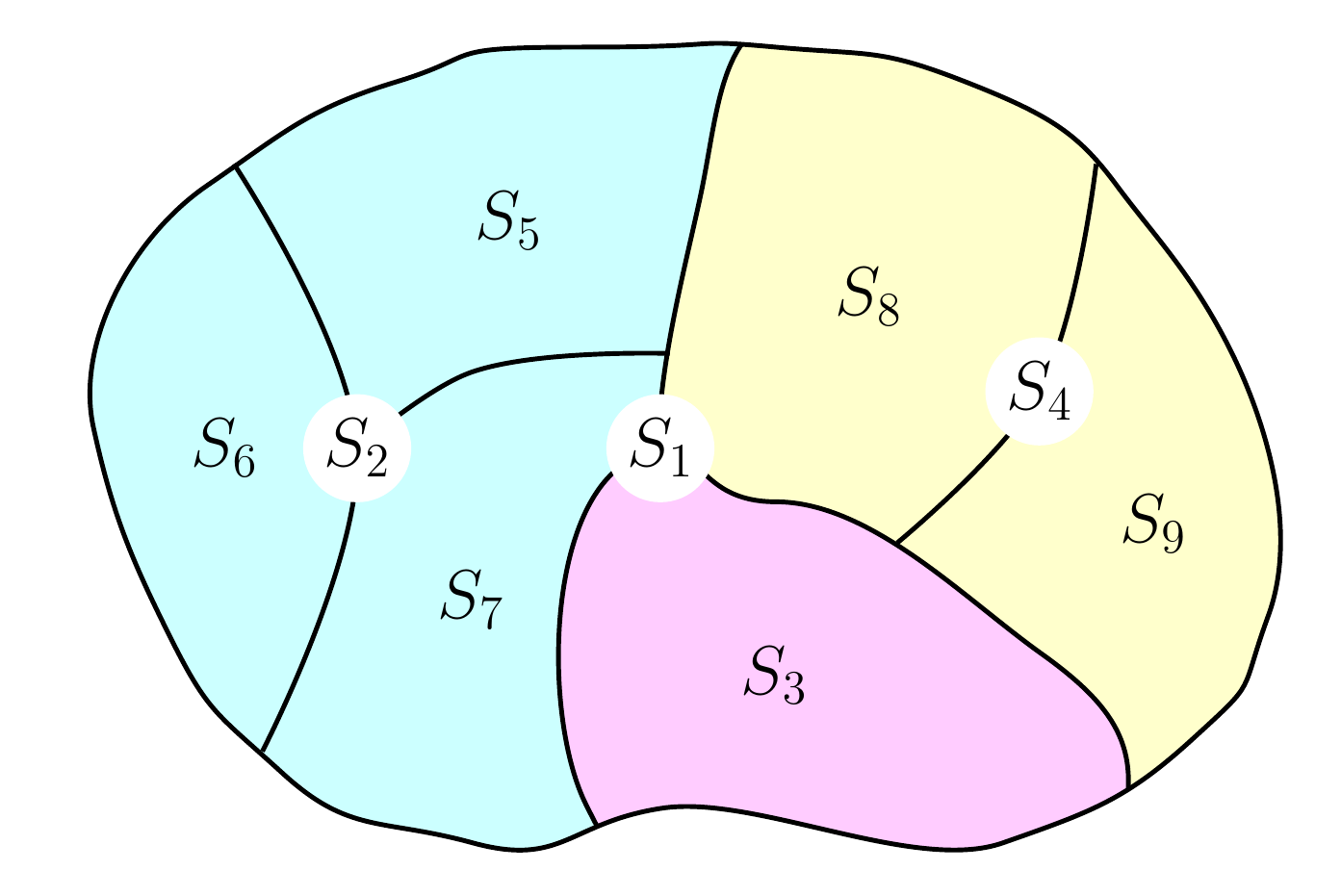}
\caption{Partitioning tree $T$ and the domain $S$.}
\label{fig:tree}
\end{figure}

Each nonleaf node $i$ is associated with a set $\ud{X}_i$ of landmark points, located within the domain $S_i$. We now recursively define the kernel on domains across levels. Continuing the example of Figure~\ref{fig:tree}, node 4 has two children 8 and 9. Since these two children are leaf nodes, the covariance within $S_8$ (or $S_9$) comes from the original kernel $k$, whereas the covariance across $S_8$ and $S_9$ comes from the Nystr\"{o}m approximation by using landmark points $\ud{X}_4$. That is, the kernel is equal to $k(\bm{x},\bm{x}')$ if $\bm{x}$ and $\bm{x}'$ are both in $S_8$ (or in $S_9$), and equal to $k(\bm{x},\ud{X}_4)K(\ud{X}_4,\ud{X}_4)^{-1}k(\ud{X}_4,\bm{x}')$ if they are in $S_8$ and $S_9$ separately. Such a covariance bares less information loss caused by the conditioned set, compared with the use of landmark points located within the whole domain $S$.

Next, consider the covariance between child domains of $S_2$ and those of $S_4$ (say, $S_6$ and $S_9$, respectively). At a first glance, we could have used $k(\bm{x},\ud{X}_1)K(\ud{X}_1,\ud{X}_1)^{-1}k(\ud{X}_1,\bm{x}')$ to define the kernel, because $\ud{X}_1$ consists of landmark points located in the domain that covers both $S_6$ and $S_9$. However, such a definition cannot guarantee the positive-definiteness of the overall kernel. Instead, we approximate $k(\bm{x},\ud{X}_1)$ by using the Nystr\"{o}m approximation $k(\bm{x},\ud{X}_2)K(\ud{X}_2,\ud{X}_2)^{-1}K(\ud{X}_2,\ud{X}_1)$ based on the landmark points $\ud{X}_2$. Then, the covariance for $\bm{x}\in S_6$ and $\bm{x}'\in S_9$ is defined as
\[
\Big[k(\bm{x},\ud{X}_2)K(\ud{X}_2,\ud{X}_2)^{-1}K(\ud{X}_2,\ud{X}_1)\Big]
K(\ud{X}_1,\ud{X}_1)^{-1}
\Big[K(\ud{X}_2,\ud{X}_1)K(\ud{X}_2,\ud{X}_2)^{-1}k(\ud{X}_2,\bm{x}')\Big].
\]

Formally, for a leaf node $j$ and $\bm{x},\bm{x}'\in S_j$, define $k^{(j)}(\bm{x},\bm{x}')\equiv k(\bm{x},\bm{x}')$. For a nonleaf node $i$ and $\bm{x},\bm{x}'\in S_i$, define
\begin{equation}\label{eqn:phi.i}
k^{(i)}(\bm{x},\bm{x}'):=
\begin{cases}
k^{(j)}(\bm{x},\bm{x}'), & \text{if } \bm{x},\bm{x}'\in S_j \text{ for some } j\in Ch(i),\\
\psi^{(i)}(\bm{x},\ud{X}_i)K(\ud{X}_i,\ud{X}_i)^{-1}\psi^{(i)}(\ud{X}_i,\bm{x}'), & \text{otherwise},
\end{cases}
\end{equation}
where if $j$ is a child of $i$ and if $\bm{x}\in S_j$, then
\begin{equation}\label{eqn:psi}
\psi^{(i)}(\bm{x},\ud{X}_i):=
\begin{cases}
k(\bm{x},\ud{X}_i), & \text{if } j \text{ is a leaf node},\\
\psi^{(j)}(\bm{x},\ud{X}_j)K(\ud{X}_j,\ud{X}_j)^{-1}K(\ud{X}_j,\ud{X}_i),
& \text{otherwise}.
\end{cases}
\end{equation}
The $k^{(i)}$ at the root level gives the \emph{hierarchically compositional kernel} of this paper:
\begin{equation}\label{eqn:hierarchical}
k_{\text{hierarchical}}:=k^{\text{(root)}}.
\end{equation}
Clearly, the kernel $k_{\text{compositional}}$ in Section~\ref{sec:composit} is a special case of $k_{\text{hierarchical}}$ when the partitioning tree consists of only the root and the leaf nodes (which are children of the root).

Expanding the recursive formulas~\eqref{eqn:phi.i} and~\eqref{eqn:psi}, for two distinct leaf nodes $j$ and $l$, we see that the covariance between $\bm{x}\in S_j$ and $\bm{x}'\in S_l$ is
\begin{multline}\label{eqn:k.expand}
k_{\text{hierarchical}}(\bm{x},\bm{x}')=k^{(r)}(\bm{x},\bm{x}')\\
=\underbrace{k(\bm{x},\ud{X}_{j_1})K(\ud{X}_{j_1},\ud{X}_{j_1})^{-1}K(\ud{X}_{j_1},\ud{X}_{j_2})\cdots K(\ud{X}_{j_s},\ud{X}_{j_s})^{-1}K(\ud{X}_{j_s},\ud{X}_{r})}_{\psi^{(r)}(\bm{x},\ud{X}_r)}K(\ud{X}_{r},\ud{X}_{r})^{-1}\\
\cdot\underbrace{K(\ud{X}_{r},\ud{X}_{l_t})K(\ud{X}_{l_t},\ud{X}_{l_t})^{-1}\cdots K(\ud{X}_{l_2},\ud{X}_{l_1})K(\ud{X}_{l_1},\ud{X}_{l_1})^{-1}k(\ud{X}_{l_1},\bm{x}')}_{\psi^{(r)}(\ud{X}_r,\bm{x}')},
\end{multline}
where $r$ is the least common ancestor of $j$ and $l$, and $(j,j_1,j_2,\ldots,j_s,r)$ and $(l,l_1,l_2,\ldots,l_t,r)$ are the paths connecting $r$ and the two leaf nodes, respectively. Therefore, we have the following result.

\begin{proposition}
Based on the notation in the preceding paragraph, for $\bm{x}\in S_j$ and $\bm{x}'\in S_l$, we have
\[
k_{\text{hierarchical}}(\bm{x},\bm{x}')=k(\bm{x},\bm{x}'),
\]
whenever $\bm{x}\in\ud{X}_j\cap\ud{X}_{j_1}\cdots\cap\ud{X}_{j_s}$, $\bm{x}'\in\ud{X}_l\cap\ud{X}_{l_1}\cdots\cap\ud{X}_{l_t}$, and either of $\bm{x},\bm{x}'$ belongs to $\ud{X}_r$.
\end{proposition}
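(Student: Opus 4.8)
The plan is to prove the identity by collapsing the long alternating product in~\eqref{eqn:k.expand} one factor at a time, exploiting the exactness of the Nystr\"{o}m map on its own landmark set. The single fact I would rely on throughout is the one already noted just before the first proposition of Section~\ref{sec:composit}: for any nonleaf node $i$ and any point $\bm{z}\in\ud{X}_i$, the vector $K(\ud{X}_i,\ud{X}_i)^{-1}k(\ud{X}_i,\bm{z})$ is a column of the identity, namely $\bm{e}_p$ where $p$ is the position of $\bm{z}$ inside $\ud{X}_i$; transposing, $k(\bm{z},\ud{X}_i)K(\ud{X}_i,\ud{X}_i)^{-1}=\bm{e}_p^T$. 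The consequence I would use repeatedly is that left-multiplying a block $K(\ud{X}_i,\cdot)$ by $k(\bm{z},\ud{X}_i)K(\ud{X}_i,\ud{X}_i)^{-1}$ extracts the row indexed by $\bm{z}$, and since $\bm{z}\in\ud{X}_i$ that row is exactly $k(\bm{z},\cdot)$.

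First I would collapse the left factor $\psi^{(r)}(\bm{x},\ud{X}_r)$ by induction along the path $(j_1,\dots,j_s)$. The product begins with $k(\bm{x},\ud{X}_{j_1})K(\ud{X}_{j_1},\ud{X}_{j_1})^{-1}K(\ud{X}_{j_1},\ud{X}_{j_2})$. Because $\bm{x}\in\ud{X}_{j_1}$, the first two factors form a row of the identity, and applying it to $K(\ud{X}_{j_1},\ud{X}_{j_2})$ selects the row of $\bm{x}$, which (again because $\bm{x}\in\ud{X}_{j_1}$) equals $k(\bm{x},\ud{X}_{j_2})$. This reduces the product to the same shape with one fewer link. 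Repeating the step for each $\ud{X}_{j_a}$, using $\bm{x}\in\ud{X}_{j_a}$ at stage $a$, telescopes the entire chain down to $\psi^{(r)}(\bm{x},\ud{X}_r)=k(\bm{x},\ud{X}_r)$.

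By the symmetric (transposed) argument along the path $(l_1,\dots,l_t)$, using $\bm{x}'\in\ud{X}_{l_1}\cap\cdots\cap\ud{X}_{l_t}$, I would obtain $\psi^{(r)}(\ud{X}_r,\bm{x}')=k(\ud{X}_r,\bm{x}')$. Substituting both collapsed factors into~\eqref{eqn:k.expand} leaves the single Nystr\"{o}m expression $k(\bm{x},\ud{X}_r)K(\ud{X}_r,\ud{X}_r)^{-1}k(\ud{X}_r,\bm{x}')$. Finally, invoking the hypothesis that $\bm{x}\in\ud{X}_r$ (the case $\bm{x}'\in\ud{X}_r$ being identical by symmetry), the factor $k(\bm{x},\ud{X}_r)K(\ud{X}_r,\ud{X}_r)^{-1}$ is once more a row of the identity, and selecting the corresponding entry of $k(\ud{X}_r,\bm{x}')$ yields $k(\bm{x},\bm{x}')$, which is the claim.

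I do not expect a serious obstacle: the argument is a single induction powered by the exactness identity above, with no estimates involved. The only points requiring care are bookkeeping. First, the leaf term $\ud{X}_j$ appearing in the hypothesis is vacuous, since leaf nodes carry no landmark points; the first factor of $\psi^{(r)}(\bm{x},\ud{X}_r)$ is the raw $k(\bm{x},\ud{X}_{j_1})$ by the leaf case of~\eqref{eqn:psi}, so only the memberships $\bm{x}\in\ud{X}_{j_a}$ (and the root membership) are actually used. Second, I would verify explicitly that the row extracted from $K(\ud{X}_{j_a},\ud{X}_{j_{a+1}})$ at each stage is literally $k(\bm{x},\ud{X}_{j_{a+1}})$; this is precisely where $\bm{x}\in\ud{X}_{j_a}$ enters and is the one step that must not be skipped. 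The degenerate cases $s=0$ or $t=0$, when $j$ or $l$ is itself a child of $r$, are covered by reading the corresponding product as empty.
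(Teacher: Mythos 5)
Your proposal is correct and is essentially the paper's own proof: the paper likewise takes the expanded product~\eqref{eqn:k.expand} and recursively applies the fact that $K(\ud{X},\ud{X})^{-1}k(\ud{X},\bm{x})$ is a column of the identity when $\bm{x}\in\ud{X}$, which is exactly your collapsing argument (your version just spells out the telescoping and the edge cases, such as the vacuous $\ud{X}_j$ at the leaf, more explicitly).
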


\begin{proof}
On~\eqref{eqn:k.expand}, recursively apply the fact that if $\bm{x}\in\ud{X}$, then $K(\ud{X},\ud{X})^{-1}k(\ud{X},\bm{x})$ is a column of the identity matrix where the only nonzero element (i.e., $1$) is located with respect to the location of $\bm{x}$ inside $\ud{X}$.
\end{proof}

The following theorem guarantees the validity of the kernel. Its proof strategy is similar to that of Theorem~\ref{thm:pd}, but it is complex because of recursion. We defer the proof to Appendix~\ref{sec:proof.pd.2}.

\begin{theorem}\label{thm:pd.2}
The function $k_{\text{hierarchical}}$ is positive-definite if $k$ is positive-definite and $K(\ud{X}_i,\ud{X}_i)$ is invertible for all sets of landmark points $\ud{X}_i$ associated with the nonleaf nodes $i$. Moreover, $k_{\text{hierarchical}}$ is strictly positive-definite if $k$ is so.
\end{theorem}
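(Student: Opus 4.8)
The plan is to prove a stronger statement by induction on the height of the nodes of $T$, from which the positive-definiteness at the root follows at once. Let $\rho$ denote the root node, fix a finite set $Z\subset S$ of distinct points, write $Z_i=Z\cap S_i$, and abbreviate $\Phi_i=K^{(i)}(Z_i,Z_i)$, $M_i=K(\ud{X}_i,\ud{X}_i)$, and let $\Psi_i$ be the matrix whose rows are $\psi^{(i)}(\bm{z},\ud{X}_i)$ for $\bm{z}\in Z_i$. Splitting the two cases of~\eqref{eqn:phi.i} exactly as in the decomposition $k_{\text{compositional}}=k_1+k_2$ of Theorem~\ref{thm:pd}, I would first record the block identity
\[
\Phi_i=\Psi_i M_i^{-1}\Psi_i^{\top}+\bigoplus_{j\in Ch(i)}\Big(\Phi_j-(\Psi_i)_j M_i^{-1}(\Psi_i)_j^{\top}\Big),
\]
where $(\Psi_i)_j$ is the row-restriction of $\Psi_i$ to $Z_j$ and $\bigoplus$ is block-diagonal assembly over children: the diagonal block of $\Phi_i$ belonging to child $j$ is precisely $\Phi_j$ (since $k^{(i)}=k^{(j)}$ inside $S_j$), while its off-diagonal blocks come entirely from the rank-$|\ud{X}_i|$ term. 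The inductive claim I would carry is $C_i:=\Phi_i-\Psi_i M_i^{-1}\Psi_i^{\top}\succeq0$; since $M_i\succ0$ this already yields $\Phi_i\succeq0$.

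By the identity above, the inductive step reduces to showing each child block $\Phi_j-(\Psi_i)_j M_i^{-1}(\Psi_i)_j^{\top}$ is positive semi-definite. If $j$ is a leaf this block is the Schur-complement matrix $K(Z_j,Z_j)-K(Z_j,\ud{X}_i)M_i^{-1}K(\ud{X}_i,Z_j)$, which is PSD by Lemma~\ref{lem:pd}. If $j$ is a nonleaf, the recursion~\eqref{eqn:psi} gives $(\Psi_i)_j=\Psi_j M_j^{-1}K(\ud{X}_j,\ud{X}_i)$, so the block equals $\Phi_j-\Psi_j M_j^{-1}BM_i^{-1}B^{\top}M_j^{-1}\Psi_j^{\top}$ with $B=K(\ud{X}_j,\ud{X}_i)$. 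Lemma~\ref{lem:pd} applied at the landmark level gives $BM_i^{-1}B^{\top}\preceq M_j$; conjugating by $M_j^{-1}$ and then by $\Psi_j$ yields $\Psi_j M_j^{-1}BM_i^{-1}B^{\top}M_j^{-1}\Psi_j^{\top}\preceq\Psi_j M_j^{-1}\Psi_j^{\top}$, and the inductive hypothesis $C_j\succeq0$ gives $\Psi_j M_j^{-1}\Psi_j^{\top}\preceq\Phi_j$. Chaining these bounds shows the block is PSD. The base case is a node whose children are all leaves, covered by Lemma~\ref{lem:pd} alone; applying the claim at $\rho$ gives $\Phi_\rho\succeq\Psi_\rho M_\rho^{-1}\Psi_\rho^{\top}\succeq0$, which proves positive-definiteness.

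For strict positive-definiteness when $k$ is strictly PD, I would strengthen the inductive hypothesis to track the null space of $C_i$. A leaf child block is a Schur complement whose rows and columns indexed by $Z_j\cap\ud{X}_i$ vanish and which is strictly PD elsewhere (as in the strict part of Theorem~\ref{thm:pd}), so its vanishing on $\bm{v}_j$ forces $\bm{v}_j$ to be supported on $Z_j\cap\ud{X}_i$. A nonleaf child block splits as $C_j+\Psi_j M_j^{-1}(M_j-BM_i^{-1}B^{\top})M_j^{-1}\Psi_j^{\top}$, and its vanishing forces $\bm{v}_j$ into the null space of $C_j$ (controlled by the hypothesis) while simultaneously constraining $M_j^{-1}\Psi_j^{\top}\bm{v}_j$ through the inner Schur complement $M_j-BM_i^{-1}B^{\top}$, which is strictly PD off $\ud{X}_j\cap\ud{X}_i$. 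At the root, $\bm{v}^{\top}\Phi_\rho\bm{v}=0$ forces both the block-diagonal correction and the form $\bm{v}^{\top}\Psi_\rho M_\rho^{-1}\Psi_\rho^{\top}\bm{v}$ to vanish; the first localizes $\bm{v}$ onto landmark points, and on those the second vanishes only if $\bm{v}=0$, by strict positive-definiteness of $k$ restricted to $\ud{X}_\rho$ exactly as in Theorem~\ref{thm:pd}. The main obstacle is precisely this last part: the recursion makes the bookkeeping of which coordinates of $\bm{v}$ survive at each level delicate, and one must separately account for data points of $Z$ that coincide with landmark points at several levels of the hierarchy, for which the equality of $k_{\text{hierarchical}}$ with $k$ on landmark points (the preceding Proposition) is the right tool.
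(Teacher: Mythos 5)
Your proposal is correct and follows essentially the same route as the paper's own proof: your recursive block identity, unrolled over the tree, is exactly the paper's telescoping decomposition of $k_{\text{hierarchical}}$ into the positive-definite summands $\xi^{(i)}$ of~\eqref{eqn:xi} (your split $C_j+\Psi_j M_j^{-1}(M_j-BM_i^{-1}B^{\top})M_j^{-1}\Psi_j^{\top}$ is the matrix form of the middle case of~\eqref{eqn:xi}), and your null-space tracking for the strict case is precisely the paper's inductive statement (Q), which localizes the surviving coefficients onto the intersections of landmark sets and finishes, as you do, with the strict positive-definiteness of $K(\ud{X}_{\text{root}},\ud{X}_{\text{root}})$. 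The ``delicate bookkeeping'' you flag at the end is resolved in the paper exactly by the tool you name---the landmark-point identity behind the Proposition---so your plan closes without any missing ingredient; the only difference is packaging (bottom-up matrix induction with Loewner-order chaining versus an explicit function-level telescoping sum).
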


\section{Matrix View}\label{sec:mat}
The kernel matrix $K_{\text{hierarchical}}(X,X)$ for a set of training points $X$ exhibits a hierarchical block structure. Figure~\ref{fig:matrix} pictorially shows such a structure for the example in Figure~\ref{fig:tree}. To avoid degenerate empty blocks, we assume that $X\cap S_j\ne\emptyset$ for all leaf nodes $j$ in the partitioning tree $T$. 

\begin{figure}[ht]
\centering
\includegraphics[width=.45\linewidth]{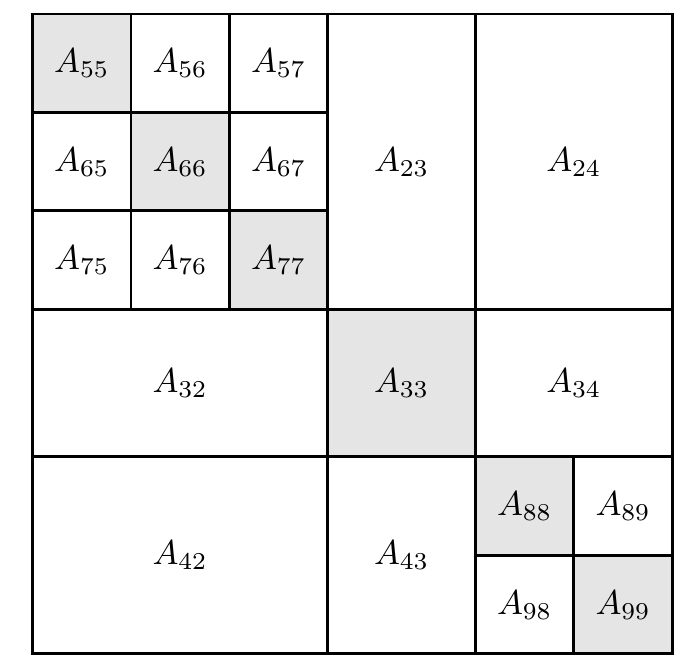}
\caption{The matrix $K_{\text{hierarchical}}$ corresponding to the partitioning tree $T$ in Figure~\ref{fig:tree}.}
\label{fig:matrix}
\end{figure}

Formally, for any node $i$, let $X_i=X\cap S_i$; that is, $X_i$ consists of the training points that fall within the domain $S_i$. Then, define a matrix $A\in\real^{n\times n}$ with the following structure:
\begin{enumerate}
\item For every node $i$, $A_{ii}$ is a diagonal block whose rows and columns correspond to $X_i$; for every pair of sibling nodes $i$ and $j$, $A_{ij}$ is an off-diagonal block whose rows correspond to $X_i$ and columns to $X_j$.
\item For every leaf node $i$, $A_{ii}=K(X_i,X_i)$.
\item For every pair of sibling nodes $i$ and $j$, $A_{ij}=U_i\Sigma_pU_j^T$, where $p$ is the parent of $i$ and $j$, with $\Sigma_p$ and $U_i$ defined next.
\item For every nonleaf node $p$, $\Sigma_p=K(\ud{X}_p,\ud{X}_p)$.
\item For every leaf node $i$, $U_i=K(X_i,\ud{X}_p)K(\ud{X}_p,\ud{X}_p)^{-1}$, where $p$ is the parent of $i$.
\item For every pair of child node $i$ and parent node $p$ not being the root, the block of $U_p$ corresponding to node $i$ is $U_iW_p$, where $W_p=K(\ud{X}_p,\ud{X}_r)K(\ud{X}_r,\ud{X}_r)^{-1}$ and $r$ is the parent of $p$. That is, in the matrix form,
\[
U_p=\begin{bmatrix}\vdots\\ U_i\\ \vdots\end{bmatrix}_{i\in Ch(p)}\cdot W_p.
\]
\end{enumerate}
One sees that $A$ is exactly equal to $K_{\text{hierarchical}}(X,X)$ by verifying against the definition of $k_{\text{hierarchical}}$ in~\eqref{eqn:phi.i}--\eqref{eqn:hierarchical}.

Such a hierarchical block structure is a special case of the \emph{recursively low-rank compressed matrix} studied in~\citet{Chen2014a}. In this matrix, off-diagonal blocks are recursively compressed into low rank through change of basis, while the main-diagonal blocks at the leaf level remain intact. Its connections and distinctions with related matrices (e.g., FMM matrices~\citep{Sun2001} and hierarchical matrices~\citep{Hackbusch1999}) were discussed in detail in~\citet{Chen2014a}. The signature of a recursively low-rank compressed matrix is that many matrix operations can be performed with a cost, loosely speaking, linear in $n$. The matrix structure in this paper, which results from the kernel $k_{\text{hierarchical}}$, specializes a general recursively low-rank compressed matrix in the following aspects: (a) the matrix is symmetric; (b) the middle factor $\Sigma_p$ in each $A_{ij}$ is the same for all child pairs $i,j$ of $p$; and (c) the change-of-basis factor $W_p$ is the same for all children $i$ of $p$. Hence, storage and time costs of matrix operations are reduced by a constant factor compared with those of the algorithms in~\citet{Chen2014a}. In what follows, we discuss the algorithmic aspects of the matrix operations needed to carry out kernel method computations, but defer the complexity analysis in Section~\ref{sec:computational}. Discussions of additional matrix operations useful in a general machine learning context are made toward the end of the paper.

\subsection{Matrix-Vector Multiplication}\label{sec:matvec}
First, we consider computing the matrix-vector product $y=Ab$. Let the blocks of a vector be labeled in the same manner as those of the matrix $A$. That is, for any node $i$, $b_i$ denotes the part of $b$ that corresponds to the point set $X_i$. Then, the vector $y$ is clearly an accumulation of the smaller matrix-vector products $A_{ij}b_j$ in the appropriate blocks, for all sibling pairs $i,j$ and for all leaf nodes $i=j$ (cf.\ Figure~\ref{fig:matrix}). When $i=j$ is a leaf node, the computation of $A_{ij}b_j$ is straightforward. On the other hand, when $i$ and $j$ constitute a pair of sibling nodes, we consider a descendant leaf node $l$ of $i$. The block of $A_{ij}b_j$ corresponding to node $l$ admits an expanded expression
\begin{equation}\label{eqn:complicated.brackets}
U_lW_{l_1}W_{l_2}\cdots W_{l_s}W_i\Sigma_pW_j^T\left(\sum_{q_t\in Ch(j)}W_{q_t}^T\cdots\left(\sum_{q_2\in Ch(q_3)}W_{q_2}^T\left(\sum_{q_1\in Ch(q_2)}W_{q_1}^T\left(\sum_{q\in Ch(q_1)}U_q^Tb_q\right)\right)\right)\right),
\end{equation}
where $(l,l_1,l_2,\ldots,l_s,i,p)$ is the path connecting $l$ and the parent $p$ of $i$. This expression assumes that all the leaf descendants of $j$ are on the same level so that the expression is not overly complex; but the subsequent reasoning applies to the general case. The terms inside the nested parentheses of~\eqref{eqn:complicated.brackets} motivate the following recursive definition:
\[
c_i=\begin{dcases}
U_i^Tb_i, & i \text{ is a leaf node},\\
W_i^T\sum_{j\in Ch(i)}c_j, & \text{otherwise}.\\
\end{dcases}
\]
Clearly, all $c_j$'s can be computed within one pass of a post-order tree traversal. Upon the completion of the traversal, we rewrite~\eqref{eqn:complicated.brackets} as $U_lW_{l_1}W_{l_2}\cdots W_{l_s}W_i\Sigma_pc_j$. Then we note that for any leaf node $l$, $y_l$ is a sum of these expressions with all $p$ along the path connecting $l$ and the root, and additionally, of $A_{ll}b_l$. In other words, we have
\begin{multline*}
y_l=A_{ll}b_l+\left(\sum_{l'\in Ch(l_1)\backslash\{l\}}U_l\Sigma_{l_1}c_{l'}\right)
+\left(\sum_{l'\in Ch(l_2)\backslash\{l_1\}}U_lW_{l_1}\Sigma_{l_2}c_{l'}\right)\\
+\cdots
+\left(\sum_{l'\in Ch(\text{root})\backslash\{l_t\}}U_lW_{l_1}W_{l_2}\cdots W_{l_t}\Sigma_{\text{root}}c_{l'}\right),
\end{multline*}
where $(l,l_1,l_2,\ldots,l_t,\text{root})$ is the path connecting $l$ and the root. Therefore, we recursively define another quantity
\[
d_j=W_id_i+\sum_{j'\in Ch(i)\backslash\{j\}}\Sigma_ic_{j'}
\]
for all nonroot nodes $j$ with parent $i$. Clearly, all $d_j$'s can also be computed within one pass of a pre-order tree traversal. Upon the completion of this second traversal, we have $y_l=A_{ll}b_l+U_ld_l$, which concludes the computation of the whole vector $y$.

As such, computing the matrix-vector product $y=Ab$ consists of one post-order tree traversal, followed by a pre-order one. We refer the reader to~\citet{Chen2014a} for a full account of the computational steps. For completeness, we summarize the pesudocode in Algorithm~\ref{algo:Ab} as a reference for computer implementation.

\begin{algorithm}[!ht]
\caption{Computing $y=Ab$}
\label{algo:Ab}
\begin{algorithmic}[1]
\State Initialize $c_i\gets0$, $d_i\gets0$ for each nonroot node $i$ of the tree
\State \Call{Upward}{\texttt{root}}
\State \Call{Downward}{\texttt{root}}
\Statex

\Function{Upward}{$i$}
\If{$i$ is leaf}
\State $c_i\gets U_i^Tb_i$; \,\, $y_i\gets A_{ii}b_i$
\Else
\ForAll{children $j$ of $i$}
\State \Call{Upward}{$j$}
\State $c_i\gets c_i+W_i^Tc_j$, \textbf{if} $i$ is not root
\EndFor
\EndIf
\If{$i$ is not root}
\State \textbf{for all} siblings $l$ of $i$ \textbf{do} $d_l\gets d_l+\Sigma_p c_i$ \textbf{end for}
\Comment{$p$ is parent of $i$}
\EndIf
\EndFunction
\Statex

\Function{Downward}{$i$}
\State \textbf{if} $i$ is leaf \textbf{then} $y_i\gets y_i+U_id_i$ and return \textbf{end if}
\ForAll{children $j$ of $i$}
\State $d_j\gets d_j+W_id_i$, \textbf{if} $i$ is not root
\State \Call{Downward}{$j$}
\EndFor
\EndFunction
\end{algorithmic}
\end{algorithm}

\subsection{Matrix Inversion}\label{sec:invert}
Next, we consider computing $A^{-1}$, for which we use the tilded notation $\tilde{A}:=A^{-1}$. One can show~\citep{Chen2014a} that $\tilde{A}$ has exactly the same hierarchical block structure as does $A$. In other words, the structure of $\tilde{A}$ can be described by reusing the verbatim at the beginning of Section~\ref{sec:mat}, with the factors $A_{ii}$, $A_{ij}$, $U_i$, $\Sigma_p$, $W_p$ replaced by the tilded version $\tilde{A}_{ii}$, $\tilde{A}_{ij}$, $\tilde{U}_i$, $\tilde{\Sigma}_p$, $\tilde{W}_p$, respectively. The proof is both inductive and constructive, providing explicit formulas to compute $\tilde{A}_{ii}$, $\tilde{A}_{ij}$, $\tilde{U}_i$, $\tilde{\Sigma}_p$, $\tilde{W}_p$ level by level. The essential idea is that if $(A_{ii}-U_i\Sigma_pU_i^T)^{-1}=\tilde{A}_{ii}-\tilde{U}_i\tilde{\Sigma}_p\tilde{U}_i^T$ have been computed for all children $i$ of a node $p$, and if $r$ is the parent of $p$, then
\[
A_{pp}-U_p\Sigma_rU_p^T=
\begin{bmatrix}
\ddots\phantom{A_{ii}-U_i\Sigma_pU_i^T}\\
A_{ii}-U_i\Sigma_pU_i^T\\
\phantom{A_{ii}-U_i\Sigma_pU_i^T}\ddots\\
\end{bmatrix}
+\begin{bmatrix}\vdots \\ U_i \\ \vdots\end{bmatrix}
\Sigma_p
\begin{bmatrix}\cdots & U_i^T & \cdots\end{bmatrix}
-U_p\Sigma_rU_p^T.
\]
Hence, the inversion of $A_{pp}-U_p\Sigma_rU_p^T$ can be done by applying the Sherman--Morrison--Woodbury formula, which results in a form $\tilde{A}_{pp}-\tilde{U}_p\tilde{\Sigma}_r\tilde{U}_p^T$, where $\tilde{W}_r$ is determined based on the change of basis from $\tilde{U}_i$ to $\tilde{U}_p$ and the middle factor $\tilde{\Sigma}_r$ is also determined. In addition, the factors $\tilde{\Sigma}_j$ (computed previously) for all descendants $j$ of $r$ must be corrected because the diagonal block of $\tilde{A}_{pp}$ corresponding to node $j$ is a low-rank correction of the corresponding diagonal block of $\tilde{A}_{ii}$ with the same basis. Such a down-cascading correction occurs whenever the induction proceeds from one child level to the parent level; but computationally, the correction on any node can be accumulated during the whole induction process. The net result of the accumulation is that each $\tilde{\Sigma}_i$ needs be corrected only once, which ensures an efficient computation.

We refer the reader to~\citet{Chen2014a} for a full account of the computational steps. Similar to the matrix-vector multiplication, the overall computation here consists of one post-order tree traversal (corresponding to the induction), followed by a pre-order one (corresponding to the down-cascading correction). For completeness, we summarize the pseudocode in Algorithm~\ref{algo:invA} as a reference for computer implementation.

{%
\renewcommand{\baselinestretch}{1.1}
\begin{algorithm}[!ht]
\caption{Computing $\tilde{A}=A^{-1}$}
\label{algo:invA}
\begin{algorithmic}[1]
\State \Call{Upward}{\texttt{root}}
\State \Call{Downward}{\texttt{root}}
\Statex

\Function{Upward}{$i$}
\If{$i$ is leaf}
\State\label{algo.ln:patch1} $\tilde{A}_{ii}\gets (A_{ii}-U_i\Sigma_pU_i^T)^{-1}$; \,\,
$\tilde{U}_i\gets \tilde{A}_{ii}U_i$; \,\,
$\tilde{\Theta}_i\gets U_i^T\tilde{U}_i$
\Comment{$p$ is parent of $i$}
\State\label{algo.ln:patch4} return
\EndIf
\ForAll{children $j$ of $i$}
\State \Call{Upward}{$j$}
\State $\tilde{W}_j\gets (I+\tilde{\Sigma}_j\tilde{\Xi}_j)W_j$ \textbf{if} $j$ is not leaf
\State $\tilde{\Theta}_j\gets W_j^T\tilde{\Xi}_j\tilde{W}_j$ \textbf{if} $j$ is not leaf
\EndFor
\State $\tilde{\Xi}_i\gets\sum_{j\in Ch(i)}\tilde{\Theta}_j$
\State \textbf{if} $i$ is not root \textbf{then}
$\tilde{\Lambda}_i\gets \Sigma_i-W_i\Sigma_pW_i^T$ \textbf{else} $\tilde{\Lambda}_i\gets \Sigma_i$
\textbf{end if}
\Comment{$p$ is parent of $i$}
\State $\tilde{\Sigma}_i\gets -(I+\tilde{\Lambda}_i\tilde{\Xi}_i)^{-1}\tilde{\Lambda}_i$
\ForAll{children $j$ of $i$}
\State $\tilde{E}_j\gets\tilde{W}_j\tilde{\Sigma}_i\tilde{W}_j^T$ \textbf{if} $j$ is not leaf
\EndFor
\State $\tilde{E}_i\gets0$ if $i$ is root
\EndFunction
\Statex
\Function{Downward}{$i$}
\If{$i$ is leaf}
\State $\tilde{A}_{ii}\gets\tilde{A}_{ii}+\tilde{U}_i\tilde{\Sigma}_p\tilde{U}_i^T$ if $i$ is not root
\Comment{$p$ is parent of $i$}
\Else
\State $\tilde{E}_i\gets \tilde{E}_i+\tilde{W}_i\tilde{E}_p\tilde{W}_i^T$ if $i$ is not root
\Comment{$p$ is parent of $i$}
\State $\tilde{\Sigma}_i\gets\tilde{\Sigma}_i+\tilde{E}_i$
\State \textbf{for all} children $j$ of $i$ \textbf{do} \Call{Downward}{$j$} \textbf{end for}
\EndIf
\EndFunction
\end{algorithmic}
\end{algorithm}
}

\subsection{(Implicit) Out-of-Sample Construction}\label{sec:out.of.sample}
Now, we consider the vector $k_{\text{hierarchical}}(X,\bm{x})$ for an existing training set $X$ and a new point $\bm{x}$ in the testing set. Generally, this vector is not used alone but it appears in the inner product with some other vector $w$ (see~\eqref{eqn:KRR}). Whereas the construction of $k_{\text{hierarchical}}(X,\bm{x})$ can be done at $O(n)$ cost (details omitted here), we shall consider instead the computation of the inner product $w^Tk_{\text{hierarchical}}(X,\bm{x})$, because the cost of computing this inner product is proportional to only the height of the tree per $\bm{x}$ after an $O(n)$ preprocessing. The preprocessing cost is amortized on each $\bm{x}$ and thus is generally negligible for a large number of $\bm{x}$'s.

The computation of $w^Tk_{\text{hierarchical}}(X,\bm{x})$ was not described in~\citet{Chen2014a}, but the idea is similar to that of the matrix-vector multiplication in Section~\ref{sec:matvec}. To simplify notation, let $v\equiv k_{\text{hierarchical}}(X,\bm{x})$. Assume that $\bm{x}$ falls in the domain $S_j$ for some leaf node $j$. Then, $v_j=k(X_j,\bm{x})$ and for any leaf node $l\ne j$,
\[
v_l=U_lW_{l_1}W_{l_2}\cdots W_{l_s}\Sigma_pW_{j_t}^T\cdots W_{j_2}^TW_{j_1}^TK(\ud{X}_{j_1},\ud{X}_{j_1})^{-1}k(\ud{X}_{j_1},\bm{x}),
\]
where $p$ is the least common ancestor of $l$ and $j$, and $(l,l_1,l_2,\ldots,l_s,p)$ and $(j,j_1,j_2,\ldots,j_t,p)$ are the paths connecting $l$ and $p$, and $j$ and $p$, respectively. Therefore, if we define
\begin{equation}\label{eqn:dd}
d_{j_u}=W_{j_u}^Td_{j_{u-1}},\quad
\cdots\quad
d_{j_2}=W_{j_2}^Td_{j_1},\quad
d_{j_1}=W_{j_1}^Td_j,\quad
d_j=K(\ud{X}_{j_1},\ud{X}_{j_1})^{-1}k(\ud{X}_{j_1},\bm{x}),
\end{equation}
where $(j,j_1,j_2,\ldots,j_u,\text{root})$ is the path connecting $j$ and the root, then we have
\begin{equation}\label{eqn:wv}
w^Tv=w_j^Tk(X_j,\bm{x})+\sum_{l\ne j,\,\, l \text{ is leaf}}w_l^TU_lW_{l_1}W_{l_2}\cdots W_{l_s}\Sigma_pd_{j_t}.
\end{equation}
If we further define
\begin{equation}\label{eqn:ec}
e_l=\begin{dcases}
U_l^Tw_l, & l \text{ is leaf},\\
\sum_{i\in Ch(l)}W_l^Te_i, & \text{otherwise},
\end{dcases}
\qquad\text{and}\qquad
c_q=\Sigma_p^Te_l,
\end{equation}
where $p$ is the parent of the sibling pair $q$ and $l$, then~\eqref{eqn:wv} is simplified to
\begin{equation}\label{eqn:wv2}
w^Tv=w_j^Tk(X_j,\bm{x})+\sum_{j_t\in\text{path}(j,\text{root})}c_{j_t}^Td_{j_t}.
\end{equation}

Based on~\eqref{eqn:dd}--\eqref{eqn:wv2}, we see that $w^Tv\equiv w^Tk_{\text{hierarchical}}(X,\bm{x})$ can be computed in the following two phases. In the first phase, we compute $e_l$ and $c_l$ for all nonleaf nodes $l$ according to~\eqref{eqn:ec}. Such a computation can be done by using a post-order tree traversal and is independent of $\bm{x}$. Thus, this computation is the preprocessing step. In the second phase, we compute $d_{j_t}$ for all $j_t$ along the path $(j,j_1,j_2,\ldots,j_u,\text{root})$ according to~\eqref{eqn:dd}. Once they are computed, the summation~\eqref{eqn:wv2} is straightforward and we thus conclude the computation. Note that the second phase is always conducted along a certain path connecting the root and the leaf $j$. As long as the determination of which leaf $j$ the point $\bm{x}$ falls in is restricted on this path, the cost of the second phase is always asymptotically smaller than that of a tree traversal. We summarize the pseudocode in Algorithm~\ref{algo:out.of.sample}.

\begin{algorithm}[!ht]
\caption{Computing $z=w^Tk_{\text{hierarchical}}(X,\bm{x})$ for $\bm{x}\notin X$}
\label{algo:out.of.sample}
\begin{algorithmic}[1]
\State Prefactorize $K(\ud{X}_p,\ud{X}_p)$ for all parents $p$ of leaf nodes
\State Initialize $c_i\gets0$, $d_i\gets0$ for each nonroot node $i$ of the tree
\State \Call{Common-Upward}{\texttt{root}}
\Statex $\triangleright$ The above three steps are independent of $\bm{x}$ and are treated as precomputation. In computer implementation, the intermediate results $c_i$ are carried over to the next step \Call{Second-Upward}{}, whereas the contents of $d_i$ are discarded and the allocated memory is reused.
\State \Call{Second-Upward}{\texttt{root}}
\Statex
\Function{Common-Upward}{$i$}
\If{$i$ is leaf}
\State $d_i\gets U_i^Tw_i$
\Else
\ForAll{children $j$ of $i$}
\State \Call{Common-Upward}{$j$}
\State $d_i\gets d_i+W_i^Td_j$, \textbf{if} $i$ is not root
\EndFor
\EndIf
\If{$i$ is not root}
\State \textbf{for all} siblings $l$ of $i$ \textbf{do} $c_l\gets\Sigma_p^T d_i$ \textbf{end for}
\Comment{$p$ is parent of $i$}
\EndIf
\EndFunction
\Statex
\Function{Second-Upward}{$i$}
\If{$i$ is leaf}
\State\label{algo.ln:pp} $d_i\gets K(\ud{X}_p,\ud{X}_p)^{-1}k(\ud{X}_p,\bm{x})$
\Comment{$p$ is parent of $i$}
\State\label{algo.ln:qq} $z\gets w_i^Tk(X_i,\bm{x})$
\Else
\State\label{algo.ln:fall} Find the child $j$ (among all children of $i$) where $\bm{x}$ lies on
\State \Call{Second-Upward}{$j$}
\State $d_i\gets W_i^Td_j$ if $i$ is not root
\EndIf
\State $z\gets z+c_i^Td_i$ if $i$ is not root
\EndFunction
\end{algorithmic}
\end{algorithm}

\section{Practical Considerations}\label{sec:computational}
Sections~\ref{sec:model} and~\ref{sec:mat} provide a general framework for the interpretation of and the computation with the proposed kernel $k_{\text{hierarchical}}$; they, however, have not covered a range of practical issues. Some issues are dimension dependent (this paper concerns particularly dimensions higher than three, which dominates the use of related matrices, e.g., FMM and hierarchical matrices, in scientific computing); whereas others are numerically related. In this section, we discuss the handling of several of these issues and ensure that the proposed kernel is computationally efficient.

\subsection{Partitioning of Domain}\label{sec:part}
Whereas the partitioning of a low-dimensional domain can be made regular, the partitioning of a high-dimensional domain is inherently difficult because of the curse of dimensionality. Several data-driven approaches appear to be natural choices but they have pros and cons.

The k-d tree~\citep{Bentley1975} approach is known to be efficient for nearest neighbor search. Generally, the approach iteratively selects an axis of the bounding box that contains the training points and partitions the axis such that the numbers of points on both sides are balanced. A drawback of this approach, particularly in the context of classification, lies in the case that some attributes of the data are binary and the counts are highly imbalanced. These attributes sometimes result from a conversion of categorical attributes to numeric ones. Partitioning on these axes is unlikely to be balanced. An alternative is to partition the axis into two segments of equal length. This approach often results in highly imbalanced partitioning because data are generally not evenly distributed.

The k-means~\citep{MacQueen1967} approach partitions a point set through a k-means clustering and hence the partitioning is a Voronoi diagram of the cluster centers. The resulting tree is not necessarily binary if there exhibits more than two natural clusters in the data on some level. An advantage of this approach is that the clustering often results in a tight grouping of the points, such that the subsequent Nystr\"{o}m approximations bare a good quality. A disadvantage is that the approach suffers from loss of clusters during iterations and it is less robust if only one set of initial guesses is used. Our experience indicates that a robust implementation of k-means sometimes costs much more than does the rest of the training computation.

The PCA~\citep{Pearson1901} approach recursively partitions the data according to the principal axis (the direction along which the data varies the most). This approach is based on a Gaussian assumption and often results in compact partitions if the data indeed conforms to this assumption. However, it ignores the skewness and other higher order moments, which are also crucial to the shape of data in high dimensions. In the standard case, the hyperplane that partitions the data passes through the mean, which may result in highly imbalanced partitions. An alternative is to move the hyperplane along the principal direction such that the two partitions are always balanced. Computationwise, this approach requires computing the dominant singular vectors of the shifted data matrix~\citep{divide.and.conquer.knn}, which can be achieved by using a power iteration~\citep{Golub1996} or the Lanczos algorithm~\citep{Saad2003}. However, even if we compute the singular vectors only approximately, we find that the cost is still too high (see e.g., Section~\ref{sec:pca}).

For computational efficiency, we recommend the random projection~\citep{Johnson1984} approach. This approach uses a random vector as the normal direction of the partitioning hyperplane and positions the hyperplane such that the numbers of points on the two sides are balanced. Computationwise, it amounts to projecting the points along the random direction and splitting them in two halves around the median. This approach is motivated by the study of dimension reduction, where it was observed that the projection quality on a random subspace is as good as that on the space spanned by the dominant singular vectors, in the sense that Euclidean distances are approximately preserved~\citep{Dasgupta2002}. Moreover, the heuristic use of a one-dimensional subspace for projection is robust and is computationally highly efficient. In such a case, the purpose is not to preserve the Euclidean distance, but rather, to serve as an efficient procedure for partitioning. In our experience, the eventual regression/classification performance of using this approach is almost identical to that of the PCA approach (see  Section~\ref{sec:pca}).

Under random projection, one can quickly determine which partition a new point $\bm{x}$ lies in, by comparing its projected coordinate with the median of those of the training points.

\subsection{Choice of Landmark Points}
The set of landmark points $\ud{X}_i$ associated to each domain $S_i$ simply consists of uniformly random samples of the point set $X_i$.

Some alternatives exist. Using the k-means centers of $X_i$ as the landmark points may improve the Nystr\"{o}m approximations, but computing them for every nonleaf node $i$ is often much more expensive than the rest of the training computation. On the other hand, using the uniformly random samples of the domain $S_i$ as the landmark points may sound theoretically preferable, but they are difficult to compute because the domains are polyhedra rather than regular shapes (e.g., boxes).

We mentioned in Section~\ref{sec:hierarchical} that the compositional kernel $k_{\text{compositional}}$ is a special case of $k_{\text{hierarchical}}$, if in the partitioning tree every child of the root is a leaf node. Interestingly, another interpretation exists. If we relax the requirement that the landmark points $\ud{X}_i$ must reside within the domain $S_i$, then for any partitioning tree, using the same set of landmark points $\ud{X}_i$ for every $i$ also results in the compositional kernel $k_{\text{compositional}}$. By following the proof of Theorem~\ref{thm:pd.2}, one sees that the relation $\ud{X}_i\subset S_i$ is in fact nonessential. Hence, the requirement $\ud{X}_i\subset S_i$ reflects only the modeling desire of a better kernel approximation in local domains; it is not a necessary condition for kernel validity.

\subsection{Numerical Stability}
A notorious numerical challenge for kernel matrices $K$ is that they are exceedingly ill-conditioned as the size increases. Hence, the regularization $\lambda I$ serves as an important rescue of matrix inversion. This challenge is less severe for the matrix of the proposed kernel, $K_{\text{hierarchical}}(X,X)$, but since the components of this matrix are defined with the term $K(\ud{X}_i,\ud{X}_i)^{-1}$ for nonleaf nodes $i$, one must ensure that each $K(\ud{X}_i,\ud{X}_i)$ is not too ill conditioned. Generally, when the set $\ud{X}_i$ is not large, the conditioning of $K(\ud{X}_i,\ud{X}_i)$ is acceptable; however, for a robust safeguard, we use a small $\lambda'<\lambda$ for help. Instead of treating $k(\bm{x},\bm{x}')$ as the base kernel and $\lambda$ as the amount of regularization, we treat $k'(\bm{x},\bm{x}')=k(\bm{x},\bm{x}')+\lambda'\delta_{\bm{x},\bm{x}'}$ as the base kernel and $\lambda-\lambda'$ the regularization, where $\delta$ is the Kronecker delta. Then, we use $k'$ to build the kernel $k'_{\text{hierarchical}}$ and add to the matrix $K'_{\text{hierarchical}}$ a regularization $(\lambda-\lambda')I$.

\subsection{Sizes}
Two parameters exist for the proposed kernel. First, the recursive partitioning needs a stopping criterion; we say that a set of points is not further partitioned when the cardinality is less than or equal to  $n_0$. Second, we need to specify the sizes of the conditioned sets $\ud{X}_i$. Whereas one may argue that the conditioned set needs to grow with the size of the point set for maintaining the approximation quality, a substantial increase in the size of $\ud{X}_i$ across levels will result in too high a computational cost, forfeiting the purpose of the kernel. Therefore, we mandate that each conditioned set have the same size $r$.

It will be convenient if we can consolidate the two parameters so that different approximate kernels are comparable. It will also be beneficial to have a guided choice of them. These requirements are indeed achievable. Under a balanced binary partitioning, $n_0$ can effectively take only these values: $\lceil n/2^j\rceil$, for $j=0,1,\ldots\lfloor\log_2n\rfloor$. On the other hand, it is not sensible to have the rank $r$ greater than $n_0$ because the smallest off-diagonal blocks have a size at most $n_0\times n_0$. Then, we take
\begin{equation}\label{eqn:r}
n_0=\lceil n/2^j\rceil\quad\text{and}\quad
r=\lfloor n/2^j\rfloor\quad\text{for some } j.
\end{equation}

\subsection{Cost Analysis}
We are now ready to perform a full analysis of the computational costs based on the practical choices made in the preceding subsections.

\paragraph{Memory Cost}\label{sec:cost.mem}
Let us first consider the memory cost of storing the matrix $A$. For simiplicity, assume that $n$ is a power of 2 (and thus $n_0=r$). Then, the partitioning tree has $n/n_0$ leaf nodes and $n/n_0-1$ nonleaf nodes. Hence, the memory costs for the factors $A_{ii}$, $U_i$, $\Sigma_p$, and $W_p$ are $n_0^2n/n_0$, $nr$, $r^2(n/n_0-1)$, and $r^2(n/n_0-1)$, respectively. Therefore, the total memory cost is approximately $4nr$.

\paragraph{Arithmetic Cost, Algorithm~\ref{algo:Ab}}
The arithmetic cost of matrix-vector multiplication (Algorithm~\ref{algo:Ab}) is $O(nr)$, because the subroutines \Call{Upward}{} and \Call{Downward}{} for each node performs $O(r^2)$ calculations, discounting the recursion calls. These recursions essentially constitute a tree traversal, which visits all the $2n/n_0-1$ nodes. Therefore, the total cost is $O(nr)$. In fact, a more detailed analysis may reveal the constant inside the big-O notation. Note that the matrix-vector multiplications $W_i^Tc_j$ and $W_id_i$ occur for each node $j$, $\Sigma_pc_i$ occurs for each node $i$, and $U_i^Tb_i$, $A_{ii}b_i$, and $U_id_i$ occur for each leaf node $i$. The matrix size of all these multiplications is $r\times r$. Therefore, if each multiplication requires $2r^2$ arithmetic operations, then the total cost is approximately $2r^2\times(2n/n_0+2n/n_0+2n/n_0+n/n_0+n/n_0+n/n_0)=18nr$, ignoring lower-order terms.

\paragraph{Arithmetic Cost, Algorithm~\ref{algo:invA}}
Based on a similar argument, the arithmetic cost of matrix inversion (Algorithm~\ref{algo:invA}) is $O(nr^2)$. A more precise estimate is $37nr^2$, if factorizing a general $r\times r$ matrix, factorizing a symmetric positive-definite matrix, performing a triangular solve, and multiplying two $r\times r$ matrices require $2r^3/3$, $r^3/3$, $r^2$, and $2r^3$ operations, respectively. As before, we ignore the lower-order terms.

\paragraph{Arithmetic Cost, Algorithm~\ref{algo:out.of.sample}, Precomputation Phase}
Now consider the precomputation phase of the inner product $w^Tk_{\text{hierarchical}}(X,\bm{x})$ (Algorithm~\ref{algo:out.of.sample}). In computer implementation, we move the first line (prefactorizing $K(\ud{X}_p,\ud{X}_p)$) to the matrix construction part (to be discussed in Section~\ref{sec:cost.construction}). Hence, the precomputation of Algorithm~\ref{algo:out.of.sample} is dominated by the third line \Call{Common-Upward}{\texttt{root}}. The arithmetic cost is $O(nr)$, or $10nr$ in a more precise account.

\paragraph{Arithmetic Cost, Algorithm~\ref{algo:out.of.sample}, $\bm{x}$-Dependent Phase}
Assume that evaluating the kernel function $k(\bm{x},\bm{x}')$ requires $O(\nnz(\bm{x})+\nnz(\bm{x}'))$ arithmetic operations, where $\nnz(\cdot)$ denotes the number of nonzeros of a data point. Further assume that for two point sets $Y$ and $Z$ of sizes $n_Y$ and $n_Z$ respectively, evaluating the vector $k(Y,\bm{x})$ requires $O(n_Y\nnz(\bm{x})+\nnz(Y))$ operations, and evaluating the matrix $K(Y,Z)$ requires $O(n_Y\nnz(Z)+n_Z\nnz(Y))$ operations, where the notation $\nnz(\cdot)$ is extended for point sets in a straightforward manner.

Now we consider the cost of the $\bm{x}$-dependent phase of Algorithm~\ref{algo:out.of.sample}. Line~\ref{algo.ln:pp} costs $O(r\nnz(\bm{x})+\nnz(\ud{X}_p))+O(r^2)$, where the first term comes from evaluating $k(\ud{X}_p,\bm{x})$ and the second term a linear-solve. Note that the matrix $K(\ud{X}_p,\ud{X}_p)$ has already been evaluated and prefactorized prior to this phase. In fact, such a computation is done when constructing $K_{\text{hierarchical}}$. Similarly, line~\ref{algo.ln:qq} costs $O(n_0\nnz(\bm{x})+\nnz(X_i))+O(n_0)$. Line~\ref{algo.ln:fall} costs $O(\nnz(\bm{x}))$, because it amounts to projecting $\bm{x}$ on a direction. Then, the overall cost of the $\bm{x}$-dependent phase of Algorithm~\ref{algo:out.of.sample} is
\begin{equation}\label{eqn:testing.cost}
O\Big(r\nnz(\bm{x})+\nnz(X_i)+\nnz(\ud{X}_p)+(r^2+\nnz(\bm{x}))\log_2(n/r)\Big),
\end{equation}
where $\bm{x}$ lies in the leaf node $i$ and $p$ is its parent.

\paragraph{Arithmetic Cost, Matrix Construction}\label{sec:cost.construction}
We shall also consider the cost of hierarchical partitioning and the instantiation of the matrix $A=K_{\text{hierarchical}}$. In the partitioning of a set of $n$ points $X$, generating a random projection direction costs $O(d)$, computing the projected coordinates costs $O(\nnz(X))$, finding the median costs $O(n)$, and permuting the points costs $O(\nnz(X))$. Then, counting recursion, the total cost is $O((d+n+\nnz(X))\log_2(n/r))$. On the other hand, in the instantiation of the matrix $A$, computing $A_{ii}$ for all leaf nodes $i$ costs $O(r\nnz(X))$, computing $U_i$ for all leaf nodes $i$ costs $O(r[n+\nnz(X)+\sum_{i \text{ is leaf}}\nnz(\ud{X}_p)])$, computing $\Sigma_p$ for all nonleaf nodes $p$ costs $O(r[\sum_{p \text{ is nonleaf}}\nnz(\ud{X}_p)])$, factorizing these $\Sigma_p$'s costs $O(nr^2)$, computing $W_i$ for all nonleaf and nonroot nodes $i$ costs $O(r[n+\sum_{i \text{ is not leaf and not root}}\nnz(\ud{X}_i)+\nnz(\ud{X}_p)])$, and finding all landmark point sets costs $O(n)$. Therefore, the total cost of instantiating $A$ is $O(r[nr+\nnz(X)+\sum_{i \text{ is not leaf}}\nnz(\ud{X}_i)])$.

\paragraph{Arithmetic Cost Summary, Training and Testing}
Summarizing the above analysis, we see that the overall cost of training (including partitioning, instantiating $A$, matrix inversion, matrix-vector multiplication, and preprocessing of Algorithm~\ref{algo:out.of.sample}) is
\begin{equation}\label{eqn:training.cost}
O\left((d+n+\nnz(X))\log_2(n/r)+nr^2+
r\left[\nnz(X)+\sum_{i \text{ is not leaf}}\nnz(\ud{X}_i)\right]\right).
\end{equation}
The overall cost of testing per $\bm{x}$ has been given in~\eqref{eqn:testing.cost}.

We remark that the term $\log_2(n/r)$ in both~\eqref{eqn:testing.cost} and~\eqref{eqn:training.cost} is dominated by $r$ when $n\le r2^r$. Moreover, when the data $X$ is dense, the training cost~\eqref{eqn:training.cost} simplifies to $O(nr^2+ndr)$ and the testing cost per point~\eqref{eqn:testing.cost} simplifies to $O(r^3+dr)$. Suppressing the data dimension $d$, these costs are further simplified to $O(nr^2)$ and $O(r^3)$, respectively.

\section{Experimental Results}\label{sec:exp}
In this section, we present comprehensive experiments to demonstrate the effectiveness of the proposed kernel. The experiments were performed on one node of a computing cluster with 512 GB memory and an IBM POWER8 12-core processor, unless otherwise stated. The programs were implemented in C++. The linear algebra routines (BLAS and LAPACK) were linked against the IBM ESSL library and some obviously parallelizable for-loops were decorated by OpenMP pragmas, so that the experiments with large data sets can be completed in a reasonable time frame. However, the programs were neither fully optimized nor fully parallelized, leaving room for improvement on both memory consumption and execution time.

\begin{table}[ht]
\centering
\caption{Data sets.}
\label{tab:dataset}
\begin{tabular}{ccrrrcccrrr}
\\[-5pt]
\hline
Name & Type & $d$ & $n$ Train & $n'$ Test\\
\hline
cadata            & regression            &   8 &    16,512 &     4,128\\
YearPredictionMSD & regression            &  90 &   463,518 &    51,630\\
\hline
ijcnn1            & binary classification &  22 &    35,000 &    91,701\\
covtype.binary    & binary classification &  54 &   464,809 &   116,203\\
SUSY              & binary classification &  18 & 4,000,000 & 1,000,000\\
\hline
mnist             & 10 classes            & 780 &    60,000 &    10,000\\
acoustic          & 3 classes             &  50 &    78,823 &    19,705\\
covtype           & 7 classes             &  54 &   464,809 &   116,203\\
\hline
\end{tabular}
\end{table}

Table~\ref{tab:dataset} summarizes the data sets used for experiments. They were all downloaded from \url{http://www.csie.ntu.edu.tw/~cjlin/libsvm/} and are widely used as benchmarks of kernel methods. We selected these data sets primarily because of their varying sizes. Some of the data sets come with a training and a testing part; for those not, we performed a 4:1 split to create the two parts, respectively. In some of the data sets, the attributes had already been normalized to within $[0,1]$ or $[-1,1]$; for those not, we performed such a normalization. We also preprocessed the data sets by removing duplicate and conflicting records (whose labels are inconsistent) in the training sets. Such records are infrequent.

\subsection{Effect of Randomness}
Throughout Section~\ref{sec:exp}, we will compare various approximate kernels: Nystr\"{o}m approximation $k_{\text{Nystr\"{o}m}}$, random Fourier features $k_{\text{Fourier}}$, cross-domain independent kernel $k_{\text{independent}}$, and the proposed kernel $k_{\text{hierarchical}}$. The partitioning in the cross-domain independent kernel is the same as that in the proposed kernel, except that the hierarchy is flattened. Because of the random nature of all these kernels (e.g., landmark points, sampling, and partitioning), we first study how the performance is affected by randomization. Note that the quantity $r$ is comparable across kernels, even though its specific meaning is different.

The data set for demonstration is cadata. We use the Gaussian kernel~\eqref{eqn:gauss} as an example. As hinted earlier, the choice of the range parameter $\sigma$ affects the quality of various kernels. Therefore, the experiment setup is to use a reasonable regularization $\lambda=0.01$ and to vary the choice of $\sigma$ in a large interval (between $0.01$ and $100$) such that the optimal $\sigma$ falls within the interval. We set the rank $r$ (and the leaf size $n_0$) according to~\eqref{eqn:r} with three particular choices: $r=32$, $129$, and $516$.

\begin{figure}[!ht]
\centering
\subfigure[Nystr\"{o}m approximation]{
\includegraphics[width=.31\linewidth]{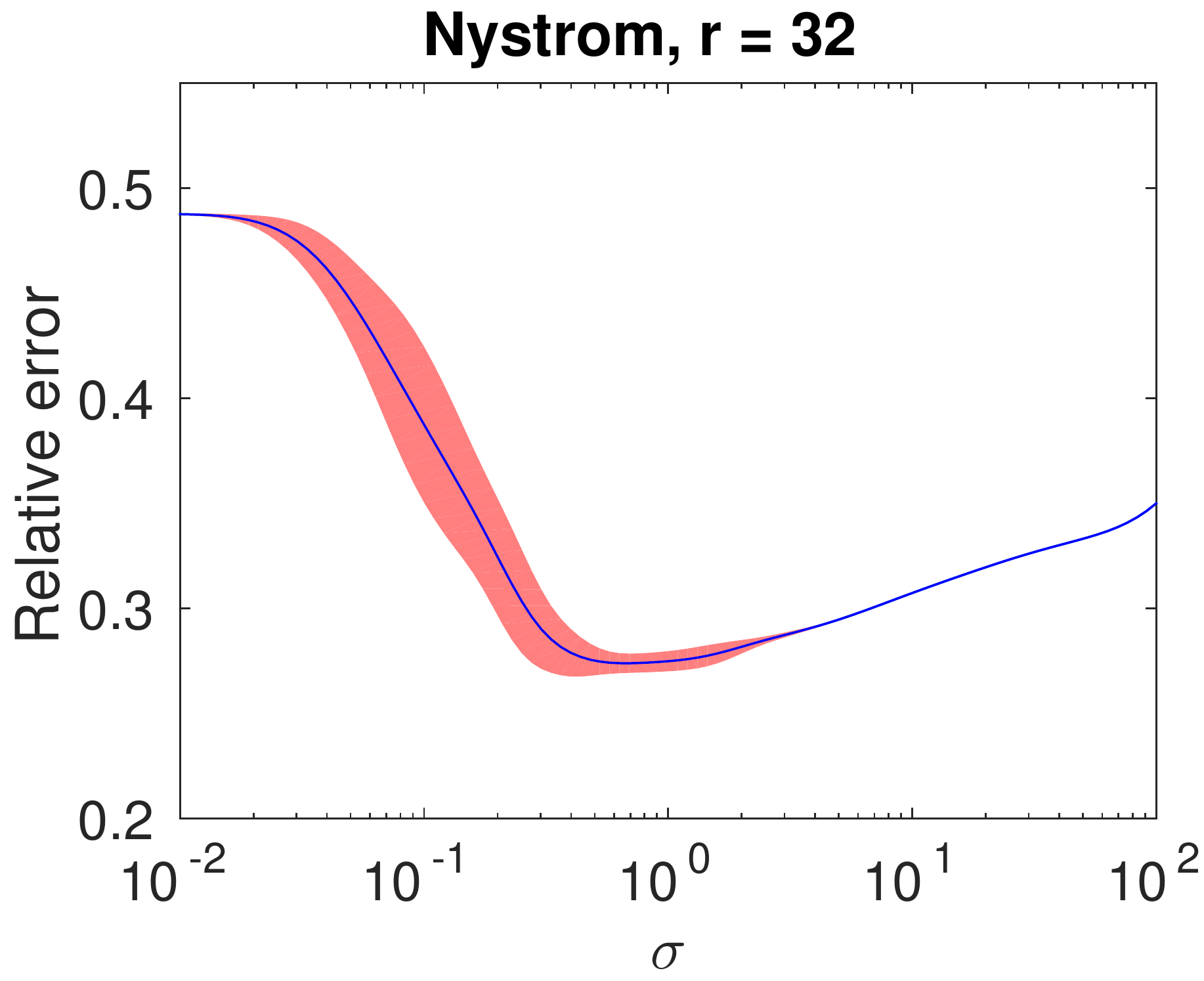}
\includegraphics[width=.31\linewidth]{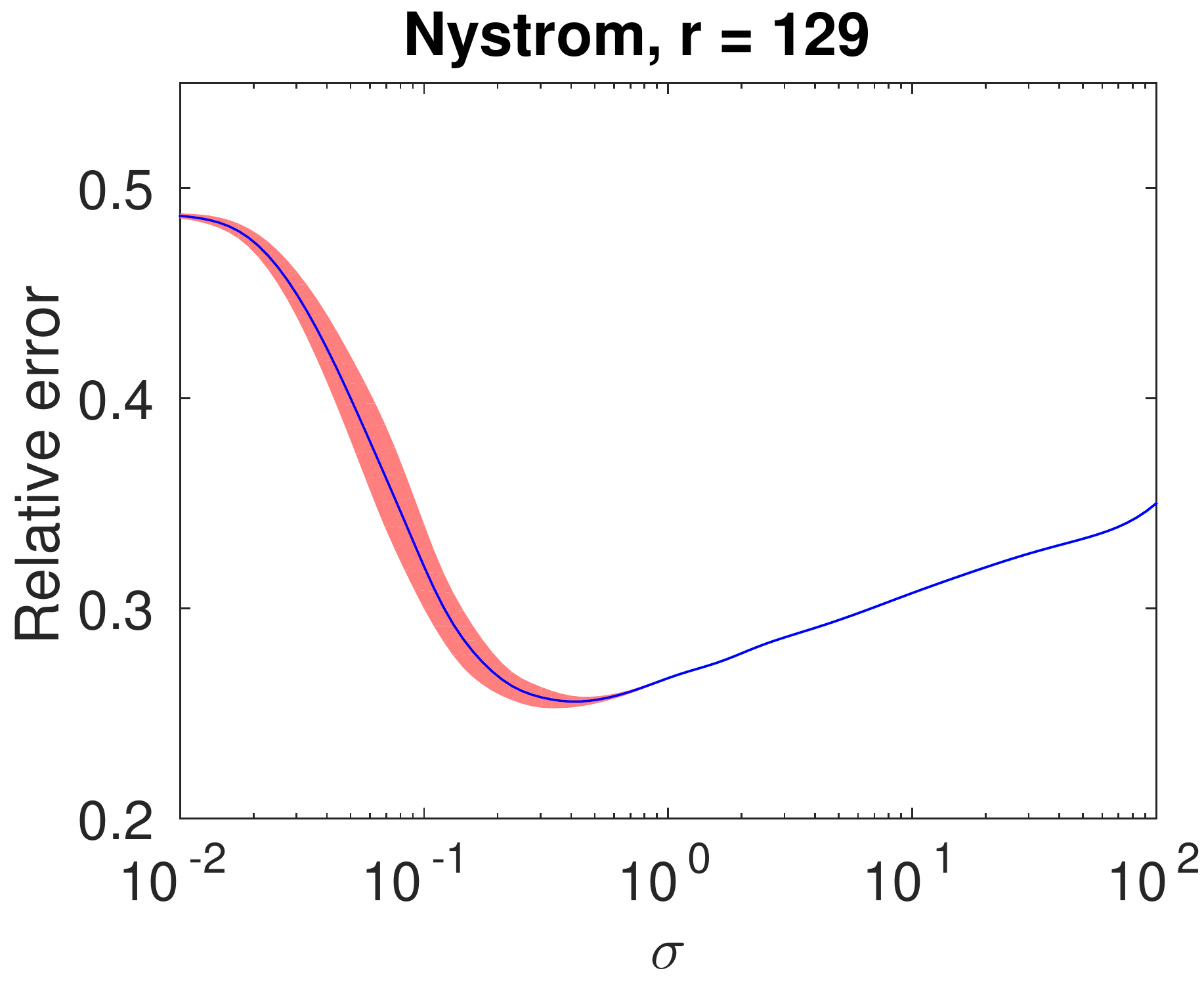}
\includegraphics[width=.31\linewidth]{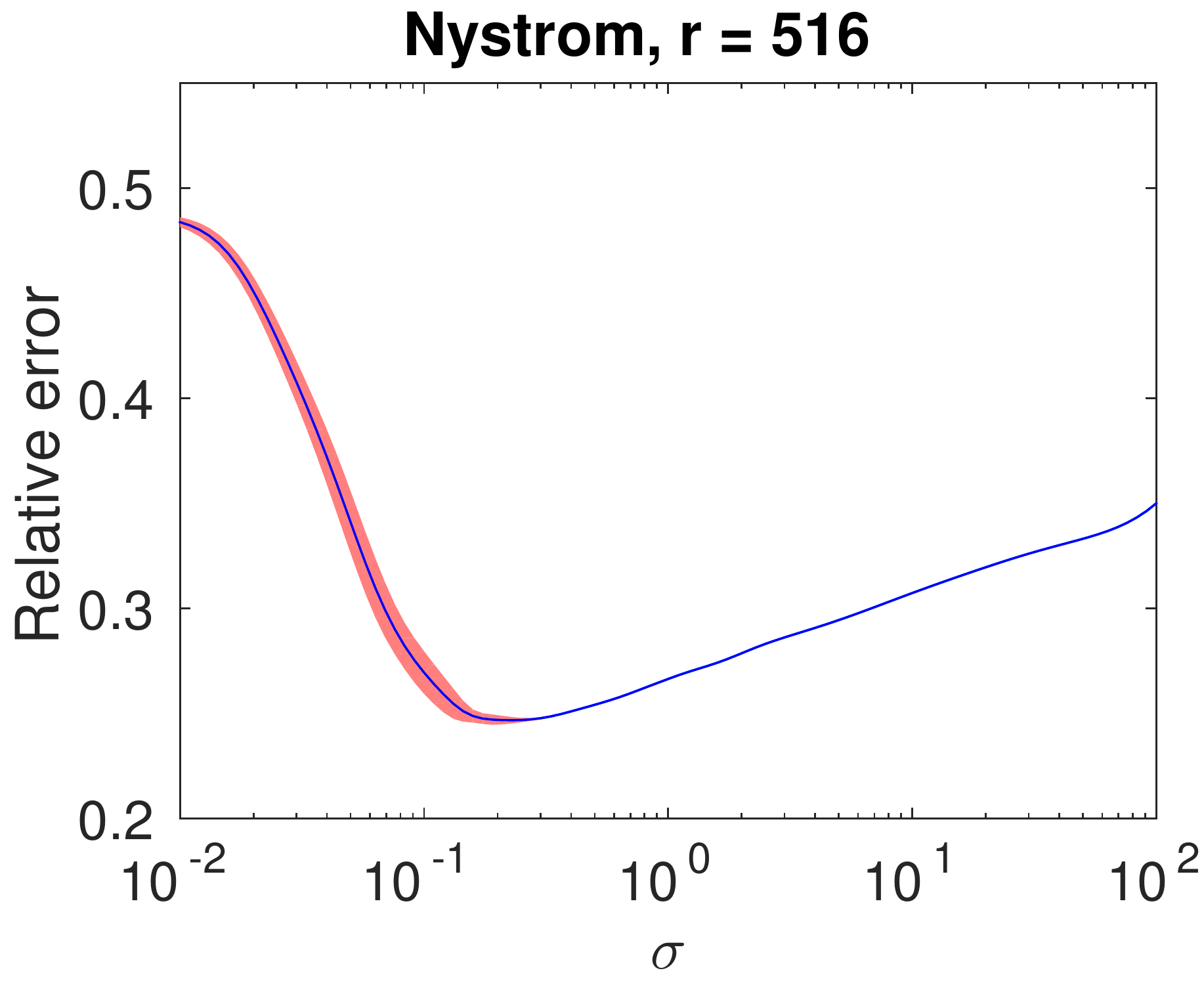}}
\subfigure[random Fourier features]{
\includegraphics[width=.31\linewidth]{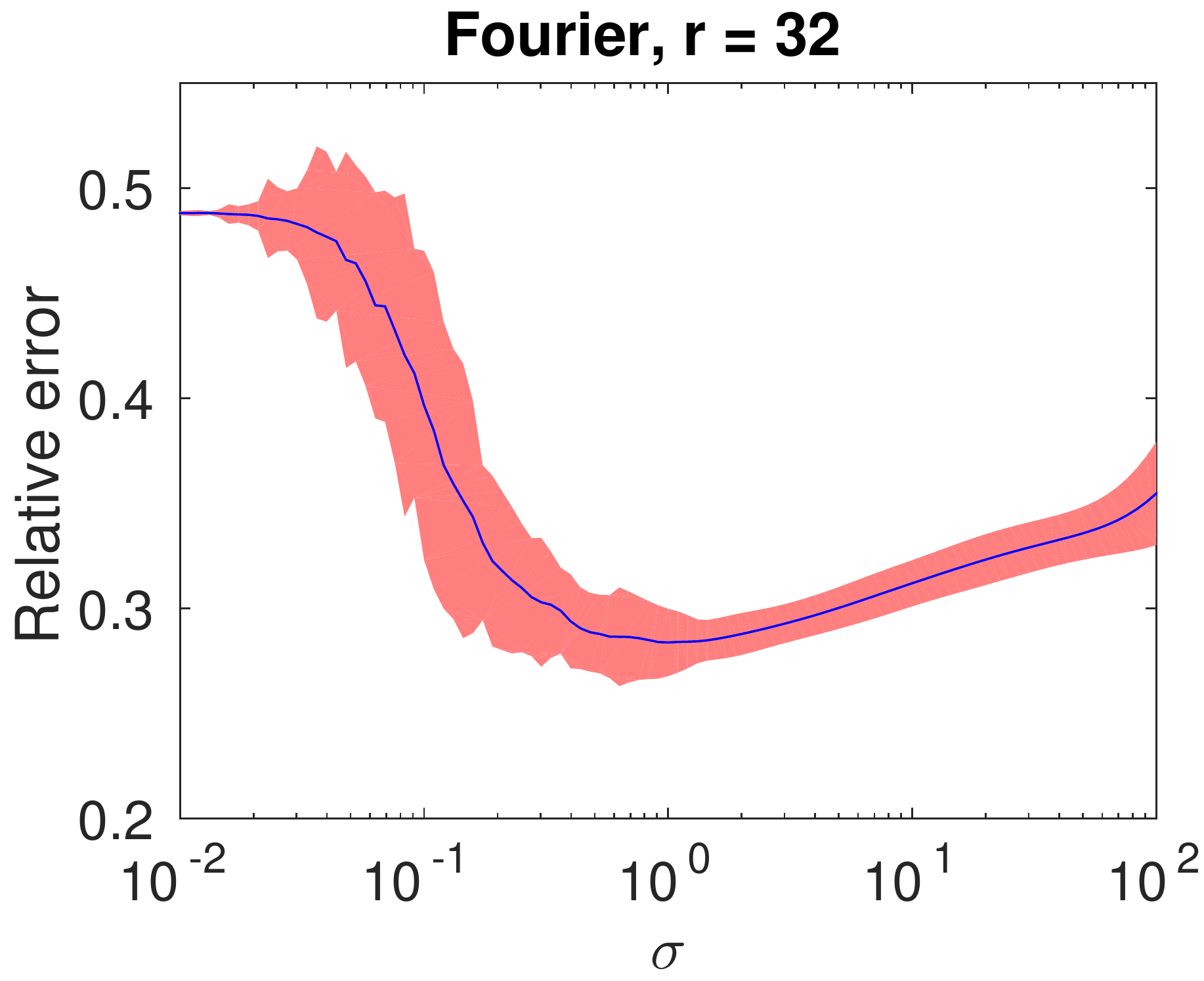}
\includegraphics[width=.31\linewidth]{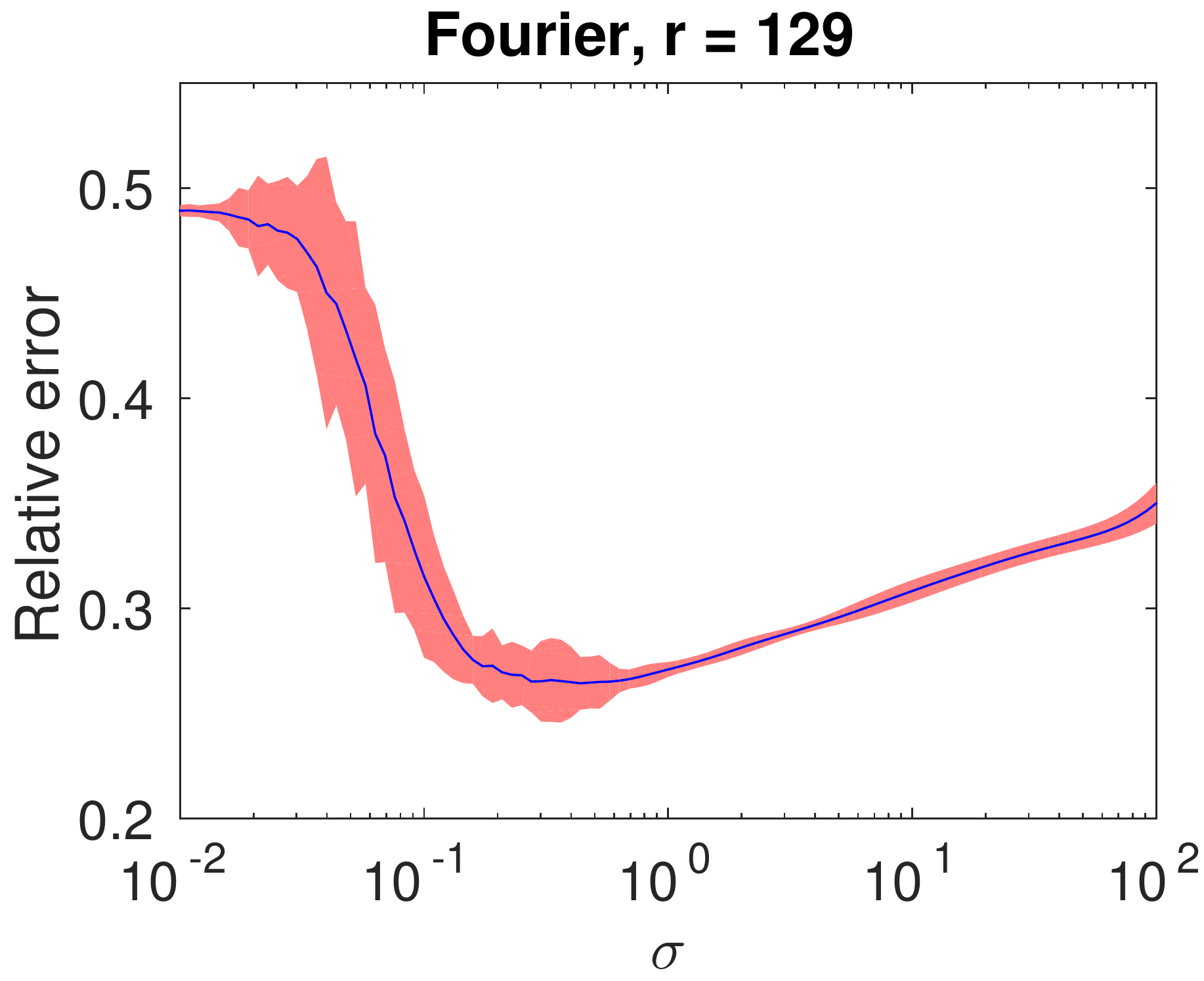}
\includegraphics[width=.31\linewidth]{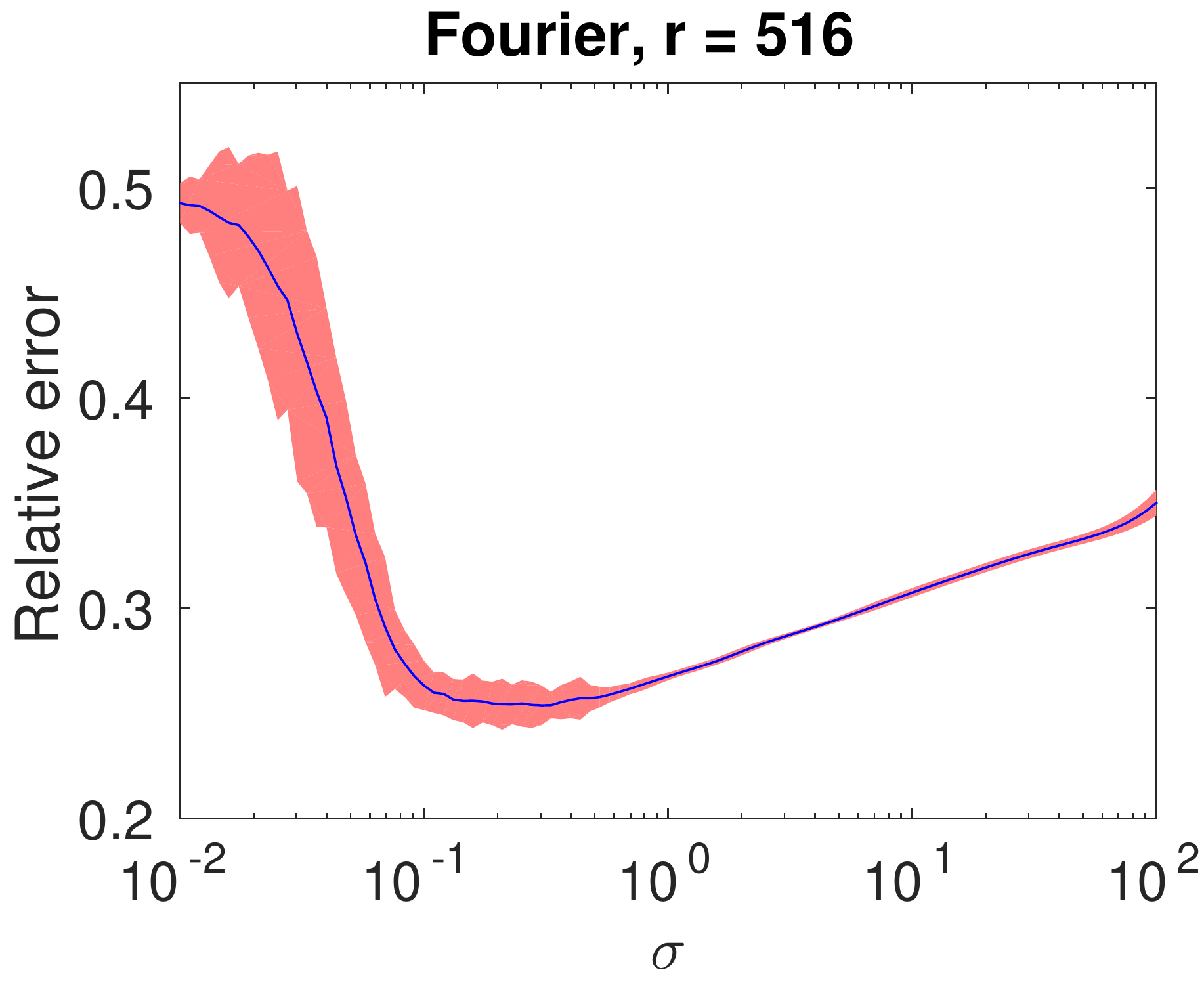}}
\subfigure[Cross-domain independent kernel]{
\includegraphics[width=.31\linewidth]{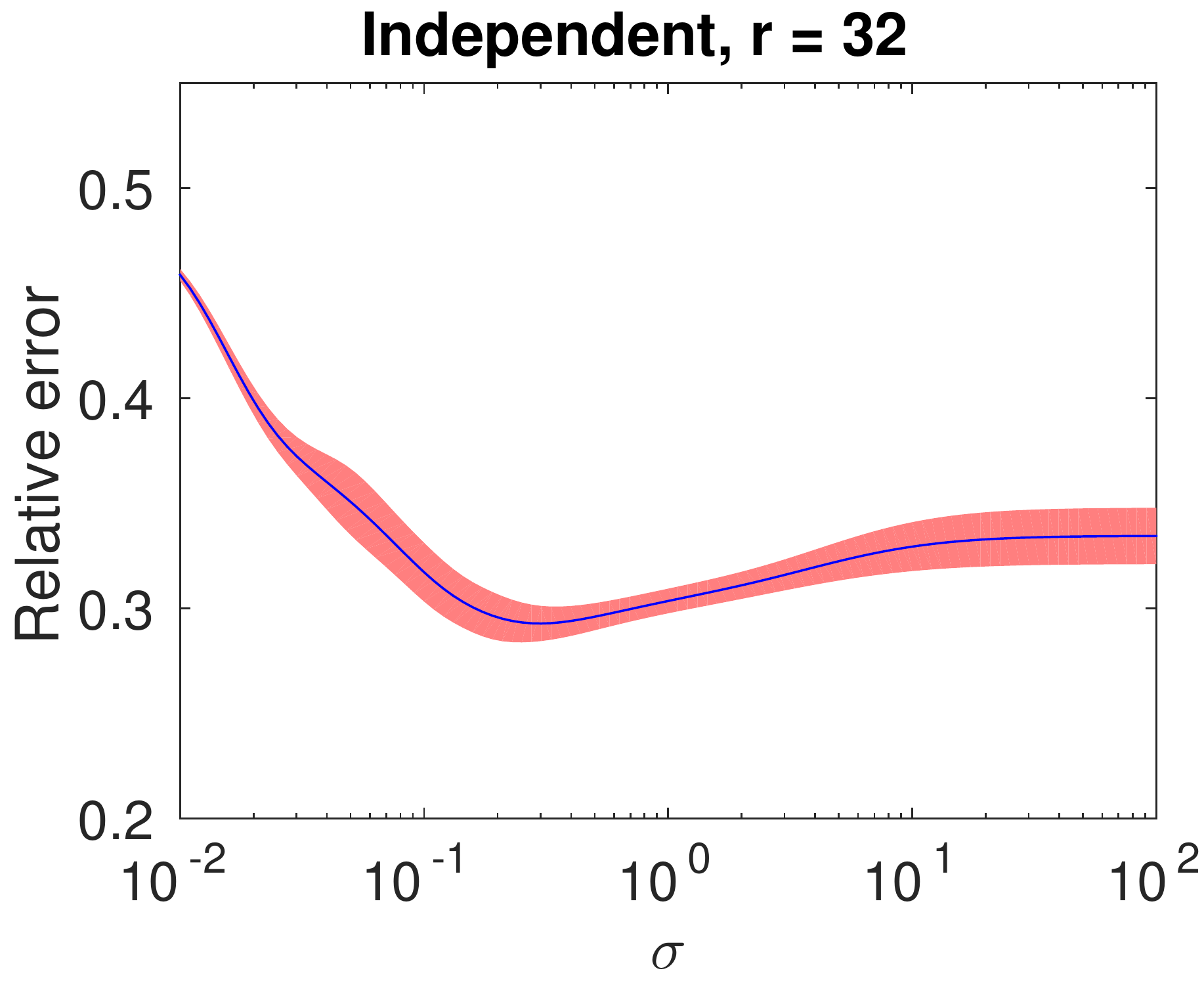}
\includegraphics[width=.31\linewidth]{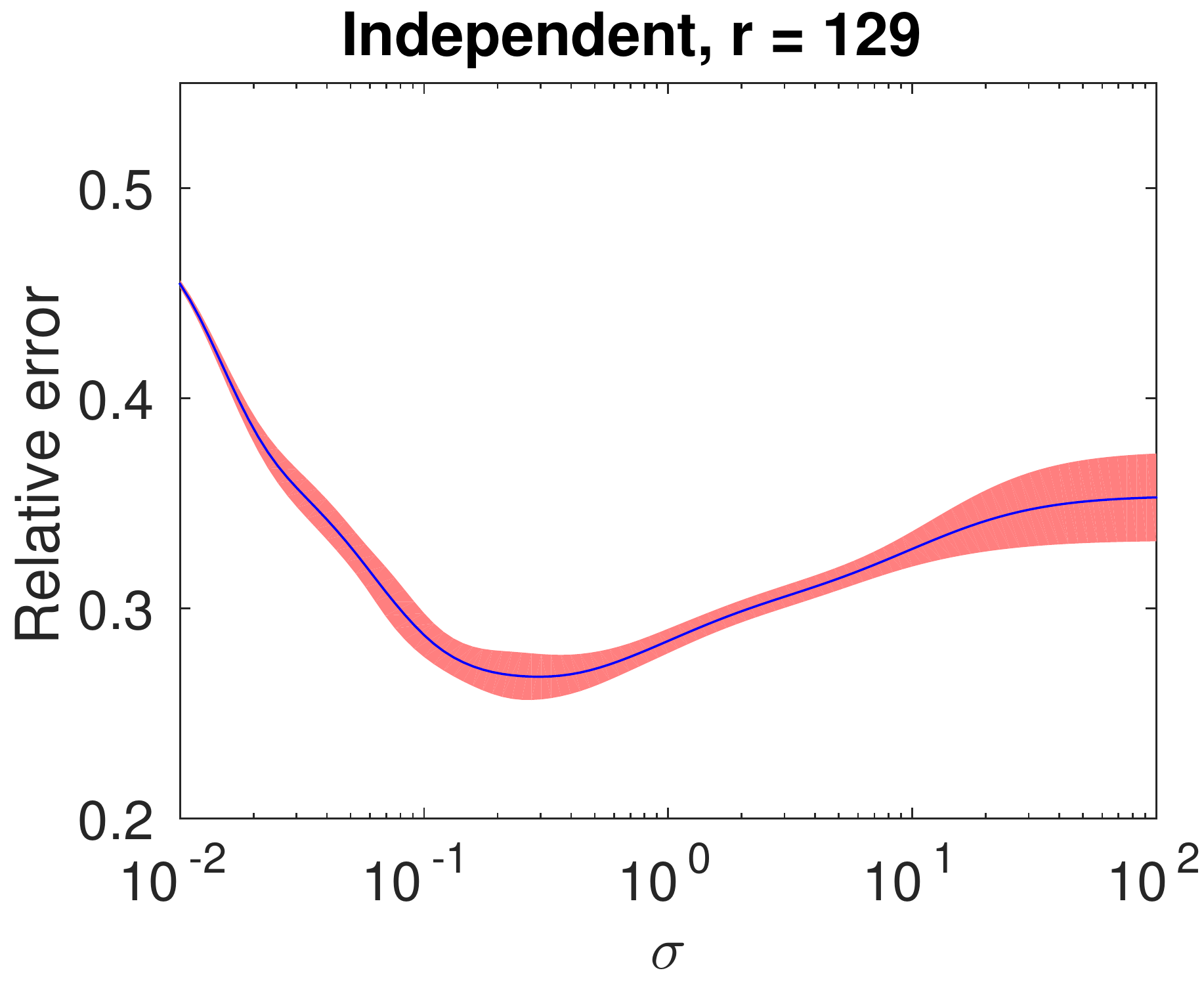}
\includegraphics[width=.31\linewidth]{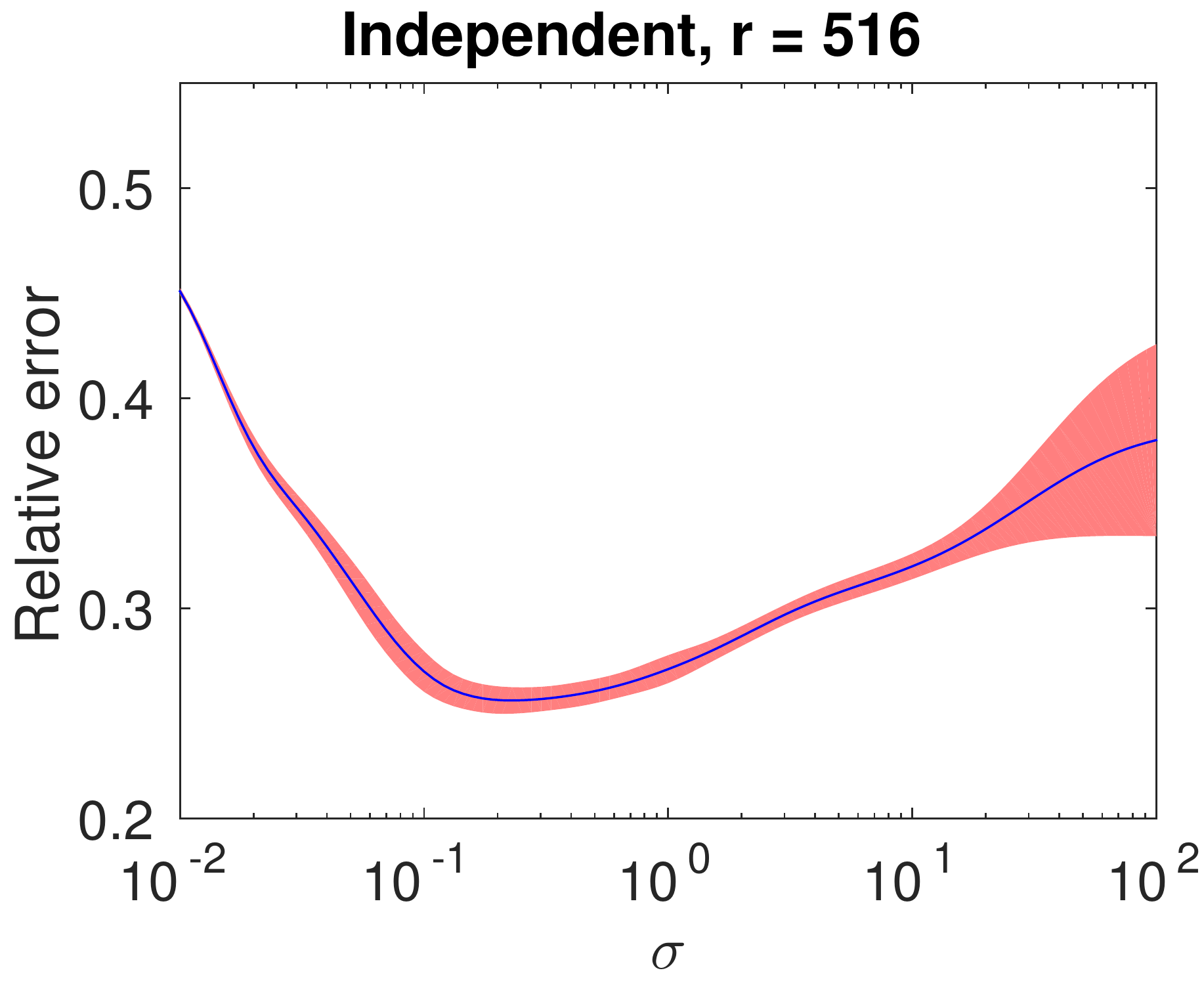}}
\subfigure[Hierarchically compositional kernel]{
\includegraphics[width=.31\linewidth]{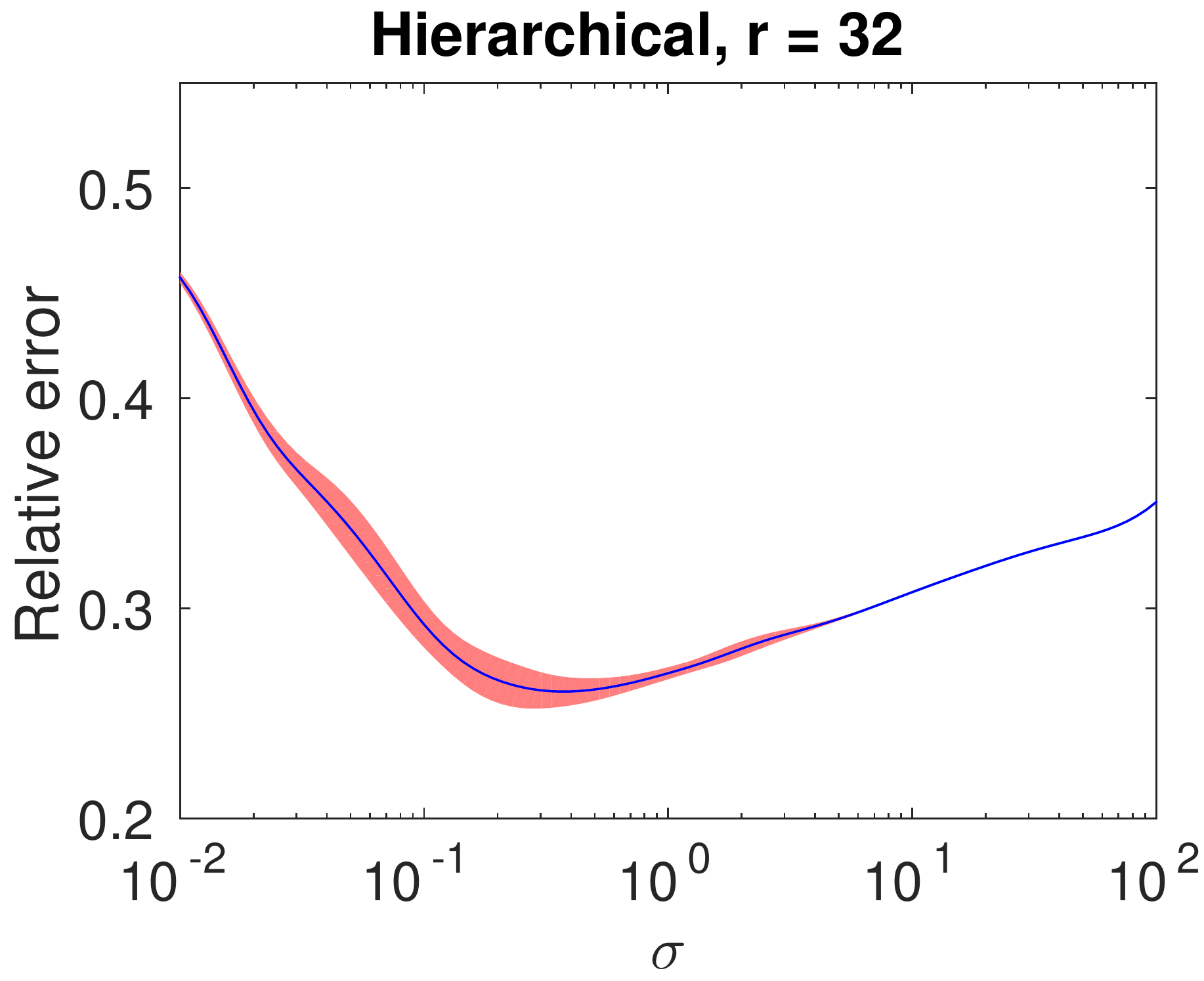}
\includegraphics[width=.31\linewidth]{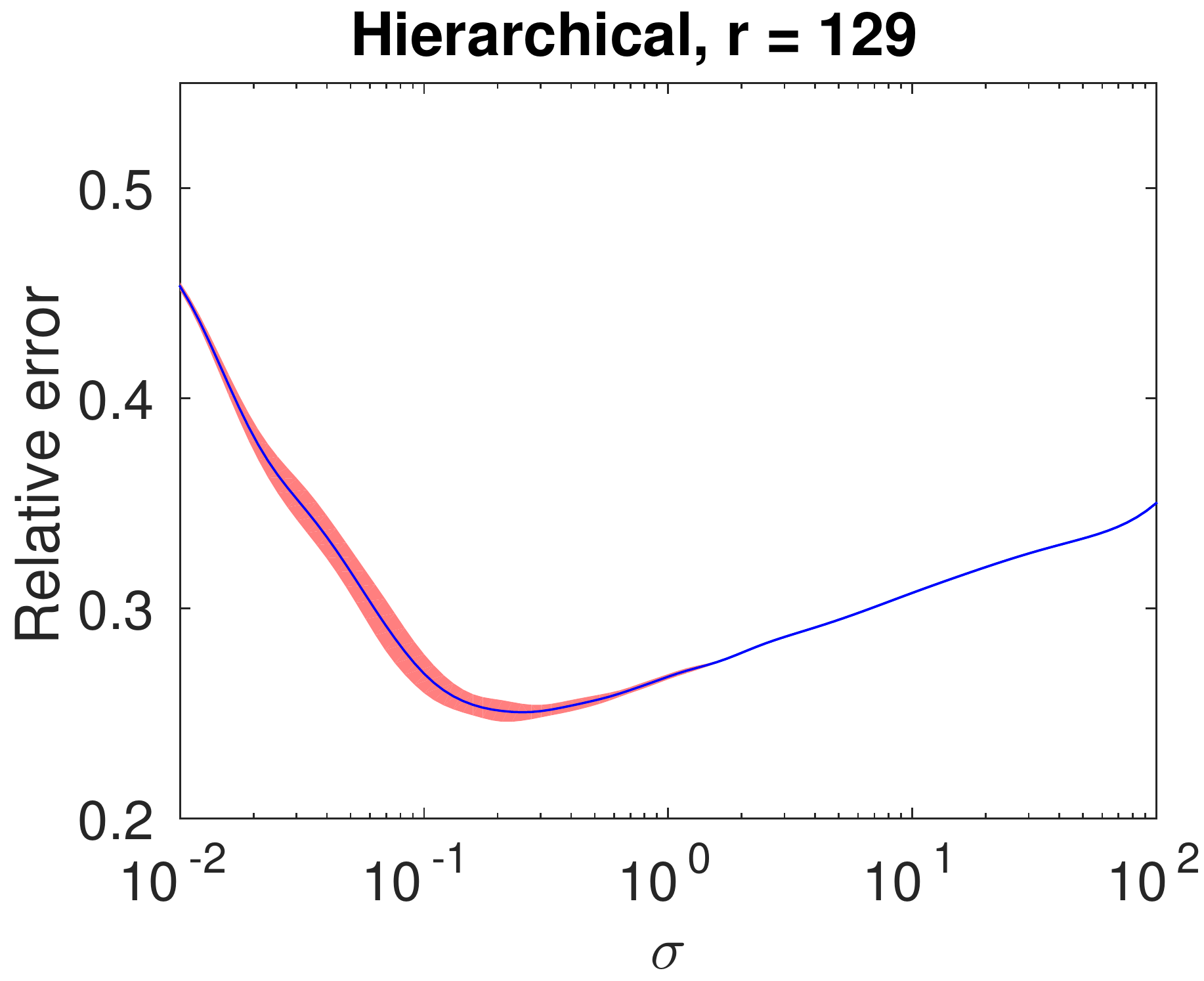}
\includegraphics[width=.31\linewidth]{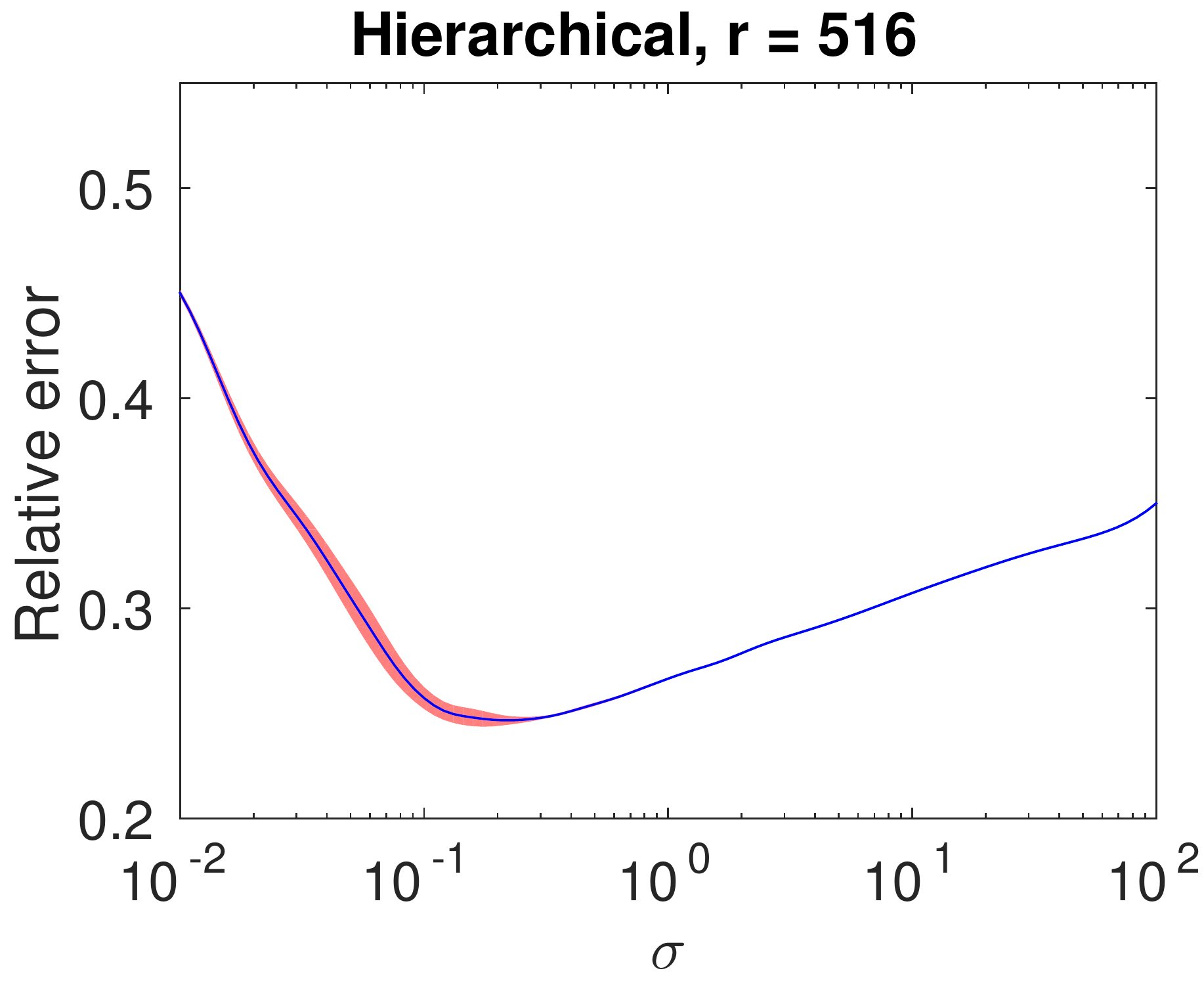}}
\caption{Regression error curves versus $\sigma$. Results are summarized from 30 repeated runs and the red bands show three times standard deviation. Data set: cadata.}
\label{fig:ZZ_plot_exp_1_randomness}
\end{figure}

For each $r$, we repeat 30 times with different random seeds; but the seed always stays the same every time when the range of $\sigma$ is swept. The results (relative testing error versus $\sigma$) are summarized in Figure~\ref{fig:ZZ_plot_exp_1_randomness}, with mean (blue curve) and standard deviation (red band) plotted. One sees that the red bands of random Fourier features are not smooth; this is because each single error curve from a fixed seed is nonsmooth. Moreover, the error curves for Nystr\"{o}m approximation have a nonnegligible variation when $\sigma$ is small, whereas those for the independent kernel vary significantly when $\sigma$ is large. The error curves of the proposed kernel are the most stable, with the narrowest standard deviation band. Generally speaking, as $r$ increases, all approximate kernels yield a more and more stable error curve, except the peculiar case of the independent kernel at large $\sigma$.

The unstable performance caused by randomness is unfavorable for parameter estimation, because the valley, where the optimal parameter stands, may move substantially. The nonsmoothness exhibited in the Fourier approach also renders difficulty. Although the unfavorable behaviors are substantially alleviated when $r$ increases to $516$ in this particular data set, to our experience, the relieving size for $r$ correlates with the data size; that is, the larger $n$, the higher $r$ needs to increase to. In this regard, the proposed kernel is the most favorable because its performance is relatively stable even for small $r$.

\subsection{Partitioning Approaches}\label{sec:pca}
We further compare the methods for partitioning needed by the proposed kernel. The comparison focuses on three aspects: effect of randomness, testing error, and computational efficiency. Among the several possible choices discussed in Section~\ref{sec:part}, only the PCA approach and the random projection approach (which is recommended) yield a balanced partitioning; hence, we compare only these two.

\begin{figure}[!ht]
\centering
\subfigure[Random partitioning]{
\includegraphics[width=.32\linewidth]{ZZ_plot_exp_1_randomness_RLCM_32}
\includegraphics[width=.32\linewidth]{ZZ_plot_exp_1_randomness_RLCM_129}
\includegraphics[width=.32\linewidth]{ZZ_plot_exp_1_randomness_RLCM_516}}
\subfigure[PCA partitioning]{
\includegraphics[width=.32\linewidth]{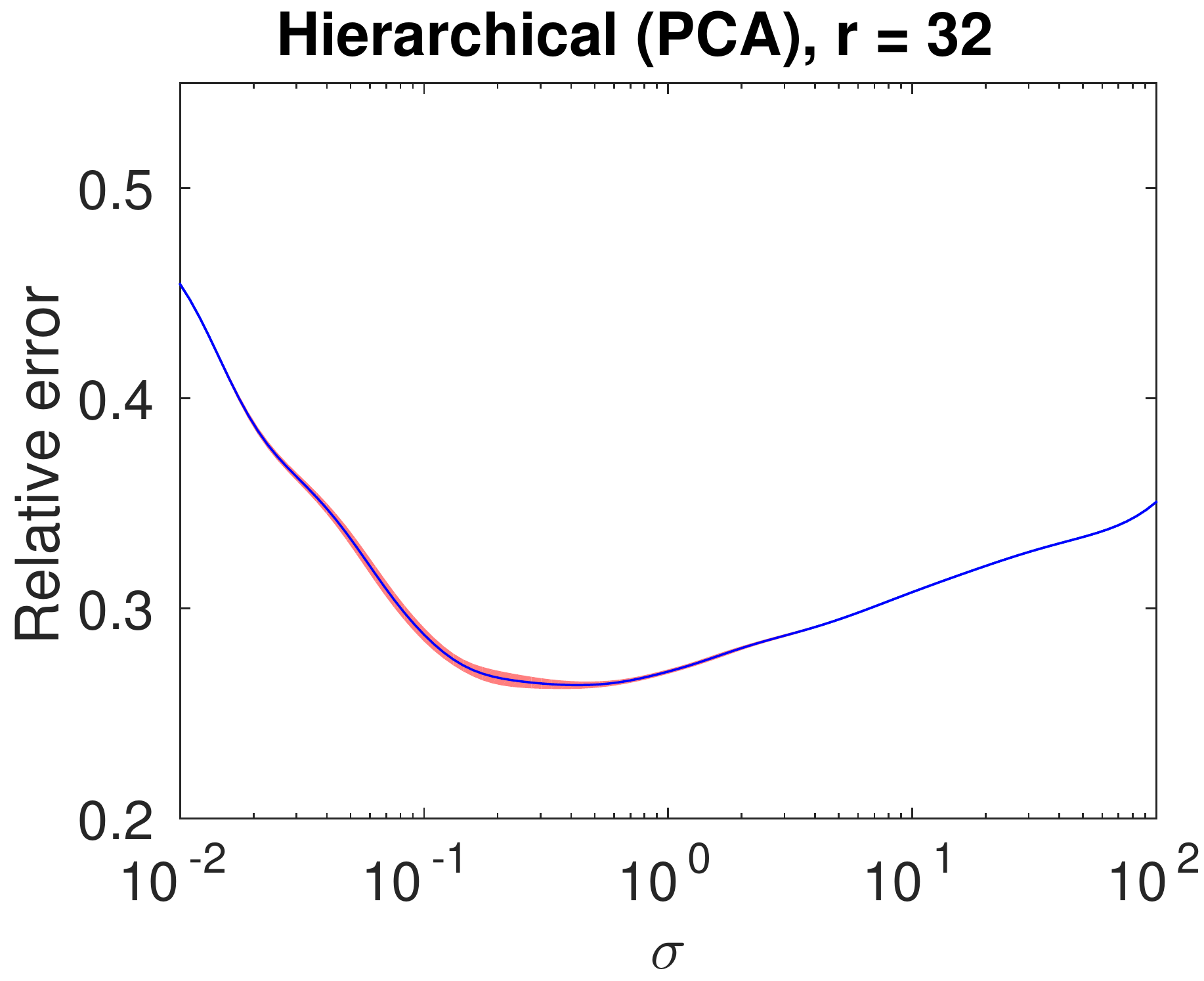}
\includegraphics[width=.32\linewidth]{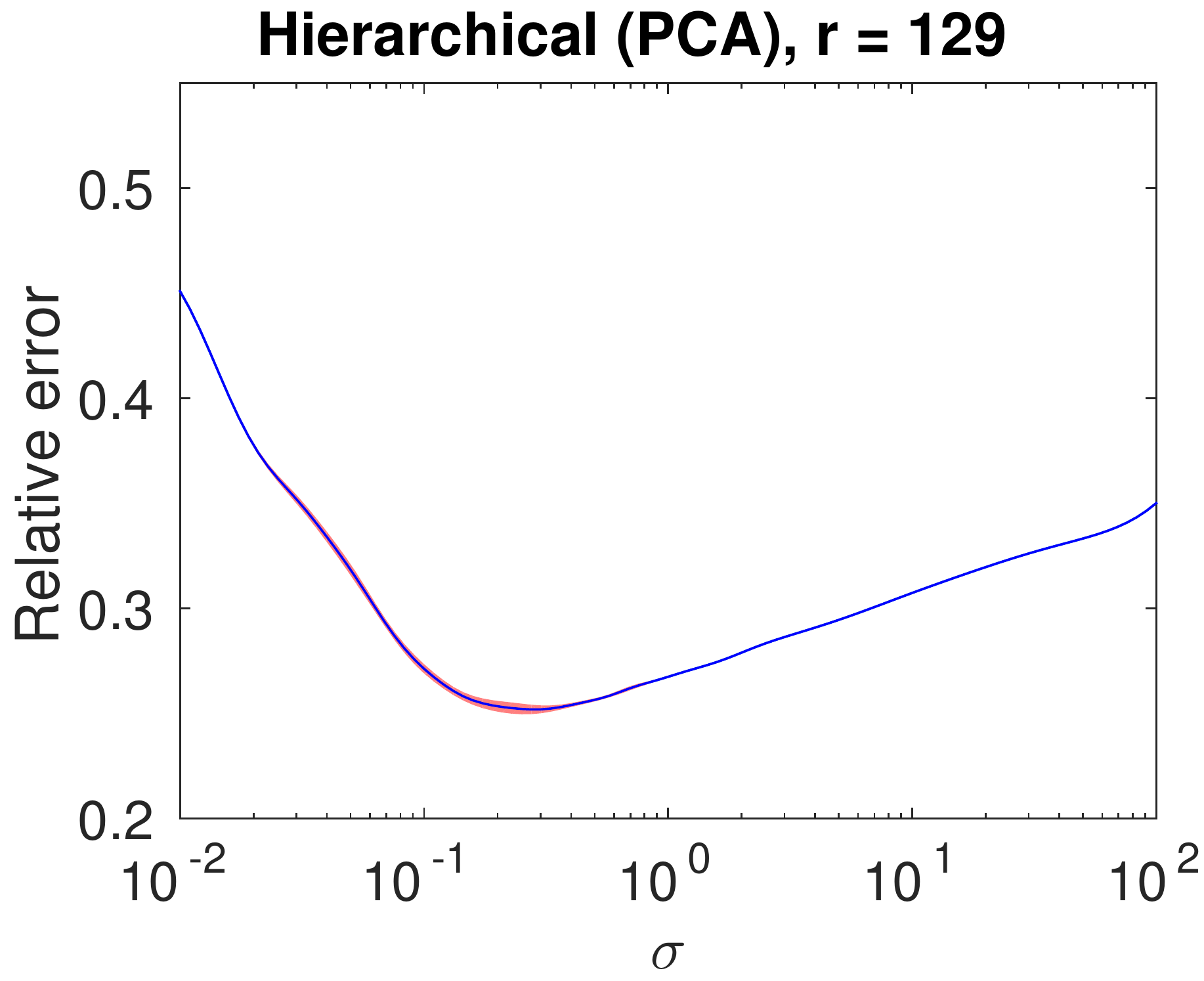}
\includegraphics[width=.32\linewidth]{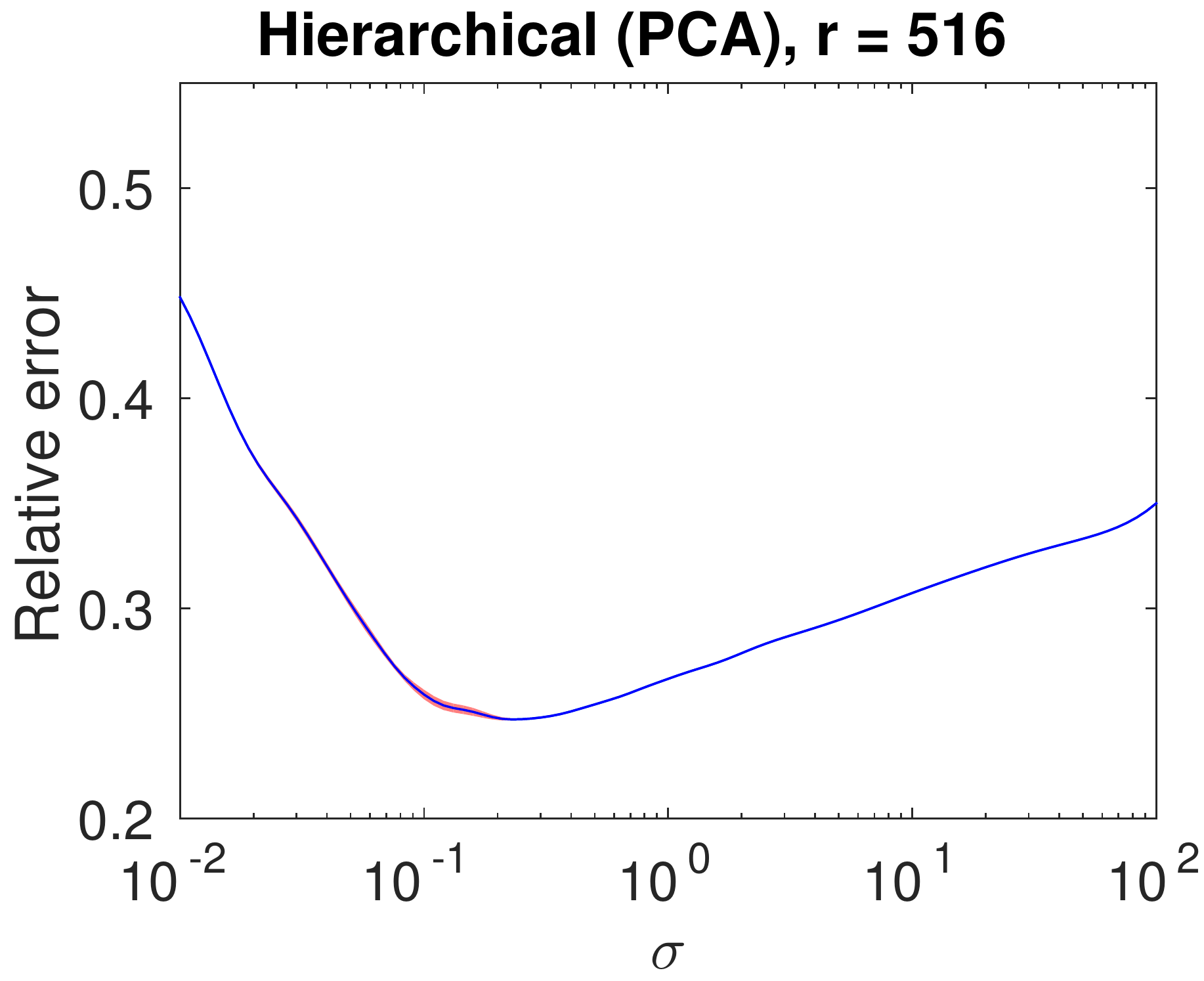}}
\caption{Regression error curves of the proposed method. Data are recursively partitioned by using random projections (top row) and by PCA (bottom row), respectively.}
\label{fig:ZZ_plot_exp_2_partitioning}
\end{figure}

Figure~\ref{fig:ZZ_plot_exp_2_partitioning} summarizes the error curves (mean and standard deviation) resulting from the two approaches, based on the same setting as in the preceding subsection. The top row corresponds to the recommended approach whereas the bottom row PCA. One sees that the mean error curves are almost identical in both approaches. PCA is less influenced by randomization. Such a result should not be surprising, because PCA bares no randomness at all on partitioning; the only randomness comes from the landmark points. Although the variation of the error curves of the random projection approach is higher, such a variation is acceptable, particularly when it is compared with that of other approximate kernels in the preceding subsection.

The essential reason why random projection is favored over PCA comes from the consideration of computational efficiency. To generate the normal direction of a partitioning hyperplane, random projection amounts to only generating $d$ random numbers, whereas PCA requires computing the dominant singular vector of the shifted data matrix. Table~\ref{tab:ZZ_plot_exp_6_partitioning_time} presents the time costs of the additional singular vector computation, against the partitioning cost and the overall training cost by using random projection. We call the singular vector computation ``overhead.'' The overhead is much higher with respect to the partitioning step, because such a step has a negligible cost compared to the overall training. The overhead is also generally higher when $r$ is smaller because there requires more partitioning. The overhead with respect to partitioning easily exceeds $100\%$ for quite a few of the data sets, and sometimes it is even a few thousand percents. For the data set mnist, which has the largest dimension $d$, the overhead with respect to overall training ranges from approximately $50\%$ to $800\%$.

\begin{table}[!ht]
\centering
\caption{PCA overhead with respect to partitioning and to overall training, under different $r$.}
\label{tab:ZZ_plot_exp_6_partitioning_time}
\begin{tabular}{rrrrrrrrrrr}
\\[-5pt]
\multicolumn{3}{c}{cadata} &&
\multicolumn{3}{c}{YearPredictionMSD} &&
\multicolumn{3}{c}{ijcnn1}\\
\cline{1-3}\cline{5-7}\cline{9-11}
\multicolumn{1}{c}{$r$} &
\multicolumn{1}{c}{partition.} &
\multicolumn{1}{c}{train.} &&
\multicolumn{1}{c}{$r$} &
\multicolumn{1}{c}{partition.} &
\multicolumn{1}{c}{train.} &&
\multicolumn{1}{c}{$r$} &
\multicolumn{1}{c}{partition.} &
\multicolumn{1}{c}{train.}\\
\cline{1-3}\cline{5-7}\cline{9-11}
32 & 91.16\% & 9.48\% && 56 & 687.69\% & 71.74\% && 34 & 199.52\% & 15.95\%\\
64 & 139.23\% & 7.59\% && 113 & 616.68\% & 50.97\% && 68 & 8.80\% & 0.51\%\\
129 & 51.33\% & 2.29\% && 226 & 630.07\% & 21.07\% && 136 & 153.16\% & 4.23\%\\
258 & 37.93\% & 0.94\% && 452 & 226.84\% & 5.64\% && 273 & 58.42\% & 1.37\%\\
516 & 52.86\% & 0.78\% && 905 & 216.37\% & 2.66\% && 546 & 64.74\% & 0.58\%\\
\cline{1-3}\cline{5-7}\cline{9-11}
\\[-5pt]
\multicolumn{3}{c}{covtype.binary} &&
\multicolumn{3}{c}{SUSY} &&
\multicolumn{3}{c}{mnist}\\
\cline{1-3}\cline{5-7}\cline{9-11}
\multicolumn{1}{c}{$r$} &
\multicolumn{1}{c}{partition.} &
\multicolumn{1}{c}{train.} &&
\multicolumn{1}{c}{$r$} &
\multicolumn{1}{c}{partition.} &
\multicolumn{1}{c}{train.} &&
\multicolumn{1}{c}{$r$} &
\multicolumn{1}{c}{partition.} &
\multicolumn{1}{c}{train.}\\
\cline{1-3}\cline{5-7}\cline{9-11}
56 & 86.40\% & 7.03\% && 61 & 85.40\% & 4.08\% && 58 & 3973.02\% & 805.67\%\\
113 & 74.75\% & 4.18\% && 122 & 99.44\% & 3.21\% && 117 & 3775.47\% & 508.89\%\\
226 & 56.26\% & 1.88\% && 244 & 52.10\% & 1.61\% && 234 & 4341.62\% & 383.34\%\\
453 & 28.75\% & 0.59\% && 488 & 27.41\% & 0.40\% && 468 & 3126.89\% & 151.64\%\\
907 & 93.58\% & 0.66\% && 976 & 79.71\% & 0.49\% && 937 & 2175.98\% & 51.17\%\\
\cline{1-3}\cline{5-7}\cline{9-11}
\\[-5pt]
\multicolumn{3}{c}{acoustic} &&
\multicolumn{3}{c}{covtype}\\
\cline{1-3}\cline{5-7}
\multicolumn{1}{c}{$r$} &
\multicolumn{1}{c}{partition.} &
\multicolumn{1}{c}{train.} &&
\multicolumn{1}{c}{$r$} &
\multicolumn{1}{c}{partition.} &
\multicolumn{1}{c}{train.}\\
\cline{1-3}\cline{5-7}
38 & 664.45\% & 53.07\% && 56 & 96.86\% & 7.02\%\\
76 & 411.56\% & 29.15\% && 113 & 155.34\% & 6.29\%\\
153 & 435.01\% & 21.66\% && 226 & 75.37\% & 2.77\%\\
307 & 264.07\% & 7.62\% && 453 & 105.68\% & 1.60\%\\
615 & 164.62\% & 2.28\% && 907 & 32.17\% & 0.30\%\\
\cline{1-3}\cline{5-7}
\end{tabular}
\end{table}

\subsection{Performance Results for Various Data Sets}
We now compare various approximate kernels on all the data sets listed in Table~\ref{tab:dataset}. For each kernel and each $r$, we obtain the performance result through a grid search of the optimal parameters $\sigma$ and $\lambda$. Because of the high cost of parameter tuning, no repetitions are run and hence the performances may be susceptible to randomization. Hence, conclusions are drawn across data sets and we refrain from over-interpreting the results of an individual data set. We run experiments with a few $r$'s and we are particularly interested in the performance trend as $r$ increases.

\begin{figure}[!ht]
\centering
\subfigure[cadata, regression]{
\includegraphics[width=.33\linewidth]{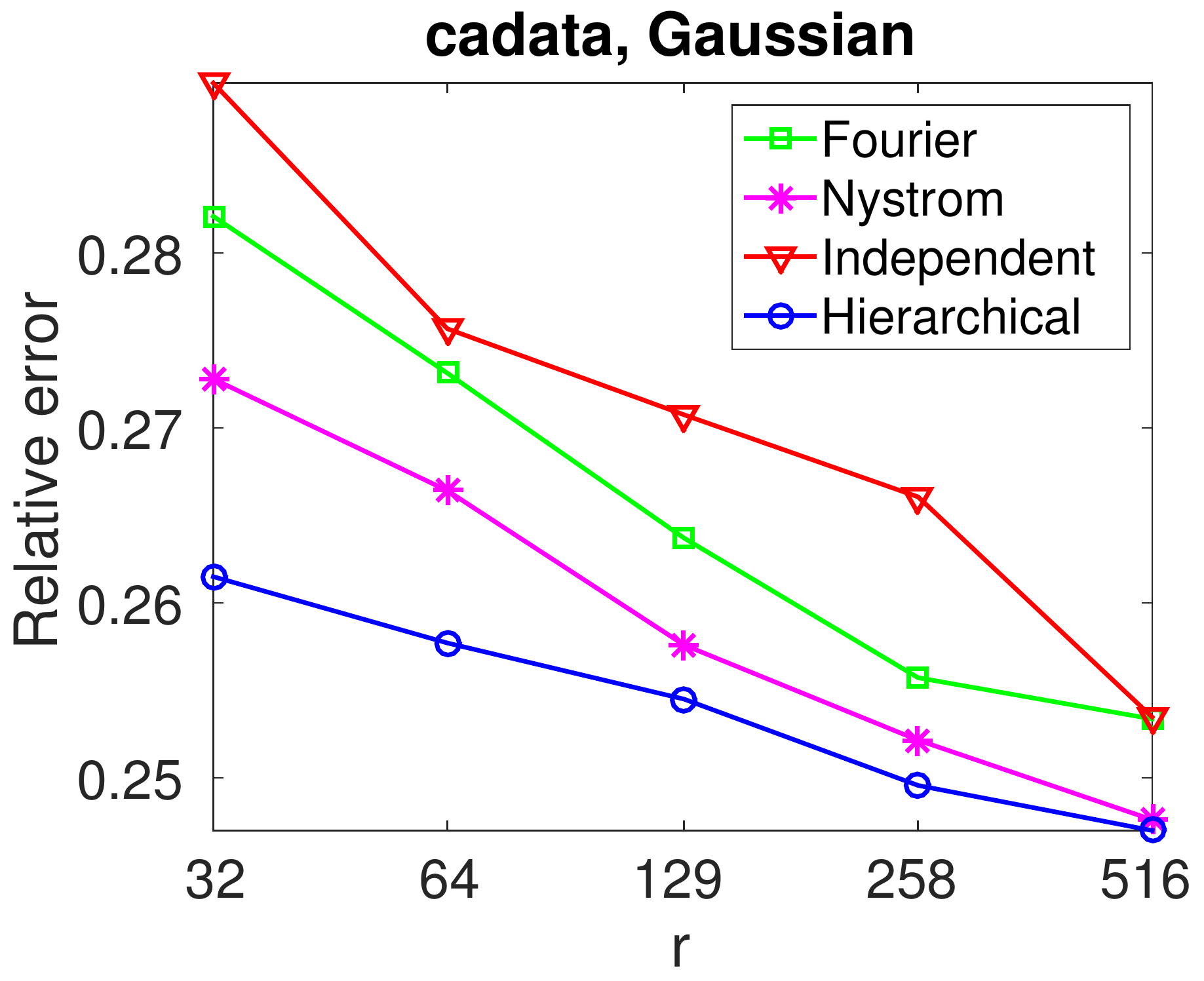}
\includegraphics[width=.32\linewidth]{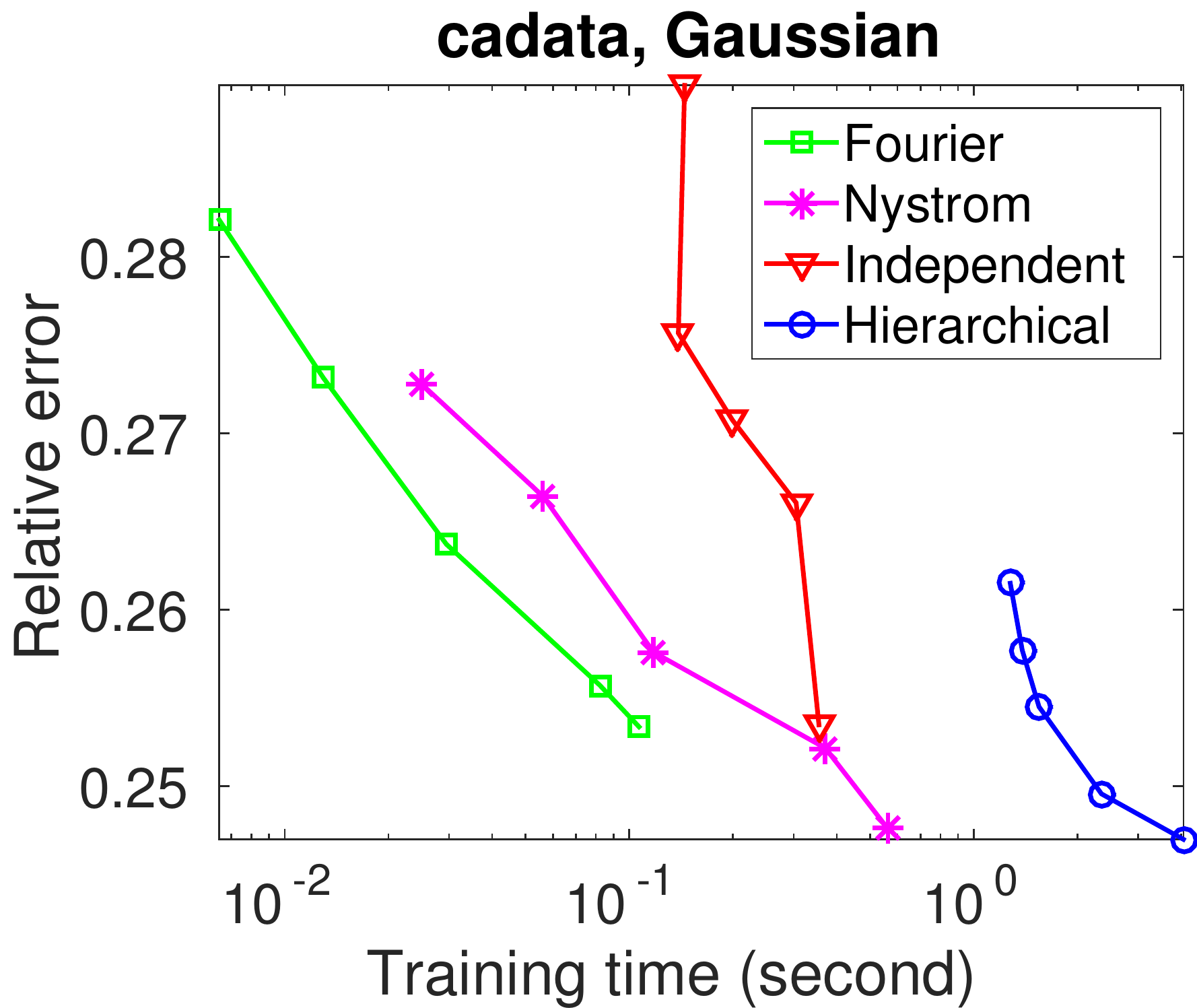}
\includegraphics[width=.33\linewidth]{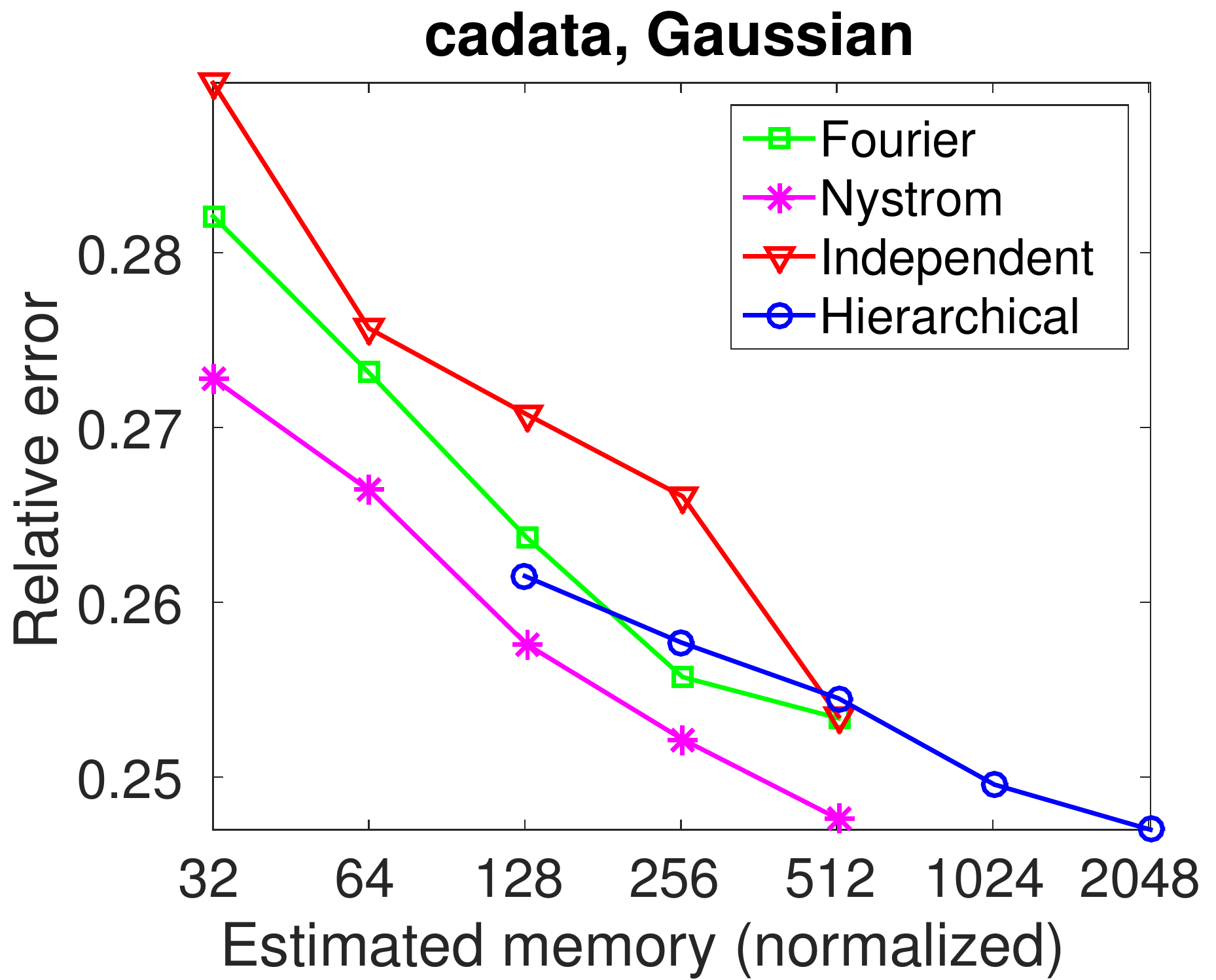}}
\subfigure[YearPredictionMSD, regression]{
\includegraphics[width=.33\linewidth]{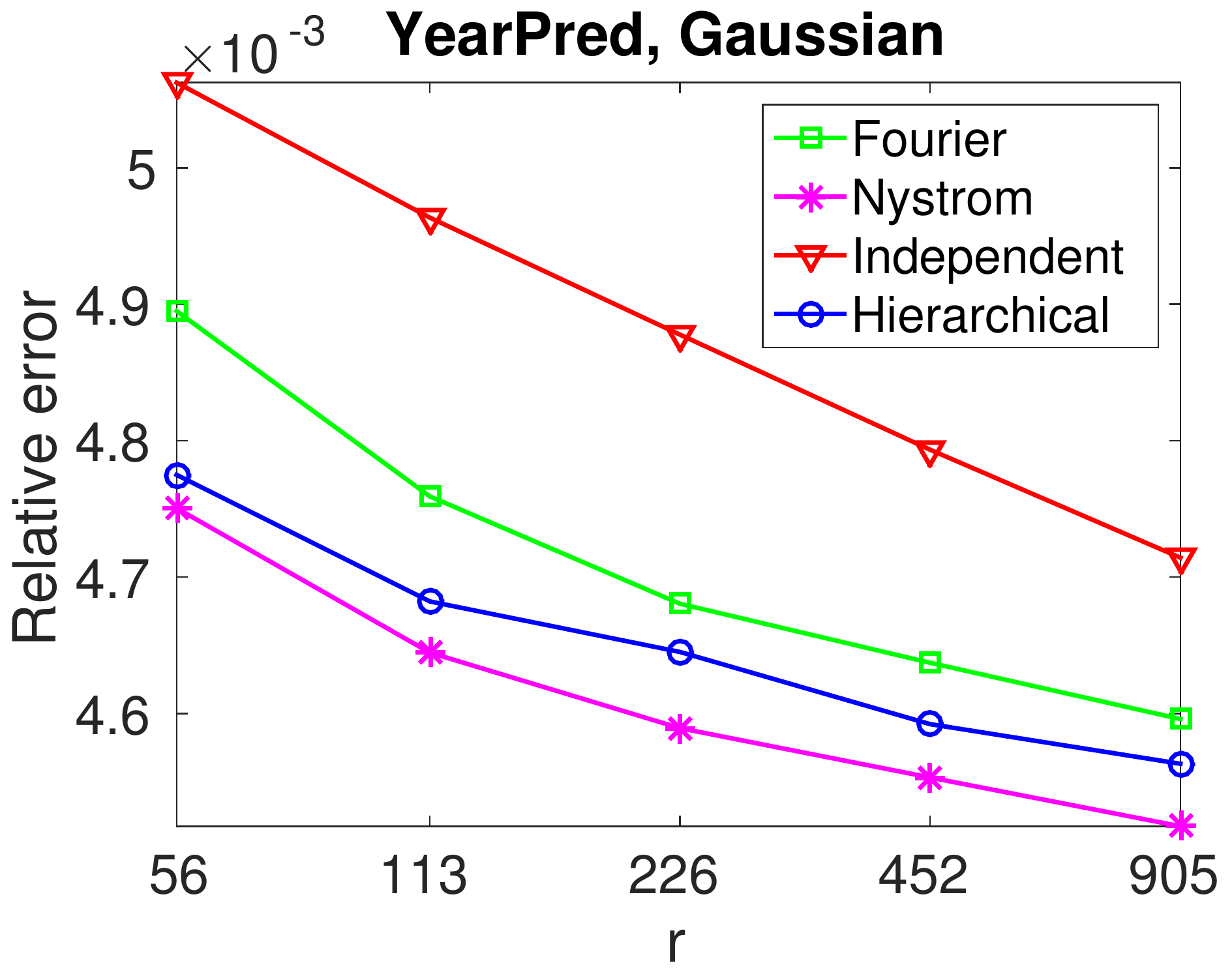}
\includegraphics[width=.32\linewidth]{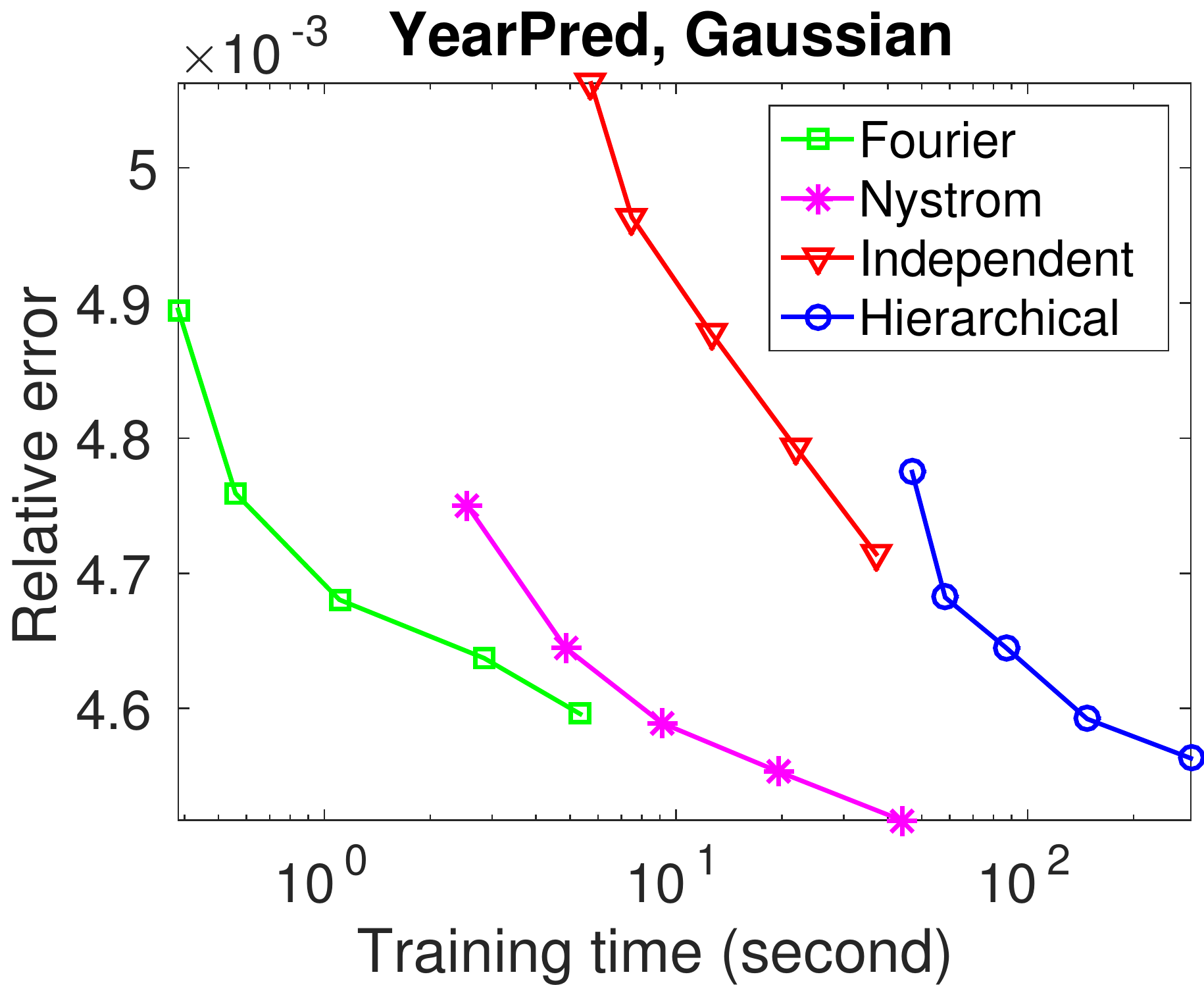}
\includegraphics[width=.33\linewidth]{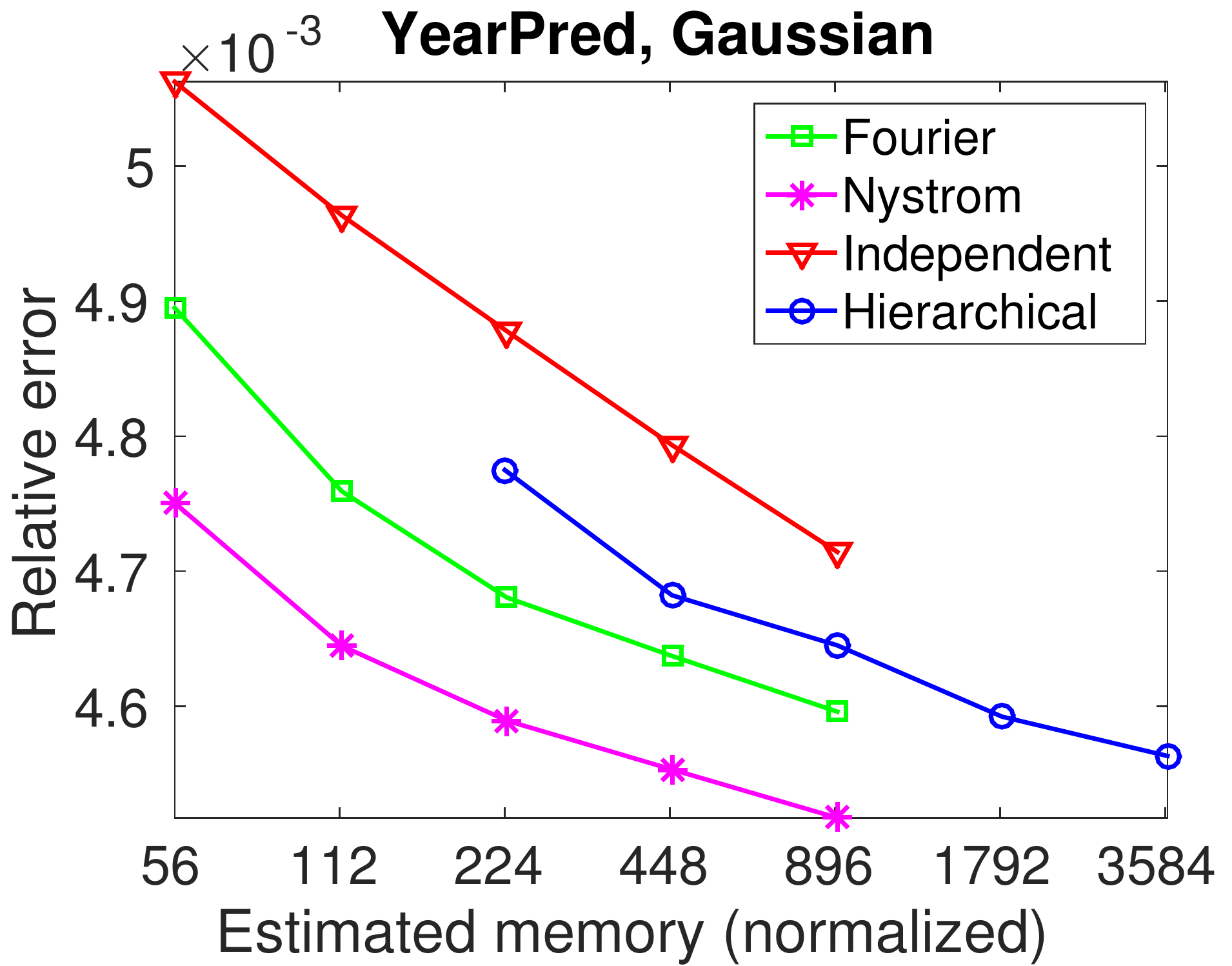}}
\subfigure[ijcnn1, binary classification]{
\includegraphics[width=.33\linewidth]{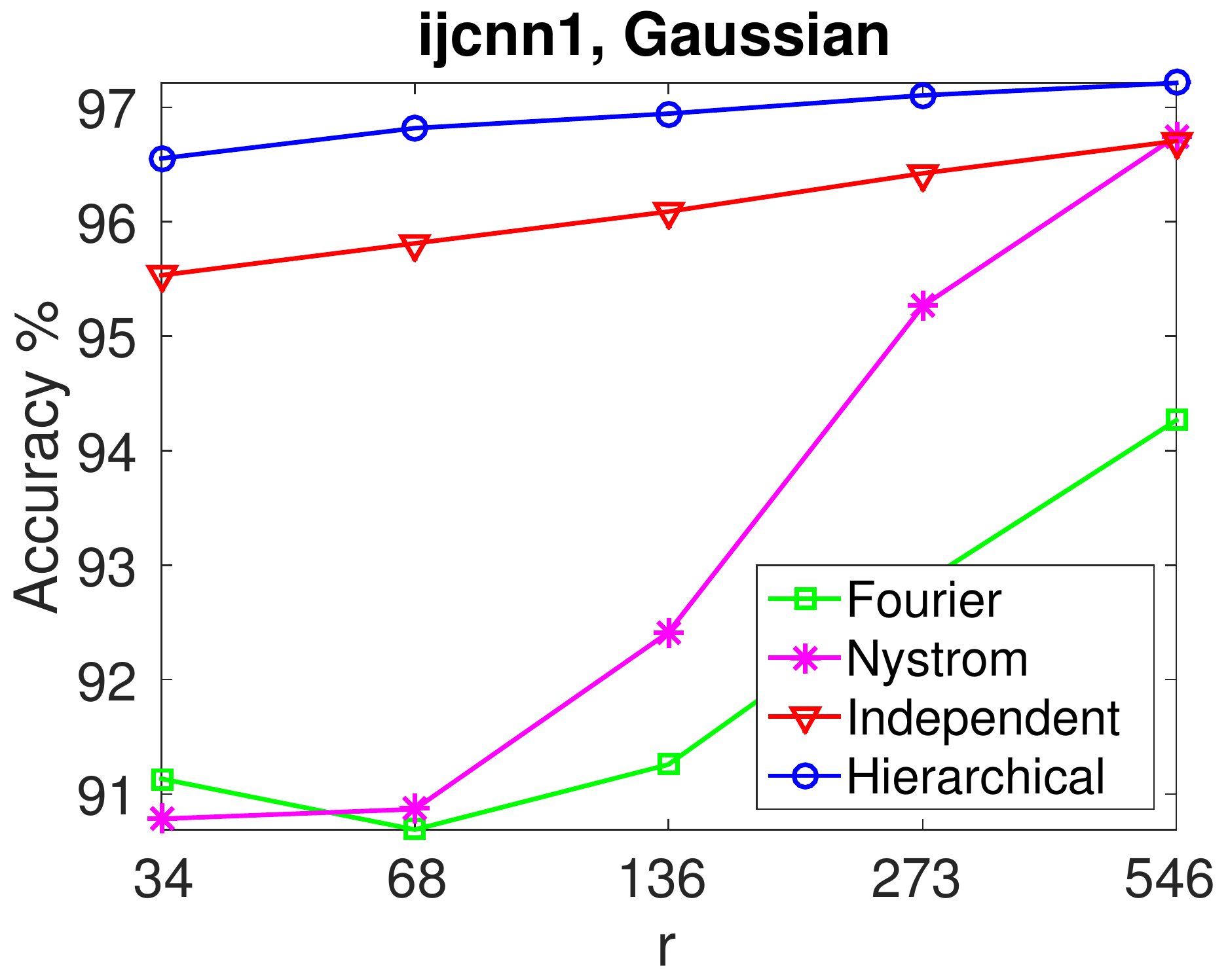}
\includegraphics[width=.32\linewidth]{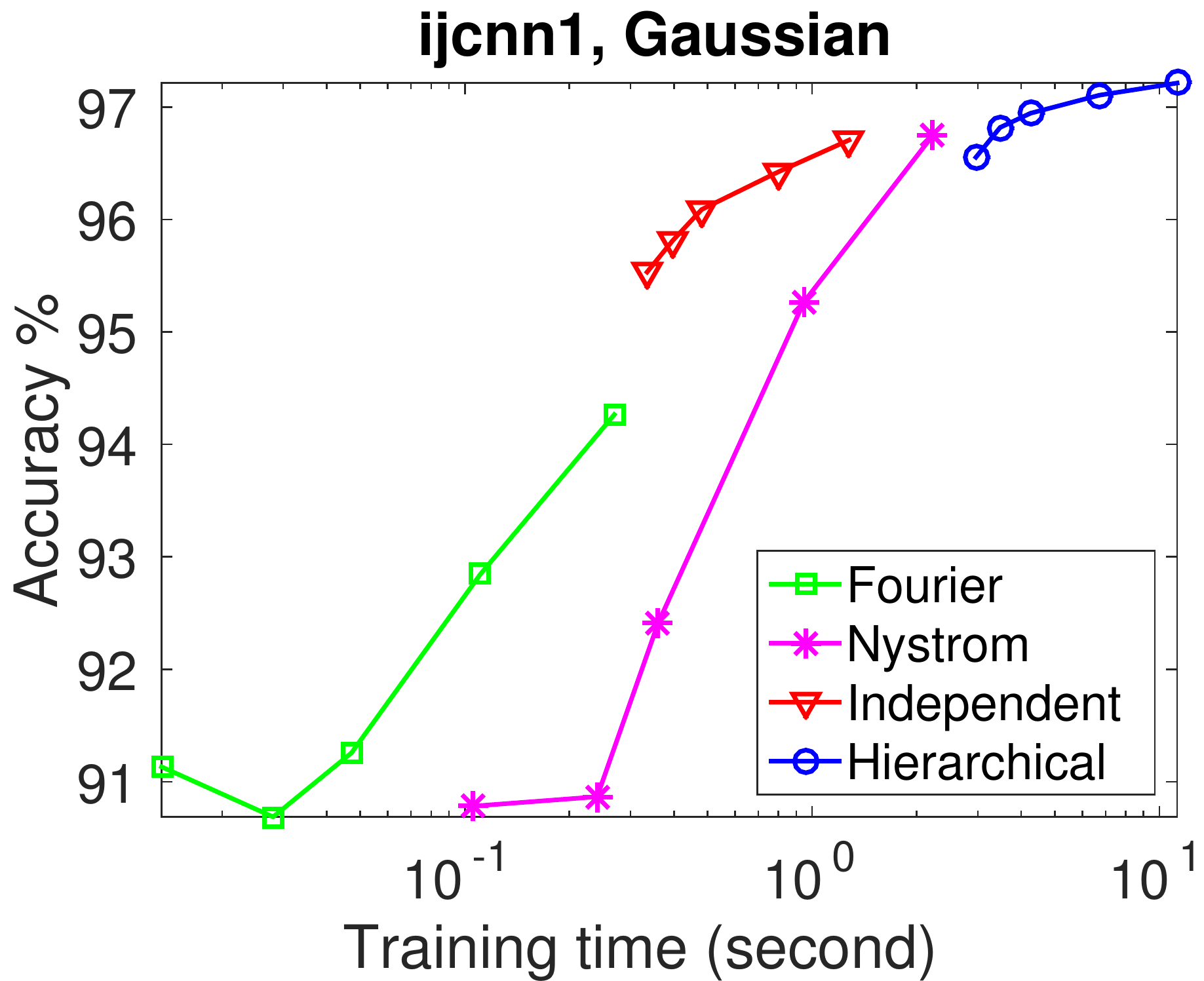}
\includegraphics[width=.33\linewidth]{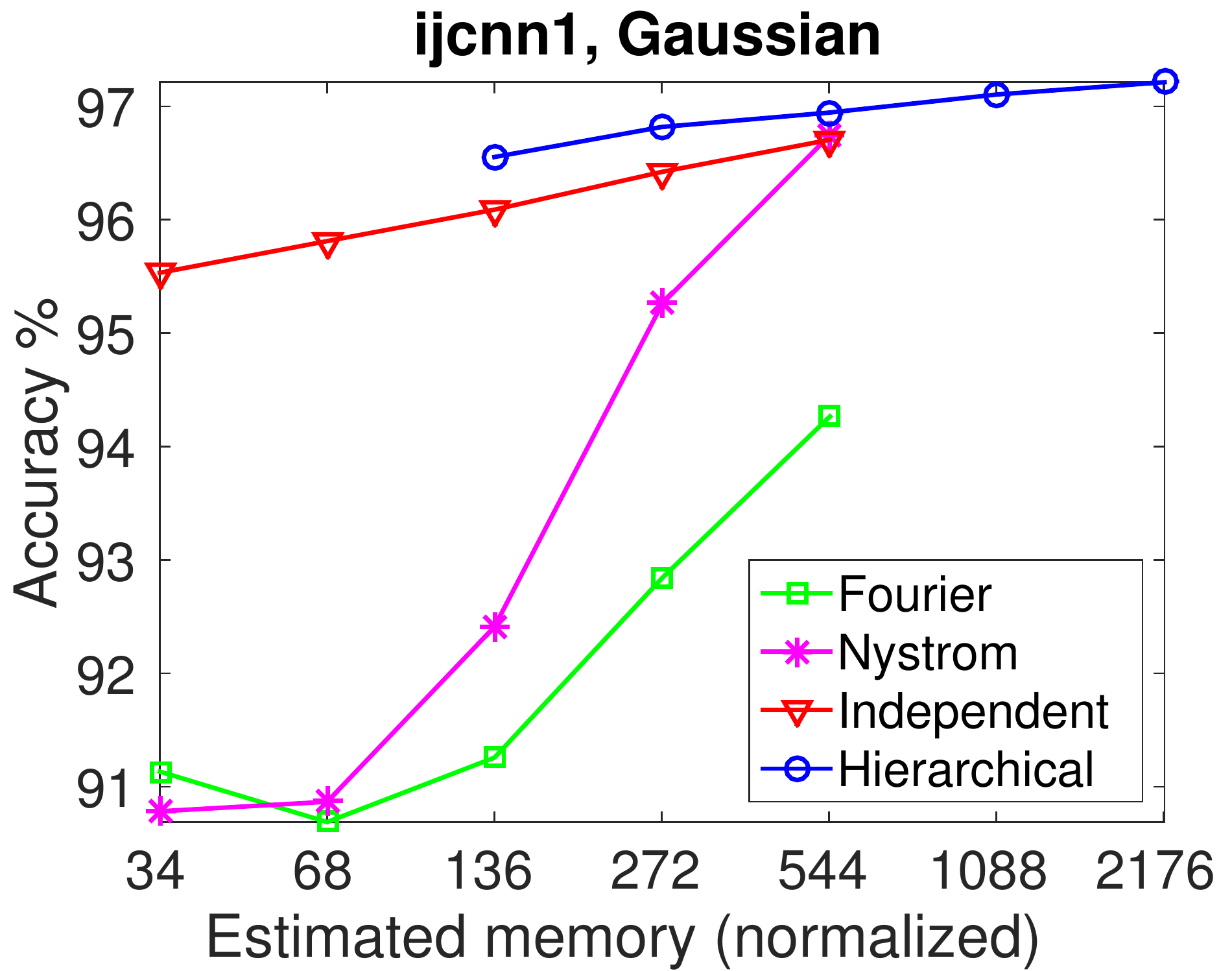}}
\subfigure[covtype.binary, binary classification]{
\includegraphics[width=.33\linewidth]{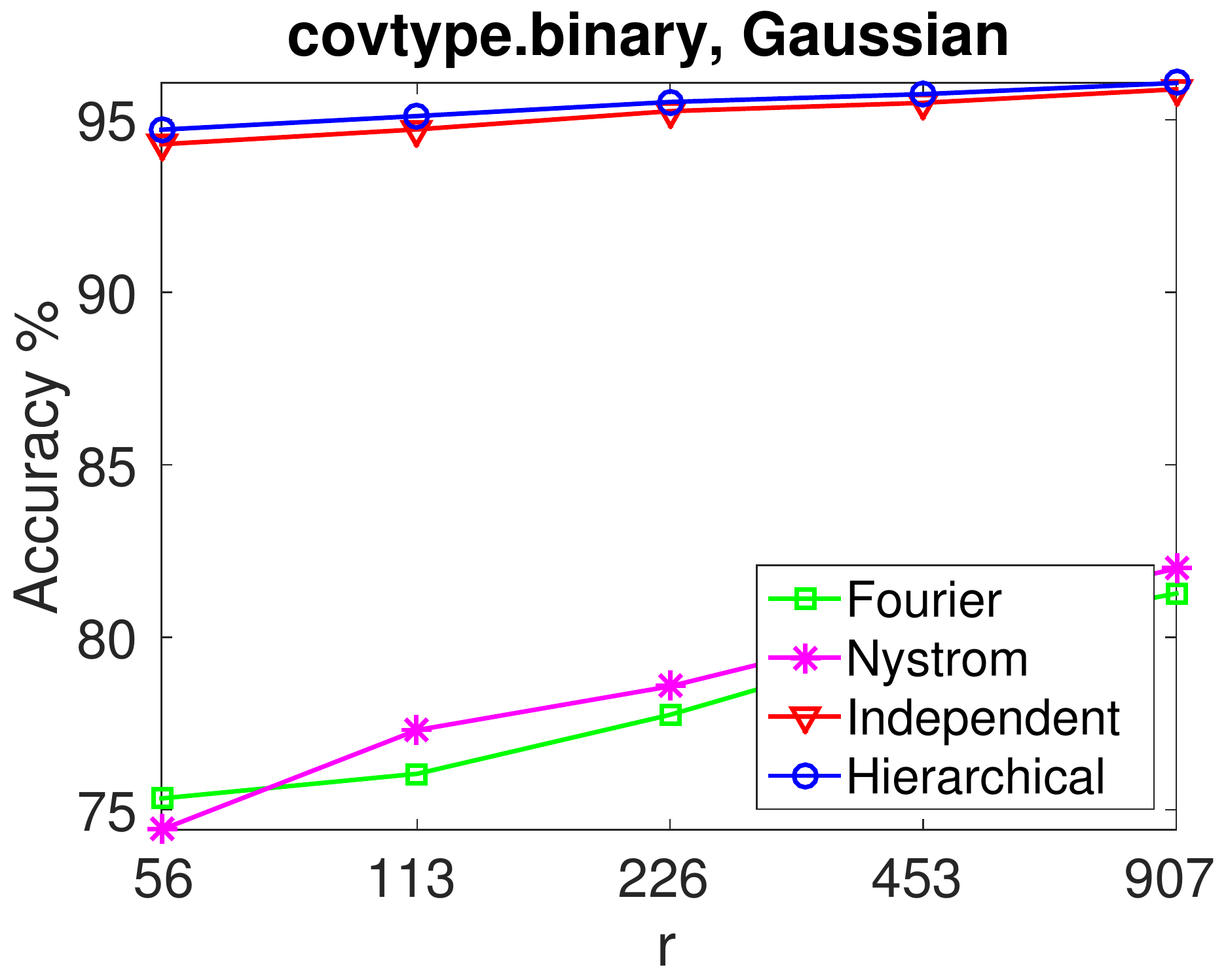}
\includegraphics[width=.32\linewidth]{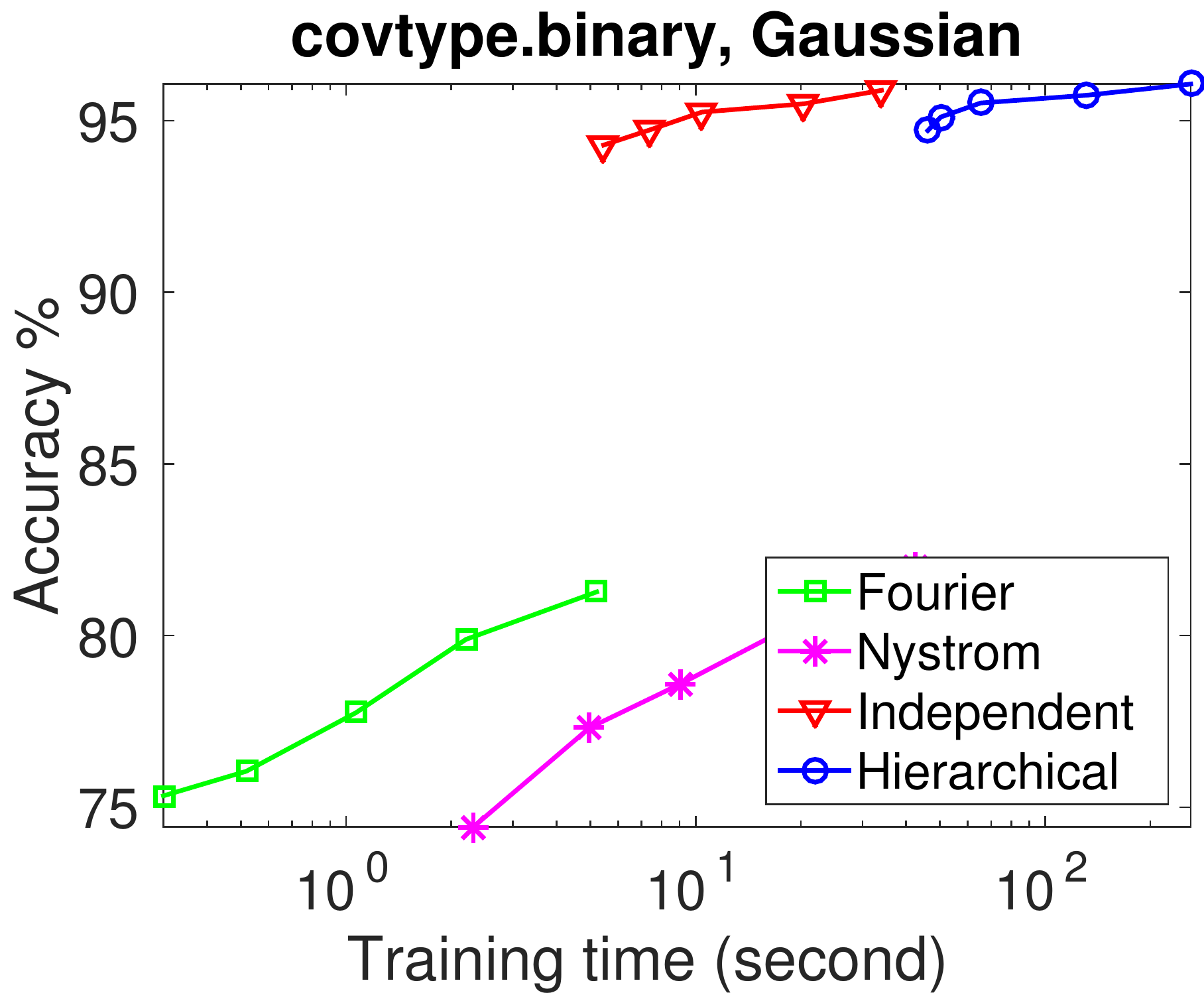}
\includegraphics[width=.33\linewidth]{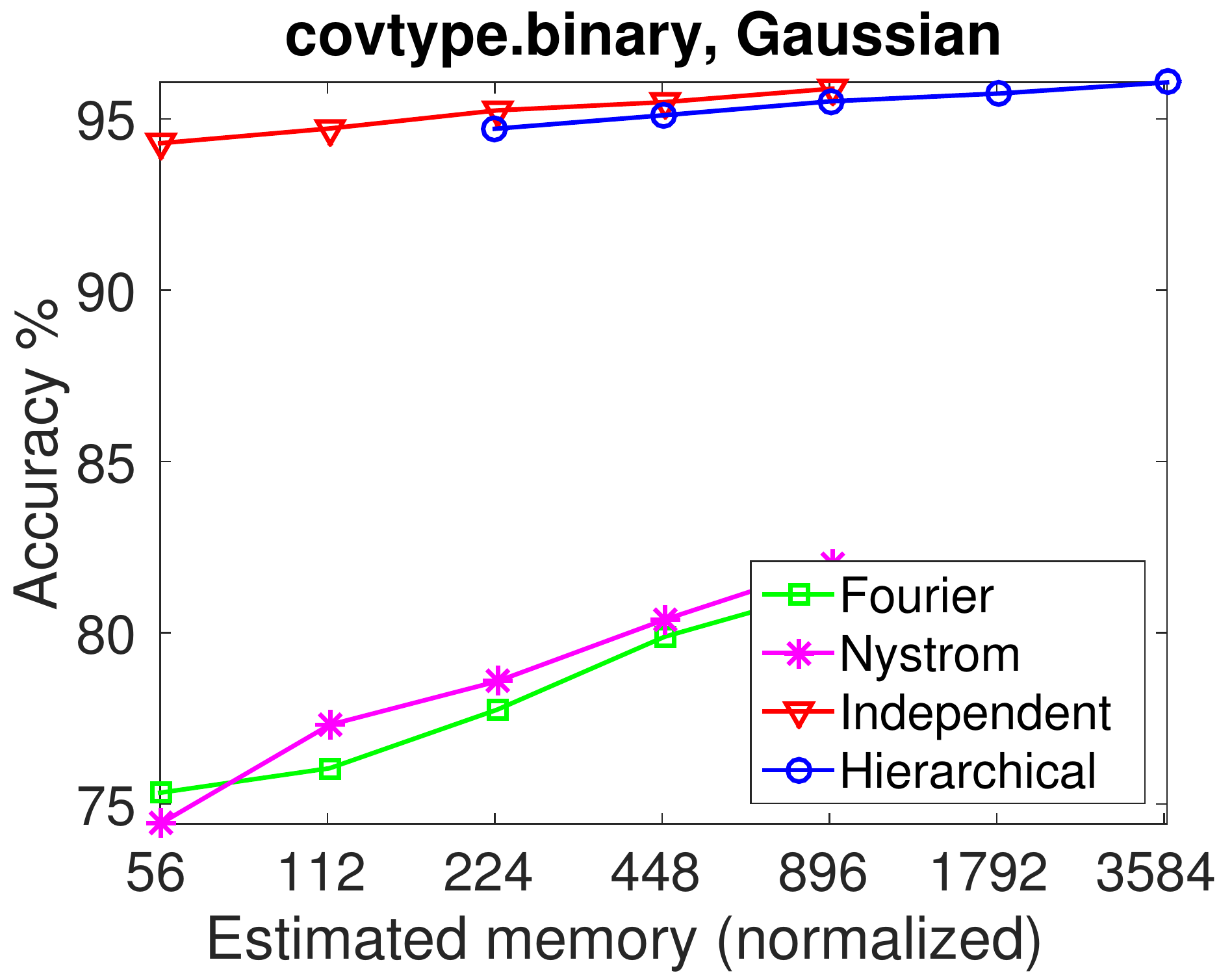}}
\caption{Performance versus $r$, time, and memory. Gaussian kernel.}
\label{fig:ZZ_plot_exp_3_comprehensive_1}
\end{figure}

\begin{figure}[!ht]
\centering
\subfigure[SUSY, binary classification]{
\includegraphics[width=.33\linewidth]{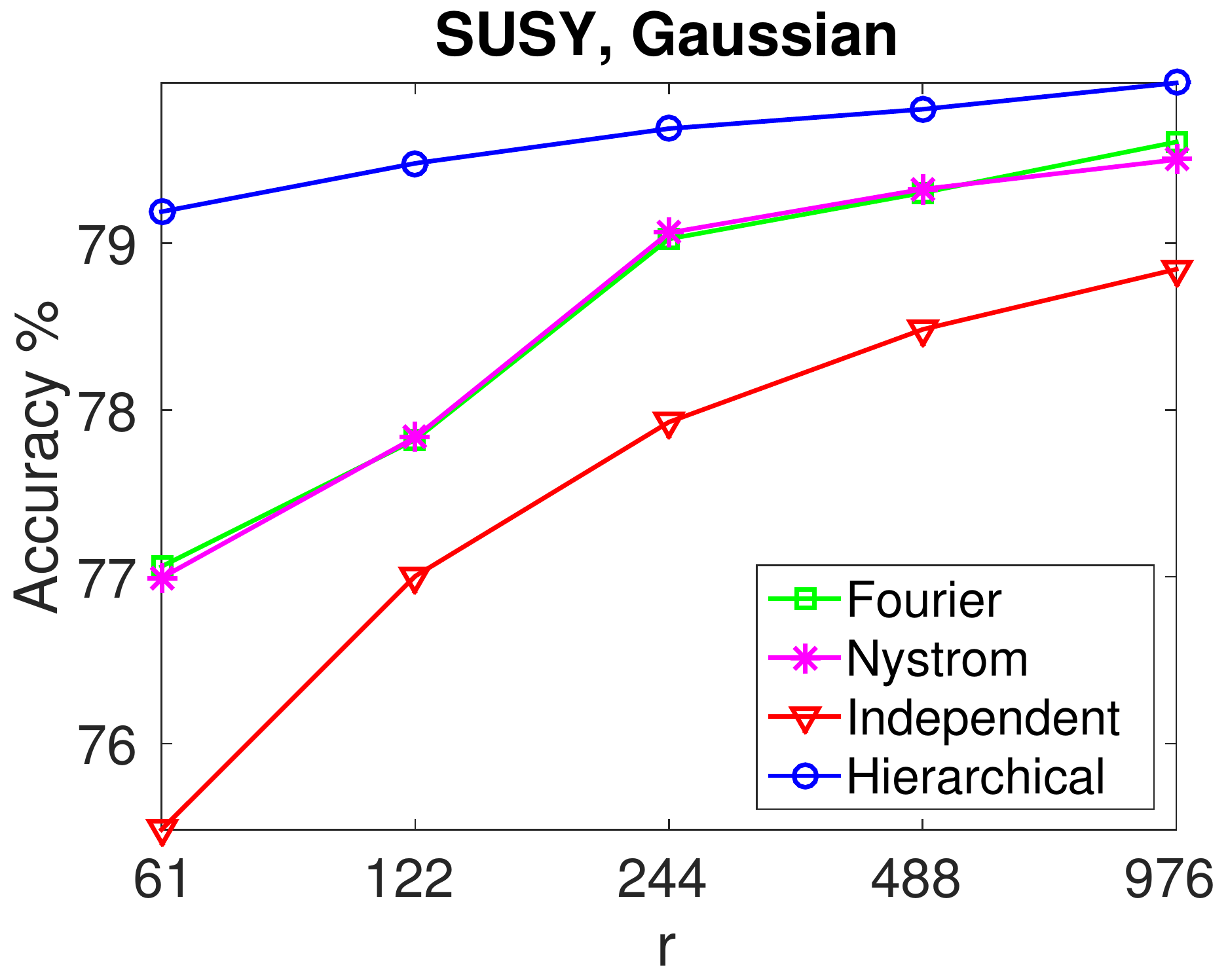}
\includegraphics[width=.32\linewidth]{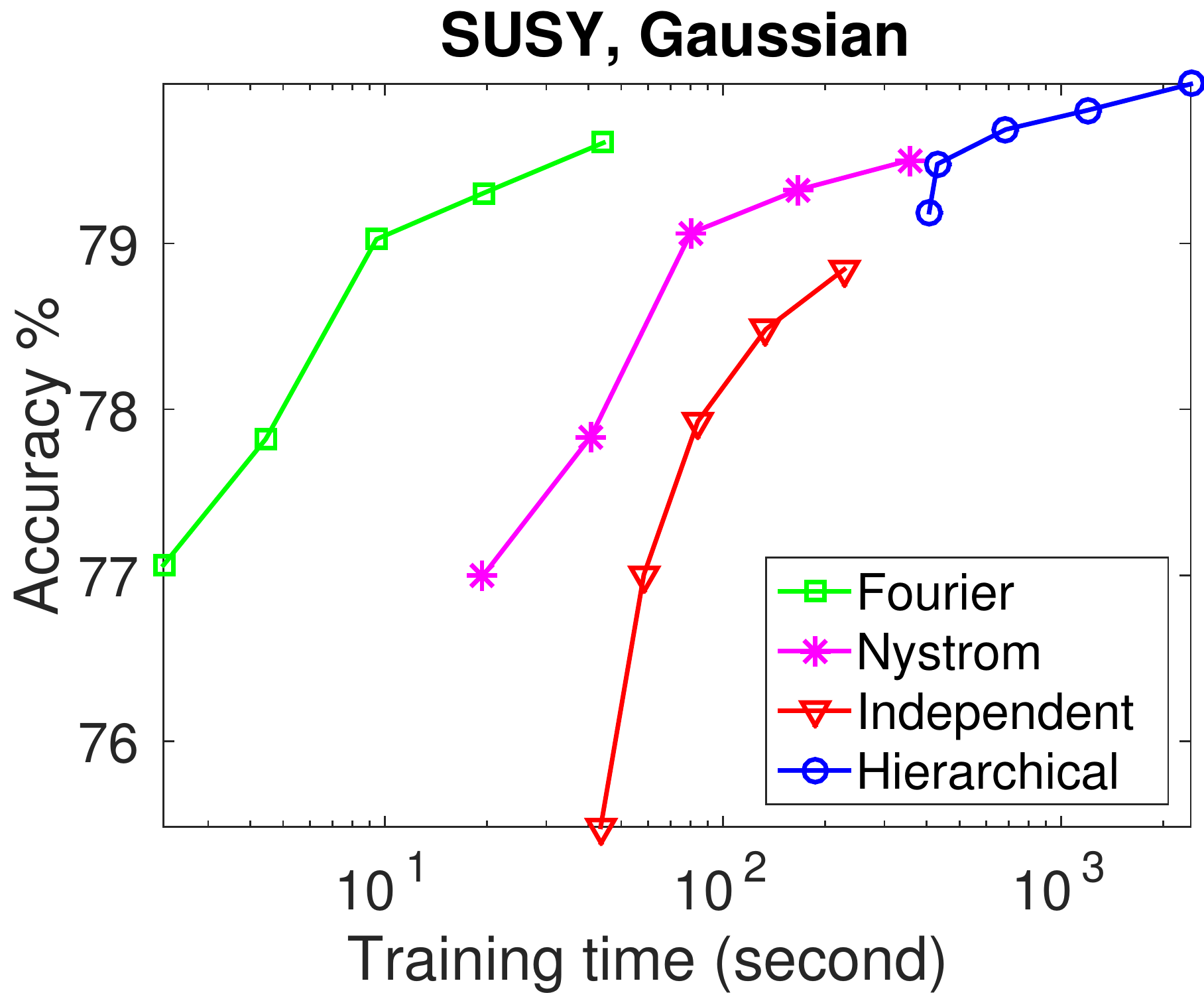}
\includegraphics[width=.33\linewidth]{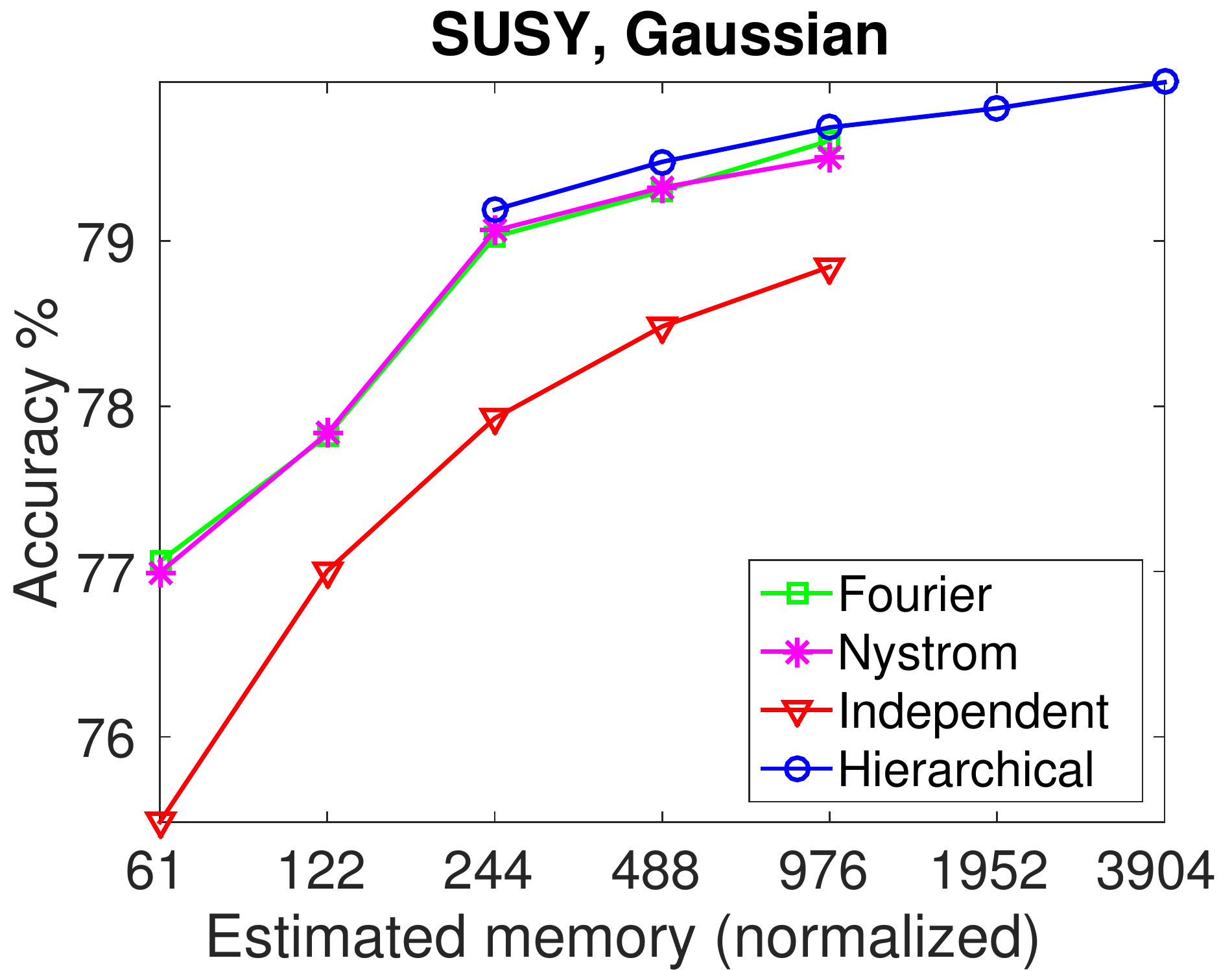}}
\subfigure[mnist, multiclass classification]{
\includegraphics[width=.33\linewidth]{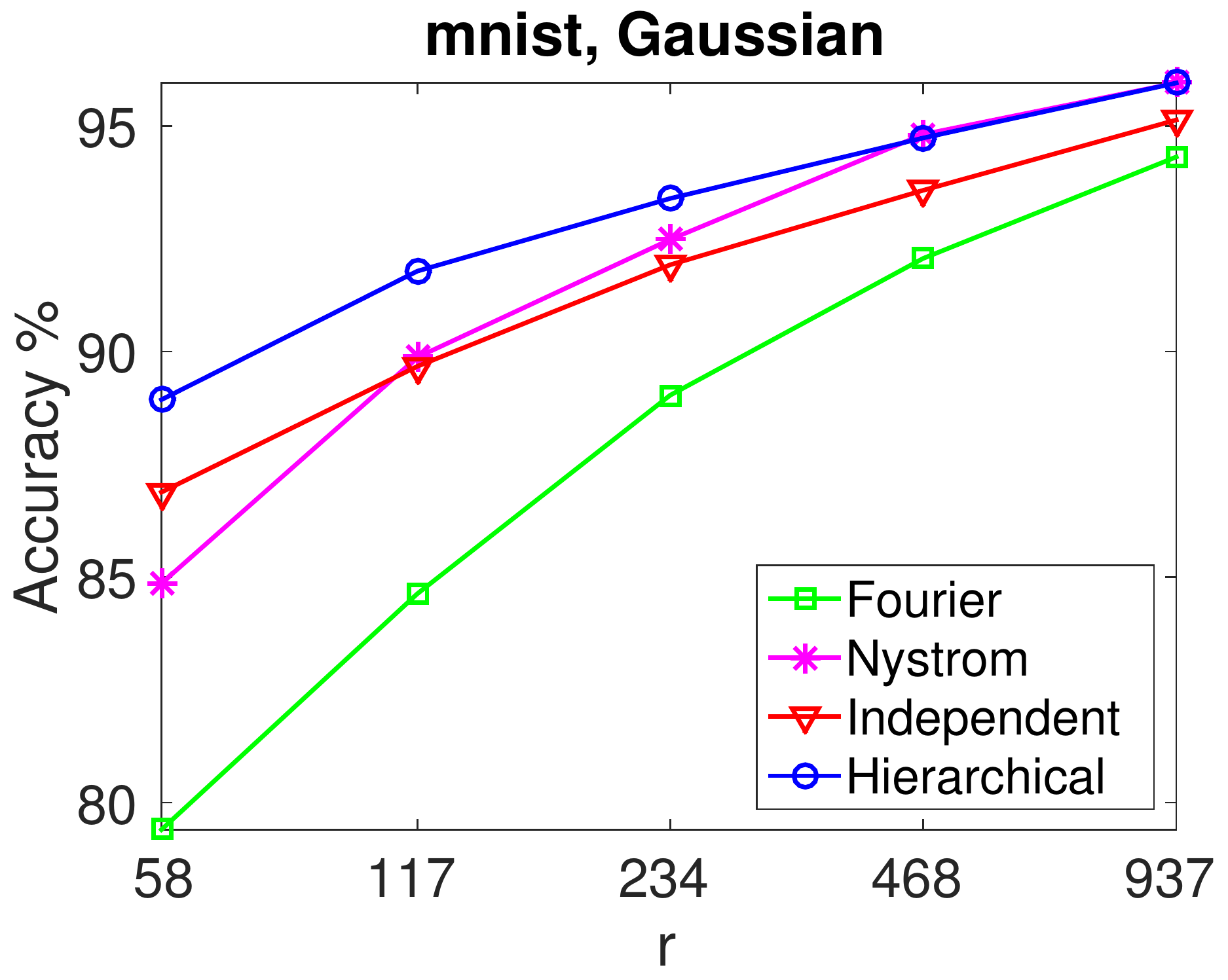}
\includegraphics[width=.32\linewidth]{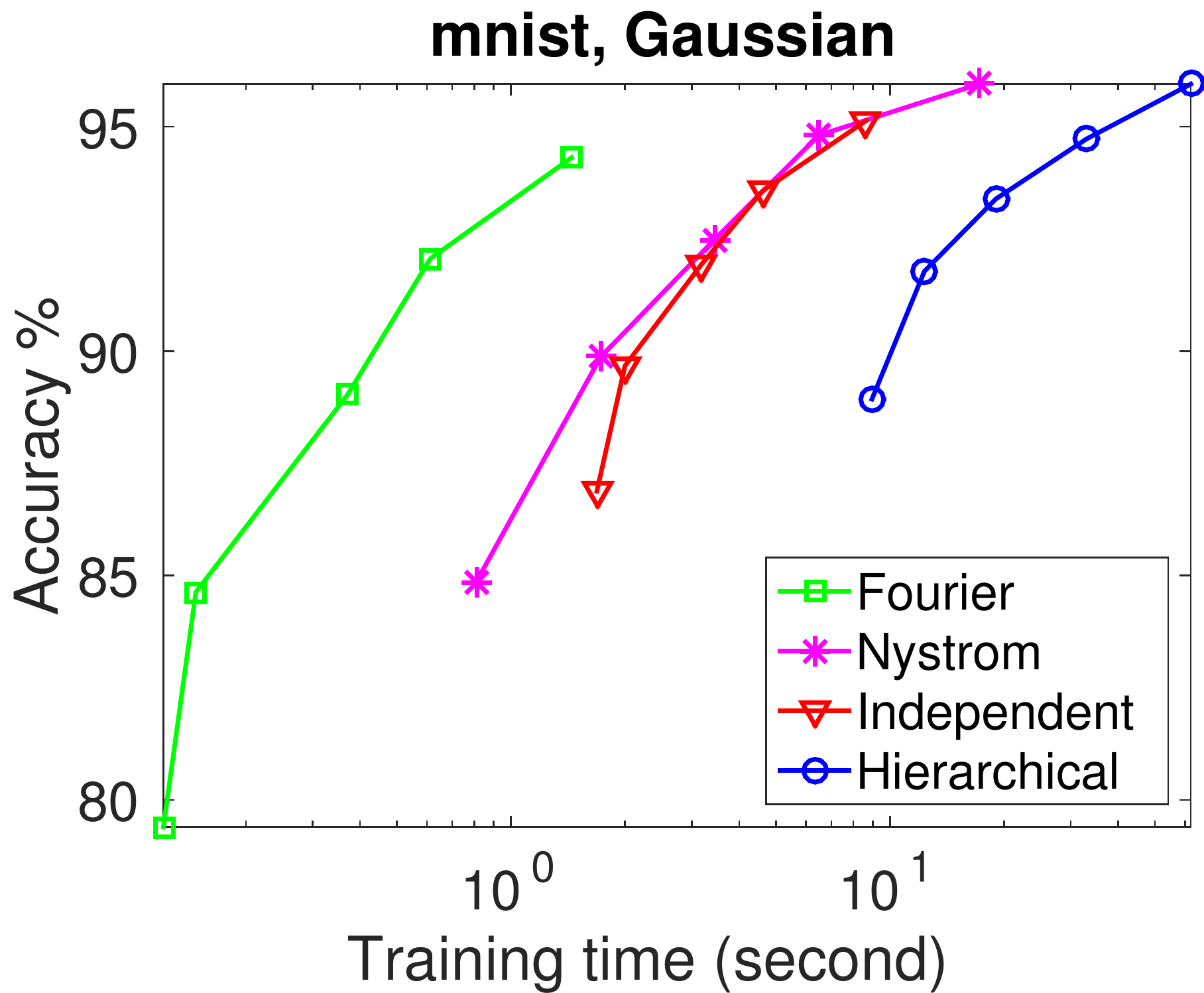}
\includegraphics[width=.33\linewidth]{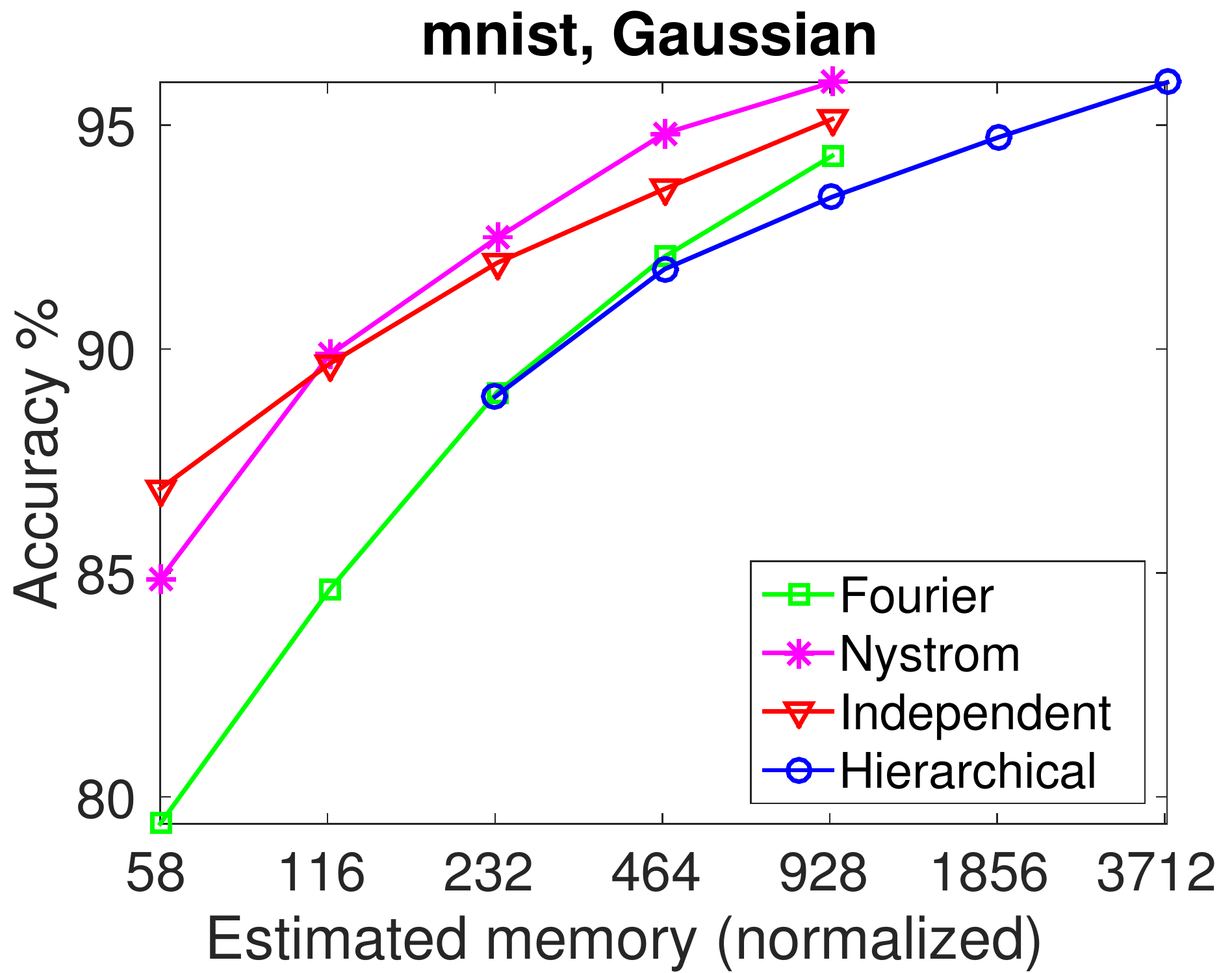}}
\subfigure[acoustic, multiclass classification]{
\includegraphics[width=.33\linewidth]{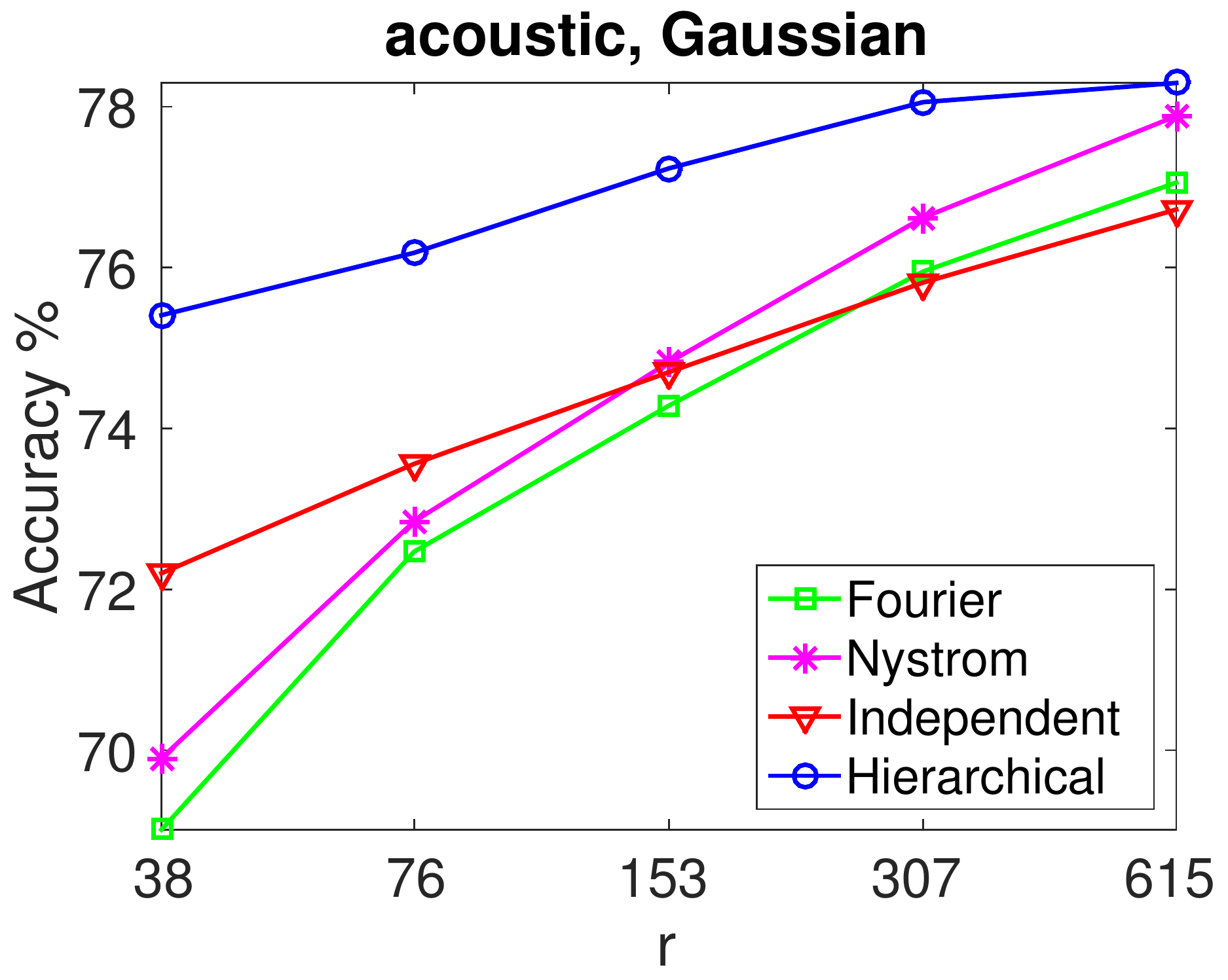}
\includegraphics[width=.32\linewidth]{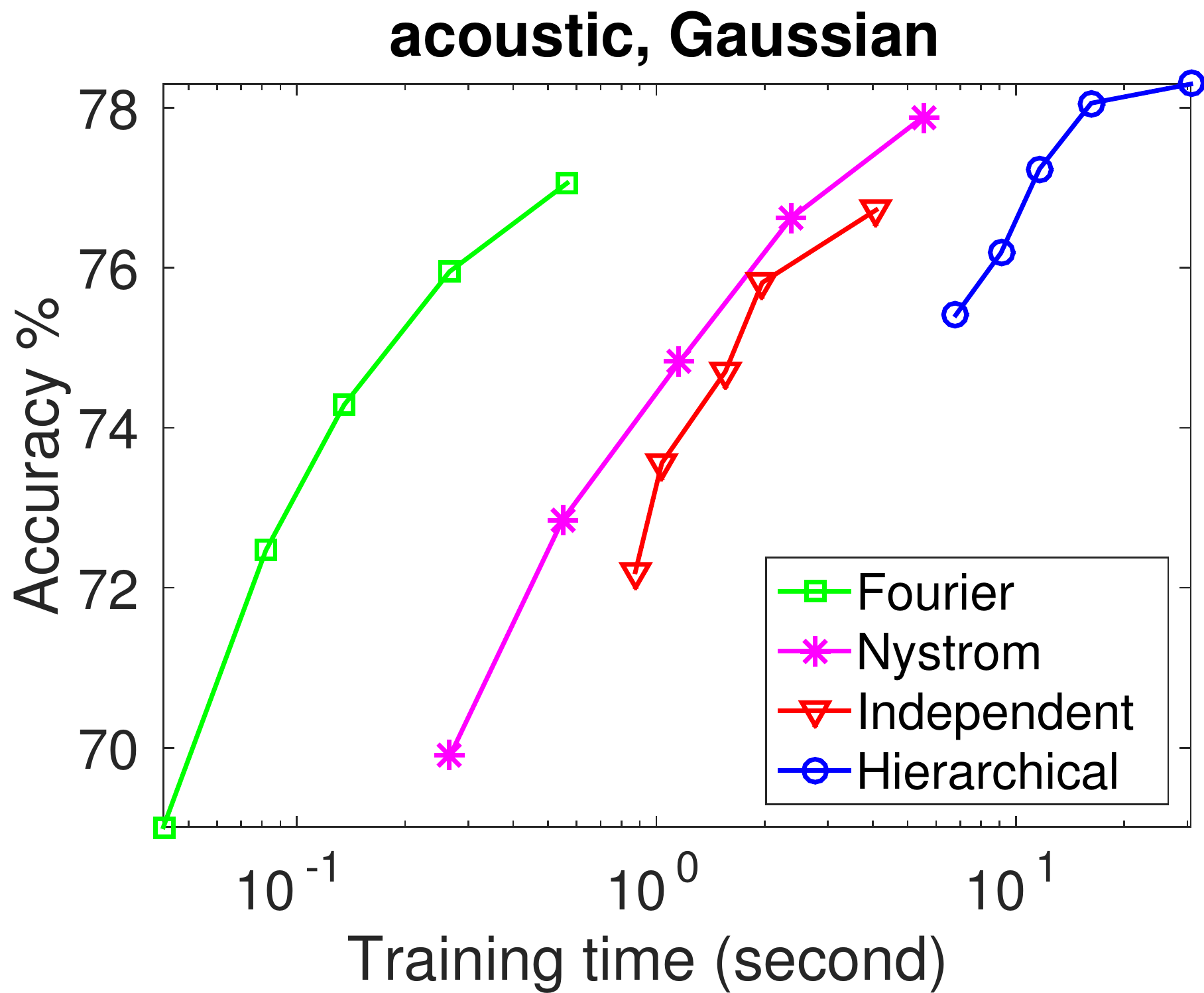}
\includegraphics[width=.33\linewidth]{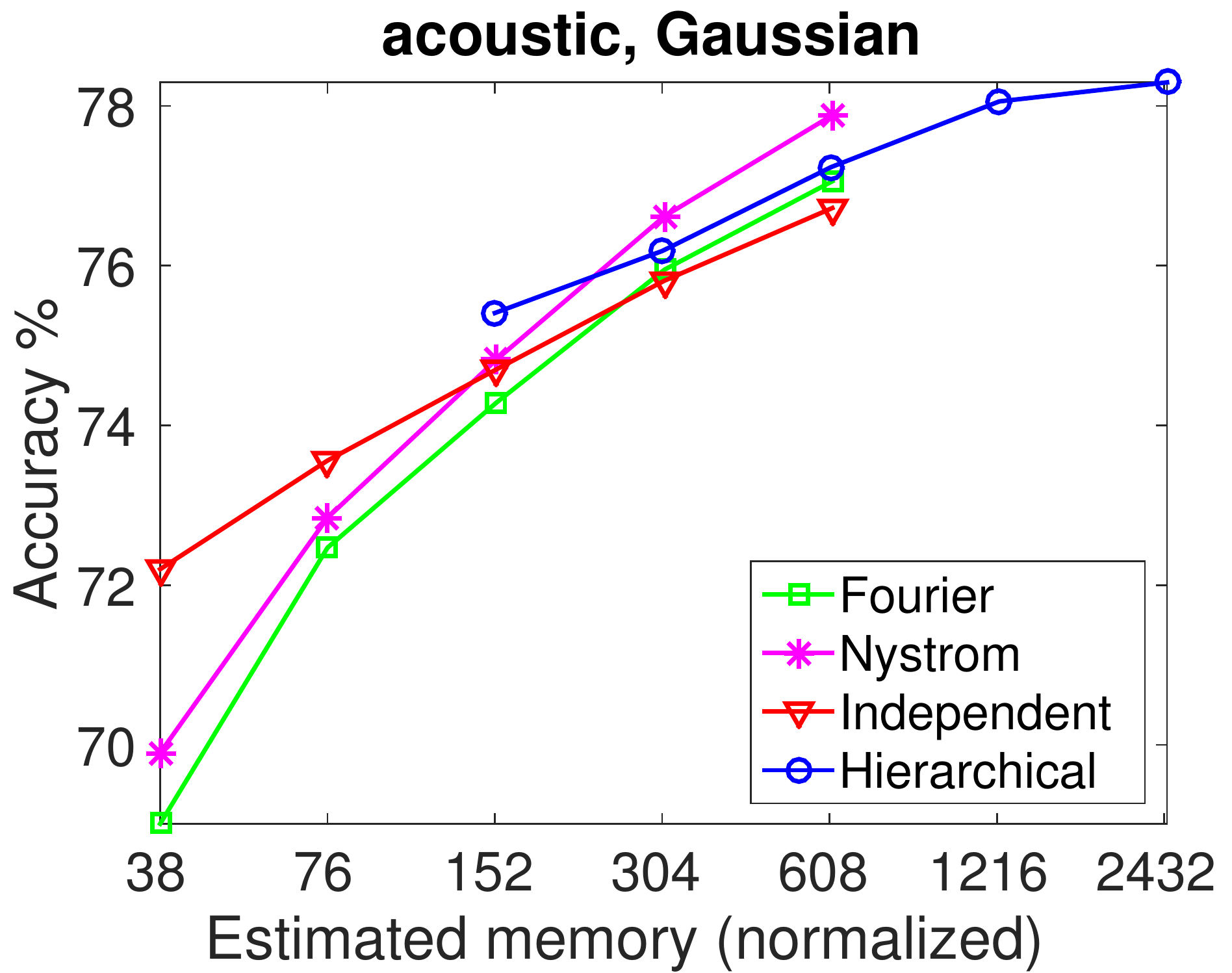}}
\subfigure[covtype, multiclass classification]{
\includegraphics[width=.33\linewidth]{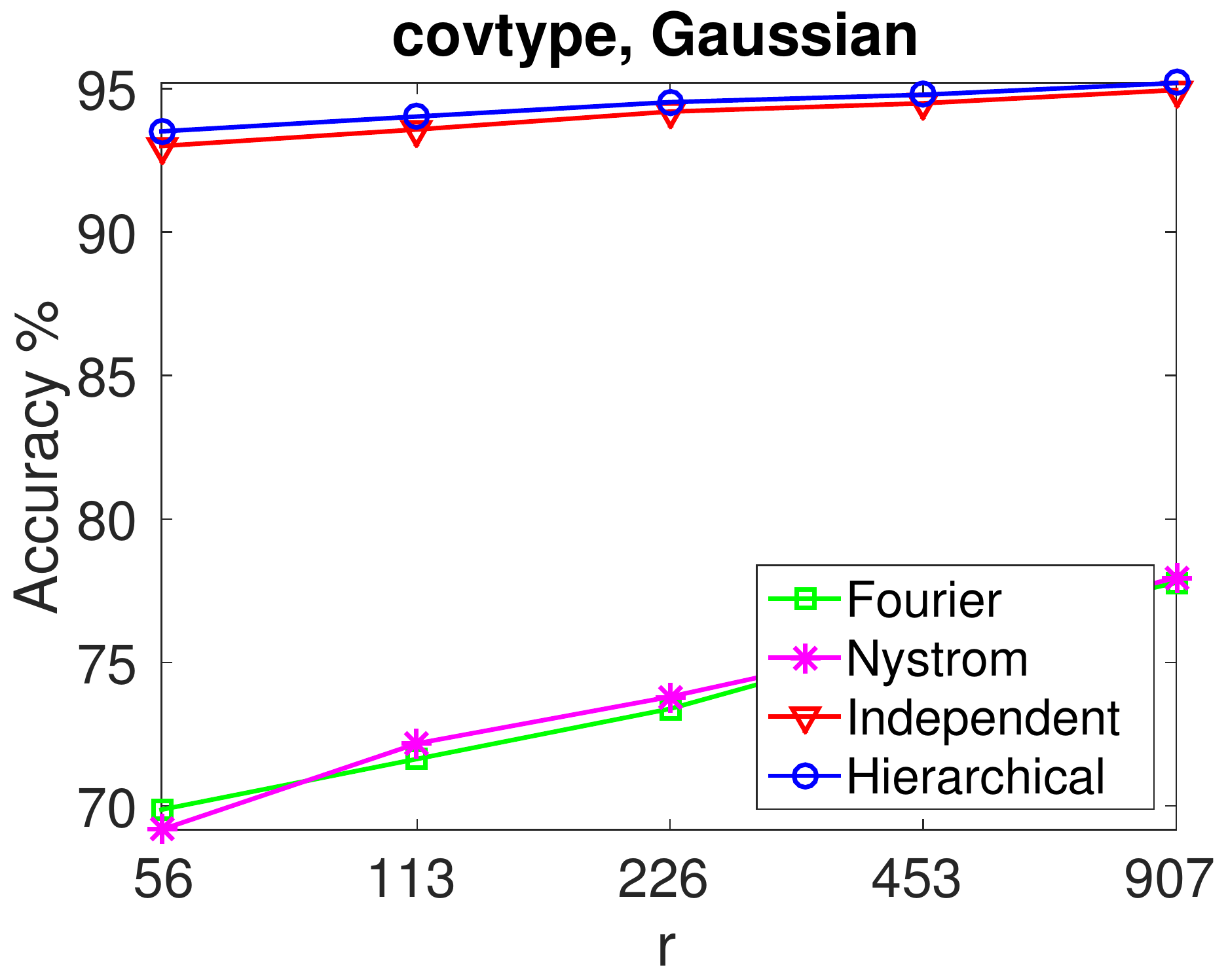}
\includegraphics[width=.32\linewidth]{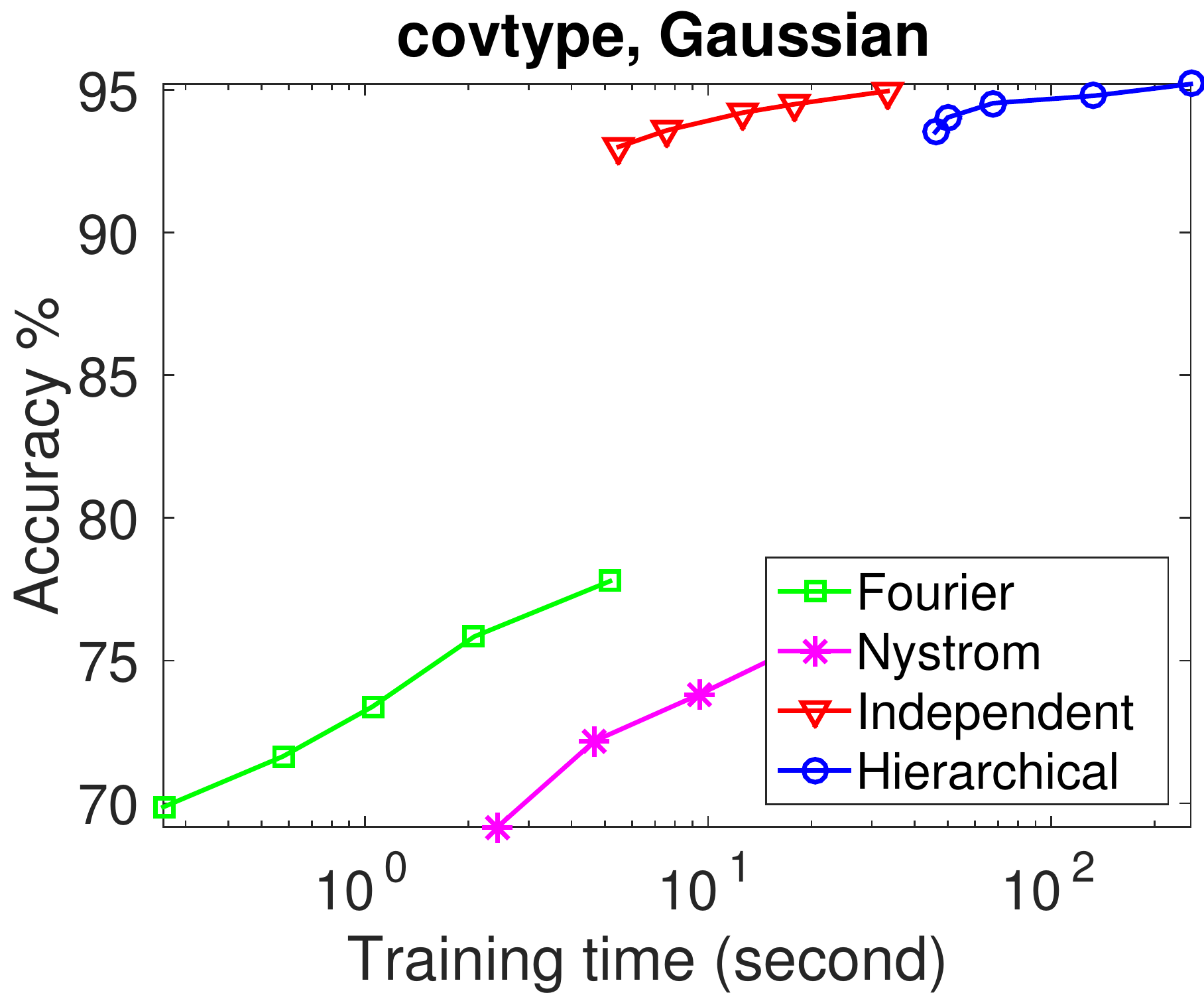}
\includegraphics[width=.33\linewidth]{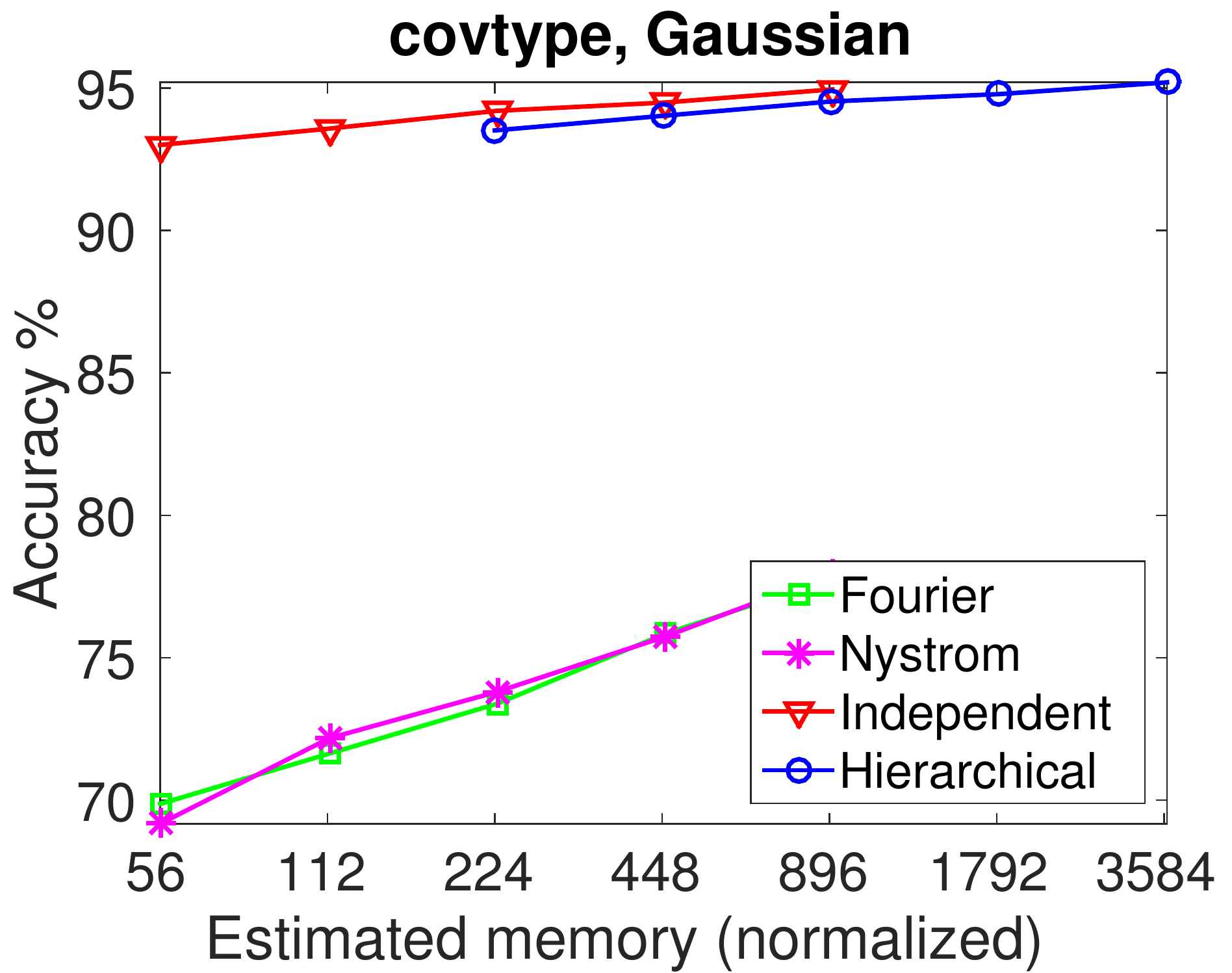}}
\caption{(Continued) Performance versus $r$, time, and memory. Gaussian kernel.}
\label{fig:ZZ_plot_exp_3_comprehensive_2}
\end{figure}

The results are summarized in Figures~\ref{fig:ZZ_plot_exp_3_comprehensive_1} and~\ref{fig:ZZ_plot_exp_3_comprehensive_2}. In the panel of the plots, each row corresponds to one data set. The three columns are performance versus $r$, training time, and memory consumption, respectively. The performance is measured as the relative error in the regression case and the accuracy in the classification case. The memory cost is estimated and normalized. According to the analysis in Section~\ref{sec:cost.mem}, the memory cost of the proposed kernel is approximately $4r$ per training point, whereas for the other approximate kernels we use an estimation of $r$ per point.

A few observations follow. First, the proposed kernel almost always yields the best performance versus $r$, except for the data set YearPredictionMSD. Such an observation confirms the effectiveness of the proposed method, which combines the advantages of the Nystr\"{o}m method globally and the cross-domain independent kernel locally.

Second, the Fourier method generally runs the fastest, followed by Nystr\"{o}m and independent, whose speeds are comparably similar; and the proposed method falls behind. Although all methods have the same asymptotic cost $O(nr^2)$, in a finer level of analysis, Fourier wins over Nystr\"{o}m because the generation of random features in the former is often less expensive than the kernel evaluation in the latter. Of course, such a fact could also be interpreted on the other hand as a weakness of the Fourier method: It is applicable to only a numeric data representation, whereas Nystr\"{o}m assumes no numeric form of data but a kernel. Nystr\"{o}m has a similar time cost as does the independent kernel; the former requires more kernel evaluations but the latter needs computing the sequence of subdomains each point belongs to. The independent kernel and the proposed kernel need similar computations (small matrix multiplications and factorizations), but the number of such operations is a constant times more for the proposed kernel; hence, not surprisingly it requires more computational effort.

Note that in the Fourier and the Nystr\"{o}m methods, large matrices are operated in a small number of times. The implementation straightforwardly makes use of sophisticated linear algebra libraries, where parallelism and cache reuse have been aggressively optimized~\citep{Anderson1999,Goto2008}. On the other hand, in the independent and the proposed kernels, small matrices are operated in a large number of times. Moreover, there exist fragmented computations such as tree traversals and data shuffles. Optimizing these computations in terms of parallelism and cache reuse is possible and the executation times have the potential to be substantially improved.

The third observation of Figures~\ref{fig:ZZ_plot_exp_3_comprehensive_1} and~\ref{fig:ZZ_plot_exp_3_comprehensive_2} focuses on the performance versus memory consumption. Since all the other methods are estimated to use the same amount of memory, the plots are essentially moving only the curve of the proposed kernel to the right compared with the performance-versus-$r$ plots. Then, with the same amount of memory, some methods behave better than others in some data sets. There does not exist a consistent winner. One sees that the proposed method is the best performing for the data sets ijcnn1 and SUSY.

The last observation is that for the data set covtype (both the binary and the multiclass case), a significant performance gap occurs between the low-rank (Fourier and Nystr\"{o}m) and the full-rank (independent and hierarchical) kernels. This is the exemplified case where the eigenvalues in the kernel matrix decay too slowly, such that low-rank kernels with an insufficiently large rank $r$ are clearly underperforming.

\subsection{A Different Base Kernel}
We consider using a different base kernel and inspect whether the observations made in the preceding subsection still hold. Here, we use the Laplace kernel
\[
k(\bm{x},\bm{x}')=\exp\left(-\frac{\|\bm{x}-\bm{x}'\|_1}{\sigma}\right)
\]
popularized by~\citet{Rahimi2007}. Clearly, this kernel is the tensor product of one-dimensional exponential kernels $\exp(-|x_i-x_i'|/\sigma)$ for all attributes $i$. In the GP context, the exponential kernel gives rise to the well-known Ornstein-Uhlenbeck process. Both the exponential kernel and the Gaussian kernel are special cases of the Mat\'{e}rn family of kernels, but their characteristics are opposite: the latter yields an extremely smooth process whereas the former highly rough~\citep{Stein1999}. Hence, one might expect that their results substantially differ.

However, the answer is to the contrary. To avoid cluttering, we leave the plots to the appendix (see Figures~\ref{fig:ZZ_plot_exp_4_laplace_1} and~\ref{fig:ZZ_plot_exp_4_laplace_2}); they are similar to those shown in Figures~\ref{fig:ZZ_plot_exp_3_comprehensive_1} and~\ref{fig:ZZ_plot_exp_3_comprehensive_2} of the Gaussian kernel. Specifically, the general trends and the specific performance values are quite close across the two base kernels. A possible reason is that the (optimal) regularization $\lambda$ generally lies on the order of $0.01$ to $0.1$, which is relatively large compared with the kernel values, whose peak occurs at $k(0)=1$. In the GP context, the noise level $\lambda$ is so high that it eclipses the effect of smoothness---the rough variation of data may as well be interpreted as noise instead. Then, the smoothness of the kernel matters little and thus the results of different base kernels look similar.

Based on this observation, we additionally perform experiments with the inverse multiquadric kernel
\[
k(\bm{x},\bm{x}')=\frac{\sigma^2}{\sqrt{\|\bm{x}-\bm{x}'\|_2^2+\sigma^2}}.
\]
The strict positive definiteness of this kernel is proved in~\citet{Micchelli1986}; but its Fourier transform is little known and hence we do not compare with the random Fourier method. The results are shown in the appendix (see Figures~\ref{fig:ZZ_plot_exp_7_invmultiquadric_1} and~\ref{fig:ZZ_plot_exp_7_invmultiquadric_2}). One sees again that they are quite similar to those of the Gaussian and Laplace kernels.

\subsection{Trade-Off between $n$ and $r$}\label{sec:budget}
A folk wisdom in machine learning is that more data beats a more complex model~\citep{Domingos2012}. The applicability of this knowledge lies in the regime where the hypothesis space has not been saturated by data. For kernel methods, a natural question to ask is whether subsampling is viable, considering that the computational effort is nontrivial with respect to the data size $n$. A relevant question in the context of approximate kernels is whether a trade-off exists between $n$ and $r$, given a fixed budget $nr$. This budget comes from the memory constraint, because the memory cost is $O(nr)$ in various approximate kernels. Note that if $nr$ is fixed, the arithmetic cost $O(nr^2)$ will increase when $n$ becomes smaller, but the kernels tend to admit a better approximation quality.

Another scenario that resolves the interplay between $n$ and $r$ is the following: Suppose the current computational resources have been fully leveraged for computation and one is offered an upgrade such that the memory capacity is increased $t$ times. To expect the best performance improvement, should one seek $t$ times more training samples (provided feasible), or a smaller increase in training samples but meanwhile also an increase in $r$?

To answer this question, we use two data sets, YearPredictionMSD and covtype.binary, and progressively downsize them by a factor of two. Recall that the former is a regression problem whereas the latter classification. We plot in Figure~\ref{fig:ZZ_plot_exp_5_n_vs_r} the performance curves versus the training size, for a few $r$'s in the progression of approximately a factor of two. For comparison, we also perform the full-fledged computation with the nonapproximate kernel through solving~\eqref{eqn:KRR} directly by using a preconditioned Krylov method on a cluster of AWS EC2 machines (see~\citet{avron2016} for details). The performance curve is black with circle markers.

\begin{figure}[ht]
\centering
\subfigure[YearPredictionMSD, regression]{
\includegraphics[width=.47\linewidth]{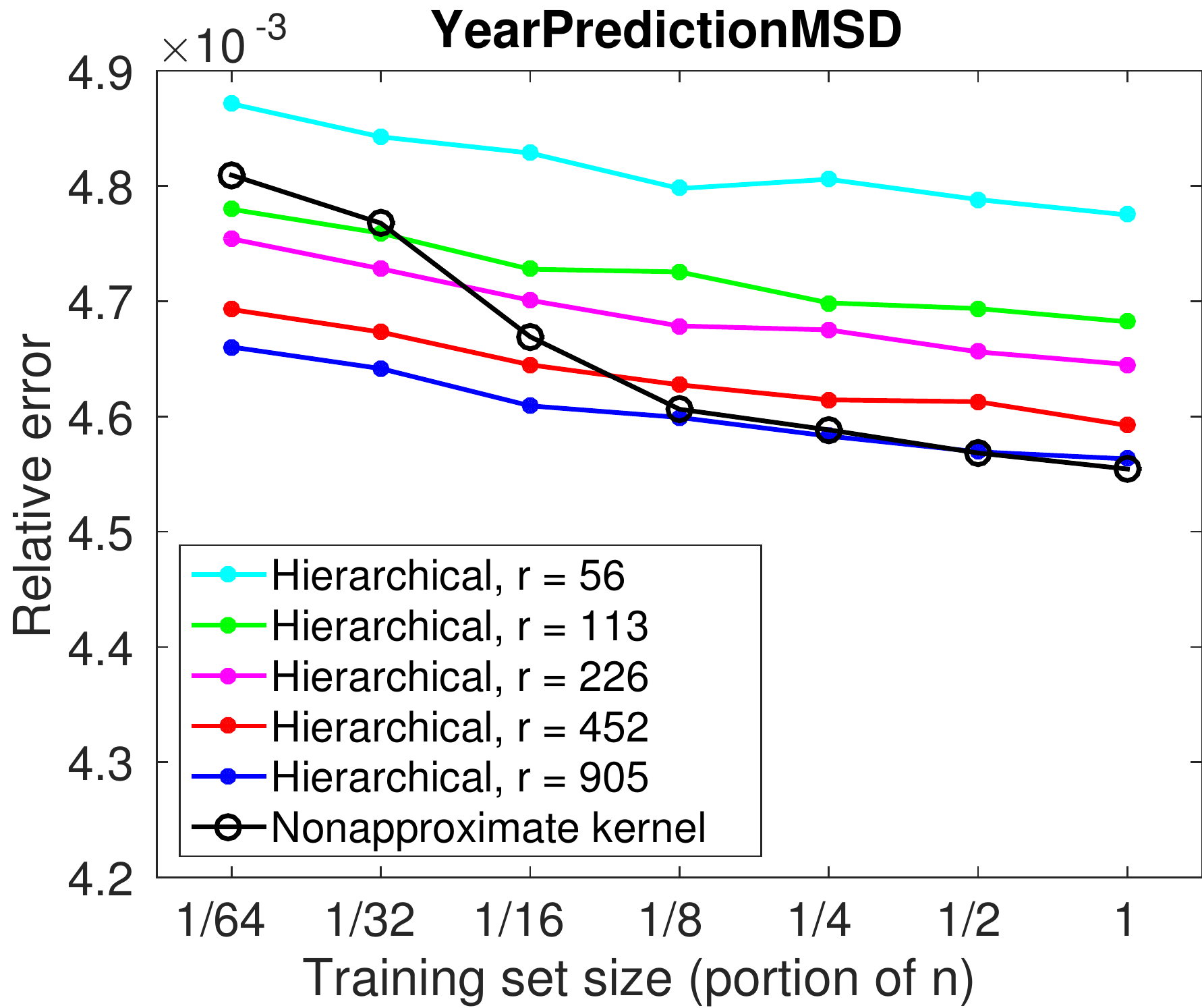}}
\subfigure[covtype.binary, binary classification]{
\includegraphics[width=.47\linewidth]{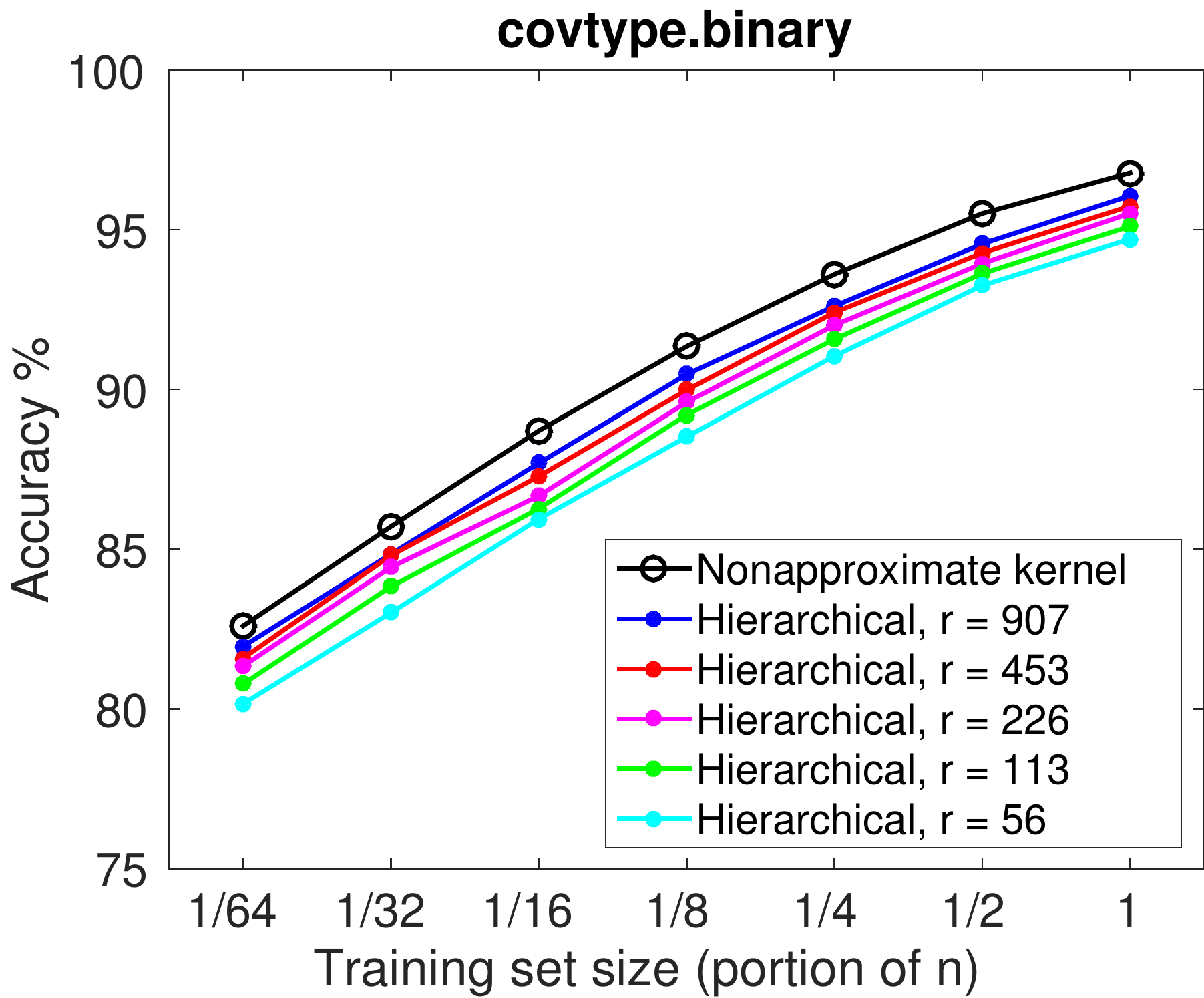}}
\caption{Performance under different training sizes and $r$. Kernel: Gaussian.}
\label{fig:ZZ_plot_exp_5_n_vs_r}
\end{figure}

The question does not seem to bare a clear and simple answer after one investigates the two plots in Figure~\ref{fig:ZZ_plot_exp_5_n_vs_r}. Commonly, the performance of the proposed kernel improves in a consistent pace as $r$ increases. For covtype.binary, the curves approach that of the nonapproximate kernel; however, such is not the case for YearPredictionMSD. In this data set, when the training size is small, increasing $r$ may surpass the performance of the nonapproximate kernel. Furthermore, for covtype.binary, increasing the training size clearly yields a much higher performance than does increasing $r$ by the same factor; however, for YearPredictionMSD, increasing $r$ is to the contrary more beneficial. Hence, what the trade-off between $n$ and $r$ favors appears to be data set dependent.

\subsection{Kernel PCA}\label{sec:kpca}

Kernel principal component analysis (kernel PCA; see~\citet{Schoelkopf1998}) is another popular application of kernel methods. The standard PCA defines the embedding of a point $\bm{x}$ as the projected coordinates along the principal components of the training set $X$; whereas kernel PCA defines the embedding as the projected coordinates of $\phi(\bm{x})$ along the principal components of $\phi(X)$, where $\phi$ denotes the mapping from the input space to the feature space. For low-rank kernels (e.g., Nystr\"{o}m and Fourier) that give the explicit feature map $\phi$, kernel PCA may be straightforwardly computed through singular value decomposition of the feature points $\phi(X)$ after centering. For other kernels (e.g., cross-domain independent kernel and the proposed kernel), one may leverage the relation $k(\bm{x},\bm{x}')=\inprod{\phi(\bm{x})}{\phi(\bm{x}')}$ and compute the embedding through eigenvalue decomposition of the kernel matrix $K(X,X)$ after centering.

\begin{figure}[ht]
\centering
\subfigure[cadata, $n=16,512$]{
\includegraphics[width=.47\linewidth]{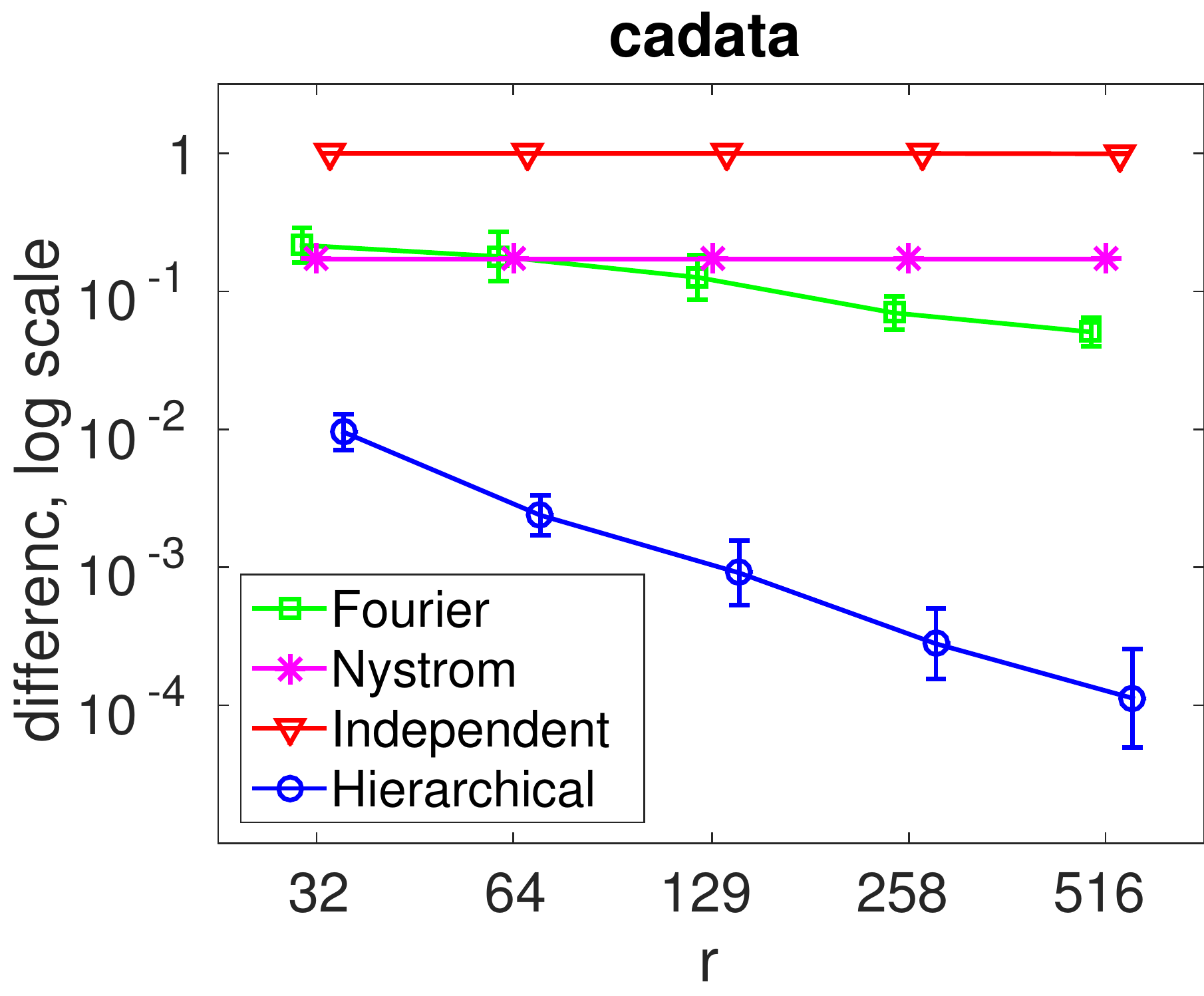}}
\subfigure[ijcnn1, $n=35,000$]{
\includegraphics[width=.47\linewidth]{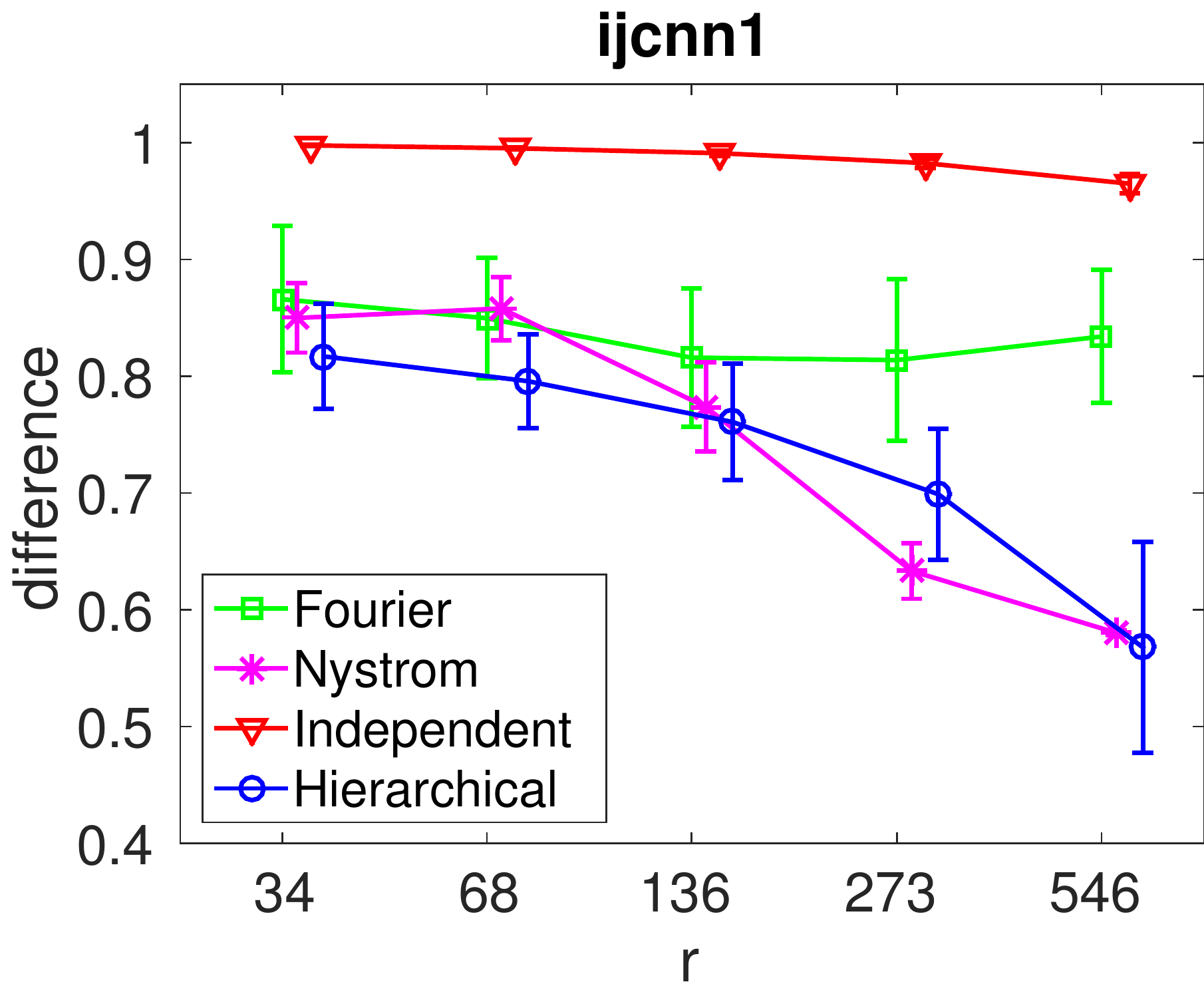}}
\caption{Kernel PCA: Embedding alignment difference (embedding dimension $=3$). The horizontal placement of each curve is slightly jittered to avoid cluttering.}
\label{fig:ZZ_plot_exp_8_kpca}
\end{figure}

We follow~\citet{Zhang2008} and evaluate the embedding quality of different kernels against that of the base kernel. Specifically, let $U$ be the embedding matrix, where each row corresponds to the embedding of one point $\bm{x}$; similarly, let $\widetilde{U}$ be the embedding matrix resulting from an approximate kernel. We use a matrix $M$ to align $\widetilde{U}$ with $U$; that is, $M$ minimizes $\|U-\widetilde{U}M\|_F$. Then, we show in Figure~\ref{fig:ZZ_plot_exp_8_kpca} the alignment difference $\|U-\widetilde{U}M\|_F/\|U\|_F$ for different approximate kernels with various (rank) $r$. The embedding dimension is fixed to $3$ and the base kernel is Gaussian, with the choice of bandwidth $\sigma$ approximately equal to the optimum found in the preceding experiments. One sees that generally the proposed kernel results in the smallest alignment difference.

\section{Concluding Remarks}\label{sec:conclude}
Kernel methods enjoy a strong theoretical support for data interpretation and predictive modeling; however, they come with a prohibitive computational price. Substantial efforts exist in the search of approximate kernels. Low-rank kernels are one of the most popular approaches because their costs have a linear dependency on the data size $n$ and because of their relatively simple linear algebra implementation. From the matrix angle, the effectiveness of a low-rank approximation, however, depends heavily on the decay of eigenvalues, which is often found to be insufficiently fast for large data. We may interpret that low-rank approximations tackle the problem globally, because its purpose is to approximate the whole matrix. On the other hand, block diagonal approximations (e.g., the cross-domain independent kernel) focus on the preservation of local information only. They may approximate very poorly the kernel matrix, but they often produce surprisingly good results for regression and classification (see the covtype example in Section~\ref{sec:exp}). Hence, the hierarchically compositional kernel proposed in this paper is motivated by the separate characteristics of these approximations: it preserves the full information in local domains and gradually applies low-rank approximations when the domains expand in a hierarchical fashion. The net effect is that the kernel admits an $O(nr)$ storage and an $O(nr^2)$ arithmetic cost similar to that of others, but the factor $r$ may be reduced for a matching performance when $n$ is large. This is favorable when computational resources are constrained; that is, as the data size $n$ increases, the additional factor $r$ must remain small in order for computation to be affordable.

The computational efficiency of the proposed kernel is supported by a delicate matrix structure. The resulting kernel matrix $K_{\text{hierarchical}}$ is a special case of the \emph{recursively low-rank compressed matrices} studied in~\citet{Chen2014a}. Such a hierarchical structure allows many otherwise $O(n^2)$--$O(n^3)$ dense matrix operations done with an $O(nr)$--$O(nr^2)$ cost. Examples of these matrix operations are matrix-vector multiplication, matrix inversion, log-determinant calculation~\citep{Chen2014a}, and square-root decomposition~\citep{Chen2014}. In addition to the focus of this paper---regression and classification, such matrix operations are crucial in many machine learning and statistical settings, including Gaussian process modeling, simulation of random processes, likelihood calculation, Markov Chain Monte Carlo, and Bayesian inference. The proposed kernel hints a direction for performing scalable computations facing an ever increasing $n$.

The recursively low-rank compressed structure is not a unique discovery in the prosperous literature of structured matrix computations. Many practically useful structures in scientific computing (e.g., FMM matrices~\citep{Sun2001} and hierarchical matrices~\citep{Hackbusch1999}) exploit the same low-rank property exhibited in the off-diagonal blocks of the matrix, but they differ from ours in fine details, including the admissibility of low-rank approximation, the method of approximation, the nesting of basis, the shape of the hierarchy tree, and the supported matrix operations. Apart from these differences, the most distinguishing feature of our work is that the matrix structure is amenable for high dimensional data, an arena where other structured matrix methods are rarely applicable. This peculiarity roots in the fact that the decay of singular values of the off-diagonal blocks becomes much slower when the data dimension increases. These blocks generally have a full, or nearly full, rank when the data dimension is larger than three or four. Hence, it is challenging to develop computational methods from the angle of matrix approximations, if evaluation of quality is based on the approximation error. Instead, we develop theory from the perspective of kernels and ensure the preservation of positive-definiteness. Although the new kernel arises from some approximation of the old one, we may bypass the approximation interpretation and consider them separate kernels. The choice of kernels will then be justified via model selection. An additional benefit is that out-of-sample extension is straightforward.

Although the setting of this paper casts the data domain $\mathcal{X}$ as an Euclidean space for simplicity, it is possible to generalize the proposed method to a more abstract space (e.g., a metric space). In fact, the construction of the proposed kernel and the instantiation of the kernel matrix requires as minimal as the fact that the base kernel $k$ is strictly positive-definite. The only exception is the partitioning of the data domain, where the partition membership must be inferred for every data point (including out of samples). In a metric space, one solution is the k-means clustering, because the clustering straightforwardly defines the partitions for the training data; moreover, the partition membership for a testing point is determined by the minimal distance (metric) from the point to the cluster centers.

Through experimentation, we have demonstrated the complex performance curves with respect to kernel parameters. The curves may vary significantly due to randomization, may be multimodal, and may even be nonsmooth. These phenomena pose a challenge for selecting the optimal parameters through cross validation and grid search. To make things worse, grid search is applicable only when the number of parameters is very small. An example when one may want more parameters is to introduce anisotropy to the kernel through specifying one range parameter for each or a few attributes. Hence, a more principled approach, which in theory can incorporate an arbitrary number of parameters, is to take the GP view and to maximize the Gaussian log-likelihood
\begin{equation}\label{eqn:loglik}
L(\bm{\theta})=-\frac{1}{2}\bm{y}^TK(\bm{\theta})^{-1}\bm{y}-\frac{1}{2}\log\det K(\bm{\theta})-\frac{n}{2}\log2\pi
\end{equation}
through numerical optimization, where $\bm{\theta}$ is the vector of unknown parameters and $K(\bm{\theta})$ is the kernel matrix depending on $\bm{\theta}$. This approach, coined \emph{maximum likelihood estimation} (MLE), is also a central subject in estimation theory. Optimizing~\eqref{eqn:loglik} nevertheless is nontrivial because of the $O(n^3)$ cost for evaluating the objective function and its derivatives when $K$ is dense. Existing research~\citep{Anitescu2012,Stein2013} solves an approximate problem and bounds the variance of the result against that of the original problem~\eqref{eqn:loglik}. Fortunately, most of the approximate kernels discussed in this paper allow an $O(nr^2)$ cost for evaluating $L(\bm{\theta})$. For example, the algorithm for computing the log-determinant term for a recursively low-rank compressed matrix is described in~\citet{Chen2014a}. An avenue of future work is to adapt the log-determinant calculation discussed in~\citet{Chen2014a} to the proposed kernel and to develop robust optimization for parameter estimation.

\bibliographystyle{abbrvnat}
\bibliography{reference}

\appendix
\section{Proof of Theorem~\ref{thm:pd.2}}\label{sec:proof.pd.2}
The proof technique is similar to that of Theorem~\ref{thm:pd}: we decompose the kernel $k_{\text{hierarchical}}$ as a sum of positive-definite terms and show that if $k$ is strictly positive-definite, then the bilinear form cannot be zero if the coefficients are not all zero. To prevent cluttering, we use the short-hand notation $K_{\ud{i},\ud{j}}$ to denote $K(\ud{X}_i,\ud{X}_j)$.

For any node $i$ with parent $p$ (if any), define a function $\xi^{(i)}$ for $\bm{x},\bm{x}'\in S_i$:
\begin{equation}\label{eqn:xi}
\xi^{(i)}(\bm{x},\bm{x}')=
\begin{cases}
k(\bm{x},\bm{x}')-k(\bm{x},\ud{X}_p)K_{\ud{p},\ud{p}}^{-1}k(\ud{X}_p,\bm{x}'), & i \text{ is leaf},\\
\psi^{(i)}(\bm{x},\ud{X}_i)K_{\ud{i},\ud{i}}^{-1}[K_{\ud{i},\ud{i}}-K_{\ud{i},\ud{p}}K_{\ud{p},\ud{p}}^{-1}K_{\ud{p},\ud{i}}]K_{\ud{i},\ud{i}}^{-1}\psi^{(i)}(\ud{X}_i,\bm{x}'), & i \text{ is neither leaf nor root},\\
\psi^{(i)}(\bm{x},\ud{X}_i)K_{\ud{i},\ud{i}}^{-1}\psi^{(i)}(\ud{X}_i,\bm{x}'), & i \text{ is root}.
\end{cases}
\end{equation}
Through telescoping, one sees that $k_{\text{hierarchical}}(\bm{x},\bm{x}')=k^{(\text{root})}(\bm{x},\bm{x}')$ is a sum of $\xi^{(i)}(\bm{x},\bm{x}')$ for all nodes $i$ where $\bm{x},\bm{x}'\in S_i$. Because each $\xi^{(i)}$ is clearly positive-definite, so is $k_{\text{hierarchical}}$.

We now show the strict definiteness when $k$ is strictly positive-definite. For any set of points $\{\bm{x}_i\}$ and any set of coefficients $\{\alpha_i\}$, the bilinear form $\sum_{jl}\alpha_j\alpha_lk_{\text{hierarchical}}(\bm{x}_j,\bm{x}_l)$ is zero if and only if
\begin{equation}\label{eqn:short}
\sum_{\bm{x}_j,\bm{x}_l\in S_i}\alpha_j\alpha_l\xi^{(i)}(\bm{x}_j,\bm{x}_l)=0,\quad\forall \text{ nodes } i.
\end{equation}
If~\eqref{eqn:short} holds, we shall prove the following statement:
\begin{quote}
(Q) If $i$ is not the root, it holds that $\alpha_j=0$ for all $j$ satisfying
\[
\bm{x}_j\in S_i\left\backslash\left[\bigcap_{q\in Des(i)}\ud{X}_q\cap\ud{X}_p\right]\right.,
\]
where $p$ is the parent of $i$ and $Des(i)$ means all the descendant nonleaf nodes of $i$, including $i$ itself.
\end{quote}
Once we prove this statement, we see that the only $\alpha_j$ that can possibly be nonzero are those satisfying $\bm{x}_j\in S\cap_{\text{all nodes } q}\ud{X}_q$. However, if any of these $\alpha_j$ is nonzero, then applying the third case of~\eqref{eqn:xi} on the root yields
\begin{align*}
\sum_{\bm{x}_j,\bm{x}_l\in S_{\text{root}}}\alpha_j\alpha_l\xi^{(\text{root})}(\bm{x}_j,\bm{x}_l)
&=\sum_{\bm{x}_j,\bm{x}_l\in S_{\text{root}}}\alpha_j\alpha_l\psi^{(\text{root})}(\bm{x}_j,\ud{X}_{\text{root}})K_{\ud{\text{root}},\ud{\text{root}}}^{-1}\psi^{(\text{root})}(\ud{X}_{\text{root}},\bm{x}_l)\\
&=\sum_{\bm{x}_j,\bm{x}_l\in S\cap_{\text{all nodes } q}\ud{X}_q}\alpha_j\alpha_le_j^TK_{\ud{\text{root}},\ud{\text{root}}}e_l
\ne0,
\end{align*}
contradicting~\eqref{eqn:short}. Here, $e_j$ means the $j$th column of the identity matrix. Therefore, the bilinear form $\sum_{jl}\alpha_j\alpha_lk_{\text{hierarchical}}(\bm{x}_j,\bm{x}_l)$ cannot be zero if the coefficients are not all zero.

We now prove statement (Q) by induction. The base case is when $i$ is a leaf node. Applying the same argument as that in the proof of Theorem~\ref{thm:pd} on the first case of~\eqref{eqn:xi}, we have that $\alpha_j=0$ for all $j$ satisfying $\bm{x}_j\in S_i\backslash\ud{X}_p$. In the induction step, if (Q) holds for all children nodes $i$ of $p$ and if $r$ is the parent of $p$, we apply the second case of~\eqref{eqn:xi} and obtain
\begin{align}
0=\sum_{\bm{x}_j,\bm{x}_l\in S_p}\alpha_j\alpha_l\xi^{(p)}(\bm{x}_j,\bm{x}_l)
&=\sum_{\bm{x}_j,\bm{x}_l\in S_p}\alpha_j\alpha_l\psi^{(p)}(\bm{x}_j,\ud{X}_p)K_{\ud{p},\ud{p}}^{-1}[K_{\ud{p},\ud{p}}-K_{\ud{p},\ud{r}}K_{\ud{r},\ud{r}}^{-1}K_{\ud{r},\ud{p}}]K_{\ud{p},\ud{p}}^{-1}\psi^{(p)}(\ud{X}_p,\bm{x}_l)\nonumber\\
&=\sum_{\bm{x}_j,\bm{x}_l\in S_p\backslash\cap_{q\in Des(p)}\ud{X}_q}\alpha_j\alpha_le_j^T[K_{\ud{p},\ud{p}}-K_{\ud{p},\ud{r}}K_{\ud{r},\ud{r}}^{-1}K_{\ud{r},\ud{p}}]e_l.\label{eqn:long}
\end{align}
For the matrix $K_{\ud{p},\ud{p}}-K_{\ud{p},\ud{r}}K_{\ud{r},\ud{r}}^{-1}K_{\ud{r},\ud{p}}$ inside the square bracket, the block corresponding to $\ud{X}_p\backslash\ud{X}_r$ has full rank, whereas the rest is zero. Then, in order for~\eqref{eqn:long} to be zero, one must have that $\alpha_j=0$ for all $j$ satisfying $\bm{x}_j\in\ud{X}_p\backslash\ud{X}_r$. Making a disjunction of this relation with that in the induction hypothesis: $\bm{x}_j\in S_p\backslash\cap_{q\in Des(p)}\ud{X}_q$, we obtain
\[
\bm{x}_j\in S_p\left\backslash\left[\bigcap_{q\in Des(p)}\ud{X}_q\cap\ud{X}_r\right]\right.,
\]
which concludes the induction and subsequently concludes the proof of Theorem~\ref{thm:pd.2}.

\section{Performance Results with the Laplace Kernel}\label{sec:append.perf.laplace}
See Figures~\ref{fig:ZZ_plot_exp_4_laplace_1} and~\ref{fig:ZZ_plot_exp_4_laplace_2}. These plots are qualitatively similar to those of the Gaussian kernel (Figures~\ref{fig:ZZ_plot_exp_3_comprehensive_1} and~\ref{fig:ZZ_plot_exp_3_comprehensive_2}).

\begin{figure}[!ht]
\centering
\subfigure[cadata, regression]{
\includegraphics[width=.33\linewidth]{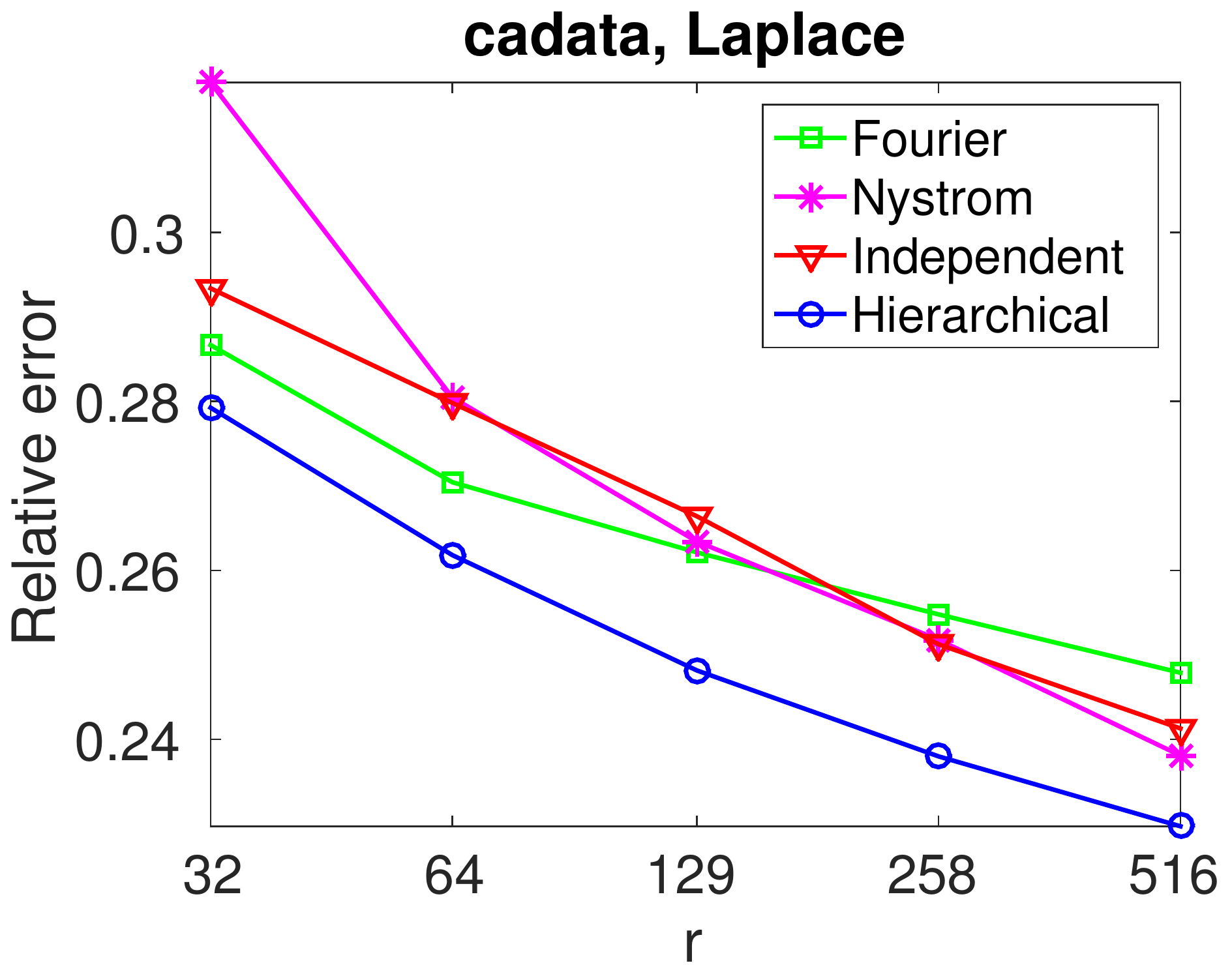}
\includegraphics[width=.32\linewidth]{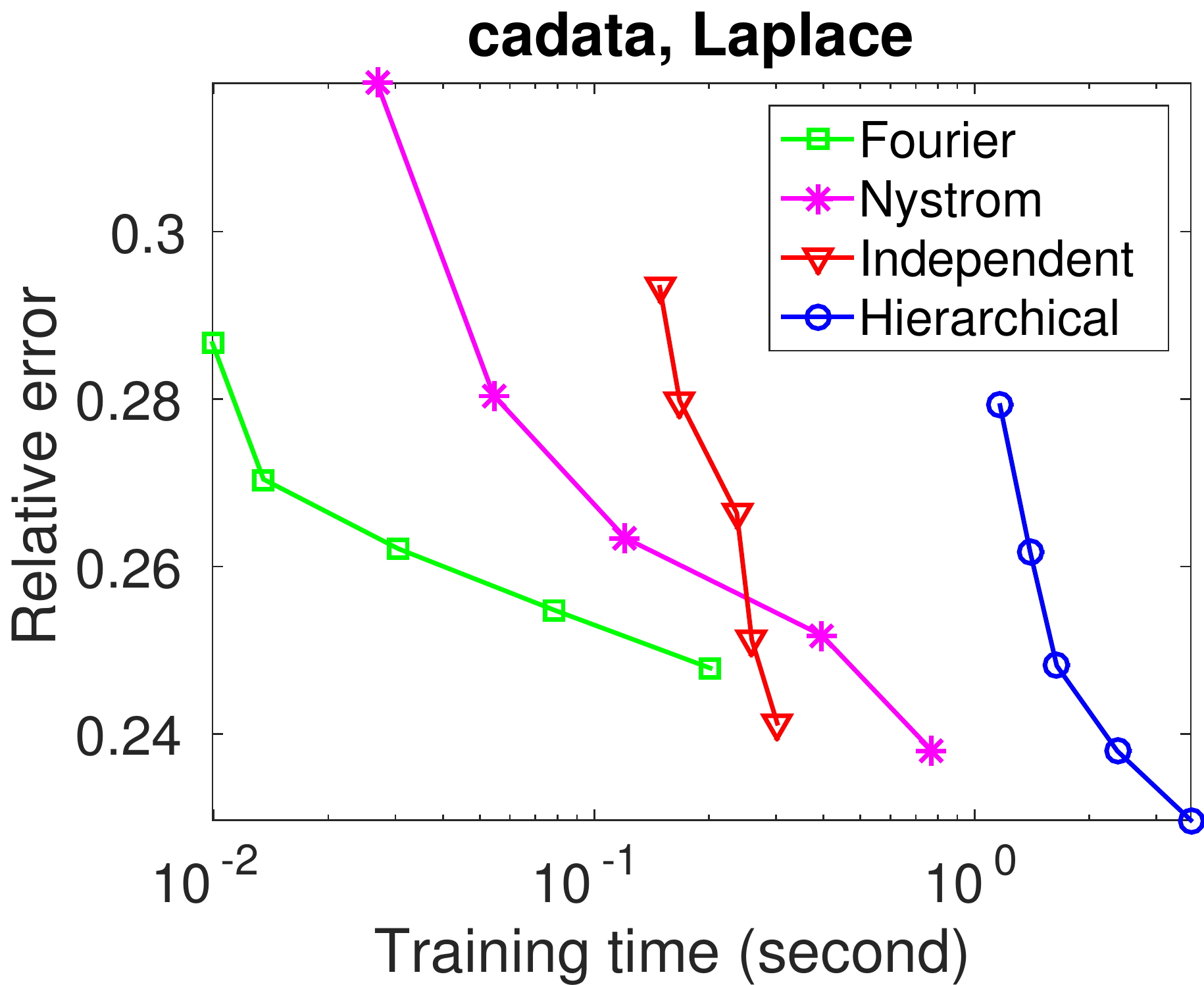}
\includegraphics[width=.33\linewidth]{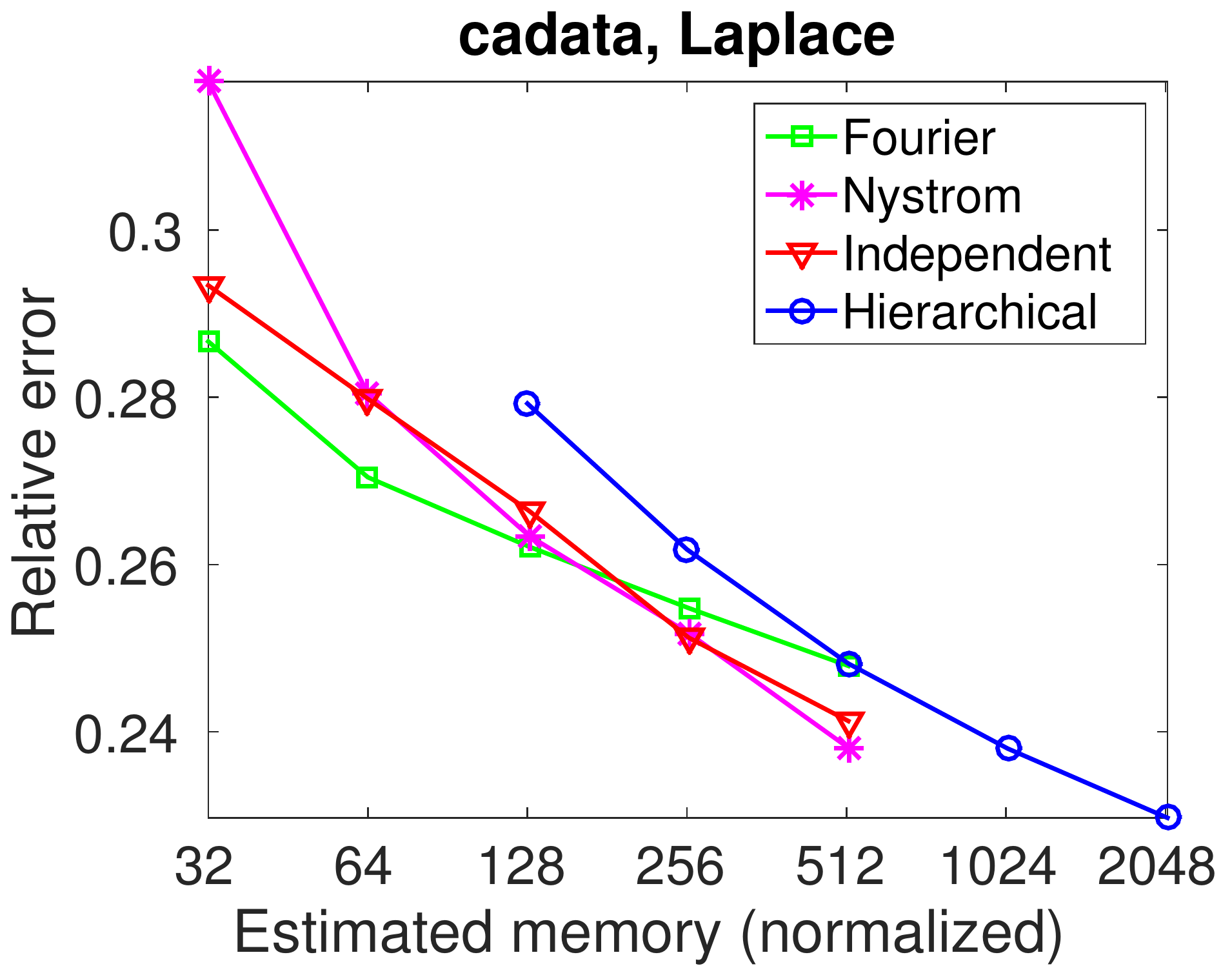}}
\subfigure[YearPredictionMSD, regression]{
\includegraphics[width=.33\linewidth]{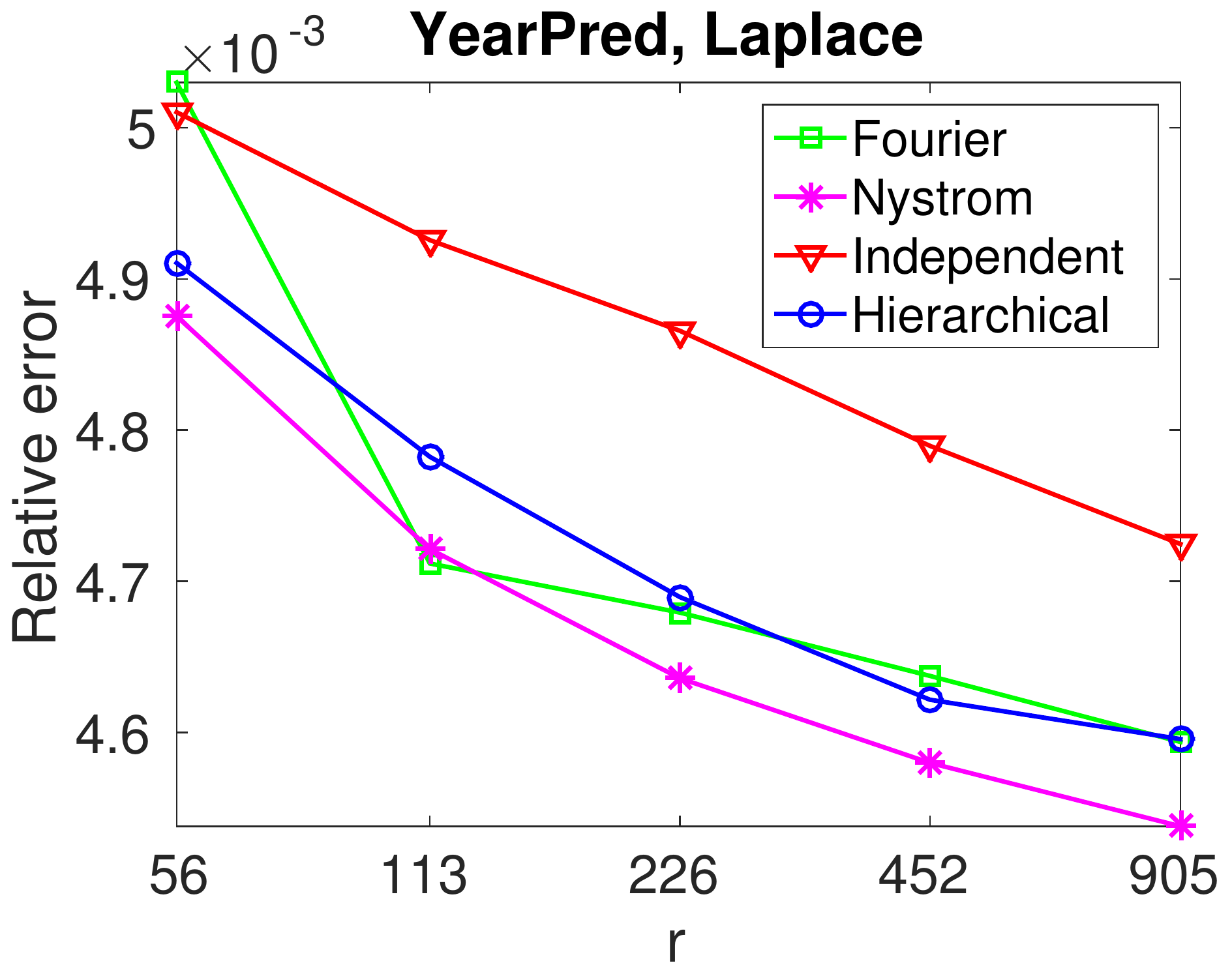}
\includegraphics[width=.32\linewidth]{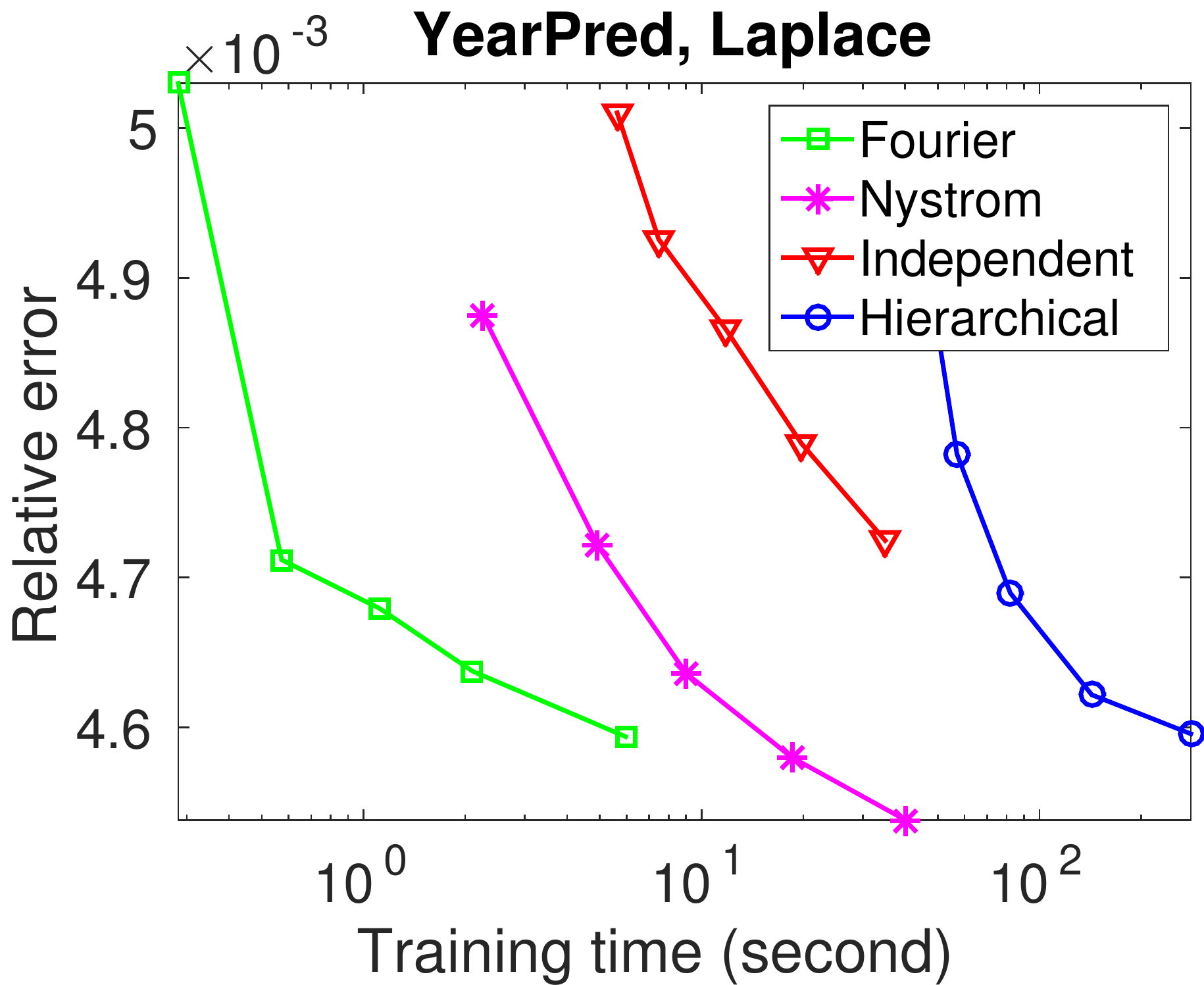}
\includegraphics[width=.33\linewidth]{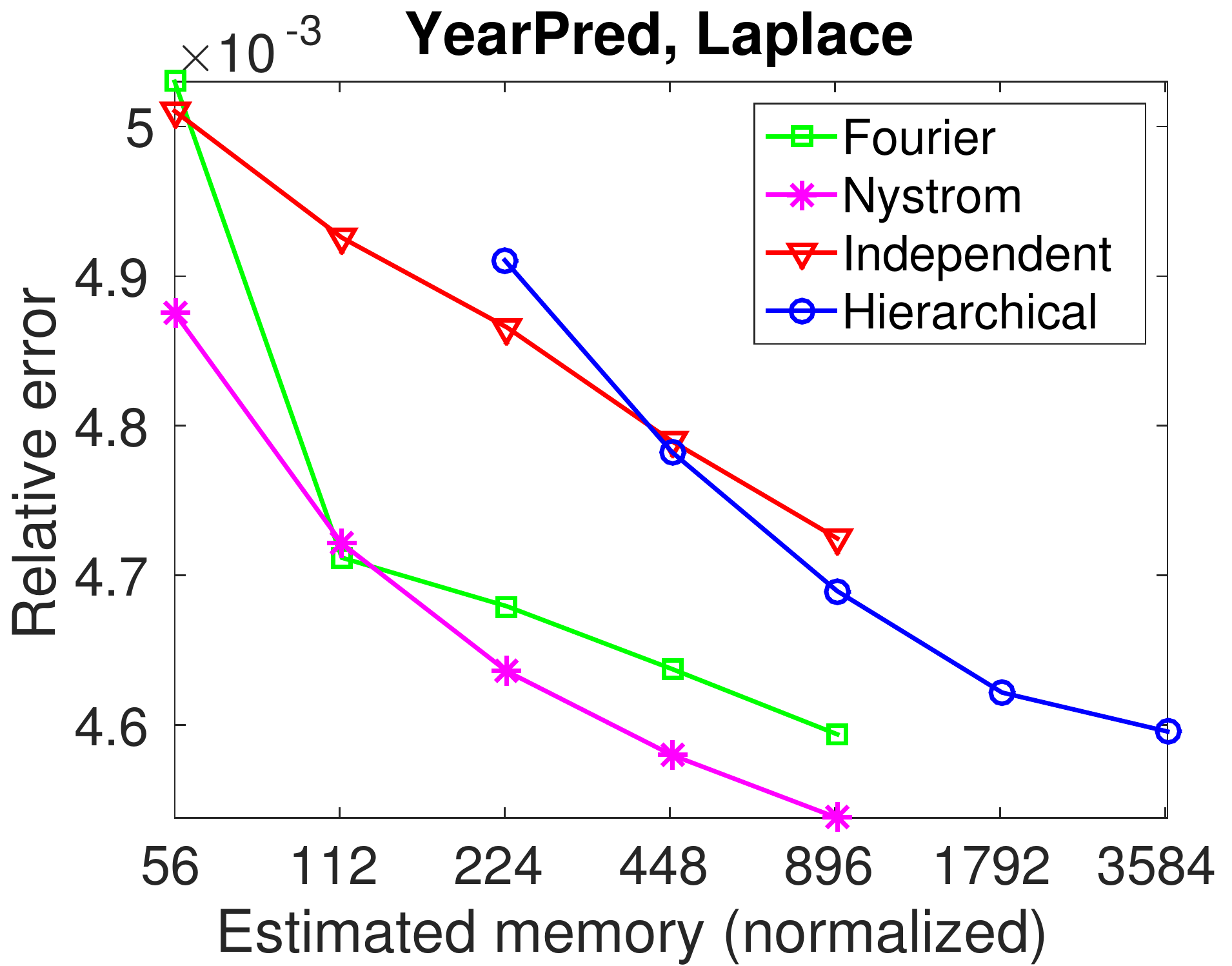}}
\subfigure[ijcnn1, binary classification]{
\includegraphics[width=.33\linewidth]{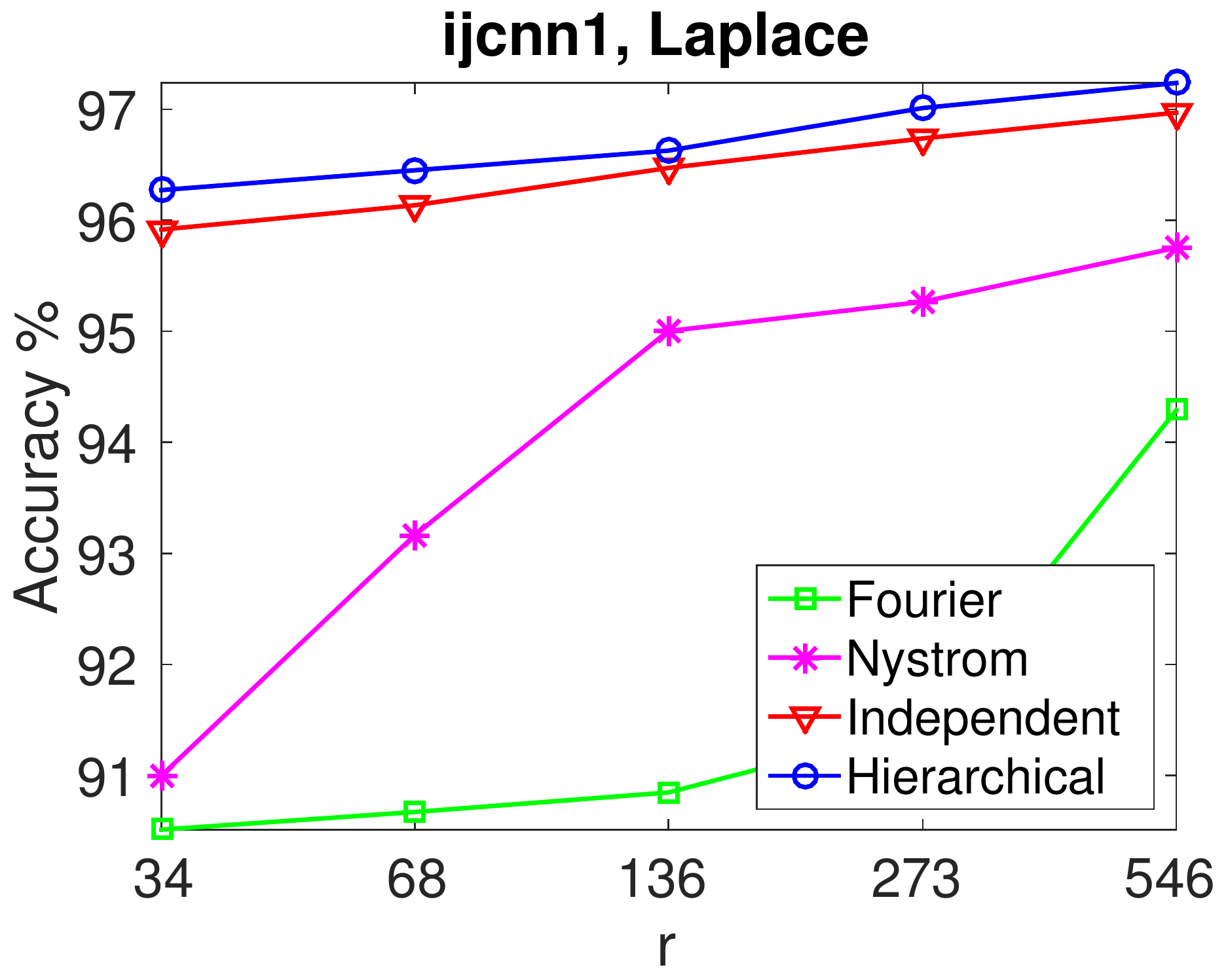}
\includegraphics[width=.32\linewidth]{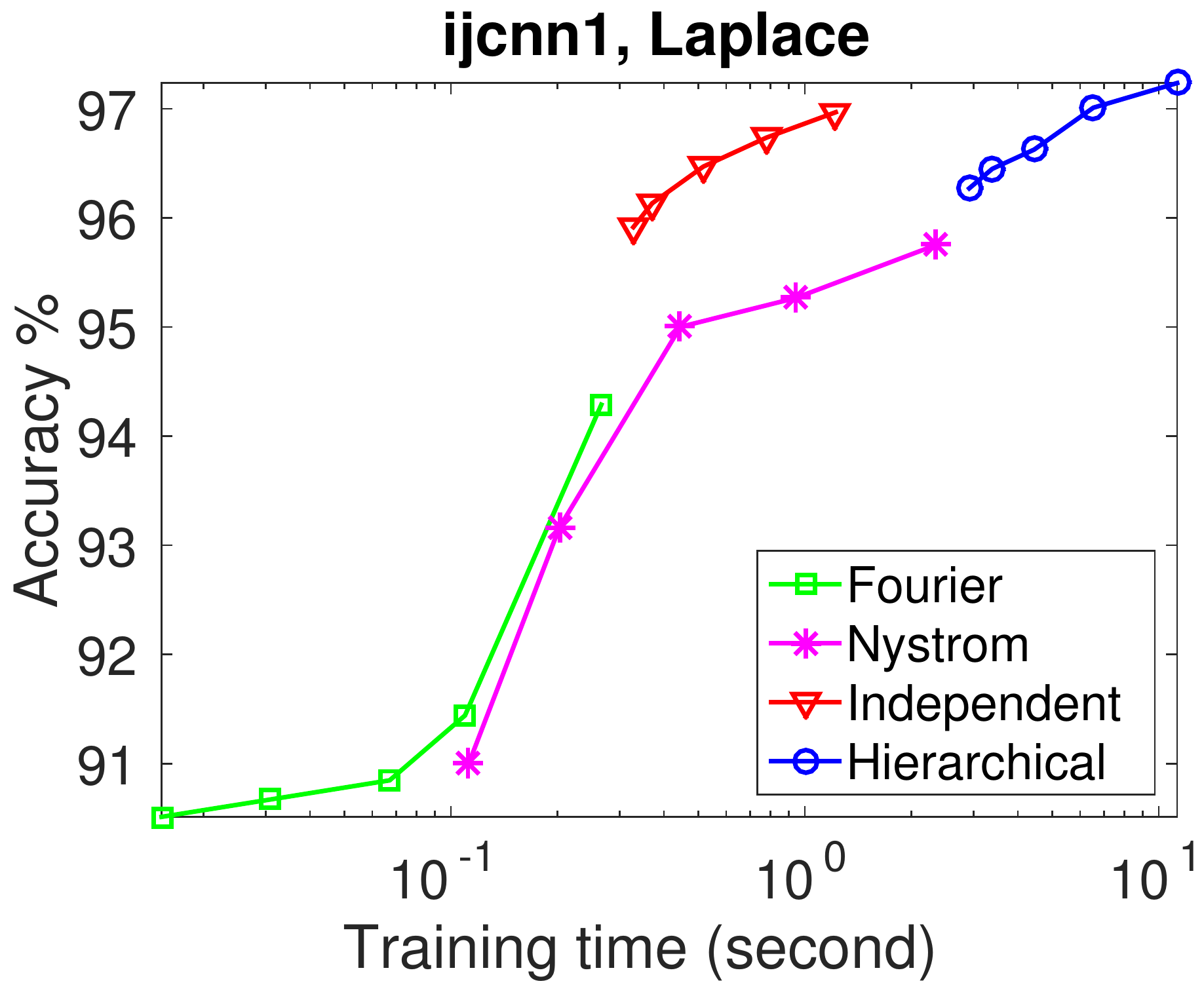}
\includegraphics[width=.33\linewidth]{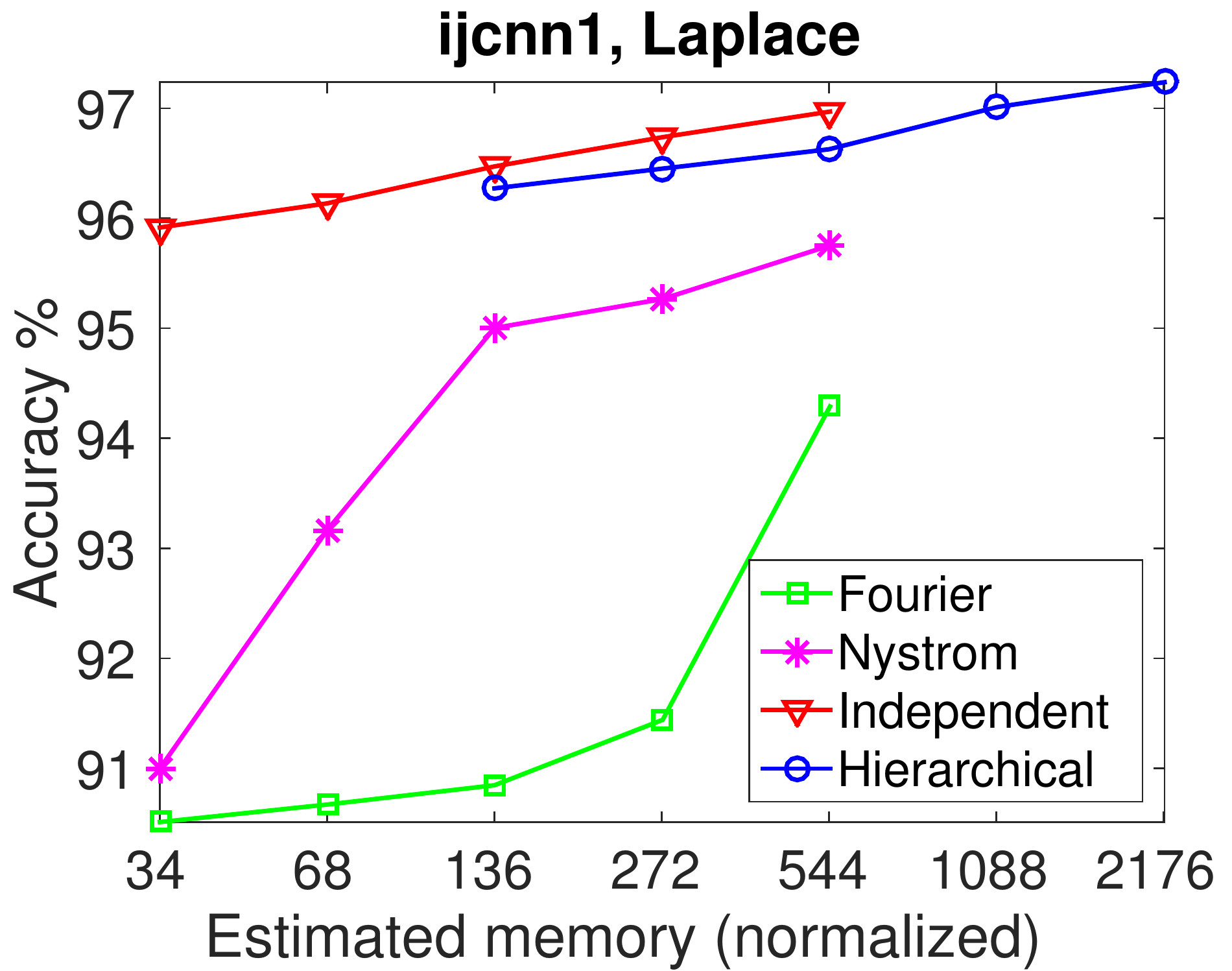}}
\subfigure[covtype.binary, binary classification]{
\includegraphics[width=.33\linewidth]{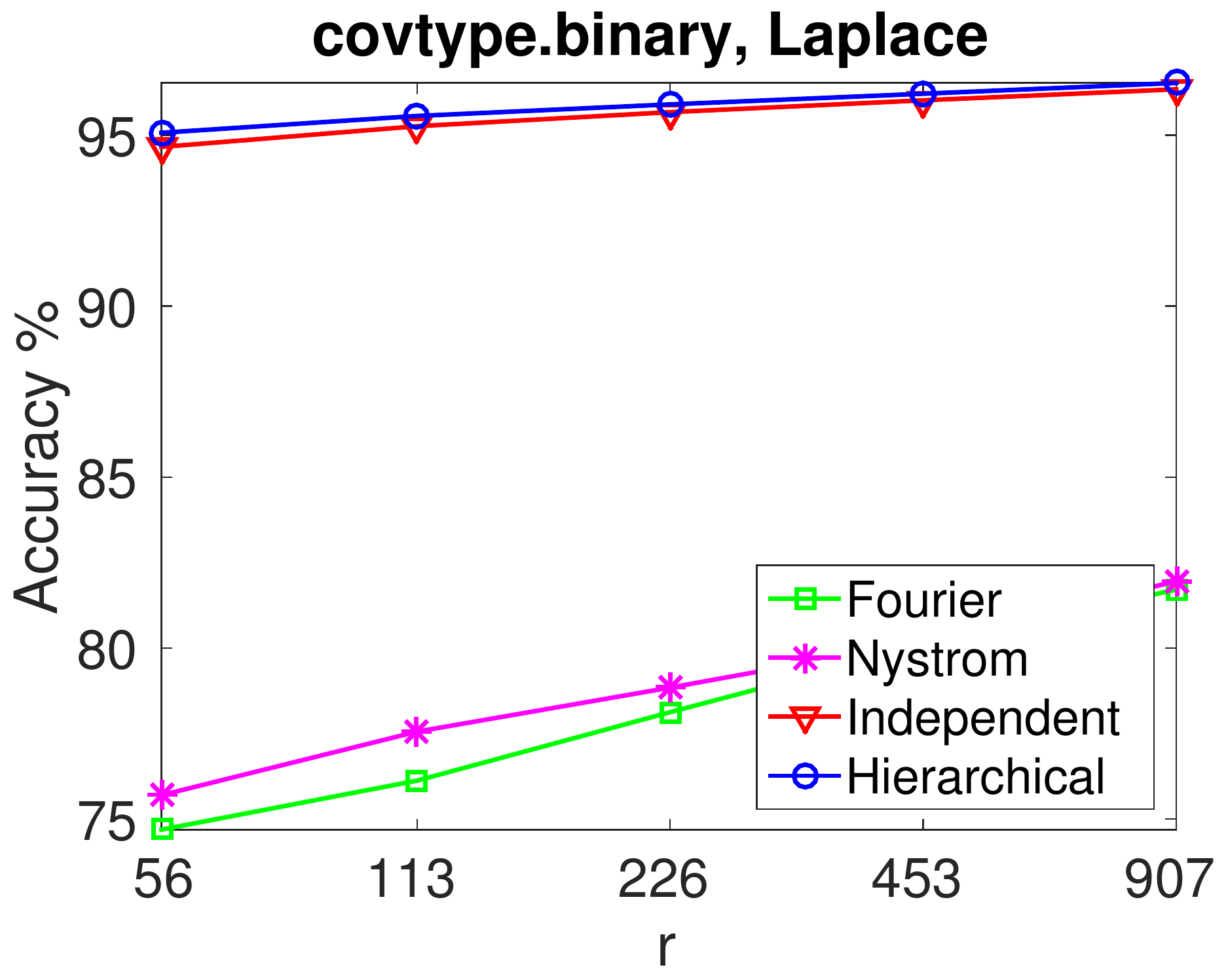}
\includegraphics[width=.32\linewidth]{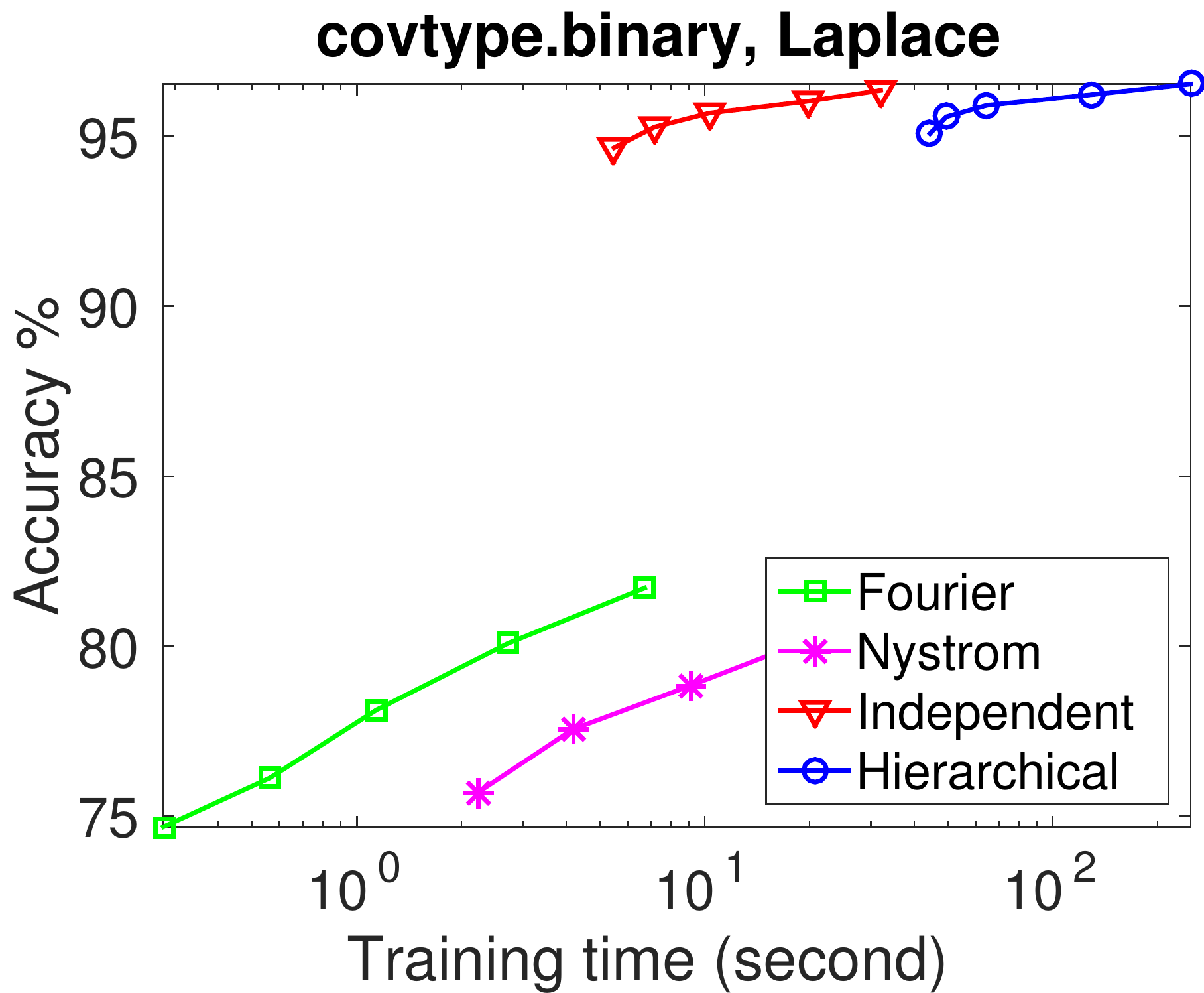}
\includegraphics[width=.33\linewidth]{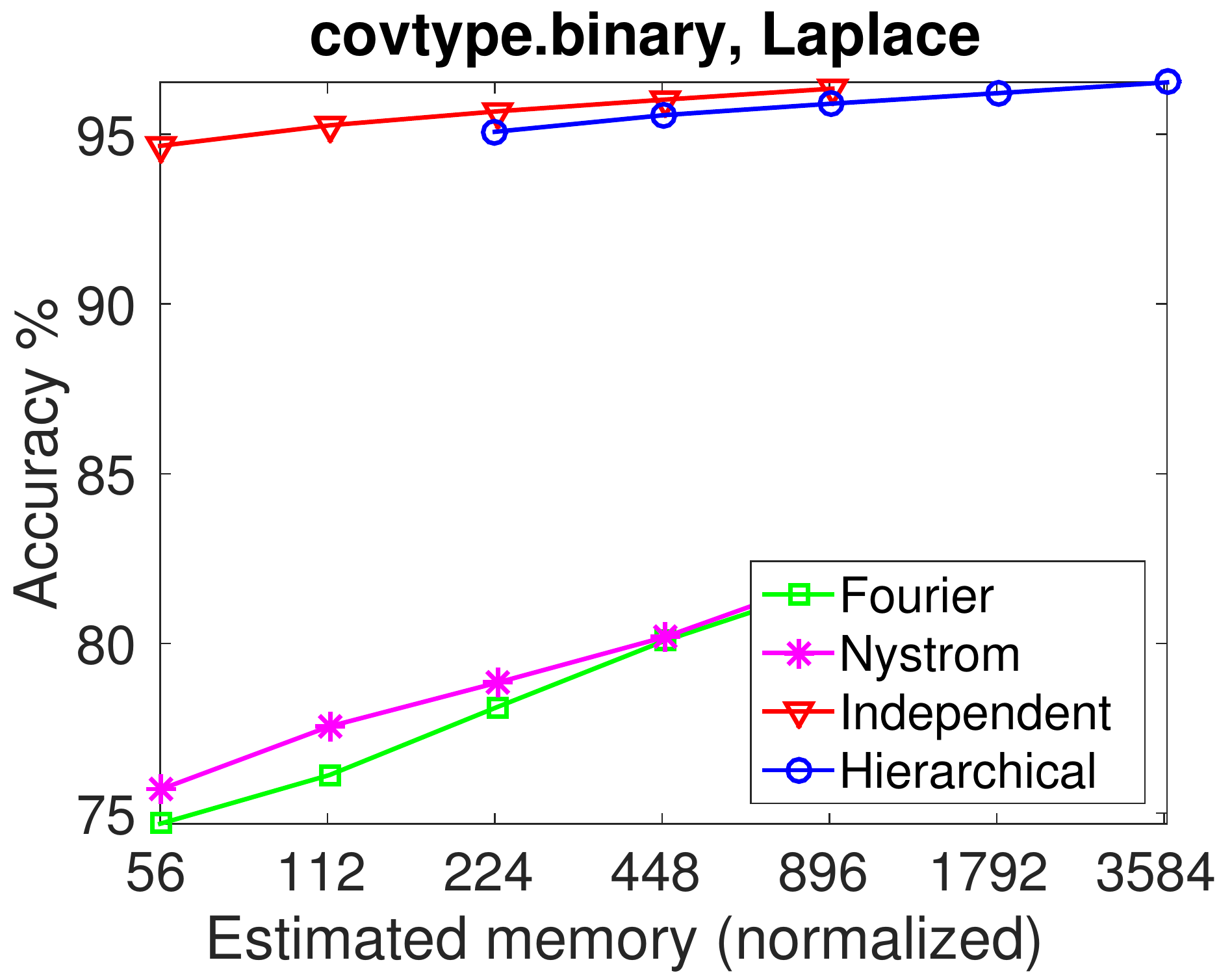}}
\caption{Performance versus $r$, time, and memory. Laplace kernel.}
\label{fig:ZZ_plot_exp_4_laplace_1}
\end{figure}

\begin{figure}[!ht]
\centering
\subfigure[SUSY, binary classification]{
\includegraphics[width=.33\linewidth]{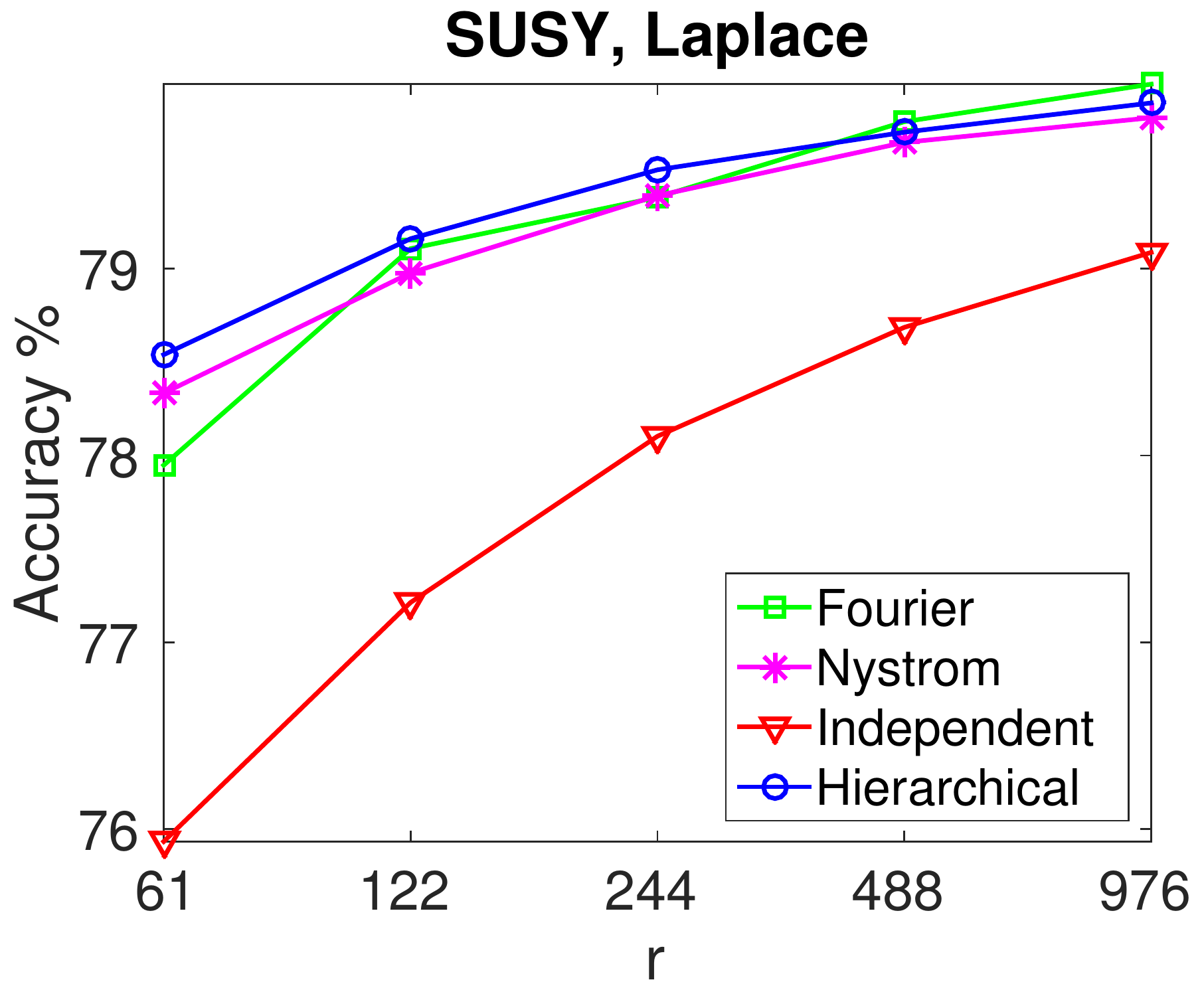}
\includegraphics[width=.32\linewidth]{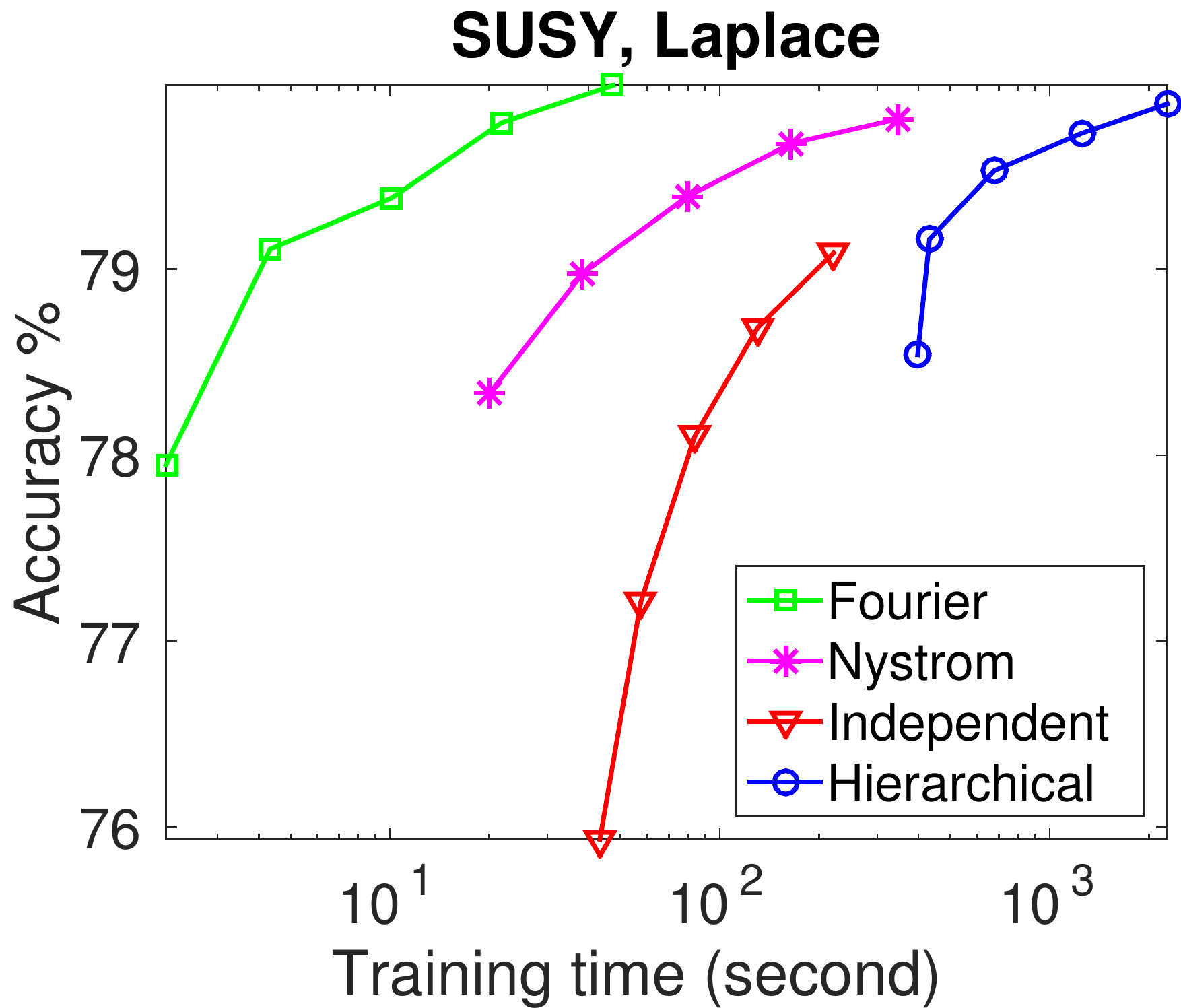}
\includegraphics[width=.33\linewidth]{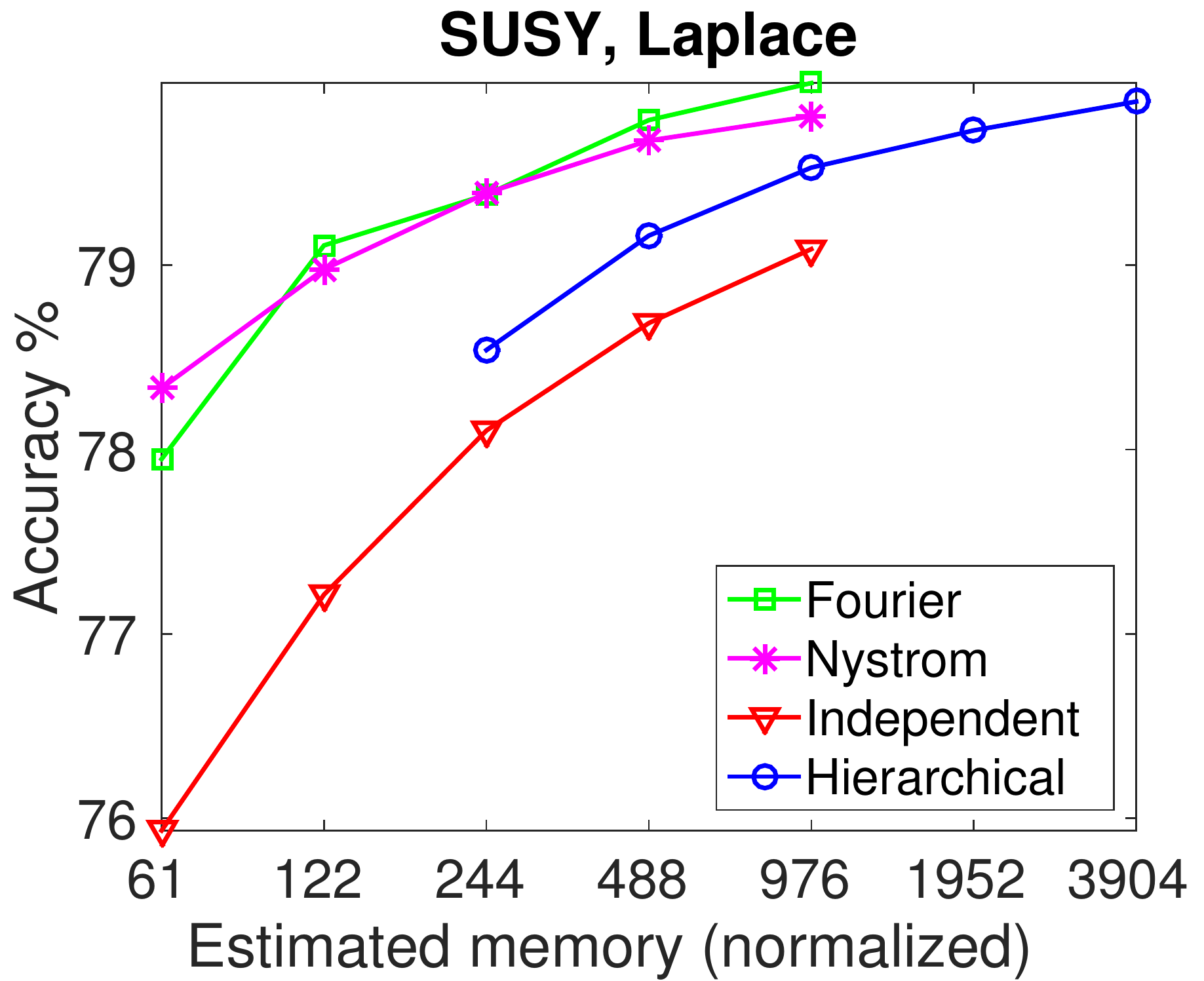}}
\subfigure[mnist, multiclass classification]{
\includegraphics[width=.33\linewidth]{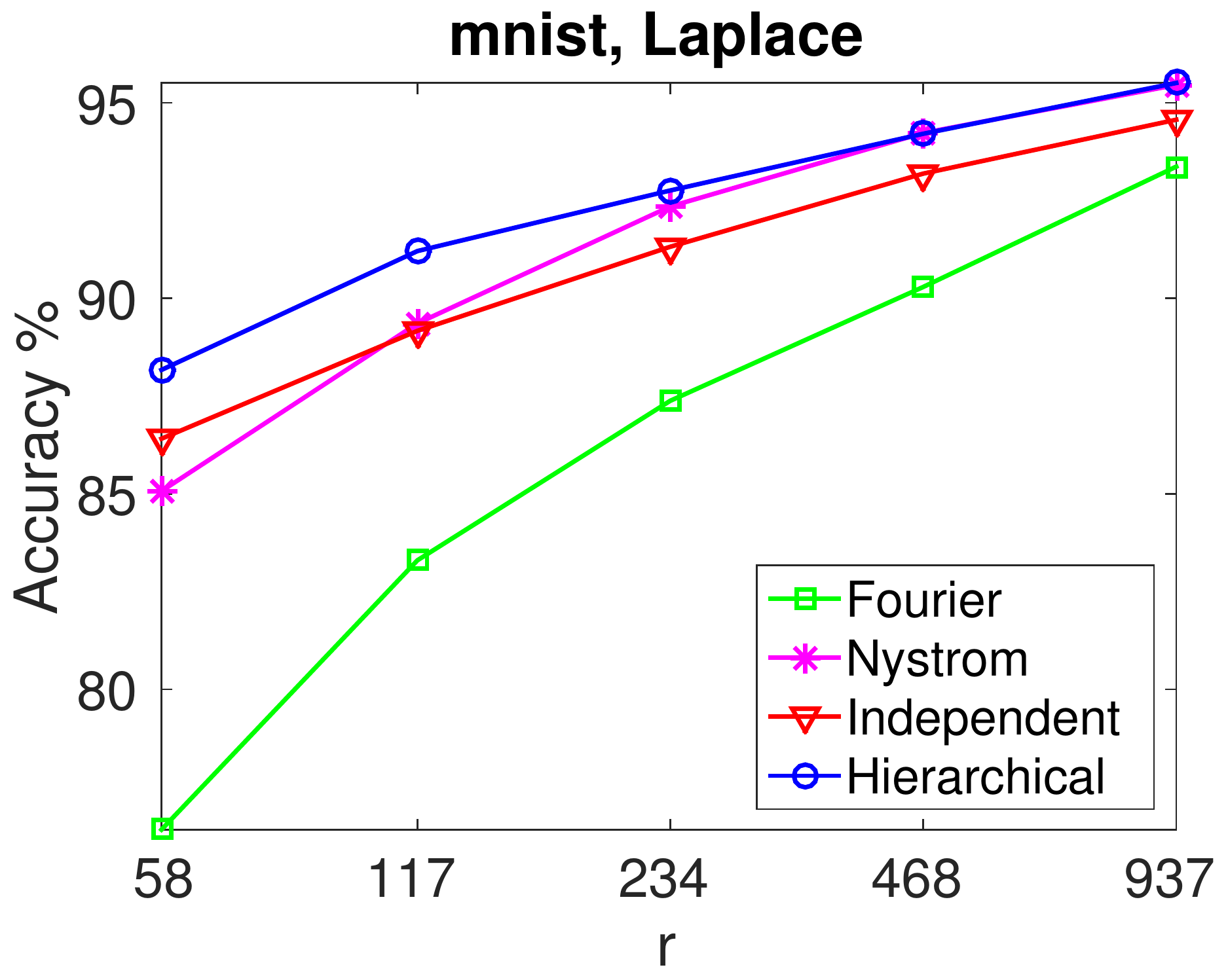}
\includegraphics[width=.32\linewidth]{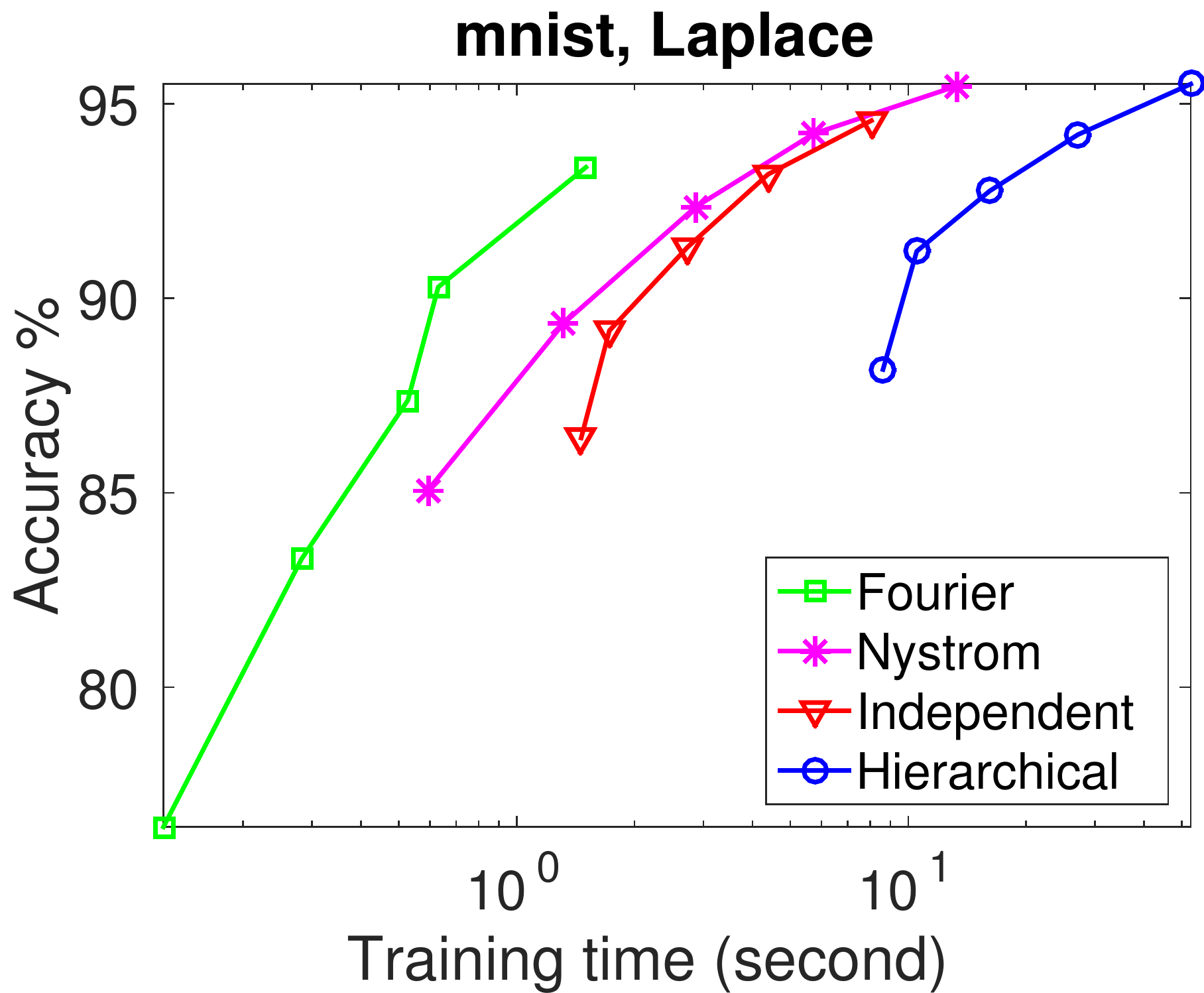}
\includegraphics[width=.33\linewidth]{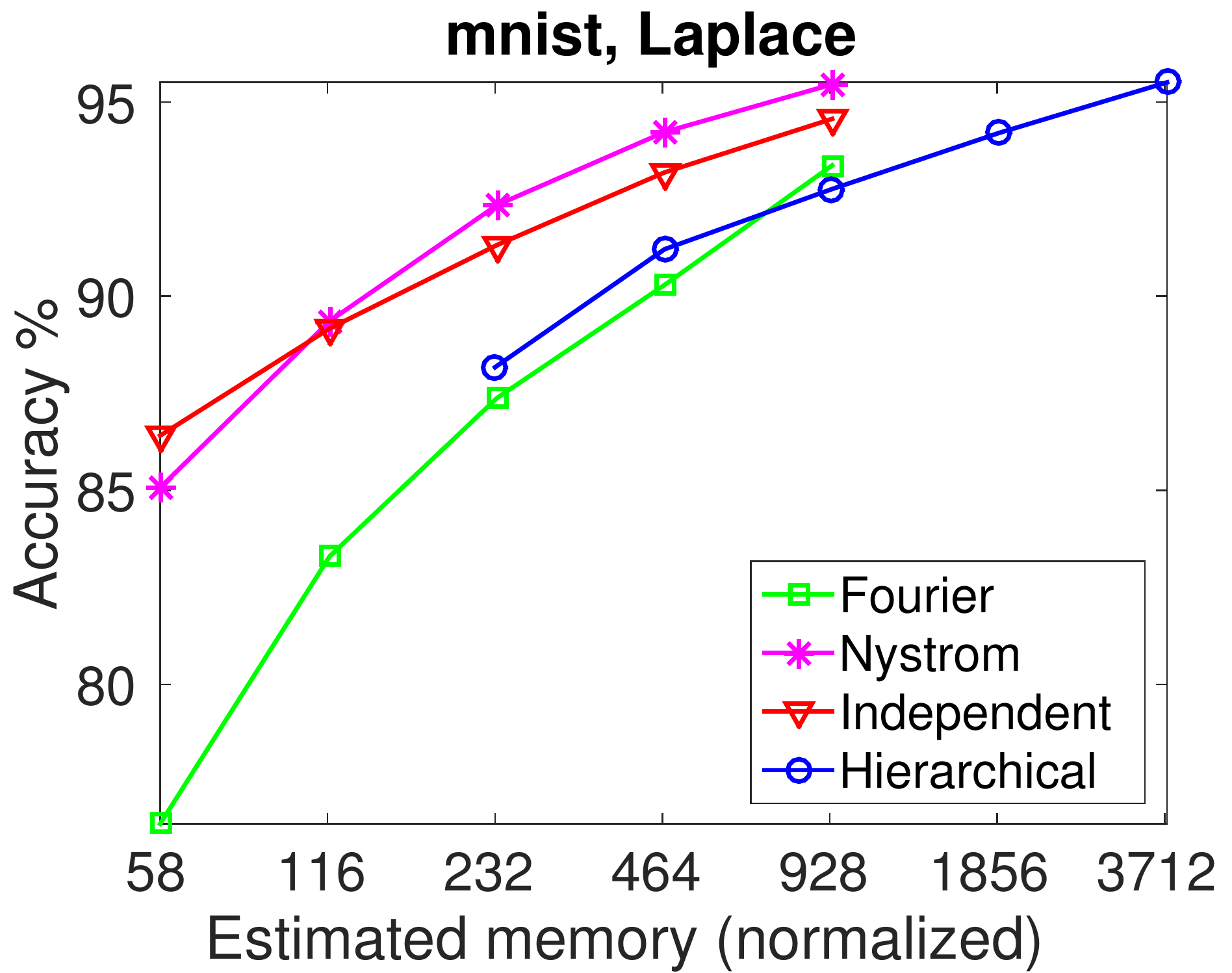}}
\subfigure[acoustic, multiclass classification]{
\includegraphics[width=.33\linewidth]{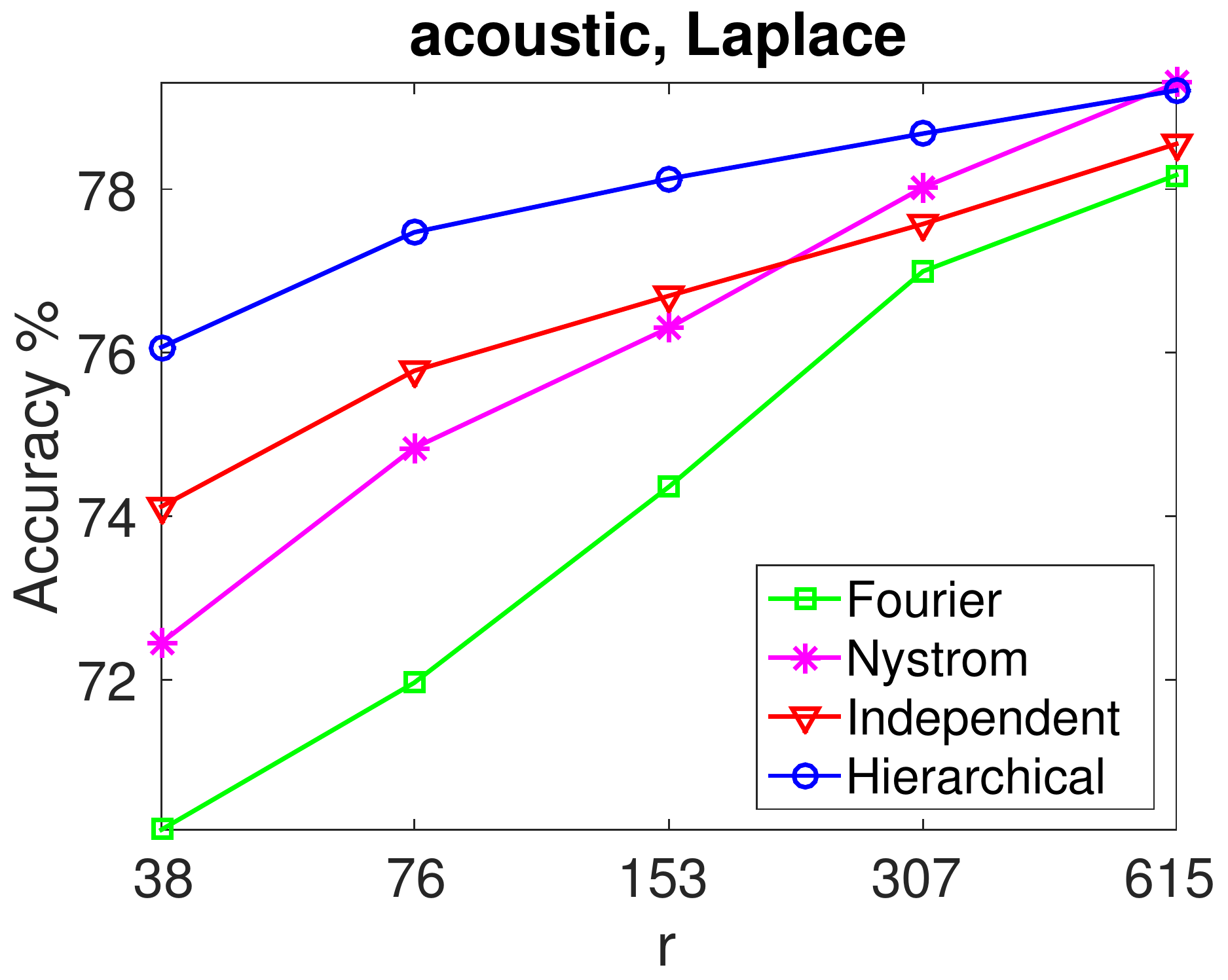}
\includegraphics[width=.32\linewidth]{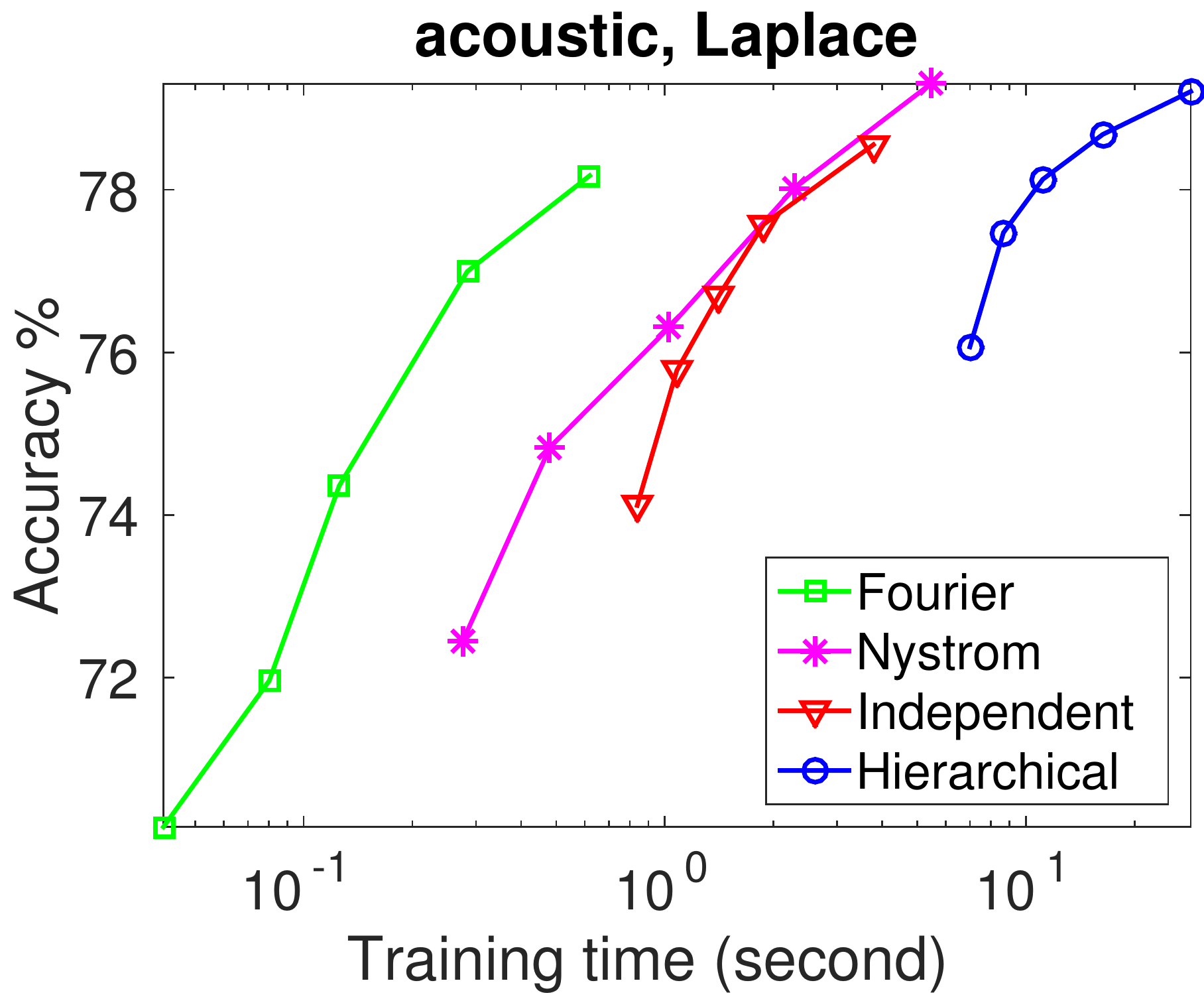}
\includegraphics[width=.33\linewidth]{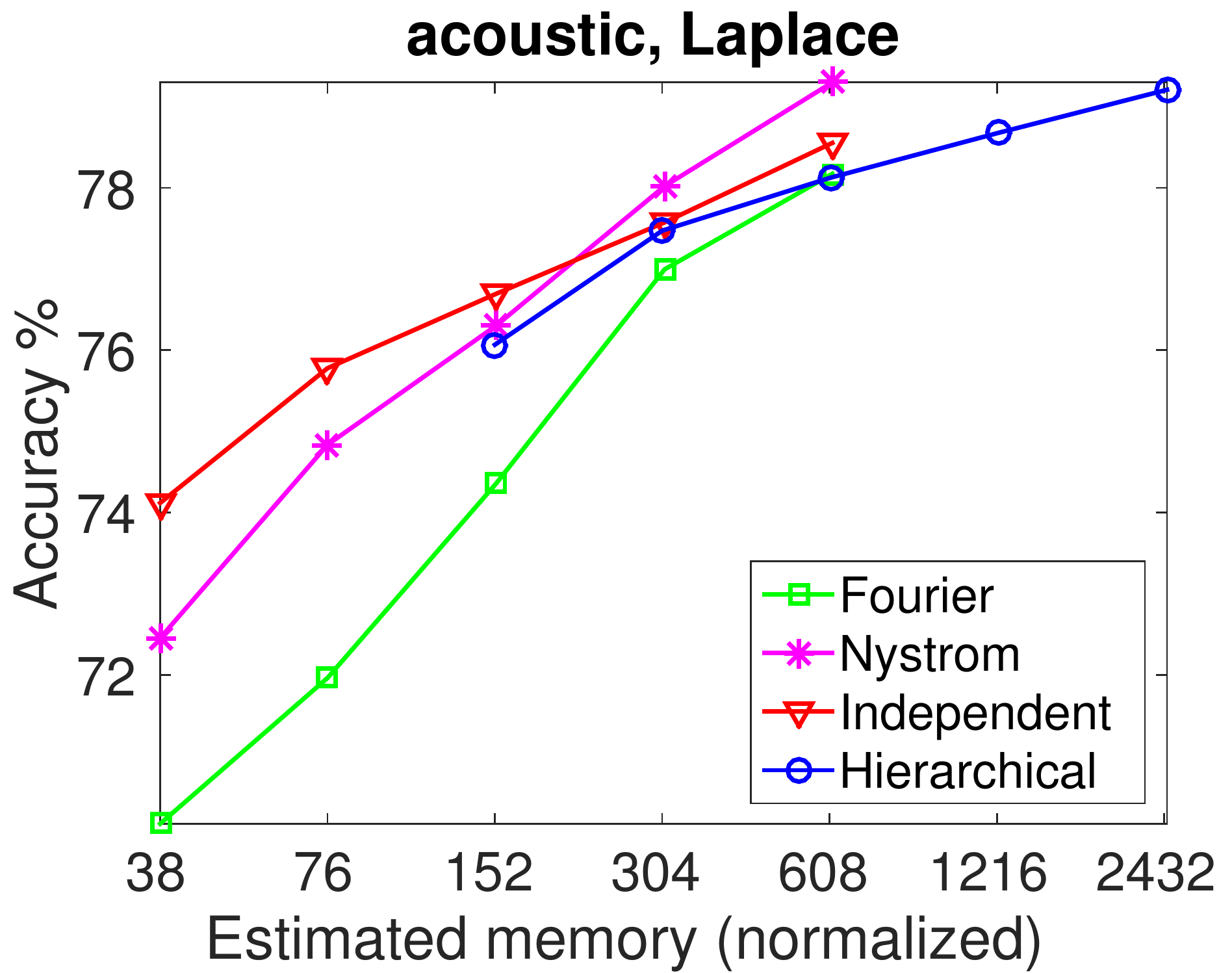}}
\subfigure[covtype, multiclass classification]{
\includegraphics[width=.33\linewidth]{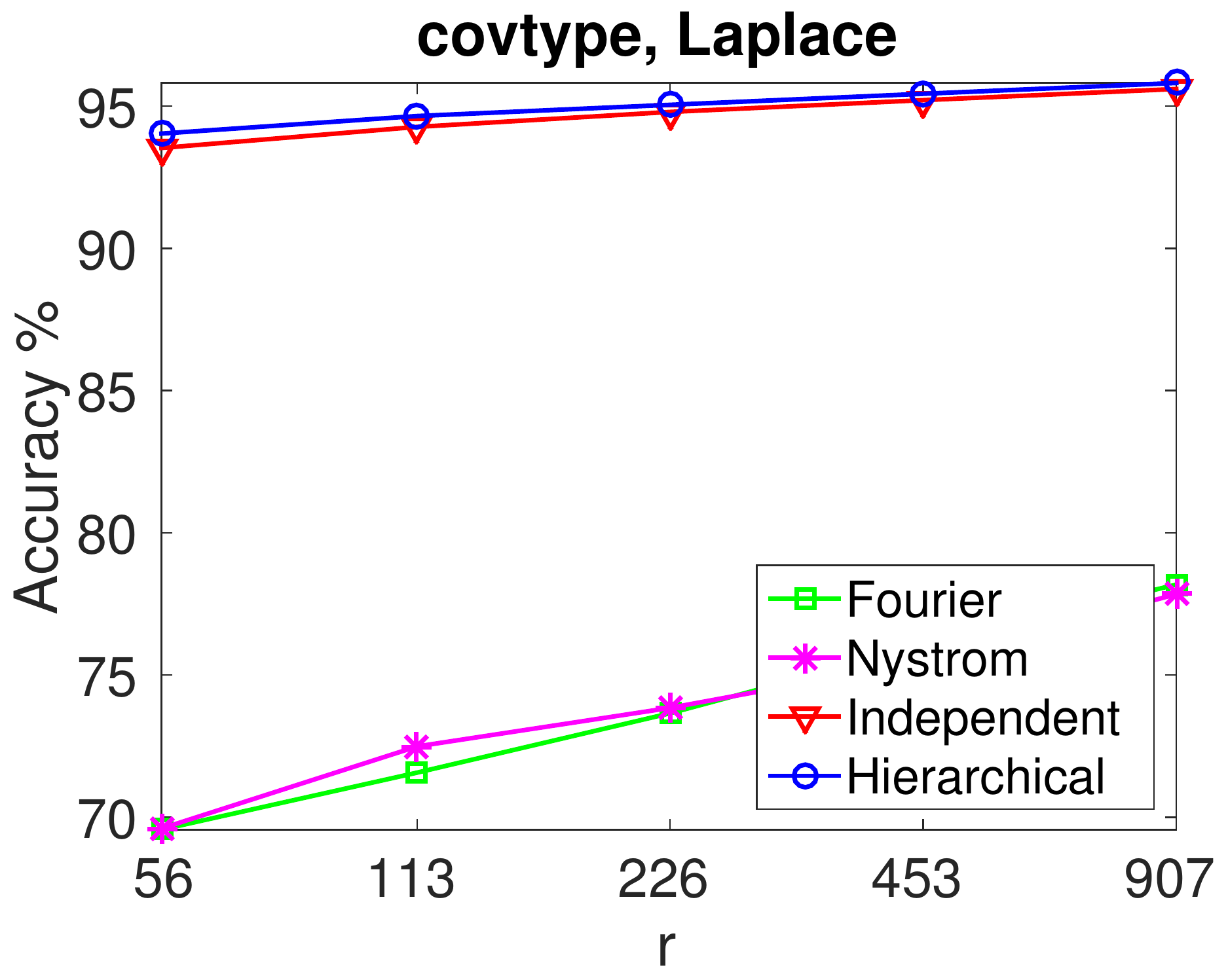}
\includegraphics[width=.32\linewidth]{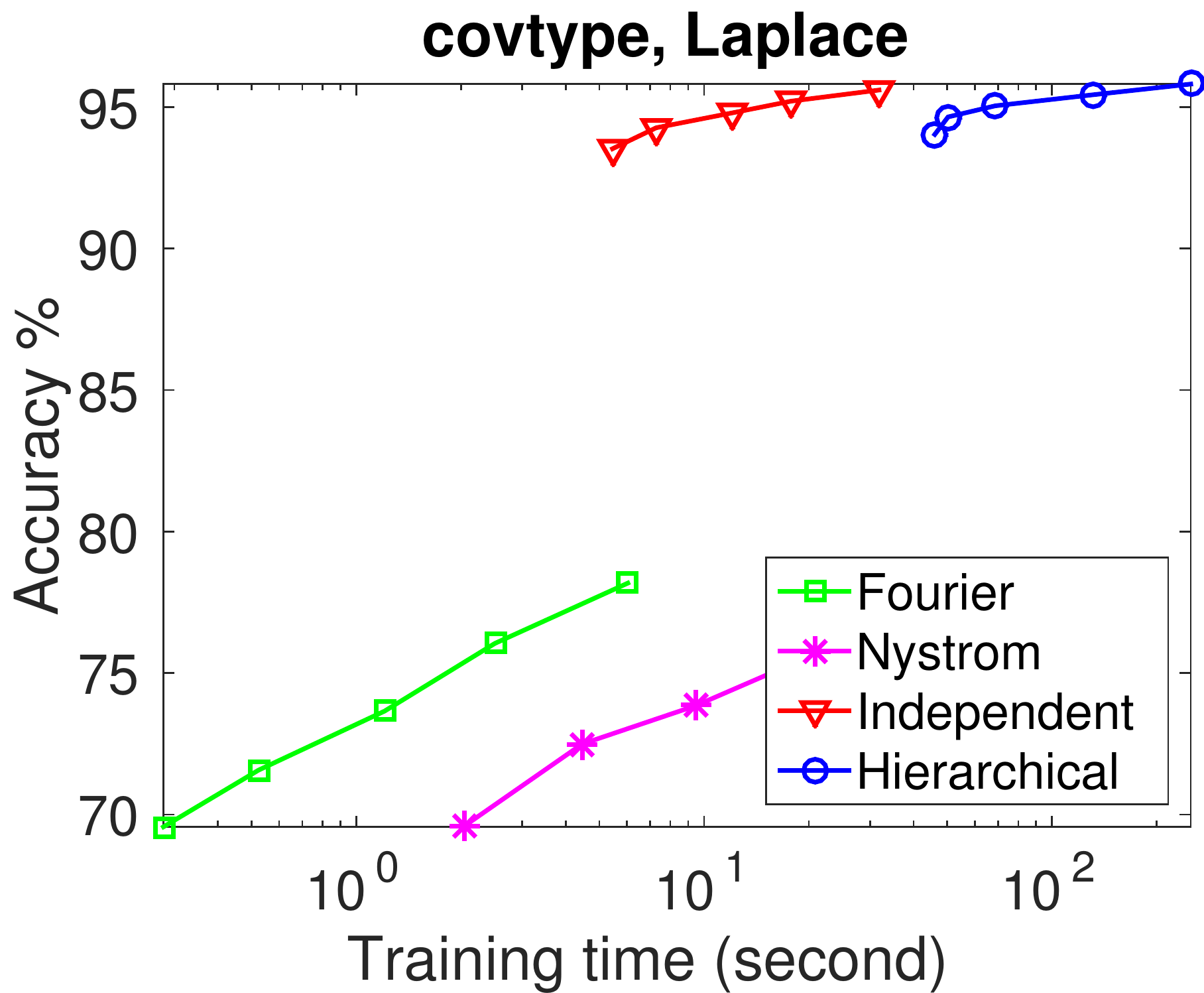}
\includegraphics[width=.33\linewidth]{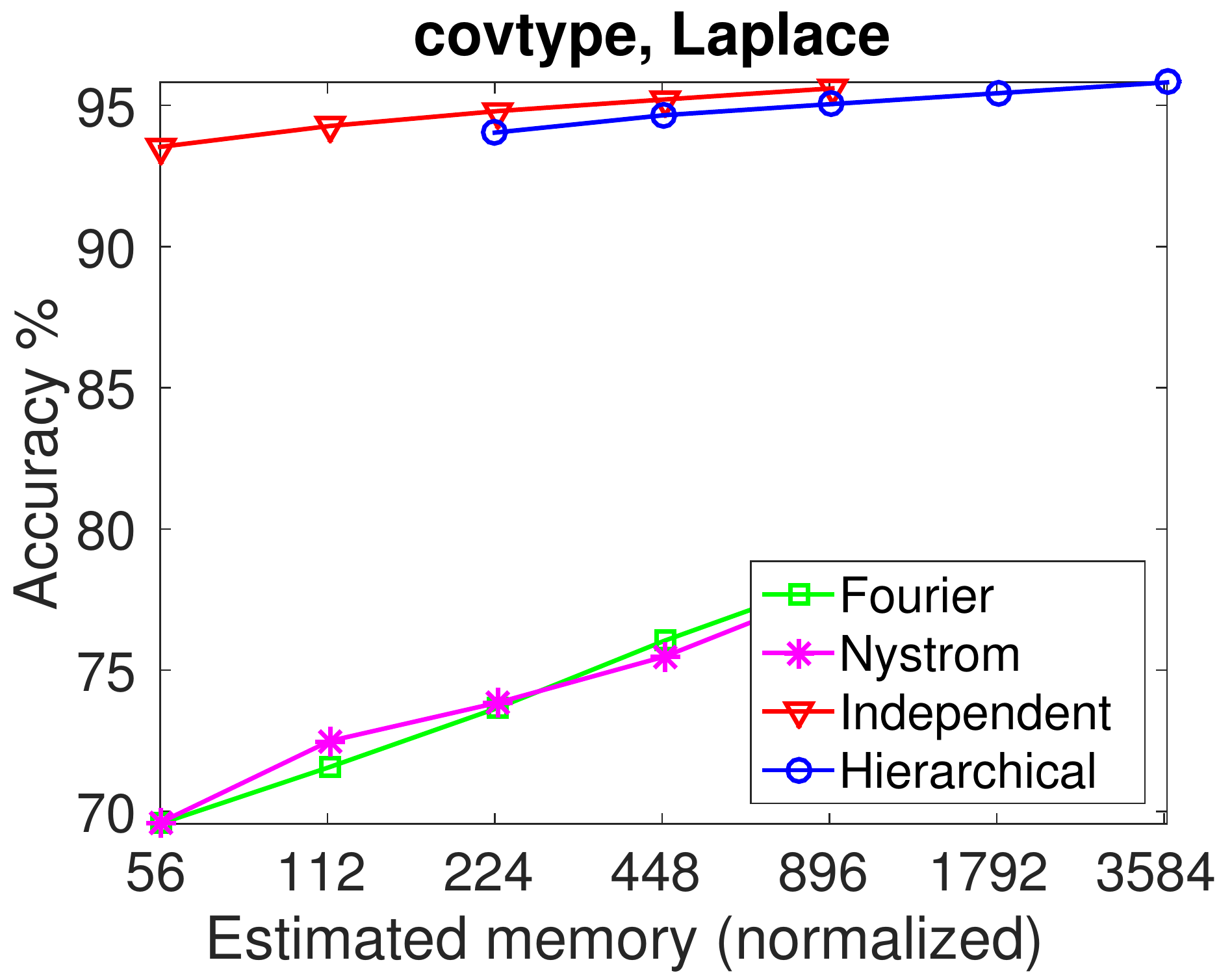}}
\caption{(Continued) Performance versus $r$, time, and memory. Laplace kernel.}
\label{fig:ZZ_plot_exp_4_laplace_2}
\end{figure}

\section{Performance Results with the Inverse Multiquadric Kernel}\label{sec:append.perf.invmultiquadric}
See Figures~\ref{fig:ZZ_plot_exp_7_invmultiquadric_1} and~\ref{fig:ZZ_plot_exp_7_invmultiquadric_2}. These plots are qualitatively similar to those of the Gaussian kernel (Figures~\ref{fig:ZZ_plot_exp_3_comprehensive_1} and~\ref{fig:ZZ_plot_exp_3_comprehensive_2}).

\begin{figure}[!ht]
\centering
\subfigure[cadata, regression]{
\includegraphics[width=.33\linewidth]{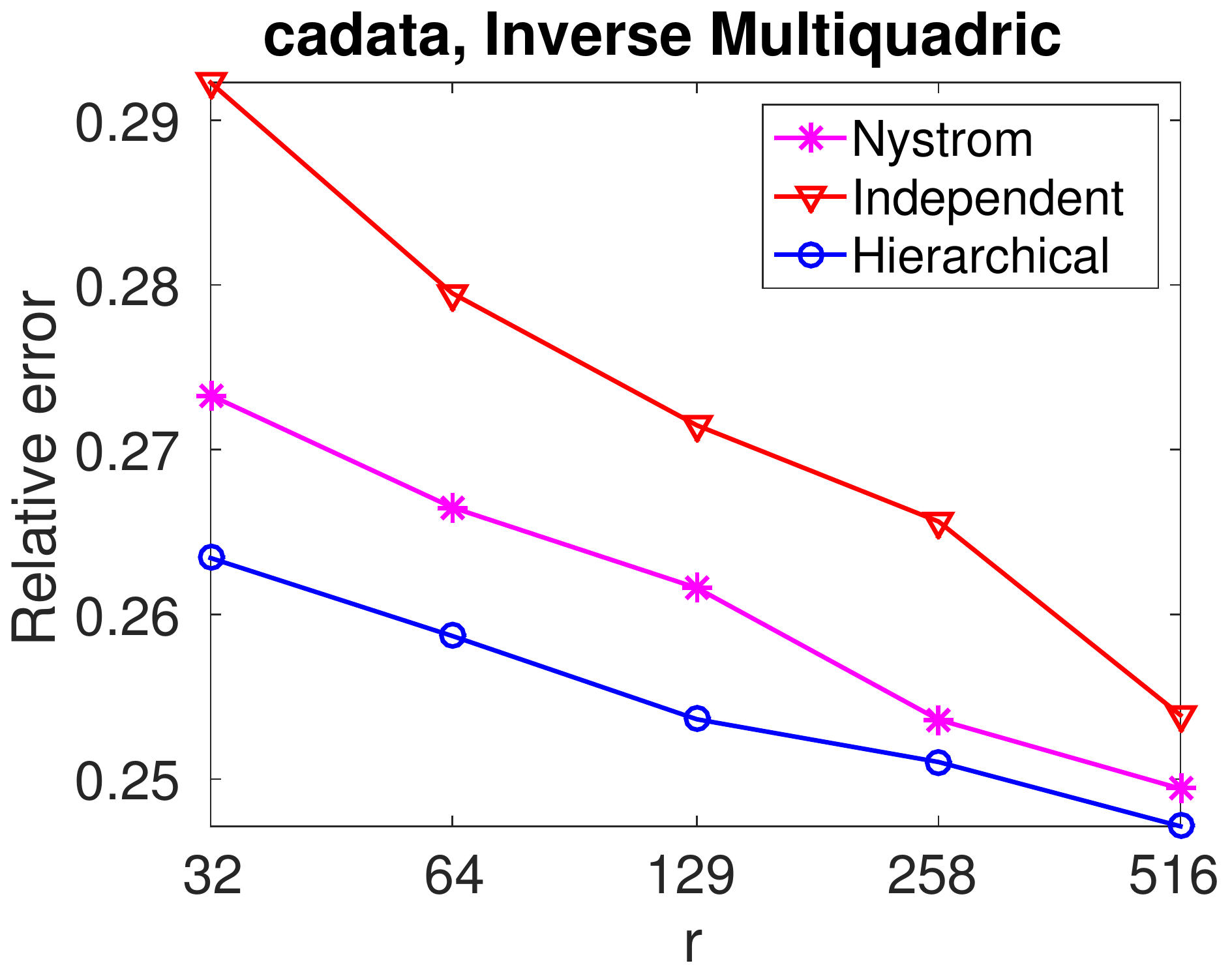}
\includegraphics[width=.32\linewidth]{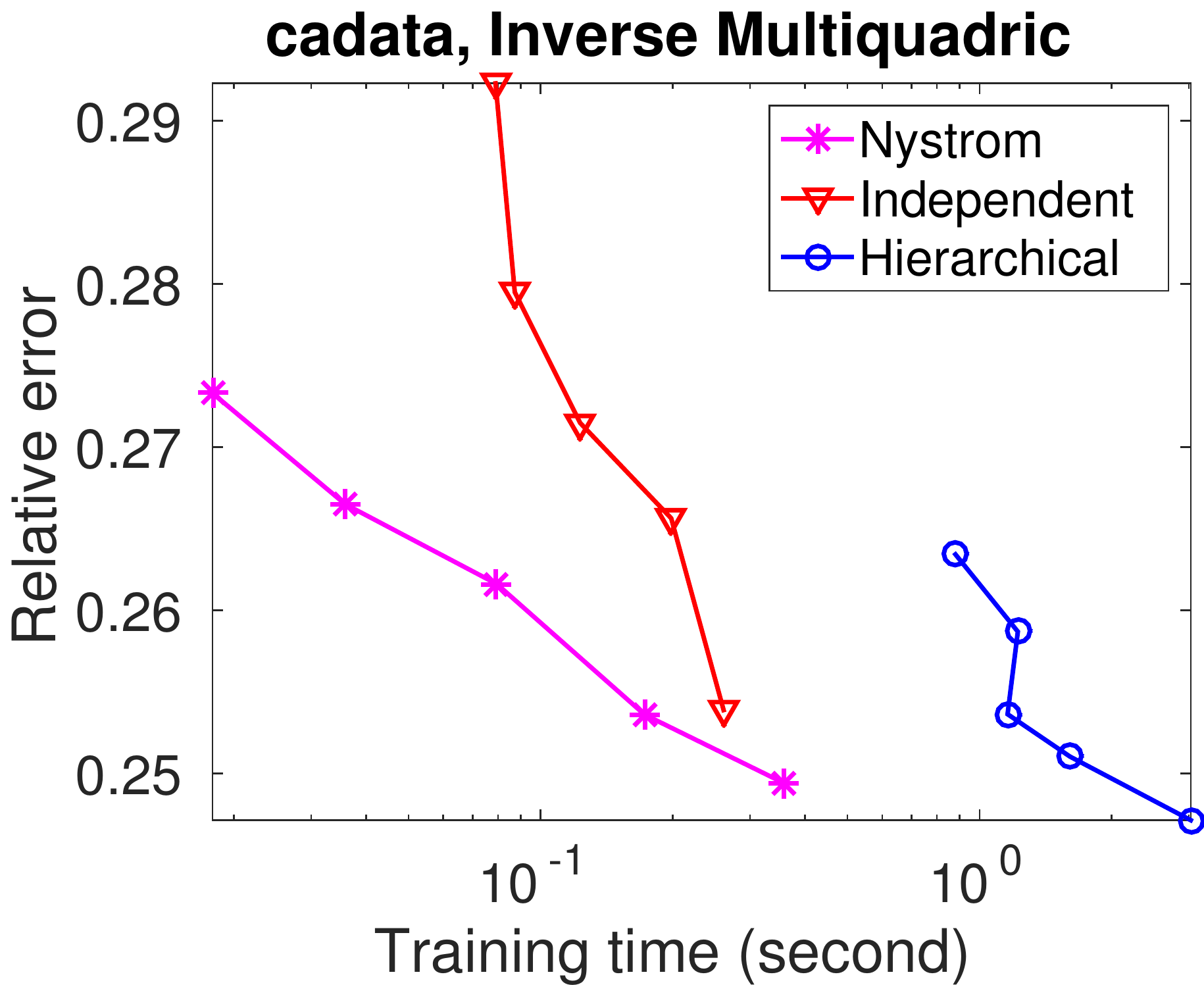}
\includegraphics[width=.33\linewidth]{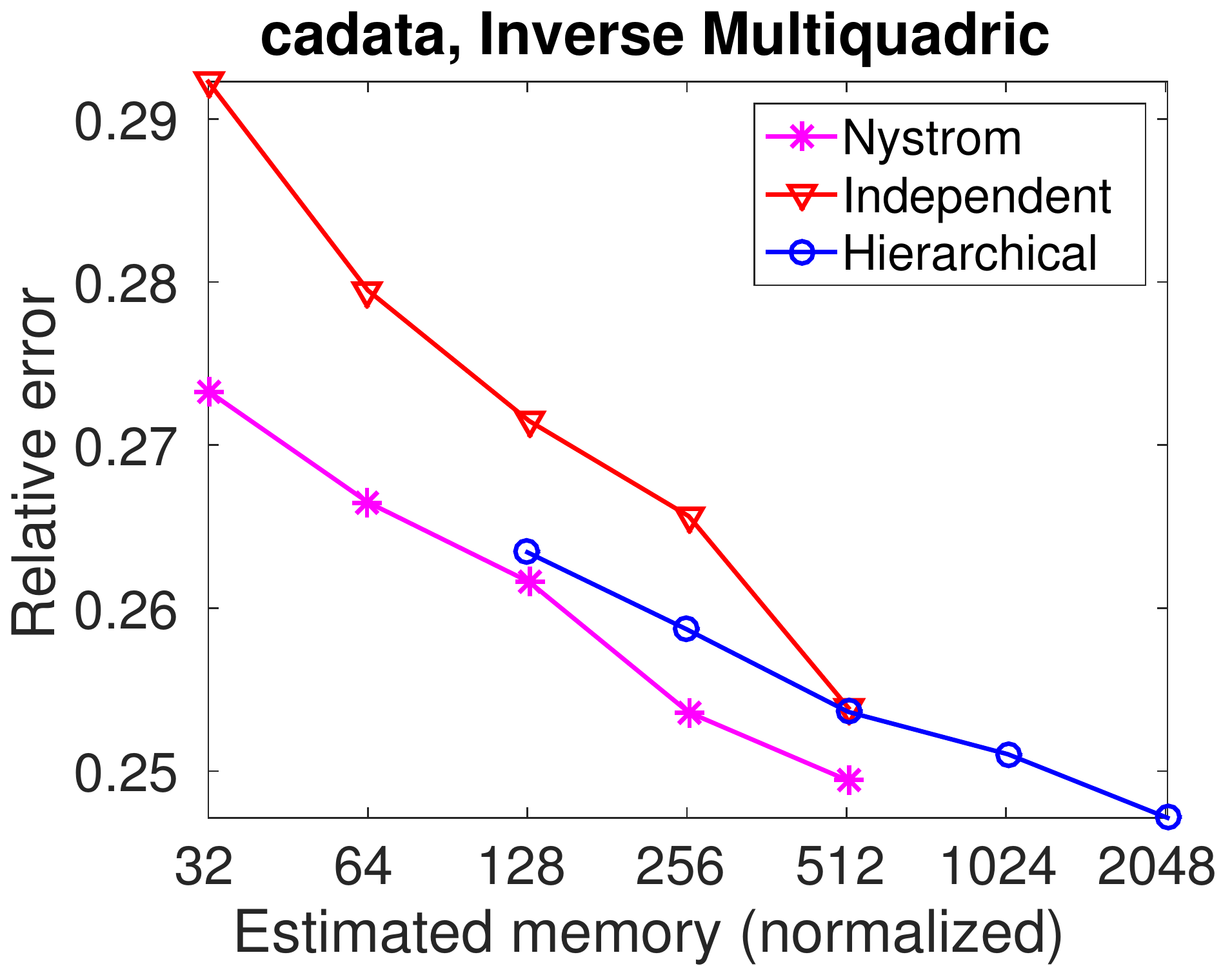}}
\subfigure[YearPredictionMSD, regression]{
\includegraphics[width=.33\linewidth]{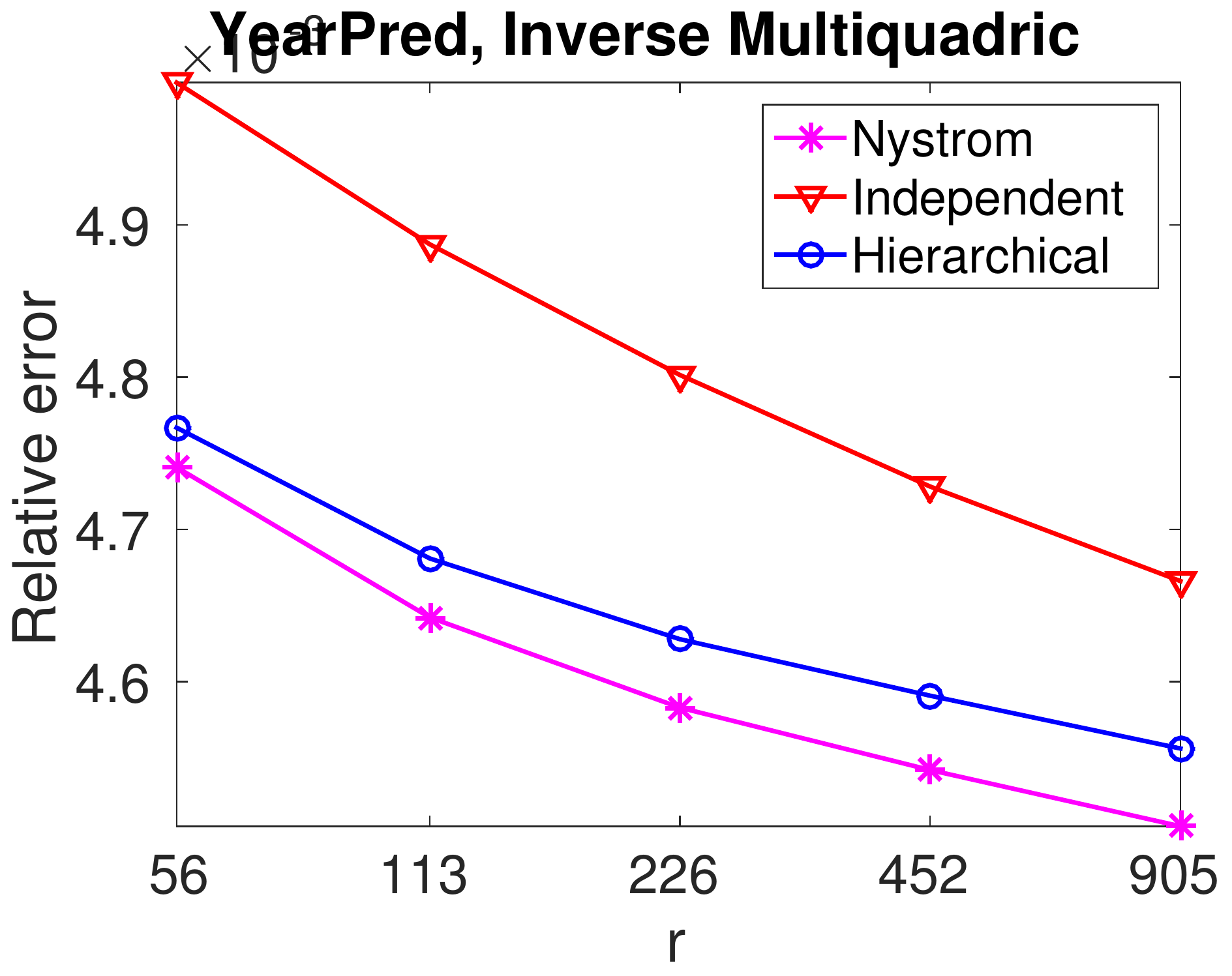}
\includegraphics[width=.32\linewidth]{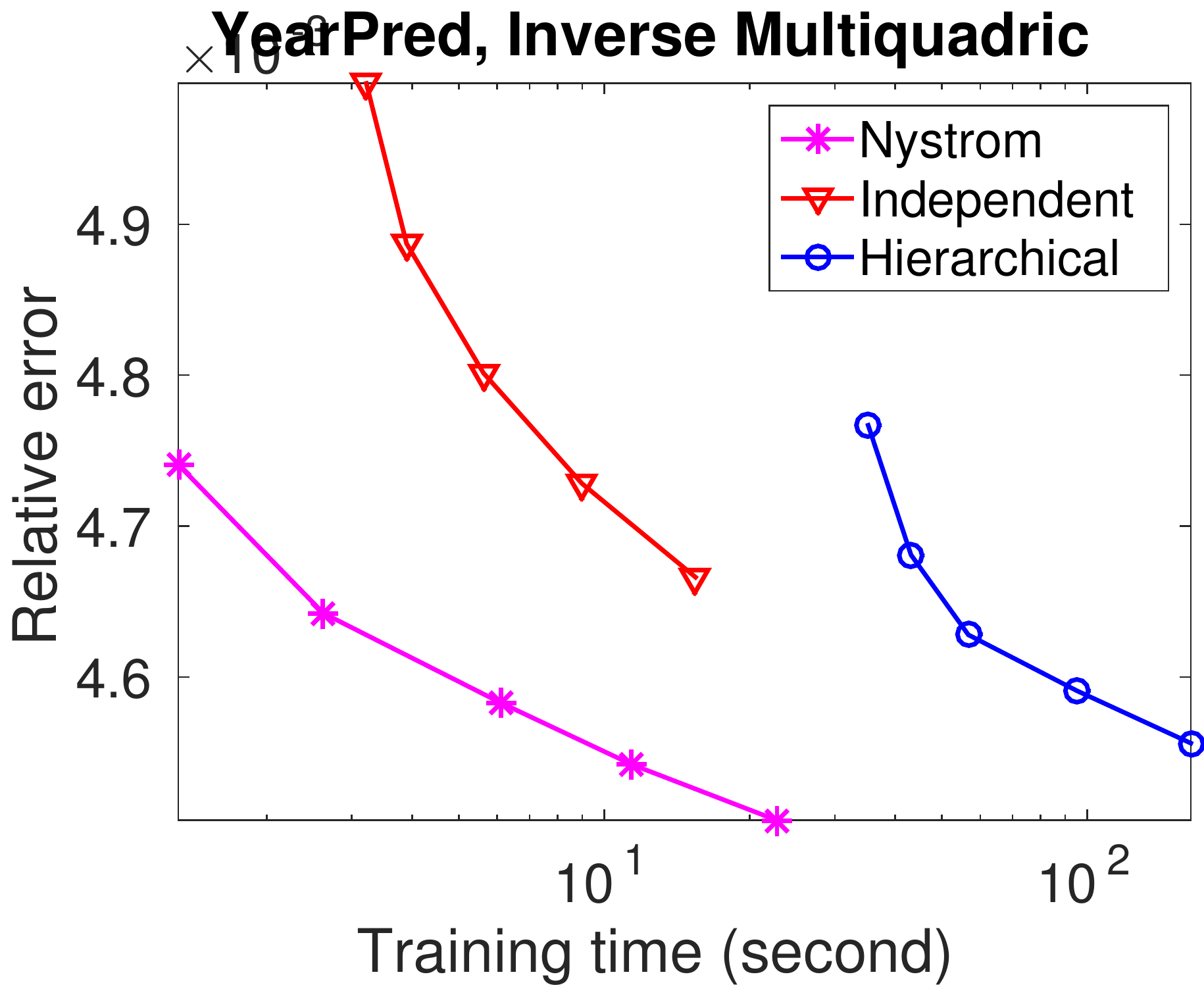}
\includegraphics[width=.33\linewidth]{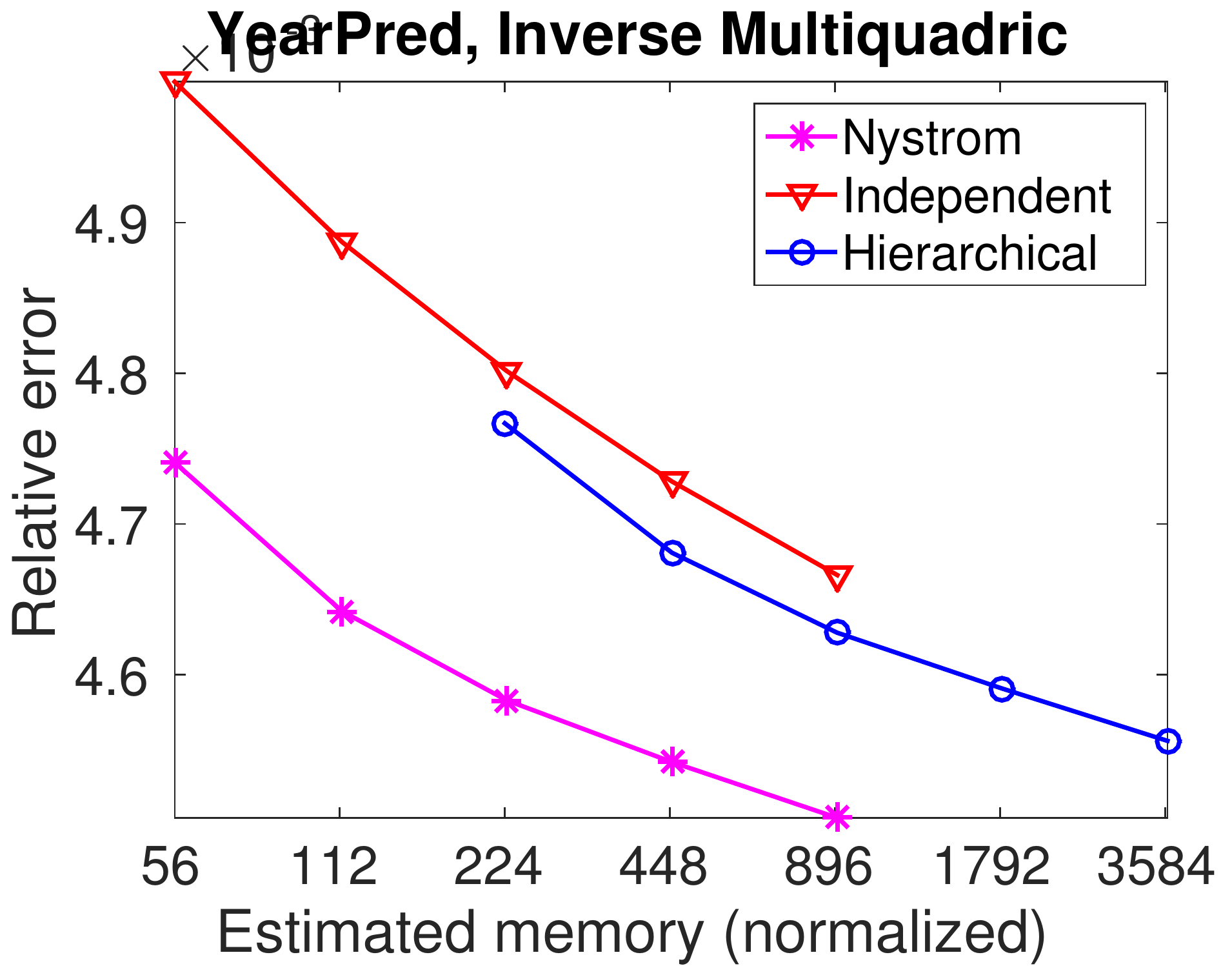}}
\subfigure[ijcnn1, binary classification]{
\includegraphics[width=.33\linewidth]{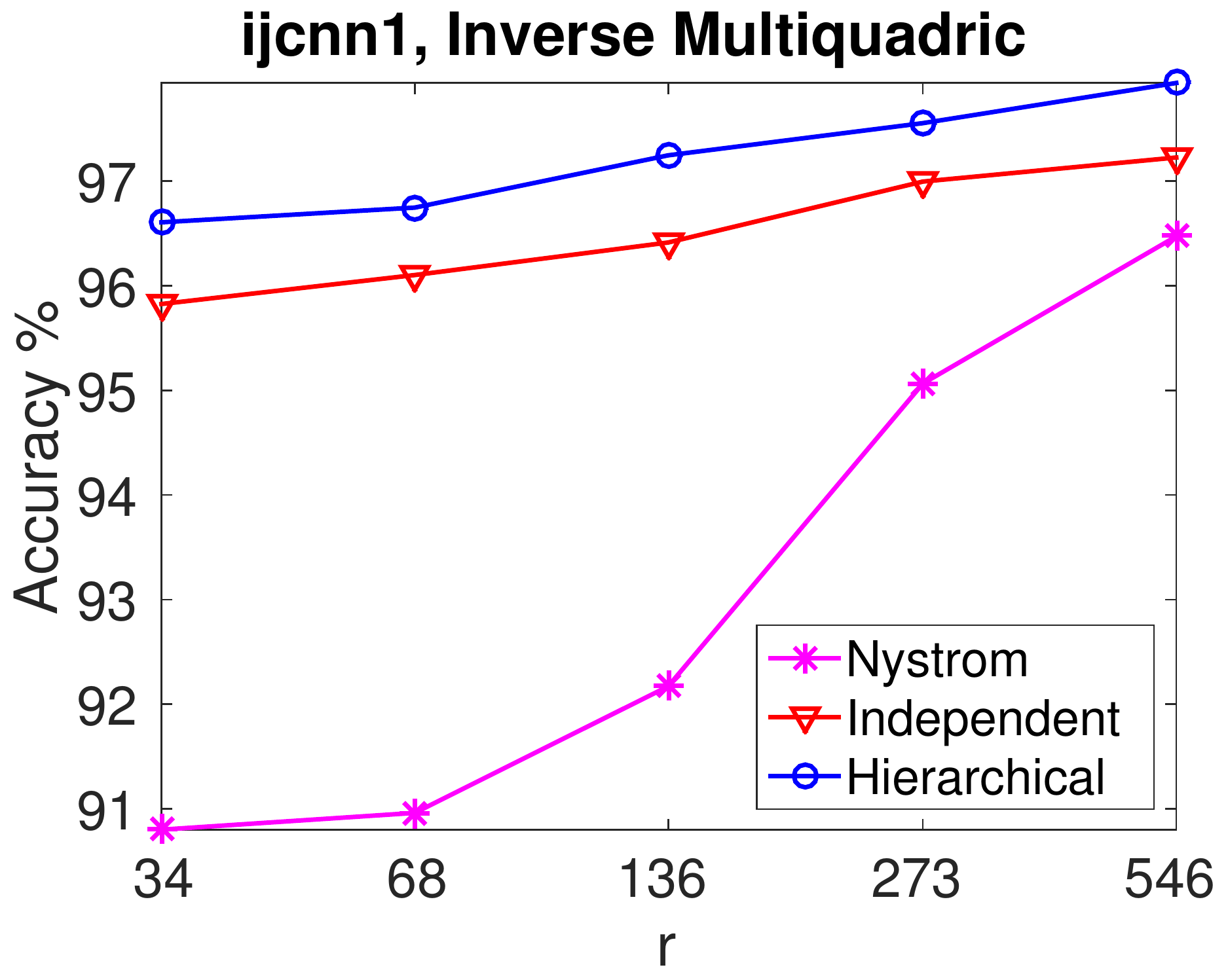}
\includegraphics[width=.32\linewidth]{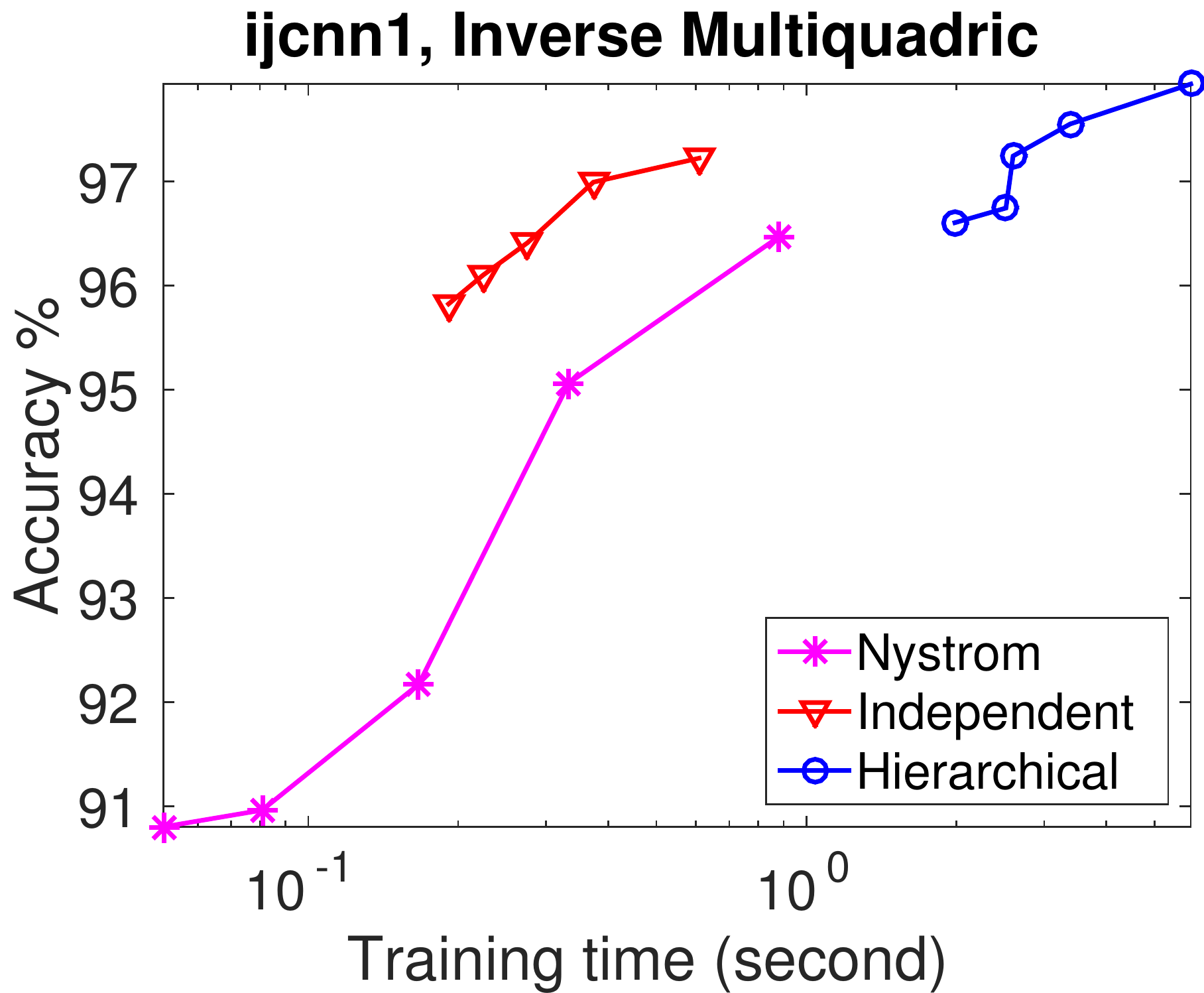}
\includegraphics[width=.33\linewidth]{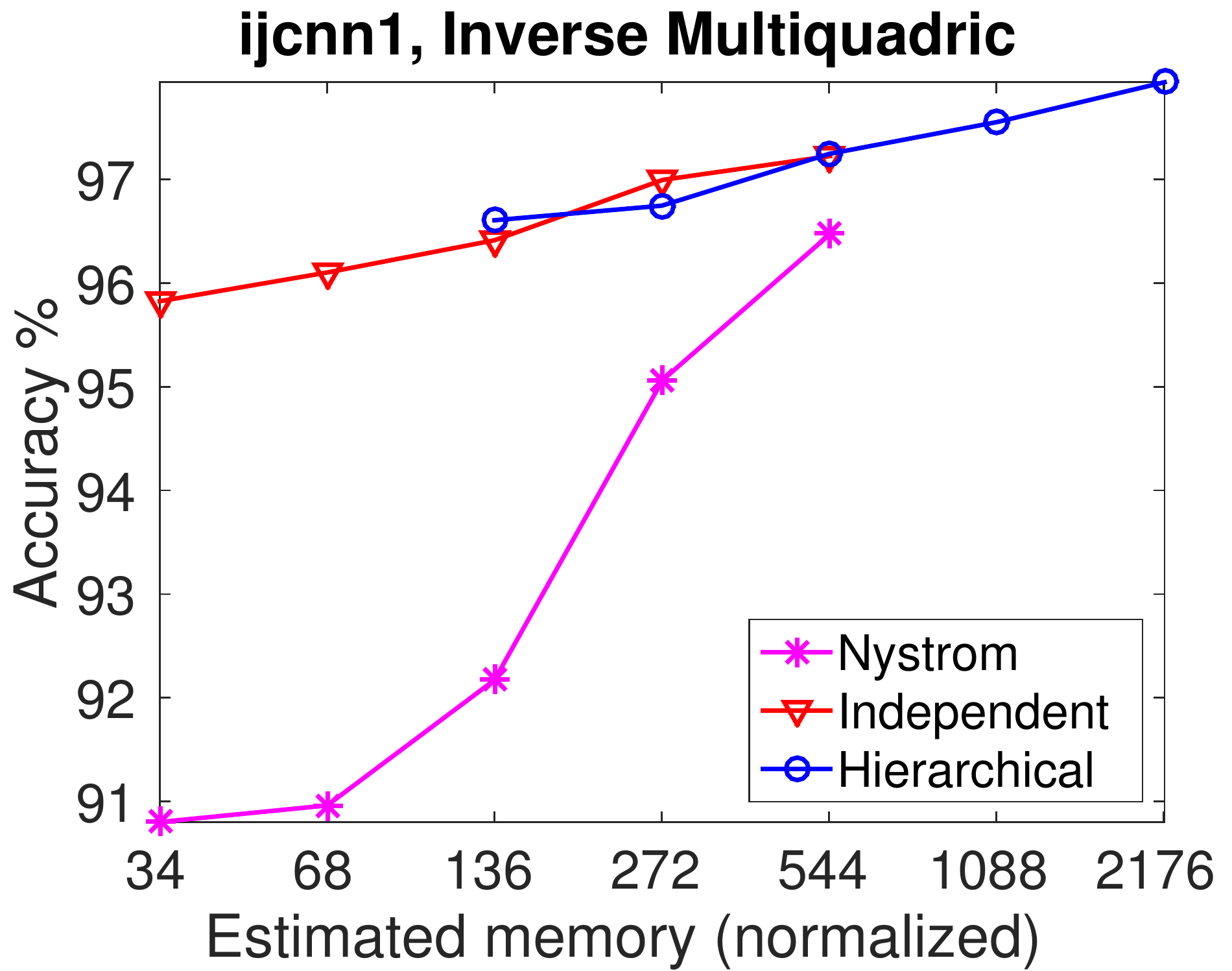}}
\subfigure[covtype.binary, binary classification]{
\includegraphics[width=.33\linewidth]{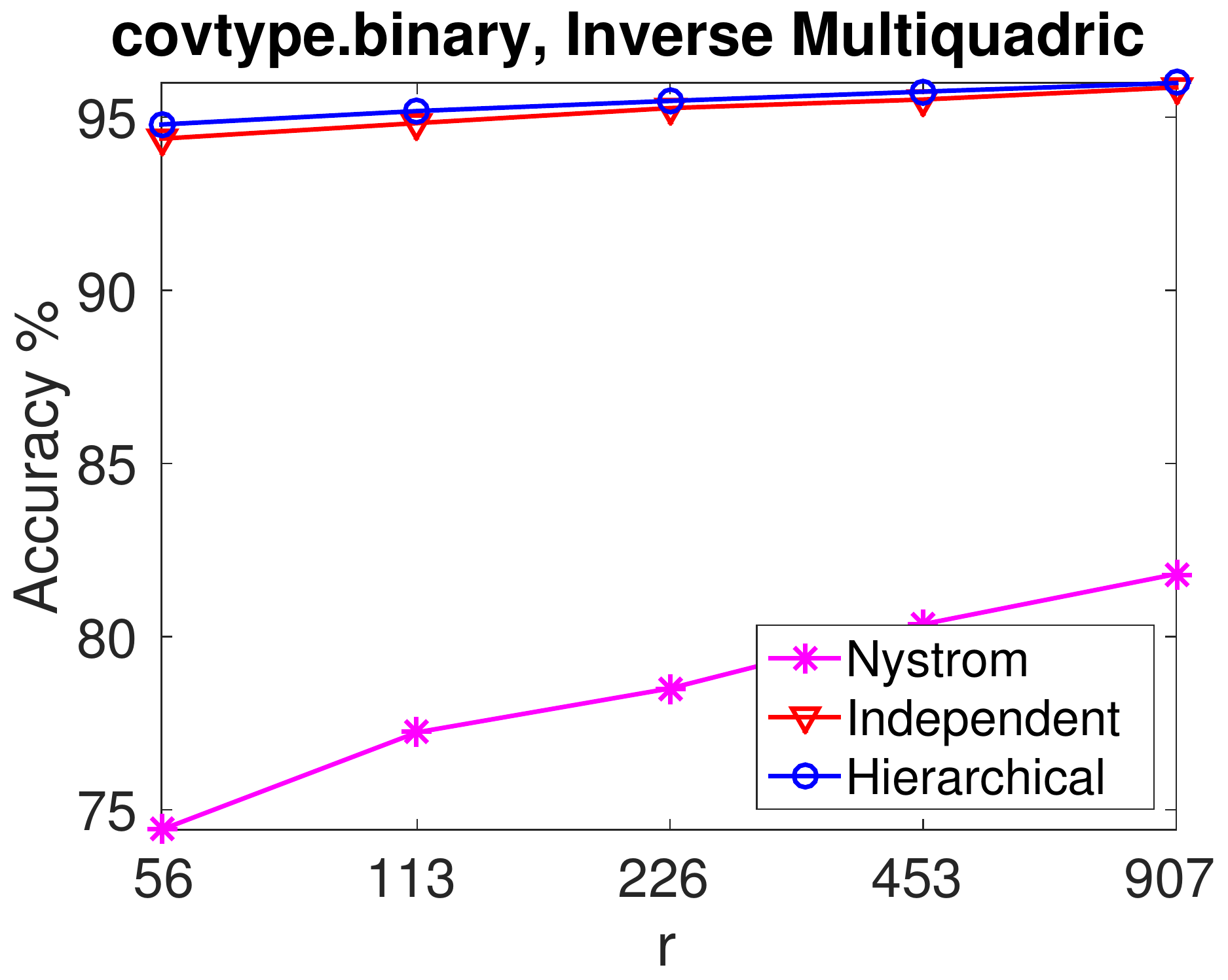}
\includegraphics[width=.32\linewidth]{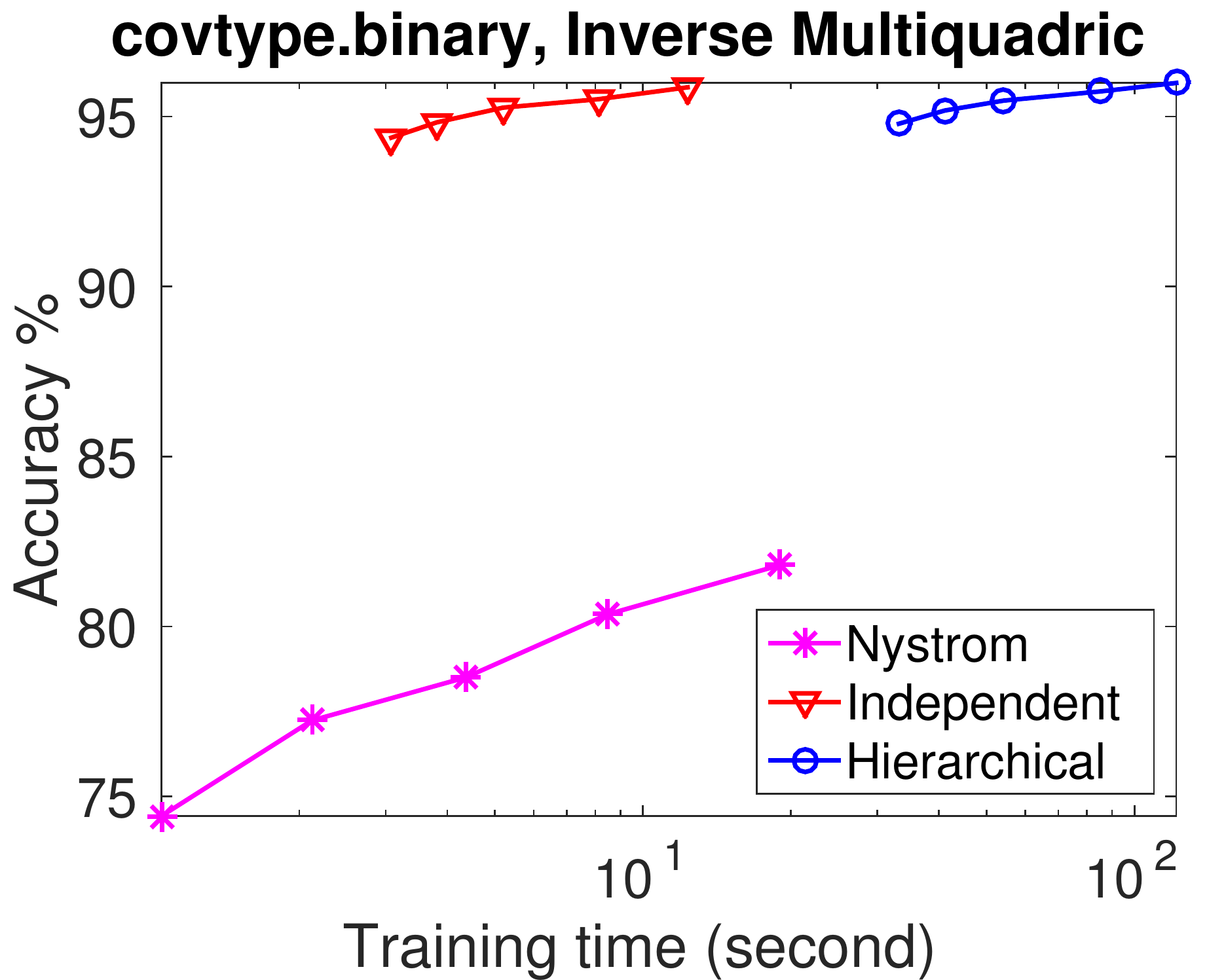}
\includegraphics[width=.33\linewidth]{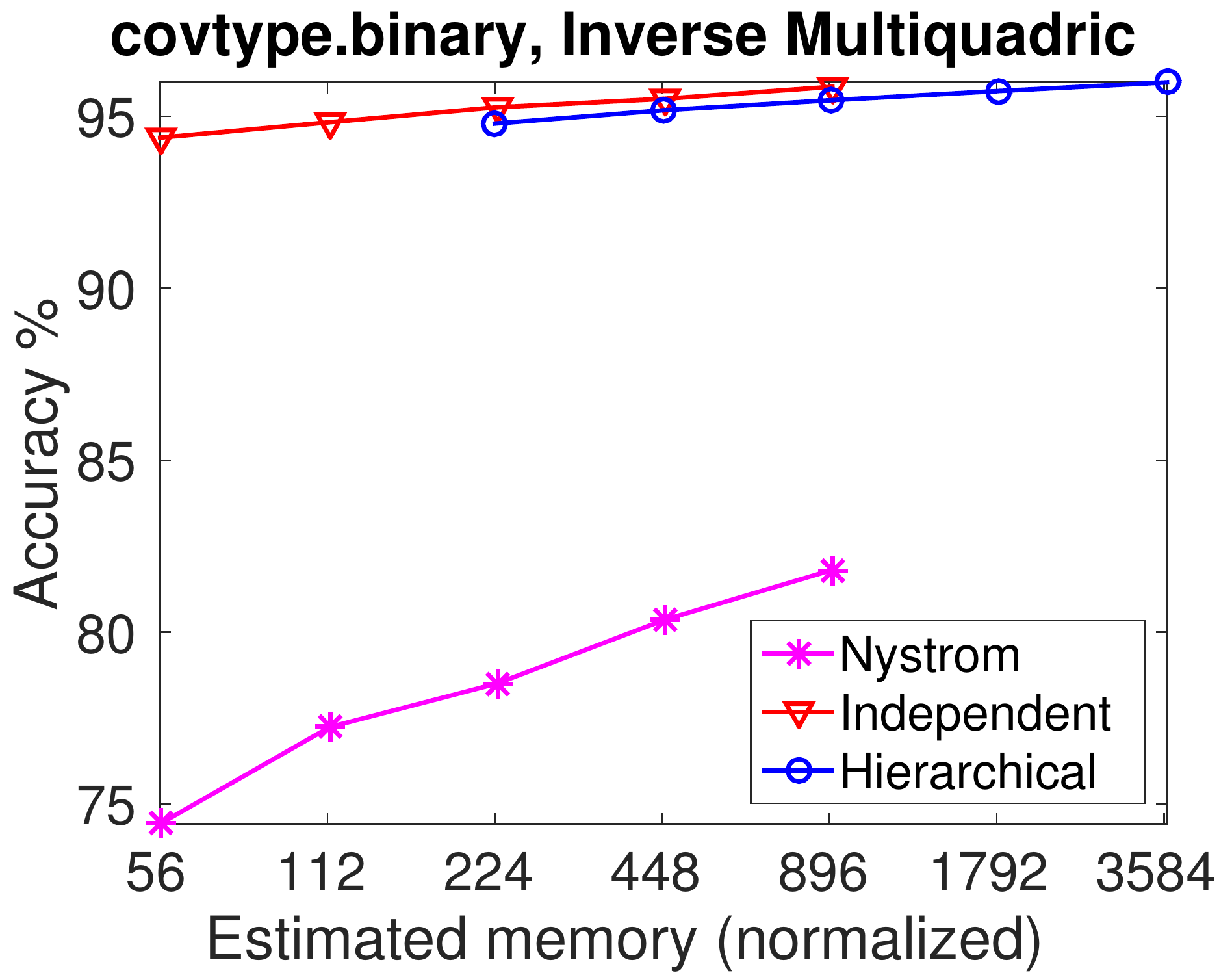}}
\caption{Performance versus $r$, time, and memory. Inverse multiquadric kernel.}
\label{fig:ZZ_plot_exp_7_invmultiquadric_1}
\end{figure}

\begin{figure}[!ht]
\centering
\subfigure[mnist, multiclass classification]{
\includegraphics[width=.33\linewidth]{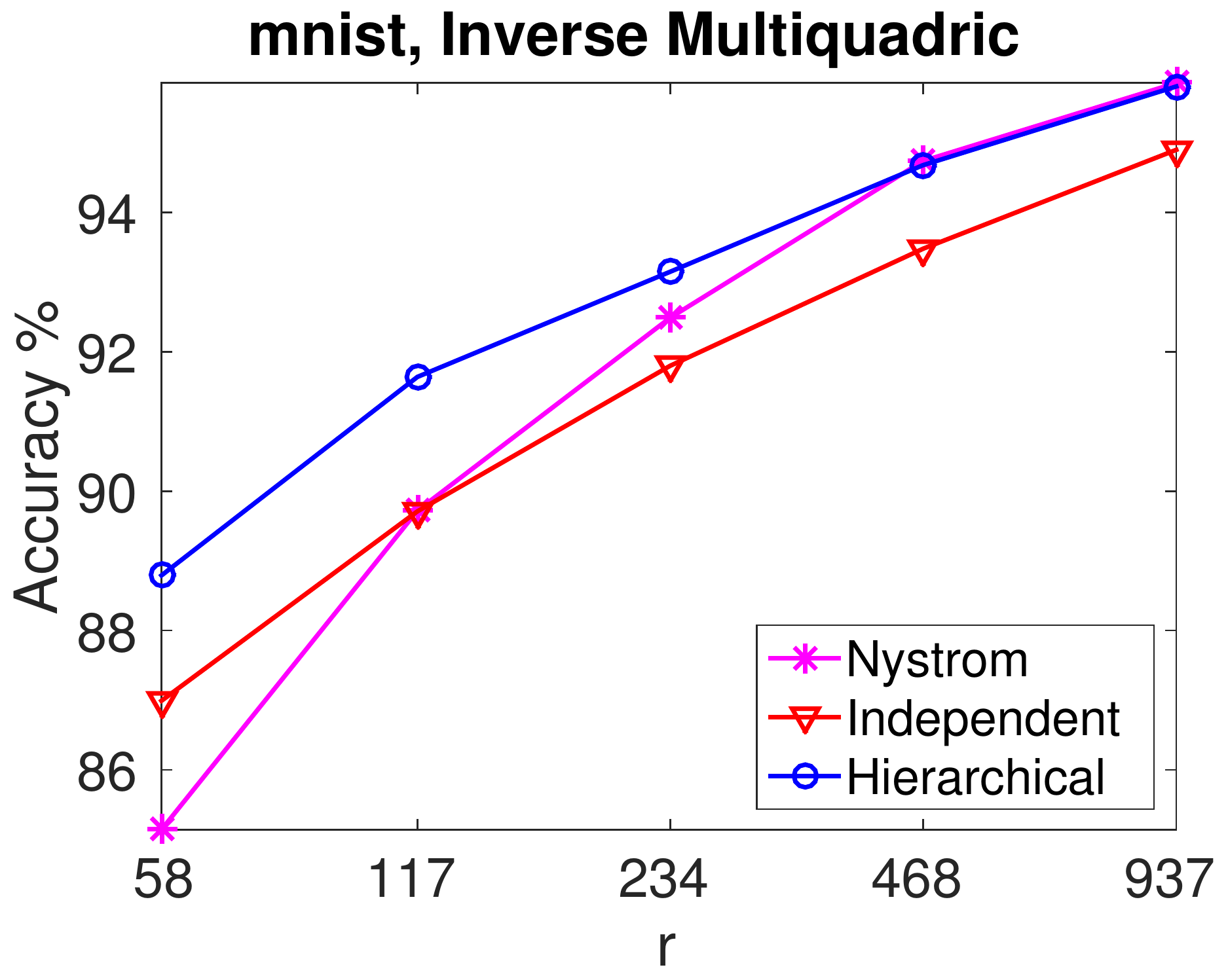}
\includegraphics[width=.32\linewidth]{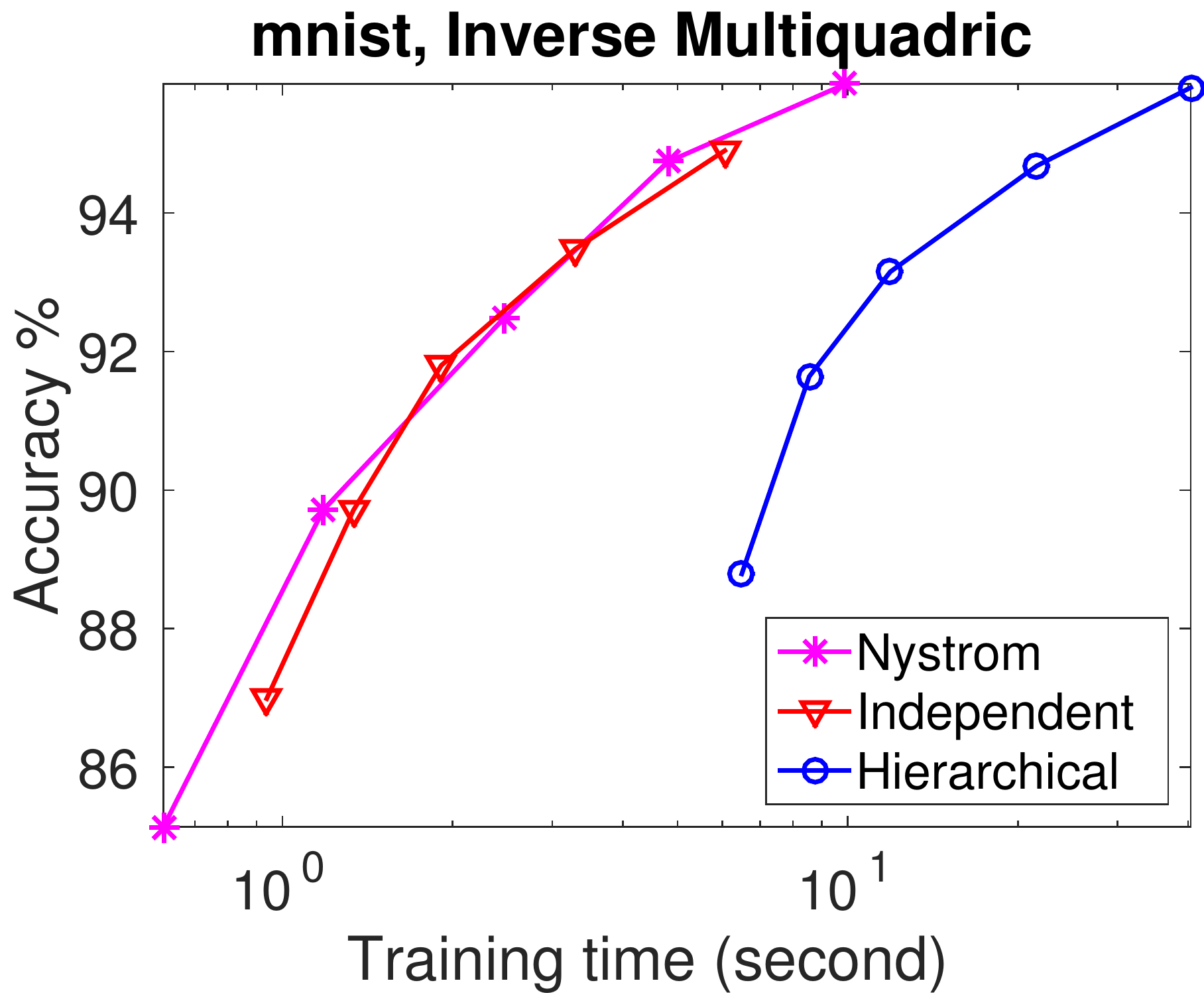}
\includegraphics[width=.33\linewidth]{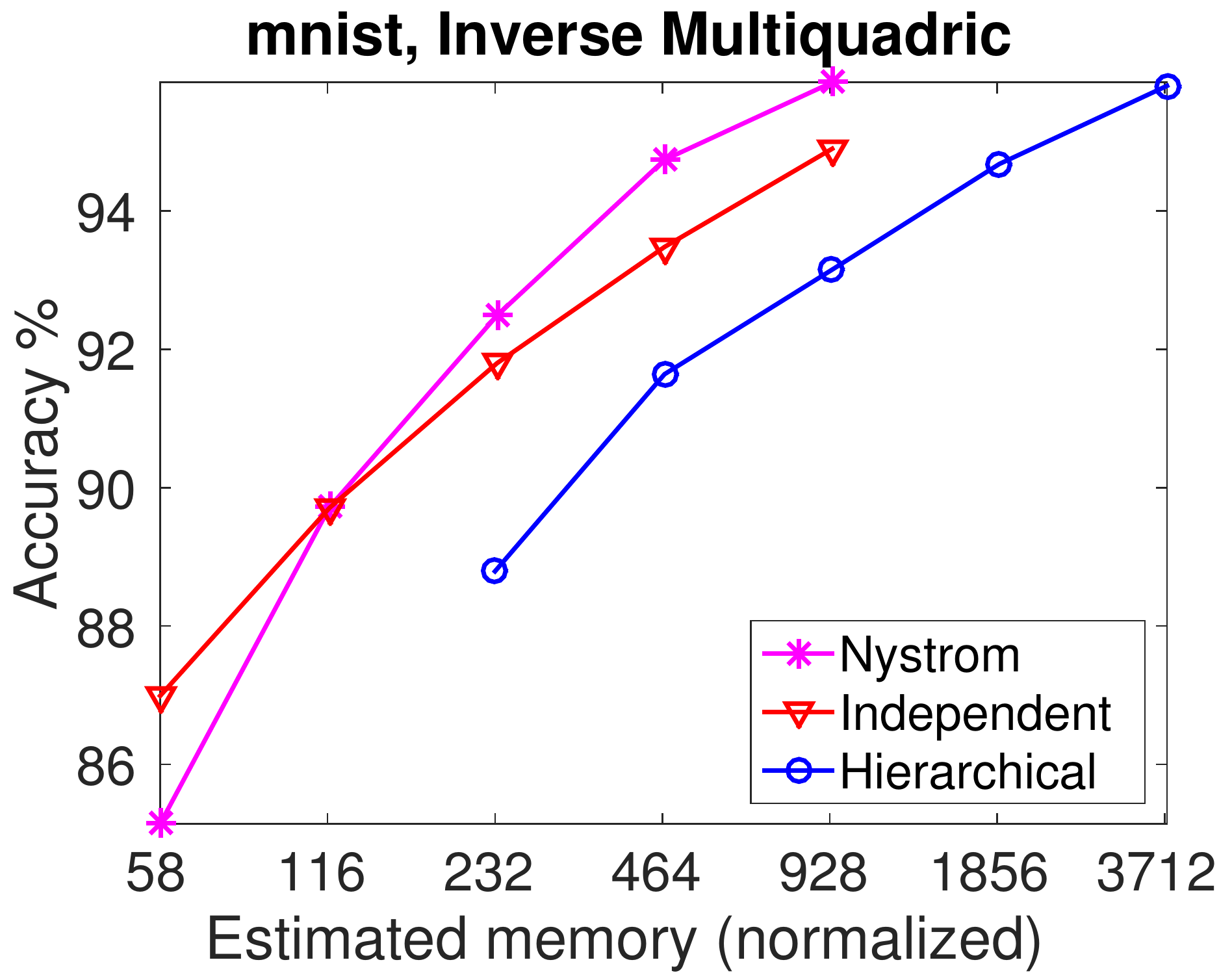}}
\subfigure[acoustic, multiclass classification]{
\includegraphics[width=.33\linewidth]{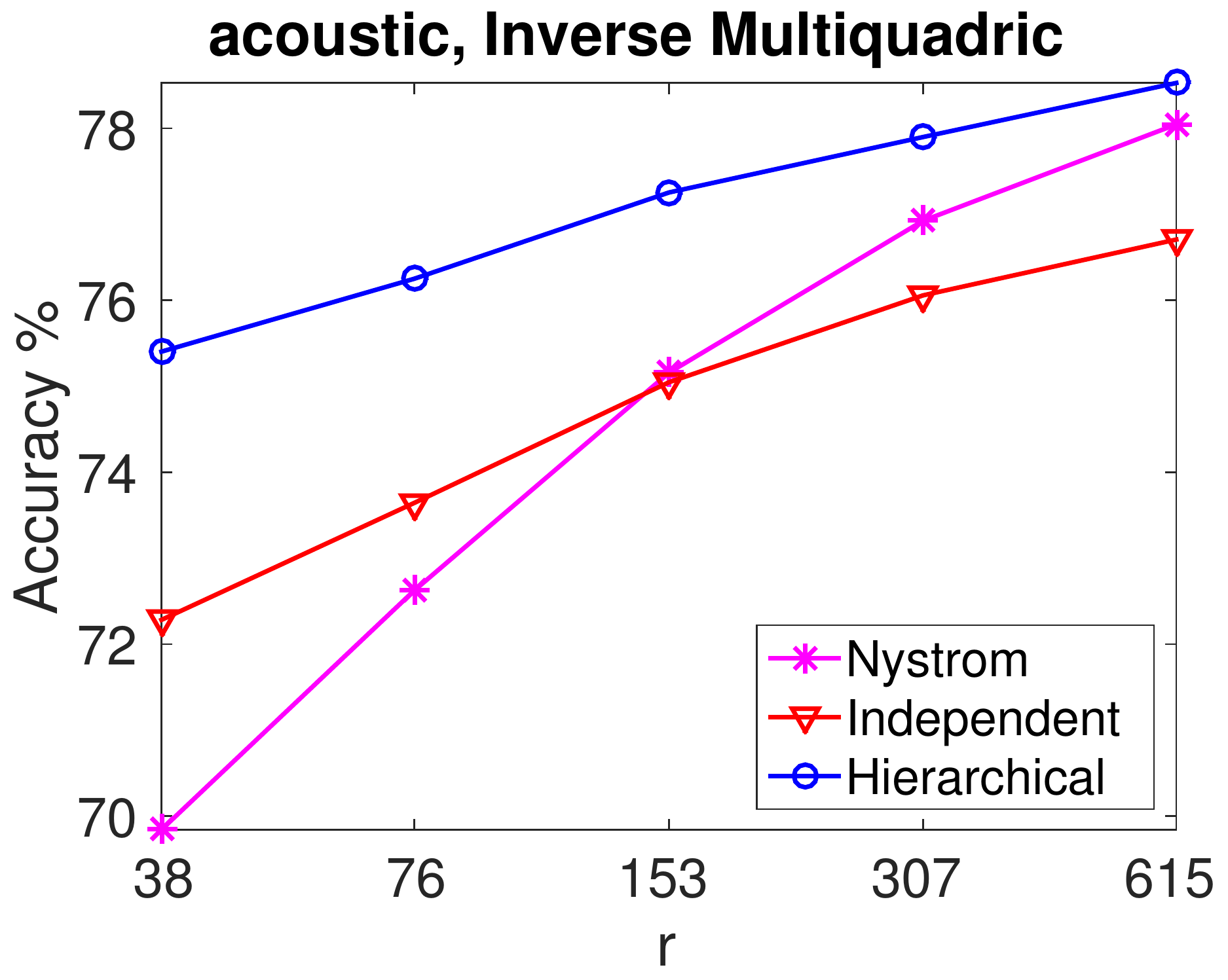}
\includegraphics[width=.32\linewidth]{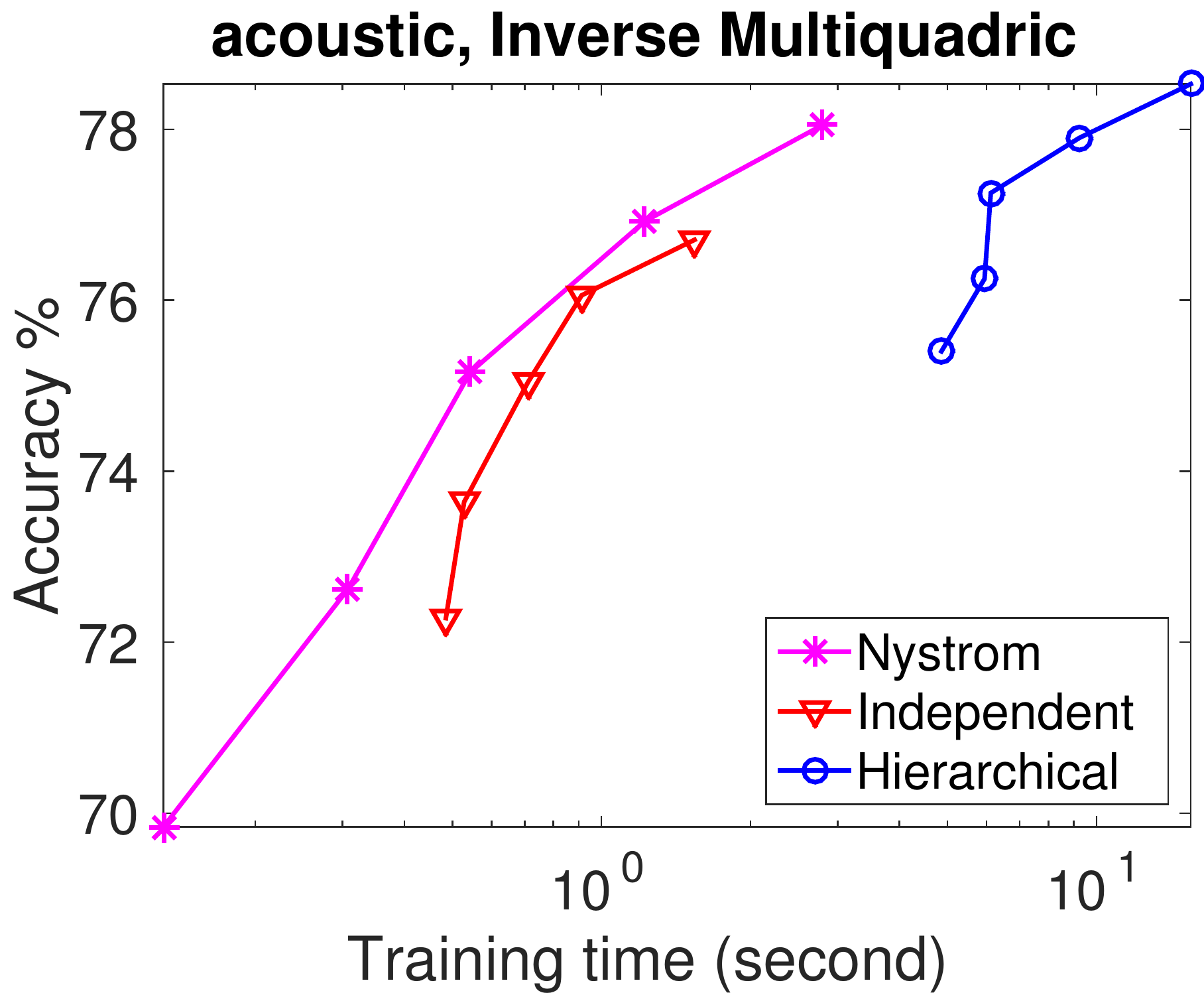}
\includegraphics[width=.33\linewidth]{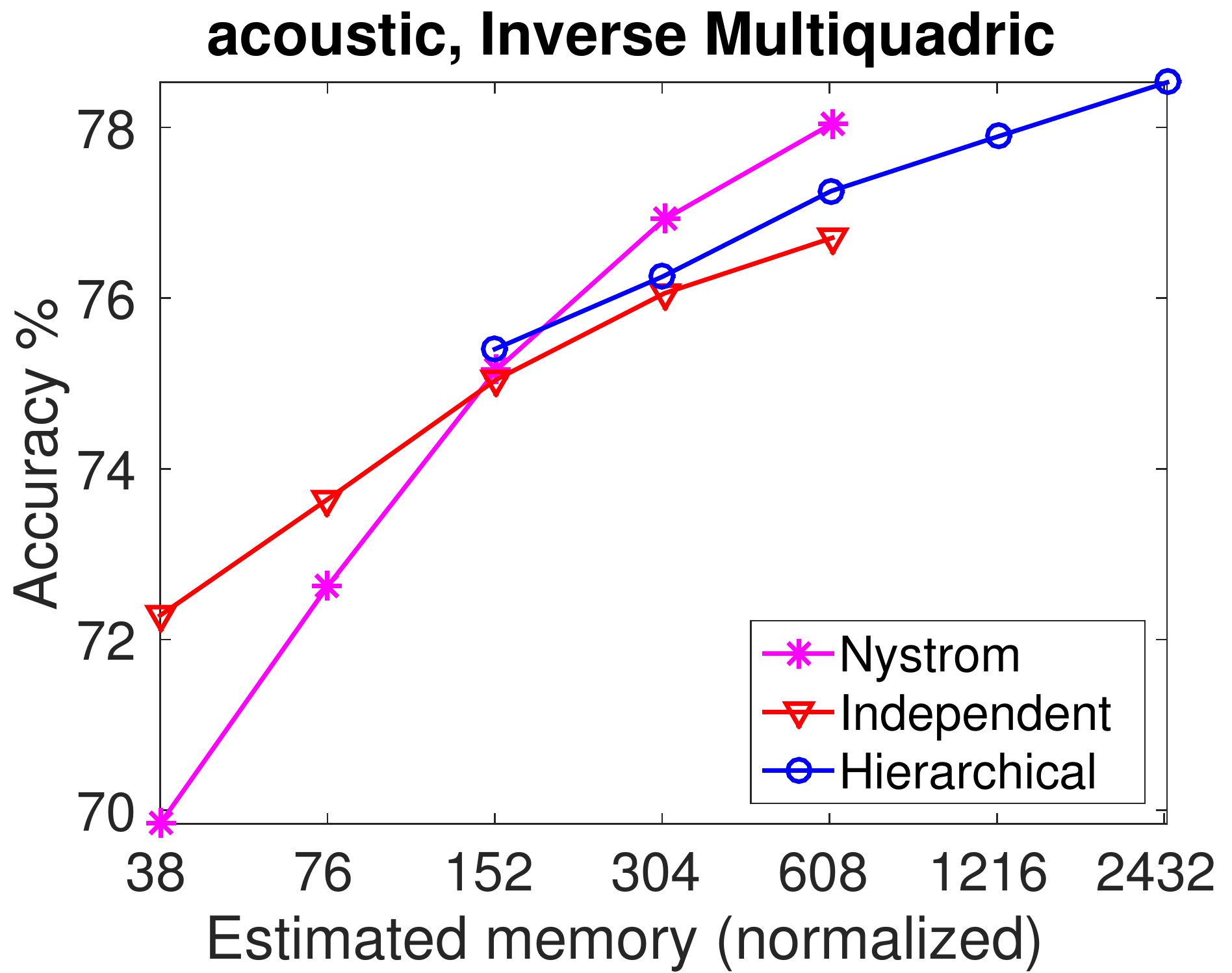}}
\subfigure[covtype, multiclass classification]{
\includegraphics[width=.33\linewidth]{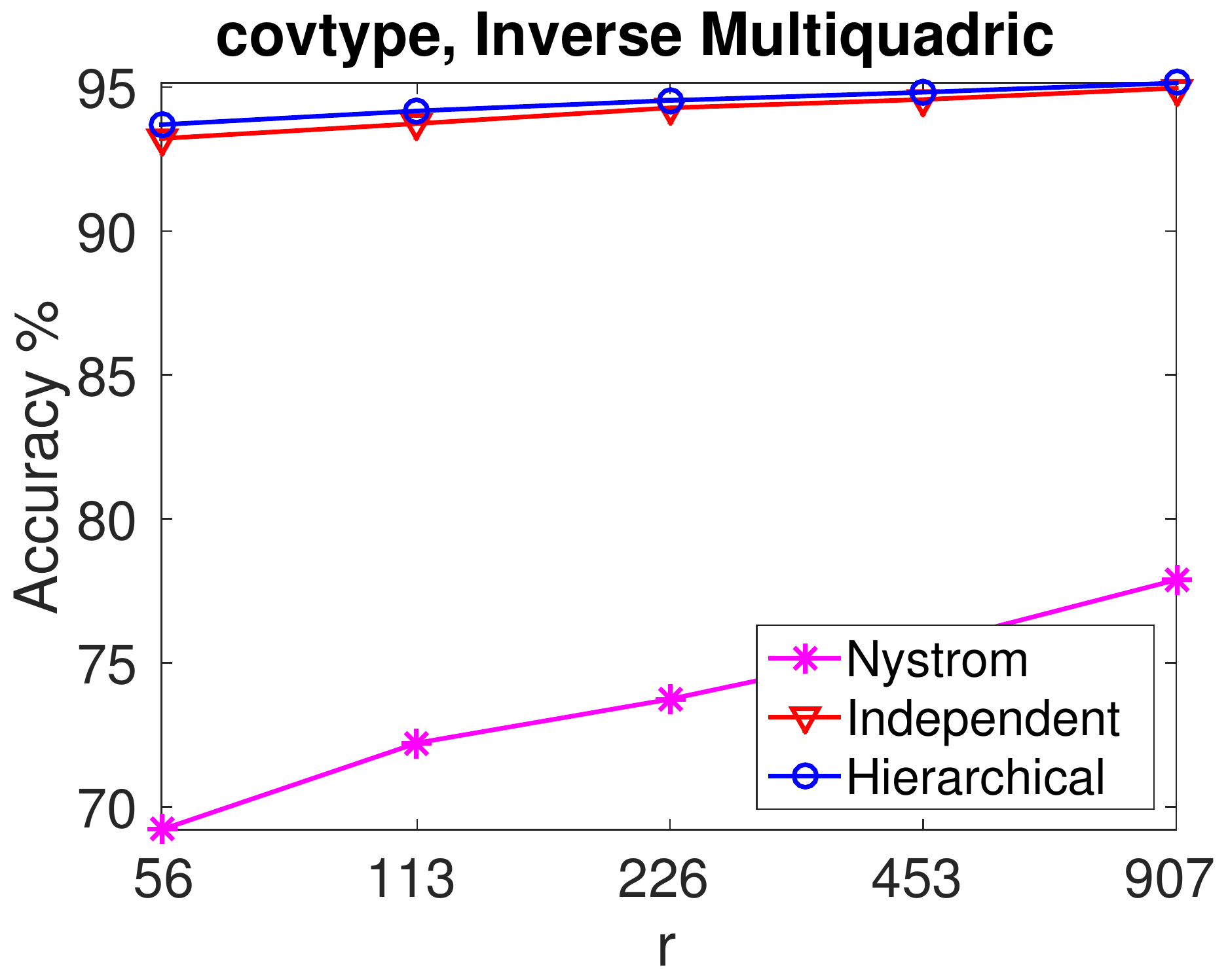}
\includegraphics[width=.32\linewidth]{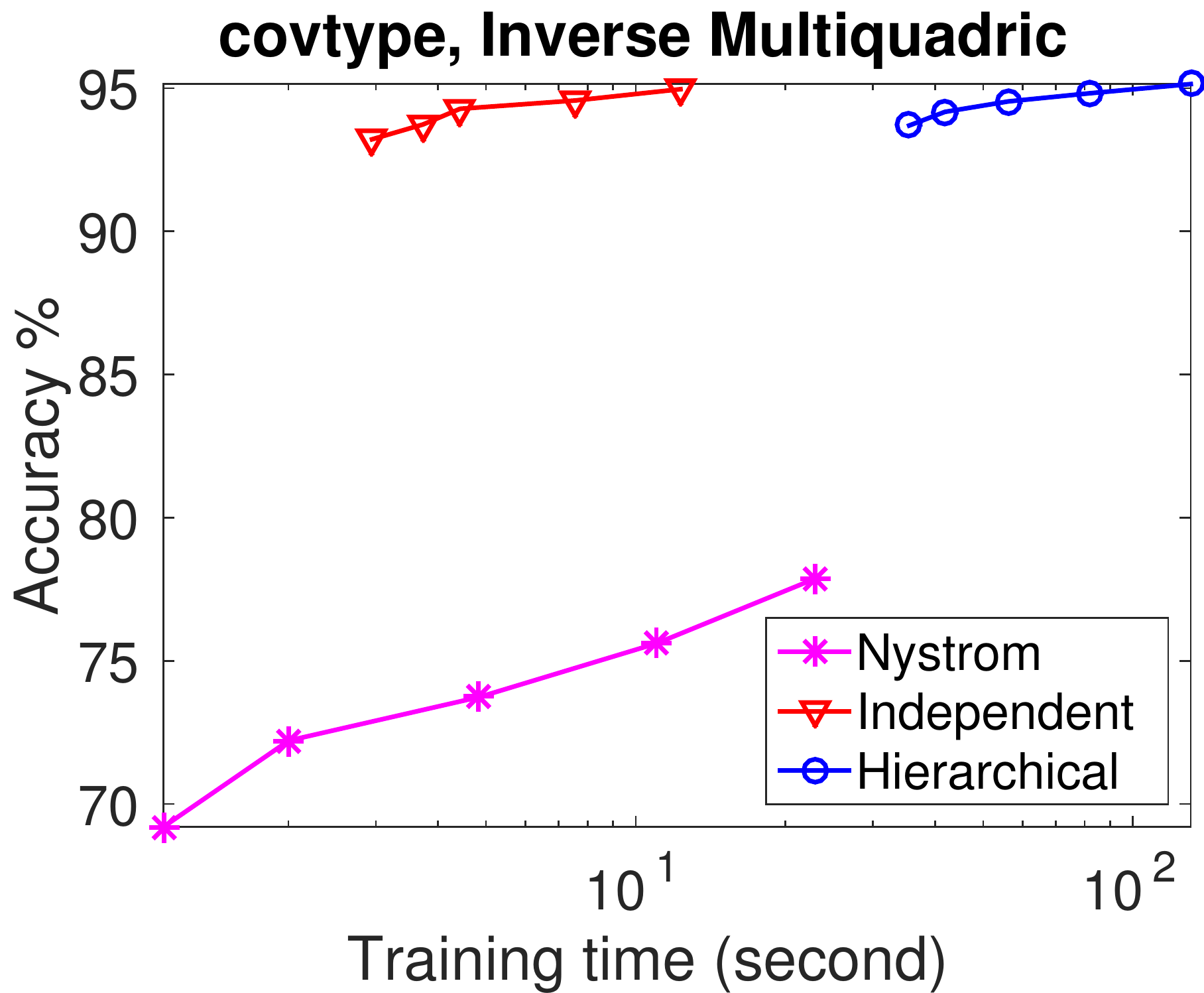}
\includegraphics[width=.33\linewidth]{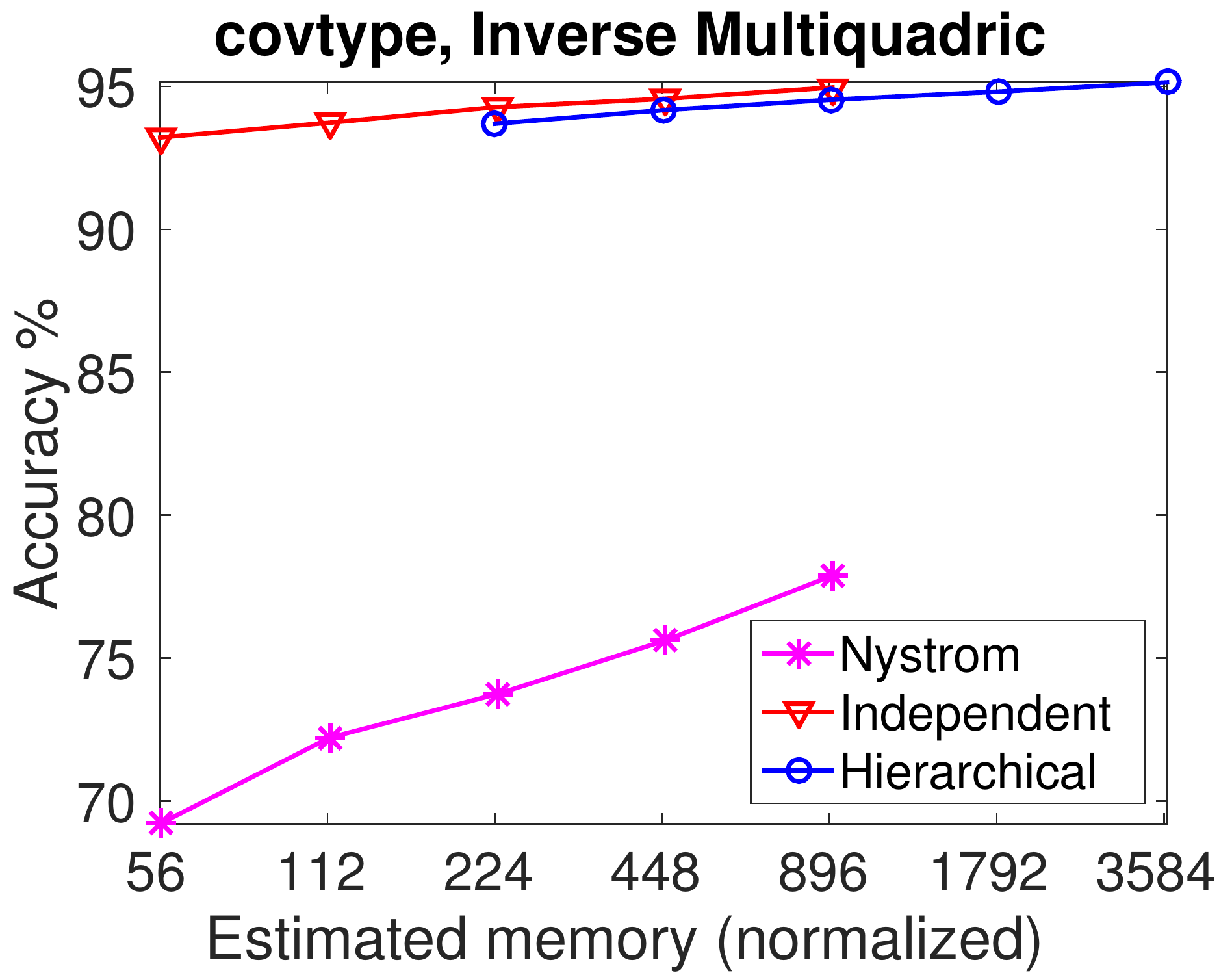}}
\caption{(Continued) Performance versus $r$, time, and memory. Inverse multiquadric kernel.}
\label{fig:ZZ_plot_exp_7_invmultiquadric_2}
\end{figure}

\end{document}